%% file: main_JMLR.tex
\documentclass[twoside,11pt]{article}

\usepackage{amsthm}
\usepackage[preprint]{jmlr2e}

\usepackage[T1]{fontenc}
\usepackage[utf8]{inputenc}
\usepackage{lmodern}
\usepackage{anyfontsize}
\usepackage{microtype}
\usepackage{amsmath,amssymb,amsfonts}
\usepackage{amsmath}
\usepackage{mathtools}
\usepackage{amsfonts}
\usepackage{amssymb}
\usepackage{bm}
\usepackage{bbm}
\usepackage{ifthen}

\usepackage{booktabs}
\usepackage{nicefrac}
\usepackage{tikz}
\usepackage{caption}
\usepackage{subcaption}
\usepackage{wrapfig}
\usepackage{graphicx}
\graphicspath{{Fig/}}

\usepackage{url}
\usepackage{xcolor}
\usepackage{setspace}
\usepackage{enumitem}
\usepackage{comment}
\usepackage{lastpage}

\newcommand{\circled}[1]{%
\tikz[baseline=(char.base)]%
\node[draw,shape=circle,inner sep=1pt](char){#1};}

\input{defs.tex}

\title{On the Generalization Gap of Streaming Federated Learning with Markovian Data}

\author{
\name Masoud Kavian \email masoud.kavian@huawei.com \\
\addr Huawei Paris Research Center, Huawei Technologies Ltd., France
\AND
\name Milad Sefidgaran \email milad.sefidgaran2@huawei.com \\
\addr Huawei Paris Research Center, Huawei Technologies Ltd., France
\AND
\name Youssef Attia El Hili \email youssef.attiaeh@gmail.com \\
\addr Huawei Noah's Ark Lab, Paris, France
\AND
\name Malik Tiomoko \email tiomoko\_malik@yahoo.fr \\
\addr Huawei Noah's Ark Lab, Paris, France
\AND
\name Abdellatif Zaidi \email abdellatif.zaidi@univ-eiffel.fr \\
\addr Universit\'e Gustave Eiffel, France
}

\ShortHeadings{Generalization of Streaming Federated Learning}
{Kavian et al.}

\firstpageno{1}

\begin{document}

\maketitle

\begin{abstract}
Streaming federated learning (SFL) allows clients to train
a shared model from continuously arriving data without
retaining large local datasets. Such streams often exhibit
temporal dependence, yet its effect on generalization
remains poorly understood. In this paper we study the expected
generalization gap of the aggregated model when client
data follow Markov processes, using information-theoretic
and rate-distortion techniques. Our framework accommodates
heterogeneous Markovian dynamics across clients and
distinguishes temporal dependence from information
acquired during local learning. We establish bounds
under cross-entropy loss, specializing them to Linear
Next-Token Prediction (LNTP) and Self-Attention
Next-Token Prediction (SANTP). These results quantify
the effects of client count, communication rounds,
sample size, data dimension, mixing time, and context
length. In particular, they reveal a
communication--generalization tradeoff: although
frequent aggregation may help optimization, it can
weaken generalization guarantees. They also suggest a potential advantage of distributed over
centralized streaming learning under a fixed total data budget.
Temporal dependence enters through the square root
of the mixing time. By analyzing the history-state
process, we obtain refined bounds with improved
context-length dependence under suitable mixing
conditions. Experiments on GPT-style language-model
pretraining with TinyStories and real-world
time-series forecasting illustrate how collaboration
and communication frequency influence generalization
under temporal dependence.
\end{abstract}

\begin{keywords}
Streaming federated learning; generalization bounds;
Markovian data; rate-distortion; next-token prediction.
\end{keywords}

\section{Introduction}\vspace{-0.2cm}

Federated Learning (FL) provides a framework in which multiple clients collaboratively train a shared model while keeping their data decentralized~\cite{mcmahanetal2017communication,kairouz2021advances}. Much of the classical FL literature considers clients with locally stored datasets that remain available throughout training. In this setting, clients perform several optimization steps on their local data before communicating model updates to a central server, allowing communication costs to be reduced through periodic aggregation~\cite{konecny2016federated,stich2019local,woodworth2020local}. This approach is well suited to applications in which data can be retained and repeatedly accessed, but it is less appropriate for devices that continuously acquire new observations under limited memory and computational resources. Such constraints arise in environmental monitoring through Internet-of-Things sensors~\cite{hill2010anomaly}, wearable health-monitoring systems~\cite{greco2019edge}, and on-device language modeling~\cite{hard2018federated}. In these applications, clients may need to update their models as observations arrive, rather than relying on repeated access to previously collected data. Streaming Federated Learning (SFL) addresses this setting by combining collaborative model training with the sequential acquisition and processing of local data~\cite{marfoq2023federated,liu2023dynamite}.

A central difficulty in streaming applications is that observations arriving over time are rarely independent. Measurements produced by physical systems, biological processes, and human activity often exhibit persistent temporal dependencies, which can be modeled using stochastic processes with finite memory~\cite{vankampen1992stochastic,paulin2015concentration}. The distinction is important because a collection of temporally correlated observations generally contains less independent statistical information than an equally sized i.i.d.\ sample. In SFL, these dependencies interact with the evolution of the learned model: each client updates its parameters using a local data stream, while successive communication rounds propagate information from earlier observations through the aggregated model. Consequently, the generalization behavior of the final model depends not only on the amount of data collected but also on how quickly the underlying processes mix, how frequently clients communicate, and how information is accumulated across rounds. Although these factors are closely related to the optimization and communication tradeoffs studied in distributed learning~\cite{karimireddy2020scaffold,li2020fedprox,reddi2021adaptive}, their combined influence on the generalization gap of SFL with Markovian data is not yet well understood.

This question is particularly relevant for next-token prediction, where the observations within a sequence are dependent and each prediction is formed from a finite history. Increasing the prediction context allows the model to exploit more temporal information, but it also increases the number of loss terms affected by an individual observation. The resulting dependence is therefore shaped both by the memory of the data-generating process and by the context used by the predictor. While the expressive capabilities and structural properties of self-attention have been extensively investigated~\cite{vaswani2017attention,edelman2022inductive,malach2023auto}, understanding how these two sources of dependence interact with collaborative learning remains a distinct statistical problem. In particular, it is unclear how the number of clients, communication frequency, context length, and mixing properties of the local data streams jointly determine the generalization gap of an aggregated next-token predictor. 
A further question is whether the dependence on context
length can be sharpened by exploiting the mixing behavior
of the underlying history-state process.

In this paper, we address this problem through an information-theoretic and rate-distortion analysis of SFL with Markovian data. We first establish general bounds on the expected generalization gap of the aggregated model under cross-entropy loss, explicitly accounting for the temporal dependence of the data collected by each client. Our analysis accommodates heterogeneous Markovian dynamics across clients and separates the information acquired by local learning algorithms from the statistical effect of temporal dependence. 
We also derive a refined concentration-based bound by
analyzing the Markovian evolution of the history-state
process rather than individual observations. This
refinement provides an alternative characterization
of temporal dependence and can reduce the explicit
context-length penalty under suitable mixing conditions.
 We then develop a rate-distortion formulation that controls generalization through approximate representations of the learned models, extending the applicability of the bounds to settings in which ordinary mutual-information quantities may be uninformative~\cite{Sefidgaran2022,sefidgaran2022rate,arora2018stronger}. Finally, we specialize the general framework to Linear Next-Token Prediction (LNTP-SFL) and Self-Attention Next-Token Prediction (SANTP-SFL) architectures, obtaining explicit bounds in terms of the number of clients, communication rounds, data size and dimension, Markovian mixing time, and prediction context length.

For the special case of homogeneous mixing dynamics across clients, with mixing time $\tau_{\text{min}}$, $K$ clients each collecting $N$ trajectories of length $T$, $R$ communication rounds, dictionary dimension $d$, and context length $\rho$, the resulting bounds for LNTP-SFL and SANTP-SFL take the form

\begin{align}
\mathcal{O}\left(
\frac{1}{KR}
\sum_{k=1}^{K}\sum_{r=1}^{R}
\frac{C^s\rho^{\bar{s}}}{\theta^2}
\sqrt{
\frac{\tau_{k,\min}R L_r\log(\rho dNT)}
{K^2NT}
}
\right),
\end{align}
where $(s,\bar{s})=(2,3)$ for LNTP-SFL and
$(s,\bar{s})=(4,2)$ for SANTP-SFL, and $L_r$
captures the contraction of local updates
across subsequent communication rounds.

These bounds make explicit the interaction between distributed learning, temporal dependence, and model complexity. Our main findings are summarized as follows:
\begin{itemize}

\item \textit{The communication-generalization tradeoff}: Our bounds exhibit an explicit dependence on the number of communication rounds, with a factor of order $\mathcal{O}(\sqrt{R/N})$ when the remaining quantities are fixed. Thus, although more frequent communication may improve optimization, it can weaken the resulting generalization guarantees. This identifies a tradeoff between the optimization benefits of frequent aggregation and the statistical dependence accumulated through repeated communication.

\item \textit{The distributed advantage}: The bounds reveal a favorable dependence on the number of participating clients, including a factor of order $\mathcal{O}(\sqrt{\log K/K})$ under the corresponding specialization. This highlights a potential statistical benefit of collaborative learning and provides a basis for comparing federated and centralized streaming schemes under their respective data-budget assumptions.

\item \textit{Local updates as a buffer}: The analysis identifies the number of local updates between successive communication rounds as an important parameter governing generalization. Allowing clients to process more observations before communicating can reduce the communication-dependent contribution to the bound, suggesting that less frequent aggregation may help limit the statistical cost associated with repeatedly exchanging models learned from dependent data.

\item \textit{Quantifying the impact of memory and context}:
Temporal dependence enters the bounds through the
square root of the Markovian mixing time,
$\sqrt{\tau_{\text{min}}}$, reflecting the reduction
in effective statistical information caused by
slowly mixing data. The prediction context introduces
an additional complexity penalty through the sensitivity
of the loss to changes in neighboring observations.
By considering the history-state process, we further
establish refined bounds that can reduce this explicit
context-length dependence when the corresponding mixing
properties are sufficiently favorable. Together, these
results clarify how the memory of the underlying process
and the context length of the prediction model jointly
affect generalization.

\item \textit{Experiments}: We complement the theoretical results with experiments involving GPT-style decoder pretraining on the TinyStories corpus~\cite{eldan2023tinystoriessmalllanguagemodels}, as well as real-world multi-site air-quality and long-term electricity-forecasting benchmarks. These experiments examine the effects of collaborative learning, communication frequency, and temporal dependence on the generalization gap. We also report the corresponding training and test errors, recognizing that a smaller generalization gap does not necessarily imply a lower test error.

\end{itemize}

\vspace{-0.4cm}

\section{Related Prior Art}\vspace{-0.2cm}

Federated learning has been extensively studied as a means of training shared models from decentralized datasets while limiting the need to transmit raw observations. Early work introduced communication-efficient model averaging as a practical alternative to centralized training~\cite{mcmahanetal2017communication}, while subsequent research examined the effects of local computation, data heterogeneity, and communication constraints on the convergence of distributed optimization algorithms~\cite{konecny2016federated,stich2019local,li2020fedprox,karimireddy2020scaffold,woodworth2020local}. These studies established that the number of local optimization steps and the frequency of communication can substantially influence optimization performance, particularly when clients have heterogeneous data distributions. Related developments include adaptive federated optimization and convergence analyses for FedAvg under non-i.i.d.\ sampling~\cite{reddi2021adaptive,li2020fedavg}. For a broader account of the foundations and challenges of federated learning, we refer to~\cite{kairouz2021advances}. Although this literature provides important insights into communication-efficient training, its primary emphasis is on convergence and optimization error rather than the generalization gap induced by temporally dependent streaming observations.

Streaming FL differs from the conventional setting in that clients receive new data over time and may be unable to retain or revisit their earlier observations. This makes the learning process closely related to online and continual learning, where models must adapt to sequential information under computational or memory constraints. Online learning has traditionally been analyzed through regret and sequential decision-making guarantees~\cite{shalev2012online,odeyomi2021differentially}, whereas continual-learning methods address the preservation and adaptation of previously acquired knowledge as new tasks or observations arrive~\cite{schwarz2018progress}. In distributed environments, these challenges are compounded by communication constraints and possible differences between local data streams~\cite{marfoq2023federated,liu2023dynamite}. Nevertheless, the objectives considered in these lines of research do not directly characterize the statistical difference between the empirical and population losses of the final aggregated model.

A further complication arises when observations within each client follow a Markov process. Optimization under Markovian sampling has received considerable attention, including analyses of stochastic approximation with dependent observations~\cite{duchi2011ergodic,sun2018markov} and more recent studies of distributed and streaming optimization~\cite{huynh2025streaming}. In particular, the latter investigates minibatch and local SGD under Markovian sampling and establishes convergence guarantees that reflect the effect of temporal dependence. These results complement the broader literature on stochastic optimization with Markovian noise, surveyed in~\cite{beznosikov2023first}. Their main concern, however, is how dependence affects convergence toward an optimizer. Our interest is different: we study how the dependence of the learned model on the observed trajectories determines its expected generalization gap, even when the local learning algorithms are not specified through a particular gradient-update rule.

The statistical analysis of dependent observations has a long history. Classical work on concentration and empirical processes developed tools for handling weakly dependent and mixing sequences~\cite{yu1994rates,marton1996bounding,samson2000concentration}. Subsequent results established concentration inequalities for Markov chains and dependent random variables, making it possible to account explicitly for the strength and persistence of temporal correlations~\cite{kontorovich2008concentration,paulin2015concentration}. Generalization bounds beyond independent sampling have also been developed using blocking arguments, mixing coefficients, and complexity-based techniques~\cite{mohri2009stability,ralaivola2010chromatic,kuznetsov2017generalization}. These methods show that temporal dependence can substantially affect the concentration of empirical averages, and that the effective number of observations may be considerably smaller than the nominal sample size. In our setting, the difficulty is further compounded by next-token prediction, where overlapping contexts introduce additional sensitivity of the loss to individual tokens. Our analysis accounts for this effect through bounded-difference arguments combined with concentration inequalities for Markovian trajectories.

A complementary perspective comes from information-theoretic generalization. The foundational works of~\cite{russo2016controlling,xu2017information} relate the expected generalization gap to the information that the learned model retains about its training data. Refinements based on individual-sample information, conditional information, and data-dependent quantities have subsequently provided more detailed descriptions of algorithmic dependence~\cite{bu2020tightening,negrea2019information,steinke2020reasoning}. Unlike uniform complexity measures, these approaches characterize generalization through properties of the learning algorithm and the resulting joint distribution of the training data and the learned parameters. They are therefore particularly attractive for distributed learning, where the sequence of local updates and server aggregations creates a structured flow of information across clients and communication rounds.

Information-theoretic approaches are also closely related to model compression and rate-distortion theory. Rather than measuring the information required to specify the learned model exactly, a lossy representation can be used when its effect on the generalization gap is controlled~\cite{Sefidgaran2022,sefidgaran2022rate,arora2018stronger}. This distinction is important for deterministic learning algorithms with continuous model spaces, for which ordinary mutual information may become infinite and yield vacuous guarantees. Rate-distortion formulations instead quantify the information needed to preserve the aspects of the model that matter for generalization. In the federated setting considered here, this viewpoint allows us to study approximate local updates while retaining the dependence of the final aggregated model on earlier communication rounds.

The work of~\cite{sefidgaran2024lessons} is particularly relevant, as it studies the generalization gap of streaming federated learning through an information-theoretic framework under i.i.d.\ data streams. Its analysis exploits independence across observations to relate generalization to the information conveyed by local models and successive communication rounds. Such an argument cannot be transferred directly to Markovian streams because observations within each trajectory remain statistically dependent, and replacing one sample may affect the distribution of subsequent observations. The distinction is especially pronounced in next-token prediction, where neighboring loss terms also share parts of their input contexts. Consequently, extending the i.i.d.\ analysis requires concentration tools that account for temporal dependence rather than treating every observed token as an independent sample.

Finally, next-token prediction and self-attention have generated a substantial body of theoretical work. The Transformer architecture introduced attention-based sequence modeling as a flexible alternative to recurrent processing~\cite{vaswani2017attention}, while subsequent studies investigated the inductive biases and representational properties of self-attention~\cite{edelman2022inductive}. Other analyses have considered the learning capabilities and statistical behavior of autoregressive prediction~\cite{malach2023auto,yuksel2025sample}. These contributions provide important foundations for understanding sequence models, but they do not directly address the combined effects of decentralized training, communication rounds, and Markovian dependence on the generalization gap.

Our work connects these complementary research directions. We extend information-theoretic and rate-distortion generalization analysis to streaming federated learning with Markovian trajectories, allowing the mixing behavior to vary across clients. By specializing the results to linear and self-attention next-token predictors, we further identify how communication frequency, client count, temporal memory, and prediction context jointly enter the generalization guarantees. The resulting framework thus addresses a statistical aspect of distributed sequence learning that is distinct from the optimization guarantees developed for Markovian SGD and from the generalization results established under independent data streams.


\section{Notation and Problem Setup}

\subsection{Notation}\label{Notation}

Upper-case letters denote random variables, e.g., $X$, while lower-case
letters denote their realizations, e.g., $x$. Calligraphic letters denote
sets or spaces, e.g., $\mathcal{X}$ and $\mathcal{W}$. The probability
distribution of a random variable $X$ is denoted by $P_X$, and
$P_{X|Y}$ denotes the corresponding conditional distribution. Expectations
with respect to a distribution $P$ are written as $\mathbb{E}_{P}[\cdot]$,
with the subscript omitted when the underlying distribution is clear from
the context.

For two probability measures $P$ and $Q$ defined on a common measurable
space, with $P\ll Q$, their Kullback--Leibler (KL) divergence is defined as
\[
D(P\|Q)
\coloneqq
\mathbb{E}_{P}\!\left[
\log\!\left(\frac{dP}{dQ}\right)
\right],
\]
and $D(P\|Q)=\infty$ whenever $P$ is not absolutely continuous with
respect to $Q$. The mutual information between random variables $X$ and
$Y$ is
\[
I(X;Y)
\coloneqq
D\!\left(P_{X,Y}\middle\|P_XP_Y\right).
\]

For $n\in\mathbb{N}$, we use $[n]\coloneqq\{1,\ldots,n\}$, and
$\mathbf{1}_{\{\cdot\}}$ denotes the indicator function. The Euclidean
inner product is denoted by $\langle\cdot,\cdot\rangle$. For vectors,
$\|\cdot\|_\infty$ denotes the maximum norm, while for matrices
$\|\cdot\|$ denotes the Frobenius norm unless stated otherwise.

Finally, for a mapping $g:\mathcal{A}\to\mathcal{B}$ and a positive integer $m$, we denote its element-wise extension by $g^{\odot m}$, defined as $g^{\odot m}(a_1,\ldots,a_m)
\coloneqq
(g(a_1),\ldots,g(a_m)),$
for every $(a_1,\ldots,a_m)\in\mathcal{A}^m$.

\subsection{Data Generation Process}

Consider a finite dictionary of tokens
$\mathcal{D}\coloneqq\{\varnothing,t_1,\ldots,t_d\}$,
where $d\in\mathbb{N}^*$. We represent the tokens
in $\mathbb{R}^d$ using the one-hot encoding map
$\mathcal{E}$, defined as
\begin{equation}
    \mathcal{E}:
    \begin{cases}
        \mathcal{D}\to\mathbb{R}^d,\\
        \varnothing\mapsto 0,\\
        t_i\mapsto e_i,
    \end{cases}
    \qquad
    \text{where $\{e_i\}_{i=1}^d$ is the canonical basis of $\mathbb{R}^d$.}
\end{equation}

We consider a streaming federated learning (SFL)
setting in which clients continuously collect
data and learn from independent trajectories
generated by non-stationary stochastic Markov
processes with finite memory. More precisely,
for a trajectory length $T\in\mathbb{N}^*$ and
a context length $\rho$, let
$\mathbf{Z}=(Z_1,\ldots,Z_T)\in\mathcal{D}^T$
denote a trajectory drawn according to a
distribution $\mu$. The observations within
each trajectory evolve according to
\begin{align}
Z_t \sim \mu\left(
Z_t\mid Z_{t-1},\ldots,Z_{t-\rho}
\right),
\qquad t=2,\ldots,T.
\label{markov_chain_generated_data_1}
\end{align}

Our objective is to study collaboratively
learned next-token predictors that operate
on the embeddings of the preceding $\rho$
tokens and produce a vector of prediction
logits. To this end, for a parameter
$w\in\mathcal{W}$, we consider the class
of mappings
\begin{align}
\mathcal{G}_{\mathcal{W}}
\coloneqq
\left\{
g_w:\mathbb{R}^{\rho d}\to\mathbb{R}^d
\,\middle|\,
w\in\mathcal{W}
\right\}.
\end{align}

Given a context $\vec{z}\in\mathcal{D}^{\rho}$,
a model $w\in\mathcal{W}$ assigns a probability
to the next token $z\in\mathcal{D}$ through
\begin{align}
p_w(z\mid\vec{z})
=
\big\langle
\sigma\big(
g_w(\mathcal{E}^{\odot\rho}(\vec{z}))
\big),
\mathcal{E}(z)
\big\rangle,
\label{conditional_distribution_def}
\end{align}
where
$\sigma(\nu)=(\sigma(\nu)_1,\ldots,\sigma(\nu)_d)$
denotes the softmax mapping, given by
\begin{equation}
\sigma(\nu)_i
=
\frac{\exp(\nu_i)}
{\sum_{j\in[d]}\exp(\nu_j)},
\qquad i\in[d].
\label{soft_max_def}
\end{equation}

Throughout the analysis, we impose the
following boundedness condition on the
outputs of the prediction models.

\begin{assumption}[Boundedness of the Model Output]
\label{subsubsec:bounded_output}
There exists a constant $B>1$ such that,
for every $w\in\mathcal{W}$ and
$\nu\in\mathcal{D}^{\rho}$, $\left\|
\left(g_w\circ\mathcal{E}^{\odot\rho}\right)(\nu)
\right\|_{\infty}
\leq B.$
\end{assumption}
\begin{assumption}[Boundedness of the Model Parameters]\label{Bound_size_model}
The random parameter $W$ is assumed to be almost surely bounded. That is, there exists a constant $C>1$ such that $\mathbb{P}\left(\|W\|\leq C\right)=1$.
\end{assumption}


\subsection{Streaming FL}
\label{Streaming_FL_Sec}

In conventional federated learning, clients train on
pre-collected datasets that remain available throughout
the learning process. Consequently, each client can
repeatedly access its local observations across
communication rounds. Streaming federated learning
(SFL), by contrast, considers settings in which clients
continuously acquire new data and may have limited
capacity to store previously observed samples. The
data available for local training therefore changes
from one communication round to the next.

We consider an SFL system comprising $K$ clients
that communicate with a common server over $R$
rounds. During each round $r\in[R]$, client $k$
collects $N_R$ independent trajectories, each
containing $T$ observations generated by an
underlying non-stationary Markov process with
finite memory. We denote the $n$-th trajectory
collected by client $k$ in round $r$ by
\begin{align}
S_{k,r}^{(n)}
&=
\big\{
X_{k,r}^{(n)}[1],\ldots,
X_{k,r}^{(n)}[T]
\big\},
\qquad n\in[N_R].
\end{align}
The collection of trajectories available to
client $k$ during round $r$ is denoted by
$S_{k,r}$, while $S_k$ represents all
trajectories collected by that client
throughout the learning process. Thus,
\begin{align}
S_{k,r}
&=
\big\{
S_{k,r}^{(1)},\ldots,
S_{k,r}^{(N_R)}
\big\},
\qquad
S_k
=
\bigcup_{r\in[R]}S_{k,r}.
\end{align}

Training proceeds through successive rounds
of local learning and server aggregation.
At the beginning of round $r$, each client
receives the aggregated model
$\overline{W}^{(r-1)}$ from the server and
uses it together with its newly collected
data to update its local model. Let
$\mathcal{A}_k$ denote the possibly
stochastic learning algorithm employed
by client $k$. The resulting local model is
\begin{align}
W_k^{(r)}
=
\mathcal{A}_k
\big(
S_{k,r},\overline{W}^{(r-1)}
\big).
\end{align}
After receiving the local models from
all participating clients, the server
computes their arithmetic average
according to the Federated Averaging
(FedAvg) rule,
\begin{align}
\overline{W}^{(r)}
=
\frac{1}{K}
\sum_{k=1}^{K}W_k^{(r)},
\end{align}
and broadcasts the updated global model
to the clients for the next round. This
procedure is repeated until all $R$
communication rounds have been completed.

\subsection{Generalization Gap}

Our analysis focuses on the generalization
gap of the final aggregated model
$\overline{W}^{(R)}$. We denote the complete
collection of training trajectories and
the sequence of local and aggregated
models generated during learning by
\begin{align}
\mathbf{S}
&=
\bigcup_{k\in[K]}
\bigcup_{r\in[R]}S_{k,r},
\nonumber\\
\mathbf{W}
&=
\Big\{
W_k^{(r)},\overline{W}^{(r)}
\Big\}_{k\in[K],\,r\in[R]}.
\end{align}

Since the local learning algorithms may
be stochastic, the model produced by
client $k$ in round $r$ is characterized
by the conditional distribution
$P_{W_k^{(r)}\mid
S_{k,r},\overline{W}^{(r-1)}}$.
Together with the server aggregation
rule, these conditional distributions
determine the law of the entire sequence
of learned models given the training data.
In particular, the conditional distribution
factorizes as
\begin{align}
P_{\mathbf{W}\mid\mathbf{S}}
=
\prod_{r\in[R]}
\Bigg(
P_{\overline{W}^{(r)}
\mid W_{[K]}^{(r)}}
\prod_{k\in[K]}
P_{W_k^{(r)}
\mid S_{k,r},\overline{W}^{(r-1)}}
\Bigg),
\end{align}
where $W_{[K]}^{(r)}$ denotes the
collection of local models generated
by all clients during round $r$.
Under the assumed independence of
the data collected across clients
and communication rounds, the joint
distribution of the training data
and learned models admits the
corresponding factorization
\begin{align}
P_{\mathbf{S},\mathbf{W}}
=
\prod_{r\in[R]}
\Bigg(
P_{\overline{W}^{(r)}
\mid W_{[K]}^{(r)}}
\prod_{k\in[K]}
P_{S_{k,r}}
\otimes
P_{W_k^{(r)}
\mid S_{k,r},\overline{W}^{(r-1)}}
\Bigg).
\end{align}

The clients are trained for next-token
prediction using the standard
cross-entropy loss. For an aggregated
model $\overline{w}^{(R)}\in\mathcal{W}$,
we define the loss evaluated along a
trajectory $s_{k,r}^{(n)}$ as
\begin{align}
\ell\big(
s_{k,r}^{(n)},\overline{w}^{(R)}
\big)
=
-\frac{1}{T}
\sum_{t\in[T]}
\log p_{\overline{w}^{(R)}}
\big(
x_{k,r,t}^{(n)}
\mid
x_{k,r,t-1}^{(n)},
\ldots,
x_{k,r,t-\rho}^{(n)}
\big),
\label{loss_function_one_sample}
\end{align}
where the conditional probability
$p_{\overline{w}^{(R)}}$ is defined
in~\eqref{conditional_distribution_def}.
Thus, the loss measures the average
negative log-likelihood assigned by
the final aggregated model to the
observed tokens along a trajectory.

Averaging the trajectory-level loss
over all clients and communication
rounds gives the empirical risk
\begin{align}
\hat{\mathcal{L}}
\big(
\mathbf{S},\overline{w}^{(R)}
\big)
=
\frac{1}{KN}
\sum_{k\in[K]}
\sum_{r\in[R]}
\sum_{n\in[N_R]}
\ell\big(
s_{k,r}^{(n)},\overline{w}^{(R)}
\big),
\end{align}
where $N=RN_R$ denotes the total
number of trajectories collected
by each client over the $R$ rounds.
The corresponding population risk
is defined by averaging the expected
loss over the client-specific
data-generating distributions:
\begin{align}
\mathcal{L}
\big(\overline{w}^{(R)}\big)
=
\frac{1}{K}
\sum_{k\in[K]}
\mathbb{E}_{S_k^{(n)}\sim\mu_k}
\left[
\ell\big(
S_k^{(n)},\overline{w}^{(R)}
\big)
\right].
\label{population_risk_acroos_client}
\end{align}
Here, $\mu_k$ denotes the distribution
of a fresh trajectory generated by
the underlying Markov process at
client $k$. The generalization gap
of the final aggregated model is
then given by the difference between
its population and empirical risks:
\begin{align}
\gen\big(
\mathbf{S},\overline{w}^{(R)}
\big)
=
\mathcal{L}
\big(\overline{w}^{(R)}\big)
-
\hat{\mathcal{L}}
\big(
\mathbf{S},\overline{w}^{(R)}
\big).
\label{generalization_croos_entropy_def}
\end{align}

\section{Bounds on the Generalization Gap}
\label{headings} \vspace{-0.1cm}

As discussed in the related work, extending
generalization arguments developed for i.i.d.\
data, such as those in~\cite{sefidgaran2024lessons},
to streaming federated learning with Markovian
observations introduces an additional source
of statistical dependence. In the i.i.d.\
setting, dependence arises primarily through
the evolution of the learned models, whereas
in the Markovian setting, observations within
each local trajectory are themselves temporally
dependent. These two forms of dependence must
therefore be accounted for simultaneously.
In particular, replacing an individual training
observation with an independent sample, as in
standard ghost-sample or stability arguments,
does not generally preserve the dependence
structure of the surrounding trajectory.
To address this difficulty, our analysis
relies on concentration inequalities for
dependent sequences, which quantify the loss
of effective statistical information caused
by temporal correlations through the mixing
properties of the underlying Markov processes.

\begin{definition}[Mixing time]
Let $\mathbf{X}=(X_1,\ldots,X_T)$ be a
Markov chain taking values in the product
of Polish spaces
$\mathcal{X}_1\times\cdots\times\mathcal{X}_T$,
with $X_i\in\mathcal{X}_i$ for each
$i\in[T]$. For any $\epsilon\in(0,1)$,
the mixing time $\tau(\epsilon)$ is the
smallest number of steps required for
the conditional distributions of the
chain, starting from any two admissible
states, to be within $\epsilon$ in total
variation distance, uniformly over all
valid starting times. More precisely, let
\begin{align}
\overline{d}(t)
&\coloneqq
\max_{1\leq i\leq T-t}
\sup_{x,y\in\mathcal{X}}
d_{\mathrm{TV}}\big(
\mathbb{P}(X_{i+t}\mid X_i=x),
\mathbb{P}(X_{i+t}\mid X_i=y)
\big).
\end{align}
The mixing time and its associated
mixing-time parameter are then defined by
\begin{align}
\tau(\epsilon)
&\coloneqq
\min\big\{
t\in\mathbb{N}:
\overline{d}(t)\leq\epsilon
\big\},
\nonumber\\
\tau_{\min}
&\coloneqq
\inf_{0\leq\delta<1}
\tau(\delta)
\left(
\frac{2-\delta}{1-\delta}
\right)^2.
\label{mixing_time_mcdiarmid}
\end{align}
\end{definition}

\subsection{Loss-less generalization bound}

\begin{theorem}
\label{multi_round_markovian_general_bound}
Under Assumptions~\ref{subsubsec:bounded_output}
and~\ref{Bound_size_model}, the expected
generalization gap of the model obtained
by SFL after $R\geq 1$ communication rounds
under cross-entropy loss satisfies
\begin{equation}
\mathbb{E}\left[
\gen(\mathbf{S},\overline{W}^{(R)})
\right]
\leq
\frac{\left(2B(1+2\rho)+\log d\right)}{KR}
\sum_{k\in[K]}\sum_{r\in[R]}
\sqrt{
\frac{
\tau_{k,\min}R\,
I\left(
S_{k,r};W_k^{(r)}
\mid\overline{W}^{(r-1)}
\right)
}{2NT}
}.
\end{equation}
\end{theorem}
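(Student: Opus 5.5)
The proof combines three ingredients. The first is a Donsker--Varadhan (change-of-measure) argument applied round by round and client by client. The second is a McDiarmid-type concentration inequality for Markov chains, as in Paulin's version with the constant $\tau_{\min}$ of \eqref{mixing_time_mcdiarmid}, which controls the cumulant generating function. The third is a chain-rule / data-processing step that transfers information about $\overline{W}^{(R)}$ to the per-round quantities $I(S_{k,r};W_k^{(r)}\mid\overline{W}^{(r-1)})$.

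\textbf{Step 1: decompose the gap.} Since $N=RN_R$, the gap splits as
\[
\gen(\mathbf{S},\overline{W}^{(R)})=\frac{1}{KR}\sum_{k,r}\Big(\mathcal{L}_k(\overline{W}^{(R)})-\hat{\mathcal{L}}_{k,r}(S_{k,r},\overline{W}^{(R)})\Big),
\]
where $\hat{\mathcal{L}}_{k,r}$ is the average of $\ell$ over the $N_R$ trajectories in $S_{k,r}$. It therefore suffices to bound each term by $(2B(1+2\rho)+\log d)\sqrt{\tau_{k,\min} I_{k,r}/(2N_RT)}$, with $N_R T=NT/R$.

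\textbf{Step 2: bounded differences and sub-Gaussianity.} Under Assumption~\ref{subsubsec:bounded_output}, every logit lies in $[-B,B]$. Hence $-\log p_w(z\mid\vec z)=-\nu_z+\log\sum_j e^{\nu_j}\in[-B+\log d-B,\;2B+\log d]$, so each token loss lies in an interval of width at most $2B+\log d$.

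Changing a single token $x_t$ affects the token loss at position $t$ itself. It also affects the $\rho$ subsequent token losses in which $x_t$ appears in the context; each of those changes by at most $2B$, since $\log$-softmax is $1$-Lipschitz in each logit when measured in $\ell_\infty$. Altogether, changing one coordinate changes $\sum_t(-\log p_w)$ by at most $c\coloneqq 2B(1+2\rho)+\log d$, after a crude accounting that allows both the target and the context roles.

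Concatenate the $N_R$ independent trajectories of $S_{k,r}$ into a single chain of length $N_RT$. Independent blocks only decrease the mixing time, so this chain still has mixing parameter $\tau_{k,\min}$. The empirical loss is then a function of this chain with bounded differences $c/(N_RT)$ per coordinate. Paulin's McDiarmid inequality for Markov chains then shows that, for every fixed $w$, $\hat{\mathcal{L}}_{k,r}(S_{k,r},w)-\mathbb{E}\hat{\mathcal{L}}_{k,r}$ is sub-Gaussian with variance proxy $\tau_{k,\min}c^2/(4N_RT)$.

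One point needs care here: the mean $\mathbb{E}\hat{\mathcal{L}}_{k,r}(S,w)$ must equal $\mathcal{L}_k(w)$, which holds because fresh trajectories have law $\mu_k$.

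\textbf{Step 3: Donsker--Varadhan and the information chain.} Donsker--Varadhan, applied conditionally on $\overline{W}^{(r-1)}$ and using the sub-Gaussianity from Step 2, gives
\[
\mathbb{E}\big[\mathcal{L}_k(\overline{W}^{(R)})-\hat{\mathcal{L}}_{k,r}\big]\le c\sqrt{\tau_{k,\min}\, I(S_{k,r};\overline{W}^{(R)}\mid\overline{W}^{(r-1)})/(2N_RT)}.
\]
Conditioning is legitimate because $S_{k,r}$ is independent of $\overline{W}^{(r-1)}$, so the conditional law of $S_{k,r}$ is still the product Markov law.

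Next, observe that $S_{k,r}\to(W_k^{(r)},\overline{W}^{(r-1)})\to\overline{W}^{(R)}$ is a Markov chain given $\overline{W}^{(r-1)}$. The reason is that later rounds use only fresh data, the other clients' models, and $\overline{W}^{(r)}$, which depends on $S_{k,r}$ only through $W_k^{(r)}$. Data processing then gives $I(S_{k,r};\overline{W}^{(R)}\mid\overline{W}^{(r-1)})\le I(S_{k,r};W_k^{(r)}\mid\overline{W}^{(r-1)})$. Substituting $N_R=N/R$ yields the stated bound.

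\textbf{Main obstacle.} I expect the delicate part to be Step 2. The bounded-difference constant has to be obtained with exactly the factor $(1+2\rho)$, and Paulin's inequality has to be applied to the concatenated, non-stationary, order-$\rho$ chain. An order-$\rho$ process is not first-order Markov on tokens, so I would first try to lift it to a first-order chain on histories, or else accept $\tau_{k,\min}$ as defined for the token process by assumption.
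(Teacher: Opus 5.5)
Your proposal is correct and follows essentially the paper's proof. The steps match: the per-$(k,r)$ split; Donsker--Varadhan with the Markov chain $S_{k,r} - (W_k^{(r)},\overline{W}^{(r-1)}) - \overline{W}^{(R)}$ to reach $I(S_{k,r};W_k^{(r)}\mid\overline{W}^{(r-1)})$; the bounded-difference constant $2B(1+2\rho)+\log d$; and the Markov-chain McDiarmid inequality. Your concatenation of the $N_R$ trajectories gives the same cumulant bound $\lambda^2c^2\tau_{k,\min}/(8N_RT)$ that the paper gets by factoring the MGF over independent trajectories.

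One small slip: a context change moves all logits at once, so each affected log-probability can change by $2\cdot 2B=4B$, not $2B$ (being ``1-Lipschitz in each logit'' does not give $2B$). Your constant $c=(2B+\log d)+4B\rho$ already uses $4B$, matching the paper's $\tilde{C}_{k,r}\le 4B$.
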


A complete and detailed proof is provided in
Appendix~\ref{Proof_multi_round}. Here, we present
a proof sketch highlighting the main ideas.

\emph{Proof sketch.}
Fix $\lambda_{k,r}>0$ for each client $k$ and
communication round $r$, and let $S'_{k,r}$ be
an independent copy of $S_{k,r}$. We first
decompose the expected generalization gap
across clients and rounds. Applying the
Donsker--Varadhan variational inequality
to each local dataset, and using the
conditional independence structure of
the federated learning process, yields
\begin{align}
\Psi_{k,r}(\lambda)
&\coloneqq
\sum_{n=1}^{N_R}
\mathbb{E}_{\overline{W}^{(R)}}
\left[\log \mathbb{E}\left[
\exp\left(
\frac{\lambda}{N_R}
\Delta_{k,r}^{(n)}
\right)\right]\right],
\end{align}
where
\begin{align}
\Delta_{k,r}^{(n)}
&\coloneqq
\mathbb{E}_{S_{k,r}'^{(n)}}
\left[\ell(S_{k,r}'^{(n)},\overline{W}^{(R)})\right]-
\ell(S_{k,r}^{(n)},\overline{W}^{(R)}).
\end{align}
Then, the Donsker--Varadhan inequality yields
\begin{align}
&\mathbb{E}[\gen(\mathbf{S},\overline{W}^{(R)})]
\leq
\frac{1}{KR}
\sum_{k=1}^{K}\sum_{r=1}^{R}
\frac{
I(S_{k,r};W_k^{(r)}|\overline{W}^{(r-1)})+\Psi_{k,r}(\lambda_{k,r})
}{\lambda_{k,r}}.
\end{align}
where the decomposition across $n$ follows
from the independence of the trajectories
collected by each client during each round.
In particular, the mutual-information term
follows from the Markov structure of the
learning process and the independence of
$S_{k,r}$ from $\overline{W}^{(r-1)}$.

It remains to control the moment-generating
function of the loss along each Markov
trajectory. For a fixed aggregated model
$\overline{w}$, we compare the losses
associated with two trajectories by
successively replacing their tokens.
Since the prediction at time $t$ depends
only on the preceding $\rho$ tokens,
changing one token affects its own
prediction term and at most $\rho$
subsequent context windows. The resulting
telescoping argument gives
\[
\ell(s_{k,r}^{(n)},\overline{w})
-\ell(s_{k,r}'^{(n)},\overline{w})
\leq
\frac{(C_1+\rho C_2)}{T}
\sum_{t=1}^{T}
\mathbf{1}\{x_{k,r,t}^{(n)}
\neq x_{k,r,t}'^{(n)}\},
\]
where $C_1$ controls the change in the
predicted token for a fixed context,
while $C_2$ controls the effect of changing
one token in the context. The bounded-output
assumption yields
\[
C_1\leq 2B+\log d,
\qquad C_2\leq 4B,
\]
and hence the bounded-difference
coefficients can be chosen as
\[
c_t\leq
\frac{(2B(1+2\rho)+\log d)}{T},
\qquad t\in[T].
\]

Applying the McDiarmid-type concentration
inequality for Markov chains separately
to each client's trajectories bounds the
corresponding logarithmic moment-generating
functions in terms of $\tau_{k,\min}$.
Substituting these bounds into the preceding
information-theoretic inequality gives

\begin{align}
\mathbb{E}[\gen(\mathbf{S},\overline{W}^{(R)})]
&\leq \frac{1}{KR}
\sum_{k=1}^{K}\sum_{r=1}^{R}
\Bigg[
\frac{
I\left(S_{k,r};W_k^{(r)}|\overline{W}^{(r-1)}\right)
}{\lambda_{k,r}}
+
\frac{
\lambda_{k,r}\:\tau_{k,\min}
\big(2B(1+2\rho)+\log d\big)^2
}{8N_RT}
\Bigg].
\label{eq:fed-proof-sketch}
\end{align}
Optimizing over $\lambda_{k,r}>0$
and using $N_R=N/R$ yields the desired bound.

The bound offers several insights. (i) It reveals a
tradeoff between optimization and generalization:
more frequent communication can increase the
information retained from local data streams,
potentially weakening the generalization guarantee.
(ii) The effect of temporal dependence is captured
by the mixing times across clients, rather than
being determined solely by the slowest-mixing
process. (iii) Slowly mixing data lead to weaker
generalization guarantees, with a penalty that
scales with the square root of the mixing time,
reflecting the reduced effective sample size.
(iv) Increasing the per-client data budget $N$
improves the bound, although this benefit remains
limited by temporal dependence and communication
frequency.

\begin{remark}[i.i.d.\ specialization]
When the observations are i.i.d.\ across clients,
rounds, and time (i.e., $\rho=0$), the cross-entropy
loss is bounded by $(2B+\log d)$ and is therefore
$(2B+\log d)/2$-sub-Gaussian by Hoeffding's lemma.
The corresponding generalization bound becomes
\begin{align*}
\mathbb{E}[\gen(\mathbf{S},\overline{W}^{(R)})]
\leq
\frac{(2B+\log d)}{KR}
\sum_{k\in[K]}\sum_{r\in[R]}
\sqrt{
\frac{
I(S_{k,r};W_k^{(r)}
|\overline{W}^{(r-1)})
}{2N_RT}
}.
\end{align*}
This bound has a structure similar to that of
Theorem~3.1 in~\cite{sefidgaran2024lessons},
although the latter considers a different
loss setting and provides PAC-Bayesian guarantees.
\end{remark}

\begin{lemma}\label{Hoeffdinf_Lemma}
Under Assumptions \ref{subsubsec:bounded_output}-\ref{Bound_size_model},
let $Z_{k,r,t}:=(X_{k,r,t},X_{k,r,t-1},\ldots,X_{k,r,t-\rho+1})$, and let
$\tilde{\tau}_{\min}$ be the mixing-time quantity associated with
the history-state process $\{Z_{k,r,t}\}_{k\in[K],r\in[R],t\in[T]}$. Then, the expected
generalization gap of the model learned in SL under the
cross-entropy loss satisfies
\begin{align}
    \mathbb{E}\left[\gen(\mathbf{S},\overline{W}^{(R)})\right]
    \leq\frac{\left(2B+\log d\right)}{KR}\sum_{k\in[K]}\sum_{r\in[R]}
    \sqrt{
        \frac{\tilde{\tau}_{k,\min} I(\mathbf{S};W^{(r)}_k|\overline{W}^{(r-1)})}
        {2NT}
    },
    \nonumber
\end{align}
where
\begin{equation}
\tilde{\tau}_{k,\min}
:=
\inf_{0<\delta<1}
\tilde{\tau}(\delta)
\left(\frac{2-\delta}{1-\delta}\right)^2.\label{new_tau-min}
\end{equation}
In particular, the resulting bound is tighter than that of
Theorem~\ref{multi_round_markovian_general_bound} whenever
\begin{align}
    \tilde{\tau}_{k,\min}
    <
    \left(
        \frac{2B(1+2\rho)+\log d}
        {2B+\log d}
    \right)^2
    \tau_{k,\min},
    \nonumber
\end{align}
where $\tau_{\min}$ is defined in
\eqref{eq:mixing_time_mcdiarmid}. The proof is provided in Appendix~\ref{Proof_Hoeffding_employing}.
\end{lemma}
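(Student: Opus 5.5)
The plan is to reuse the skeleton of the proof of Theorem~\ref{multi_round_markovian_general_bound} and change only the concentration step. We work with the history-state chain $Z_{k,r,t}$ instead of the token chain $X_{k,r,t}$. This removes the $\rho$-dependence of the bounded-difference constants, and temporal memory is instead absorbed into $\tilde{\tau}_{k,\min}$.

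\emph{Step 1: information-theoretic decomposition.} Fix $\lambda_{k,r}>0$ and introduce an independent ghost copy $S'_{k,r}$. Exactly as in the proof sketch of Theorem~\ref{multi_round_markovian_general_bound}, we decompose $\mathbb{E}[\gen(\mathbf{S},\overline{W}^{(R)})]$ into per-client, per-round terms. Applying Donsker--Varadhan to each term, together with the Markov structure of the FedAvg process and the independence of $S_{k,r}$ from $\overline{W}^{(r-1)}$, gives
\[
\mathbb{E}[\gen]\le \frac{1}{KR}\sum_{k,r}\frac{I(\cdot;W_k^{(r)}\mid\overline{W}^{(r-1)})+\Psi_{k,r}(\lambda_{k,r})}{\lambda_{k,r}}.
\]
The mutual-information term may be upper bounded by the one with the full $\mathbf{S}$, as in the statement, using the chain rule and data processing. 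This step is taken verbatim from the earlier theorem, so only $\Psi_{k,r}$ needs new work.

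\emph{Step 2: rewrite the loss as a function of the history states.} The loss~\eqref{loss_function_one_sample} can be written as
\[
\ell(s,\overline{w})=\frac{1}{T}\sum_{t}f_{\overline{w}}(Z_t), \qquad f_{\overline{w}}(Z_t)=-\log p_{\overline{w}}(x_t\mid \text{context}),
\]
where $Z_t$ is augmented, if needed, so that it contains both the token $x_t$ and its $\rho$-context. Each summand then depends on a single coordinate $Z_t$. By Assumption~\ref{subsubsec:bounded_output}, $-\log p_{\overline w}(z\mid\vec z)=\log\sum_j e^{g_j}-g_{z}\in[0,2B+\log d]$. Changing one history state $Z_t$ therefore changes $\ell$ by at most $(2B+\log d)/T$, with no $\rho$ factor. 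The sequence $(Z_t)$ is a first-order Markov chain on $\mathcal{D}^{\rho}$ (or $\mathcal{D}^{\rho+1}$), a finite and hence Polish space, so the mixing-time definition applies and yields $\tilde{\tau}(\delta)$ and $\tilde{\tau}_{k,\min}$.

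\emph{Step 3: concentration and optimization.} Apply the McDiarmid inequality for Markov chains (Paulin's version, as used in the theorem) to $\ell$ viewed as a function of $(Z_1,\dots,Z_T)$ with $c_t=(2B+\log d)/T$. This gives sub-Gaussian variance proxy $\tilde{\tau}_{k,\min}\sum_t c_t^2/4$, so that
\[
\Psi_{k,r}(\lambda)\le \frac{\lambda^2\tilde{\tau}_{k,\min}(2B+\log d)^2}{8N_RT}.
\]
Here we sum over the $N_R$ independent trajectories and use the conditioning on $\overline{W}^{(R)}$. Optimizing $\lambda_{k,r}$ and substituting $N_R=N/R$ gives the stated bound. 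The comparison with Theorem~\ref{multi_round_markovian_general_bound} then reduces to comparing $\sqrt{\tilde{\tau}_{k,\min}}(2B+\log d)$ with $\sqrt{\tau_{k,\min}}(2B(1+2\rho)+\log d)$ term by term, which after squaring is exactly the displayed condition.

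\emph{Main obstacle.} The delicate point is Step~3's use of McDiarmid on the history-state chain. Consecutive $Z_t$ overlap in $\rho-1$ coordinates, so not every tuple $(z_1,\dots,z_T)$ is admissible. Paulin's inequality is stated for functions on the full product space, so we must either extend $\ell$ to non-admissible tuples while keeping the same $c_t$, or invoke the version restricted to the chain's support. The extension is possible because $\ell$ is defined coordinatewise as a sum of bounded $f(Z_t)$, and this additive form is what I would use. A secondary care point is checking, via the tower property, that the conditioning on $\overline{W}^{(R)}$ in $\Psi_{k,r}$ does not break the Markov property of $(Z_t)$ under the ghost distribution.
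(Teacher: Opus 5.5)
Your strategy is the paper's. Step~1 is the decomposition from Theorem~\ref{multi_round_markovian_general_bound}, and the core of Step~2 is also the paper's idea: a per-step bounded difference of $(2B+\log d)/T$ with no $\rho$ factor, with the memory moved into the mixing time of the history-state process. The plan falls short of the statement in Step~3, where you apply Paulin's Markov-chain corollary as a black box. The lemma defines $\tilde\tau(\delta)$ for $Z_t=(X_t,\dots,X_{t-\rho+1})$, but the $t$-th loss term also needs $X_{t-\rho}$, so it is not a function of $Z_t$ alone. If you keep this chain, each $Z_t$ enters two loss terms and $c_t$ doubles. If you pass to $Z_t^+=(X_t,\dots,X_{t-\rho})$, as you propose, the mixing time changes. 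Since $Z^+_{i+m}=(X_{i+m},Z_{i+m-1})$ and $X_{i+m}$ is drawn from the same kernel given $Z_{i+m-1}$ under both starting states, the $m$-step total-variation distance of $Z^+$ equals the $(m-1)$-step distance of $Z$. So the augmented chain has mixing time $\tilde\tau(\delta)+1$, and Paulin's corollary returns $\inf_\delta(\tilde\tau(\delta)+1)\bigl(\frac{2-\delta}{1-\delta}\bigr)^2$ rather than $\tilde\tau_{k,\min}$; for i.i.d.\ tokens with $\rho=1$ this is $8$ instead of $4$.

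The paper avoids both losses in Lemma~\ref{lem:window-loss-concentration}. It applies the general Marton-coupling inequality (Lemma~\ref{Mdiarmid_type_lemma}) to the loss sequence $Y_t$ with a coupling built by hand: blocks of length $L=\tilde\tau(\delta)$, with maximal $L$-step couplings of the $\rho$-token state at block boundaries, so that $\mathbb P(Z_{s+jL}\neq Z'_{s+jL})\le\delta^j$. The key observation is that $Z_{a-1}$ is exactly the context of the first loss in a block starting at $a$, so agreement there forces the whole block to agree. This gives $\Gamma_{i,i+k}\le\delta^{k-1}$ and $\|\Gamma\|\le\frac{2-\delta}{1-\delta}$ with the unaugmented $\tilde\tau(\delta)$, and $\|C(c)\|^2\le L\|c\|^2$ finishes the argument. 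The same hand-built coupling would also work on your augmented chain. The two obstacles you flag are not real. The bounded-difference condition is needed on the whole product space, and your additive form supplies it for every tuple. The Donsker--Varadhan reference law is the data marginal with $\overline W^{(R)}$ held fixed.

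Second, your optimization does not produce the displayed inequality. With $N_R$ trajectories per round, minimizing $\bigl(I+\lambda^2\tilde\tau_{k,\min}(2B+\log d)^2/(8N_RT)\bigr)/\lambda$ gives $(2B+\log d)\sqrt{\tilde\tau_{k,\min}R\,I/(2NT)}$. This carries the same factor $R$ as Theorem~\ref{multi_round_markovian_general_bound}, whereas the display has no $R$. Your version is the one the setup supports, since each $S_{k,r}$ contains only $N_R=N/R$ trajectories. Your term-by-term comparison with Theorem~\ref{multi_round_markovian_general_bound} also already assumes both bounds carry the same $R$. The paper's appendix reaches the $R$-free display only by normalizing the per-round moment-generating function by $N$ instead of $N_R$. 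You should state your conclusion with the factor $R$ rather than claim it coincides with the display.
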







\subsection{Lossy Generalization Bound}

We now develop a lossy refinement of
Theorem~\ref{multi_round_markovian_general_bound},
following the rate-distortion approach
of~\cite{Sefidgaran2022,sefidgaran2022rate}.
Rather than requiring an exact representation
of the learned model updates, this approach
allows controlled approximation errors and
measures the information needed to preserve
their generalization behavior. This is
particularly useful for deterministic learning
algorithms over continuous hypothesis spaces,
where the mutual-information terms in the
lossless bound may be infinite.

Fix $\epsilon>0$. For each client $k$ and
communication round $r$, define the
rate-distortion function
\begin{align}
R_{\mathcal{D}}(k,r,\epsilon_k)
\coloneqq
\inf
I\Big(S_{k,r};
\hat{\overline{W}}_k^{(R,r)}|\overline{W}^{(r-1)},
W_{[K]\setminus k}^{(r)},U
\Big),
\label{general_bound_lossy_mutual_1}
\end{align}

where the infimum is taken over all conditional
Markov kernels satisfying the distortion constraint
\begin{align}
\mathbb{E}\Big[
\gen(S_{k,r},\overline{W}^{(R)})
-
\gen(S_{k,r},\hat{\overline{W}}_k^{(R,r)})
\Big]
\leq \epsilon_{k,r},
\label{def:Rdist}
\end{align}
for a fixed collection of non-negative distortion
tolerances $\{\epsilon_{k,r}\}_{k,r}$ satisfying
\begin{align}
\frac{1}{KR}\sum_{k\in[K]}\sum_{r\in[R]}
\epsilon_{k,r}\leq \epsilon.
\end{align}

The mutual information is evaluated under
the distribution induced by
\[
P_U P_{S_{k,r}}
P_{\overline{W}^{(r-1)}}
P_{\hat{W}_k^{(r)}
\mid S_{k,r},\overline{W}^{(r-1)},U},
\]
while the expectation in~\eqref{def:Rdist}
is taken with respect to the corresponding
joint distribution, including
\[
P_{\hat{\overline{W}}_k^{(R,r)}
\mid W_{[K]\setminus k}^{(r)},
\hat{W}_k^{(r)},U}.
\]

\begin{theorem}
\label{lossy_inexpectation_bound_federated}
Under Assumptions~\ref{subsubsec:bounded_output}
and~\ref{Bound_size_model}, the expected
generalization gap of the model obtained
by SFL after $R\geq1$ communication rounds
under cross-entropy loss satisfies
\begin{align}
\mathbb{E}\left[
\gen(\mathbf{S},\overline{W}^{(R)})
\right]
\leq
\frac{\left(2B(1+2\rho)+\log d\right)}{KR}
\sum_{k\in[K]}\sum_{r\in[R]}
\sqrt{
\frac{\tau_{k,\min}R}{2NT}
R_{\mathcal{D}}(k,r,\epsilon_k)
}
+\epsilon,
\label{distortion_federated_condition_federated}
\end{align}
where $R_{\mathcal{D}}(k,r,\epsilon_k)$
is defined in~\eqref{general_bound_lossy_mutual_1}.
\end{theorem}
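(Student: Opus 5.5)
The plan is to reduce the lossy bound to the lossless argument behind Theorem~\ref{multi_round_markovian_general_bound}, applied to a compressed surrogate of the final model, with the distortion constraint \eqref{def:Rdist} absorbing the approximation error.

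\textbf{Step 1: decompose the gap across clients and rounds.} Write $\gen(\mathbf{S},\overline{W}^{(R)})$ as the average over $(k,r)$ of the per-dataset gaps $\gen(S_{k,r},\overline{W}^{(R)})$. By the choice of $N=RN_R$, each such term is the population loss minus the empirical loss on the $N_R$ trajectories of $S_{k,r}$, and the normalisation $1/(KN)$ becomes $1/(KR)$ times an average over $N_R$ trajectories.

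\textbf{Step 2: pass to the surrogate.} For each $(k,r)$ fix any admissible kernel $P_{\hat{W}_k^{(r)}\mid S_{k,r},\overline{W}^{(r-1)},U}$ in the infimum \eqref{general_bound_lossy_mutual_1}, together with the induced $\hat{\overline{W}}_k^{(R,r)}$. Then
\[
\mathbb{E}[\gen(S_{k,r},\overline{W}^{(R)})]\le \mathbb{E}[\gen(S_{k,r},\hat{\overline{W}}_k^{(R,r)})]+\epsilon_{k,r}.
\]
Averaging over $(k,r)$ and using $\frac{1}{KR}\sum_{k,r}\epsilon_{k,r}\le\epsilon$ produces the additive $\epsilon$.

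\textbf{Step 3: Donsker--Varadhan on the surrogate.} Apply the Donsker--Varadhan inequality conditionally on $(\overline{W}^{(r-1)},W^{(r)}_{[K]\setminus k},U)$, with reference measure the product of the conditional law of $\hat{\overline{W}}_k^{(R,r)}$ and $P_{S_{k,r}}$. This gives
\[
\mathbb{E}[\gen(S_{k,r},\hat{\overline{W}}_k^{(R,r)})]\le \frac{I(S_{k,r};\hat{\overline{W}}_k^{(R,r)}\mid \overline{W}^{(r-1)},W^{(r)}_{[K]\setminus k},U)+\Psi_{k,r}(\lambda)}{\lambda}.
\]
Two facts are needed here:
\begin{itemize}
\item $S_{k,r}$ is independent of the conditioning variables $(\overline{W}^{(r-1)},W^{(r)}_{[K]\setminus k},U)$, since data are independent across clients and rounds and $U$ is independent randomness. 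Hence under the reference measure $S_{k,r}$ keeps its true Markovian law $\mu_k^{\otimes N_R}$.
\item For each fixed model $\hat{w}$, the cumulant term $\Psi_{k,r}(\lambda)$ is controlled exactly as in the proof sketch of Theorem~\ref{multi_round_markovian_general_bound}. The bounded-difference coefficients are $c_t\le(2B(1+2\rho)+\log d)/T$, so Assumption~\ref{subsubsec:bounded_output} together with the Markov McDiarmid inequality gives
\[
\Psi_{k,r}(\lambda)\le \frac{\lambda^2\tau_{k,\min}(2B(1+2\rho)+\log d)^2}{8N_RT}.
\]
\end{itemize}

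\textbf{Step 4: reduce to the rate--distortion quantity and optimise.} Note that $\hat{\overline{W}}_k^{(R,r)}$ depends on $S_{k,r}$ only through $\hat{W}_k^{(r)}$ given the conditioning variables. By the data-processing inequality, the mutual information in Step 3 is therefore at most the quantity appearing in \eqref{general_bound_lossy_mutual_1}. Optimising over $\lambda$ and substituting $N_R=N/R$ gives $\sqrt{\tau_{k,\min}R\,I/(2NT)}$ times the constant. Finally, take the infimum over admissible kernels to obtain $R_{\mathcal{D}}(k,r,\epsilon_k)$.

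\textbf{Main obstacle.} The difficult step is Step 3, specifically the Markov-chain structure. Later rounds' data and models influence $\overline{W}^{(R)}$, and through it the surrogate, so the conditioning must be chosen carefully enough that the change of measure leaves $S_{k,r}$ with its original law while the surrogate still captures all remaining dependence on $S_{k,r}$. The first thing to try is to condition on all variables other than $S_{k,r}$ that enter the surrogate, namely $\overline{W}^{(r-1)}$, $W^{(r)}_{[K]\setminus k}$, $U$, and the future data $S_{\cdot,r'}$ for $r'>r$, which are independent of $S_{k,r}$. The goal is then to verify that the extra conditioning on future data does not enlarge the mutual information beyond the definition of $R_{\mathcal{D}}$, using the chain rule and the independence of that future data from $S_{k,r}$.
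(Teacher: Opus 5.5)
Your Steps 1--4 coincide with the paper's proof. Both pass to the surrogate via the distortion constraint, apply Donsker--Varadhan conditionally on $(\overline{W}^{(r-1)},W^{(r)}_{[K]\setminus k},U)$ with reference law $P_{S_{k,r}}$ justified by independence, factor over the $N_R$ trajectories, use the Markov McDiarmid bound with $\|c\|^2\le(2B(1+2\rho)+\log d)^2/T$, and optimize $\lambda_{k,r}$ with $N_R=N/R$. Like the paper, you tacitly need the surrogate to satisfy the bounded-output assumption; the paper states this as an explicit proviso, since $\hat{\overline{W}}_k^{(R,r)}$ need not lie in $\mathcal{W}$.

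The ``main obstacle'' you describe does not arise. Donsker--Varadhan uses only the joint law of $(S_{k,r},\hat{\overline{W}}_k^{(R,r)})$ given $(\overline{W}^{(r-1)},W^{(r)}_{[K]\setminus k},U)$, together with the independence of $S_{k,r}$ from these variables. So you should not add the future data to the conditioning unless it is already part of $U$. Conditioning additionally on a variable independent of $S_{k,r}$ and of the existing conditioning variables can only increase $I(S_{k,r};\hat{\overline{W}}_k^{(R,r)}\mid\overline{W}^{(r-1)},W^{(r)}_{[K]\setminus k},U)$, so the verification you plan there would go the wrong way.
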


The proof is provided in
Appendix~\ref{proof_lossy_case_federated}.
The benefit of this refinement is that
the additional distortion term $\epsilon$
can be offset by a sufficiently large
reduction in the information rate appearing
under the square root. As shown in the next
section, this tradeoff is particularly
useful for LNTP-SFL and SANTP-SFL models,
where suitable approximations of the
learned parameters yield explicit
generalization bounds in terms of the
system parameters that are not directly
available from the lossless result.

\vspace{-0.2cm}

\section{Generalization bound on Next token prediction problem }
\label{Next_token_prediction_section}

\vspace{-0.2cm}

\subsection{Margin-Based Generalization Gap}

\vspace{-0.2cm}

Next-token prediction is based on the cross-entropy loss, which, by selecting the token that is most likely given the past $\rho$ tokens, essentially seeks to maximize the likelihood gap between the most likely and second most likely tokens. As such, it bears some similarity to classification problems: under supervised fine-tuning, each context is paired with a single target token, and greedy decoding selects the arg-max over the vocabulary. Minimizing the loss to within a margin is known to enable better rates, especially in classification problems~\cite{NIPS2017_b22b257a,gronlund2020}, where it also leads to more efficient algorithms. The corresponding distinction is that, under the cross-entropy loss, the algorithm continues to push the likelihoods further apart across the iterates, whereas, under the margin loss, it stops doing so once the margin is met. It is precisely for this reason that a lower sample complexity is required for the margin loss to achieve a desired accuracy. In this section, we consider a margin-based loss obtained by incorporating a margin parameter $\theta>0$. 

For an aggregated model $\overline{w}^{(R)}$ and data sequence $s^{(n)}_k$, let
\begin{equation*}
\eta(\overline{w}^{(R)},t,\vec{x}^{(n)}_k)\coloneqq\log\frac{C_{1}(\overline{w}^{(R)},t,\vec{x}^{(n)}_k)}{C_{2}(\overline{w}^{(R)},t,\vec{x}^{(n)}_k)},\qquad\ell_{\theta,k}(s^{(n)}_k, \overline{w}^{(R)})
    \coloneqq \frac{1}{T}\sum_{t \in [T]}
    \mathbf{1}\left\{\eta(\overline{w}^{(R)},t,\vec{x}^{(n)}_k) < \theta \right\},
\end{equation*}
where
\begin{align}
C_{1}(\overline{w}^{(R)},t,\vec{x}^{(n)}_k) &= 
p_{\overline{w}^{(R)}}\!\left(x^{(n)}_{k,t}|x^{(n)}_{k,t-1}, \dots, x^{(n)}_{k,t-\rho}\right),\label{constants_empirical_risk_LNTP_1}\\
C_{2}(\overline{w}^{(R)},t,\vec{x}^{(n)}_k) &= \max_{x'^{(n)}_{k,t} \neq x^{(n)}_{k,t}}
p_{\overline{w}^{(R)}}\!\left({x}'^{(n)}_{k,t}|x^{(n)}_{k,t-1}, \dots, x^{(n)}_{k,t-\rho}\right).\label{constants_empirical_risk_LNTP_2}
\end{align}

The population and empirical risks are evaluated respectively as
\begin{align}
    \mathcal{L}_{\theta}(\overline{w}^{(R)})
    \coloneqq \frac{1}{KT}\sum\nolimits_{k\in[K]} \sum\nolimits_{t \in [T]}
    \mathbb{E}\left[
        \mathbf{1}\left\{\eta(\overline{w}^{(R)},t,\vec{X}_k)  < {\theta} \right\}
    \right],
    \label{Margin_population_crossentropy_risk}
\end{align}
and
\begin{align}
    \hat{\mathcal{L}}_{\theta}(\mathbf{s},\overline{w}^{(R)})
    \coloneqq \frac{1}{NTK} \sum\nolimits_{k\in[K]} \sum\nolimits_{n \in [N]} \left[\sum\nolimits_{t \in [T]}
    \mathbf{1}\left\{\eta(\overline{w}^{(R)},t,\vec{x}^{(n)}_k) < \theta \right\}\right].
\label{Margin_empirical_crossentropy_risk}
\end{align}
The margin generalization gap induced by aggregated model $\overline{w}^{(R)}$ obtained after $R$ rounds of SFL  is 
\begin{align}
\operatorname{gen}_{\theta} (\mathbf{s}, \overline{w}^{(R)})
    =  \mathcal{L}_{-\theta/2}(\overline{w}^{(R)})
      - \hat{\mathcal{L}}_{\theta/2}(\mathbf{s},\overline{w}^{(R)}).
\label{margin_theta_generalization_error}
\end{align}


\begin{theorem}\label{Extension_Lemma_1_Margin_generalization}
For any $\theta\in\mathbb{R}$, the margin loss defined in
\eqref{margin_theta_generalization_error} satisfies the
bounded-difference condition in \eqref{Mdiarmid_type_lemma_1}.
In particular, for any two sequences
$s_{k,r}=\{x_{k,r,1},\ldots,x_{k,r,T}\}$ and
${s}'_{k,r}=\{x'_{k,r,1},\ldots,x'_{k,r,T}\}$,
\begin{align}
    \ell_{\theta}(s_{k,r},w)
    -
    \ell_{\theta}({s}'_{k,r},w)
    &\leq
    \frac{\rho+1}{T}
    \sum_{t=1}^{T}
    \mathrm{1}\left\{x_{k,r,t}\neq x'_{k,r,t}\right\},
    \nonumber\\
    \ell_{\theta}(s_{k,r},w)
    -
    \ell_{\theta}(s'_{k,r},w)
    &\leq
    \frac{1}{T}
    \sum_{t=1}^{T}
    \mathrm{1}\left\{
        \eta(w,t,{x_{k,r}})
        \neq
        \eta(w,t,{x}'_{k,r})
    \right\}.
    \nonumber
\end{align}

Consequently, Theorem~\ref{multi_round_markovian_general_bound}
yields the following expected generalization bound under the
margin loss:
\begin{align}
    \mathbb{E}\left[\gen_{\theta}(S_{[K]},\overline{W}^{(R)})\right]
    &\leq\frac{(\rho+1)}{KR}\sum_{k\in[K]}\sum_{r\in[R]}
    \sqrt{
        \frac{
            \tau_{k,\min}\: R\: I(\mathbf{S}_{k,r};W^{(r)}_k|\overline{W}^{(r-1)})
        }{
            2NT
        }
    },
    \label{Margin_generalization_bound_tau_out}
\end{align}
In addition, Lemma~\ref{Hoeffdinf_Lemma} yields
the refined bound
\begin{align}
    \mathbb{E}\left[\gen_{\theta}(\mathbf{S},W)\right]
    &\leq\frac{1}{KR}\sum_{k\in[K]}\sum_{r\in[R]}
    \sqrt{
        \frac{
            \tau_{k,\min}\: R\: I(\mathbf{S}_{k,r};W^{(r)}_k|\overline{W}^{(r-1)})
        }{
            2NT
        }
    },
    \label{Margin_generalization_bound_tau_in}
\end{align}
where $\tilde{\tau}_{\min}$ is defined in \eqref{new_tau-min}.

The refined bound removes the explicit factor $(\rho+1)$ and is
strictly tighter than \eqref{Margin_generalization_bound_tau_out}
whenever
\begin{align}
    \tilde{\tau}_{\min}<(\rho+1)^2\tau_{\min}.
    \nonumber
\end{align}
\end{theorem}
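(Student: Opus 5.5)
The plan is to reduce everything to the two bounded-difference inequalities for the indicator-valued margin loss. Once these are in hand, the argument of Theorem~\ref{multi_round_markovian_general_bound} and of Lemma~\ref{Hoeffdinf_Lemma} can be reused with the cross-entropy constants replaced by the margin constants.

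\emph{Step 1: token-level bounded difference.} Fix $w$ and two trajectories $s_{k,r},s'_{k,r}$. The $t$-th summand of $\ell_\theta$ is $\mathbf{1}\{\eta(w,t,\cdot)<\theta\}$. It takes values in $\{0,1\}$, and it depends on the trajectory only through the window $(x_t,x_{t-1},\ldots,x_{t-\rho})$, since $C_1$ and $C_2$ in \eqref{constants_empirical_risk_LNTP_1}--\eqref{constants_empirical_risk_LNTP_2} depend only on this window. I will telescope by changing one coordinate at a time. Changing coordinate $j$ can alter only the summands with $t\in\{j,\ldots,j+\rho\}$, which is at most $\rho+1$ terms. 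Each of these terms changes by at most $1$, so the increment is at most $(\rho+1)/T$. Summing over the coordinates where $x_{k,r,t}\neq x'_{k,r,t}$ gives the first inequality.

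\emph{Step 2: the second inequality.} Write the difference directly as $\frac1T\sum_t(\mathbf{1}\{\eta_t<\theta\}-\mathbf{1}\{\eta'_t<\theta\})$. A summand is nonzero only if $\eta(w,t,x_{k,r})\neq\eta(w,t,x'_{k,r})$, and then it is at most $1$. This gives the second inequality.

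\emph{Step 3: transfer to the generalization bounds.} In the proof of Theorem~\ref{multi_round_markovian_general_bound}, the loss enters only through the McDiarmid-type inequality for Markov chains, which bounds the log-MGF in \eqref{eq:fed-proof-sketch} with coefficients $c_t$. I will replace $c_t\le(2B(1+2\rho)+\log d)/T$ by $c_t\le(\rho+1)/T$. The Donsker--Varadhan decomposition across clients and rounds and the mutual-information chain step are unchanged, and optimizing $\lambda_{k,r}$ gives \eqref{Margin_generalization_bound_tau_out}. One mismatch needs care: $\gen_\theta$ uses thresholds $-\theta/2$ for the population term and $\theta/2$ for the empirical term. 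Since $\mathbf{1}\{\eta<-\theta/2\}\le\mathbf{1}\{\eta<\theta/2\}$ for $\theta\ge0$, the population term is dominated by the same-threshold population loss. So I bound $\gen_\theta$ by the same-threshold gap at level $\theta/2$ and apply the argument to it. For $\theta<0$ I would instead take the statement as concerning the same-threshold gap, as the paper implicitly does.

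\emph{Step 4: the refined bound.} I will view each summand as a function of the history state $Z_{k,r,t}$. By Step 2, the loss has bounded differences $1/T$ per differing history state, because $\eta_t\neq\eta'_t$ implies $Z_t\neq Z'_t$. Applying the McDiarmid inequality for the chain $\{Z_{k,r,t}\}$, exactly as in Lemma~\ref{Hoeffdinf_Lemma}, gives coefficient $1/T$ with mixing parameter $\tilde\tau_{k,\min}$. This yields \eqref{Margin_generalization_bound_tau_in}, with $\tau_{k,\min}$ read as $\tilde\tau_{k,\min}$. Comparing the constants $(\rho+1)\sqrt{\tau_{\min}}$ and $\sqrt{\tilde\tau_{\min}}$ then gives the tightness condition $\tilde\tau_{\min}<(\rho+1)^2\tau_{\min}$.

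The main obstacle is Step 4: checking that the history-state process is itself a Markov chain on $\mathcal{D}^{\rho}$ to which the Markov McDiarmid inequality applies. This holds because $Z_{t+1}$ is determined by $Z_t$ and a fresh draw from $\mu(\cdot\mid Z_t)$. With that in place, the Step~3 argument carries over with the chain $Z$ in place of $X$.
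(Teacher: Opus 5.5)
Steps 1--3 are sound and match the argument the paper omits (it says the proof follows from Theorem~\ref{multi_round_markovian_general_bound}). With indicator summands, the predicted-token constant and the context constant of that proof both become $1$, so $c_t\le(\rho+1)/T$. Your reduction via $\mathcal{L}_{-\theta/2}\le\mathcal{L}_{\theta/2}$ for $\theta\ge0$ is a step the paper leaves implicit. Your caveat for $\theta<0$ is also justified: $\mathcal{L}_{-\theta/2}-\mathcal{L}_{\theta/2}$ can then be strictly positive even for a data-independent model, for which the claimed right-hand side is zero.

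The gap is in Step 4. You justify the coefficient $1/T$ by the claim that $\eta_t\neq\eta'_t$ implies $Z_t\neq Z'_t$. With the history state of Lemma~\ref{Hoeffdinf_Lemma}, $Z_t=(X_t,\ldots,X_{t-\rho+1})\in\mathcal{D}^{\rho}$, this is false. The summand $\mathbf{1}\{\eta(w,t,\cdot)<\theta/2\}$ depends on the $\rho+1$ tokens $x_t,\ldots,x_{t-\rho}$, exactly as you counted in Step 1. So two sequences differing only at $x_{t-\rho}$ have $Z_t=Z'_t$ but possibly $\eta_t\neq\eta'_t$. As a function of the $Z$-sequence, summand $t$ depends on $(Z_{t-1},Z_t)$, so each $Z$-coordinate affects two summands. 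Corollary~\ref{mcdiarmid_inequality_markov_chain} applied to the chain $Z$ then gives only $c_t=2/T$. This is not merely an artifact of allowing inconsistent $Z$-sequences: for $\rho=1$, changing one token can flip two summands but changes only one state. As written, your route proves \eqref{Margin_generalization_bound_tau_in} only with an extra factor $2$, and the comparison condition becomes $4\tilde\tau_{\min}<(\rho+1)^2\tau_{\min}$.

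There are two ways to repair this.
\begin{itemize}
\item[(a)] Run your argument on the augmented state $\bar Z_t=(X_t,\ldots,X_{t-\rho})\in\mathcal{D}^{\rho+1}$. Each summand is a function of $\bar Z_t$ alone, so $c_t=1/T$. Since $\bar Z_{i+m}=(X_{i+m},Z_{i+m-1})$ and $X_{i+m}$ is drawn from a common kernel given $Z_{i+m-1}$, the mixing time of $\bar Z$ is at most $\tilde\tau(\delta)+1$. This gives the statement with $\tilde\tau(\delta)$ replaced by $\tilde\tau(\delta)+1$ inside \eqref{new_tau-min}.
\item[(b)] To obtain the stated constant, follow the paper's route. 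Apply Lemma~\ref{lem:window-loss-concentration} to the per-step sequence $Y_t=\frac1T\mathbf{1}\{\eta(w,t,\cdot)<\theta/2\}$, not to the chain $Z$; that lemma's proof only uses that $Y_t$ is a function of $(Z_{t-1},X_t)$. Its bounded-difference condition is in the $Y$-coordinates, and your Step~2 inequality is exactly that condition with $c_t=1/T$. This is why the statement phrases the second inequality through $\mathbf{1}\{\eta_t\neq\eta'_t\}$. The Marton coupling there is built at block boundaries, and a loss block is determined by the boundary state at the end of the preceding block together with later tokens. The one-step lag is therefore absorbed into the entry $\Gamma_{i,i+1}\le1$, leaving $\tilde\tau_{\min}$ of the $\rho$-length chain as stated.
\end{itemize}
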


The proof follows directly from the arguments used in the proof of Theorem~\ref{multi_round_markovian_general_bound} and is therefore omitted.

\begin{remark}
Just as Theorem~\ref{lossy_inexpectation_bound_federated}
provides a lossy counterpart to
Theorem~\ref{multi_round_markovian_general_bound},
Theorem~\ref{Extension_Lemma_1_Margin_generalization}
admits a corresponding lossy extension through the same
approach. Since the argument follows directly from the
preceding analysis, we omit its proof. This extension
will serve as a key tool in deriving the generalization
bounds for distributed linear and single-attention
next-token prediction in
Theorems~\ref{Theorem_Multi_round_LNTP}
and~\ref{Theorem_Multi_round_SANTP}, respectively.
\end{remark}

\subsection{Distributed Linear Next-Token Prediction}

We consider a distributed extension of the centralized linear
next-token prediction (LNTP) model studied in~\cite{yuksel2025sample},
closely related to the linear autoregressive models
of~\cite{malach2023auto}. For each client $k\in[K]$ and $C>0$,
let
$\mathcal{M}_{C,k}^{d_1\times d_2}
\coloneqq \{M_k\in\mathbb{R}^{d_1\times d_2}:
\|M_k\|\leq C\}$
denote the class of matrices with Frobenius norm bounded by $C$.
The parameter space of client $k$ is then defined as
\begin{equation}
\mathcal{W}_k
\coloneqq
\left\{
\mathbf{M}_k=(M_{k,1},\ldots,M_{k,\rho})
\in\mathcal{M}_{C,k}^{d\times\rho d}
\right\}.
\label{Model_multiple_matrices_tau}
\end{equation}
The corresponding class of logit mappings is
$\mathcal{G}_{\mathcal{W}_k}
\coloneqq\{g_{\mathbf{M}_k}:\mathbf{M}_k\in\mathcal{W}_k\}$,
where, for an input context
$\vec{x}_k=(x_{k,t-1},\ldots,x_{k,t-\rho})
\in\mathbb{R}^{\rho d}$, the mapping is given by
\[
g_{\mathbf{M}_k}(\vec{x}_k)
\coloneqq
\sum_{i=1}^{\rho}M_{k,i}x_{k,t-i}.
\]

To state the generalization bound, we introduce an auxiliary
quantity that captures the effect of the contraction factor
across communication rounds. For each $r\in[R]$, define
\begin{align}
L_r
&\coloneqq
\inf_{t\geq q_{e,b}^{(R-r)}}
\left\{
t^2\log\max\left(\frac{K\theta}{Ct},2\right)
\right\}\leq
q_{e,b}^{2(R-r)}
\log\max\left(
\frac{K\theta}{Cq_{e,b}^{(R-r)}},2
\right).
\label{Auxiliary_parameters_multi_round_proof}
\end{align}

\begin{theorem}\label{Theorem_Multi_round_LNTP}
Under Assumptions~\ref{a:contract}--\ref{a:KValue},
the expected margin generalization gap of LNTP-SFL
is upper-bounded by
\begin{align}
\mathcal{O}\left(
\frac{1}{KR}
\sum_{k=1}^{K}\sum_{r=1}^{R}
\left(\frac{C\rho^{3/2}}{\theta}\right)^2
\sqrt{
\frac{\tau_{k,\min}R L_r\log(dNT)}
{K^2NT}
}
\right),
\nonumber
\end{align}
where $L_r$ is defined in
\eqref{Auxiliary_parameters_multi_round_proof}.
\end{theorem}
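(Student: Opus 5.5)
The plan is to apply the lossy extension of Theorem~\ref{Extension_Lemma_1_Margin_generalization}, i.e. the margin analogue of Theorem~\ref{lossy_inexpectation_bound_federated}, and to make the rate-distortion term explicit by building a quantized surrogate for each client's local update. That extension gives, for each $(k,r)$, a term of the form $(\rho+1)\sqrt{\tau_{k,\min}R\,R_{\mathcal{D}}(k,r,\epsilon_{k,r})/(2NT)}$ plus the distortion. The margin gap $\mathcal{L}_{-\theta/2}-\hat{\mathcal{L}}_{\theta/2}$ has slack $\theta/2$ on each side. So it suffices to build $\hat W_k^{(r)}$ such that the induced aggregate $\hat{\overline{W}}_k^{(R,r)}$ changes each log-ratio $\eta(\cdot,t,\vec x)$ by at most $\theta/4$, except on an event of probability at most $\epsilon_{k,r}$. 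On the complement of that event, the margin losses of $\overline{W}^{(R)}$ are sandwiched between those of the surrogate at margins $\pm\theta/4$, and this yields the distortion constraint~\eqref{def:Rdist}.

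\emph{Step 1: Lipschitz control of $\eta$.} For one-hot contexts, $g_{\mathbf M}(\vec x)=\sum_i M_i x_{t-i}$ selects $\rho$ columns. Hence each logit difference changes by at most $2\sum_i\|(M_i-\hat M_i)x_{t-i}\|_\infty$, and $|\eta-\hat\eta|\le 4\max_{j}|\langle \mathbf M-\hat{\mathbf M},\, e_j\otimes \vec x\rangle|$ up to constants. This is a maximum over $d$ rows and $dNT$ evaluation points of linear functionals of norm at most $\sqrt{\rho}$.

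\emph{Step 2: Propagation through later rounds.} Under Assumption~\ref{a:contract}, a perturbation $\Delta$ of $W_k^{(r)}$ enters $\overline W^{(r)}$ scaled by $1/K$. Each subsequent round of local updates contracts it by $q_{e,b}$, so $\overline W^{(R)}$ moves by at most $q_{e,b}^{R-r}\|\Delta\|/K$. This explains why $K$ and $q_{e,b}^{R-r}$ enter only through $L_r$.

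\emph{Step 3: Johnson--Lindenstrauss quantization (the main obstacle).} Following~\cite{arora2018stronger} and the LNTP analysis of~\cite{yuksel2025sample}, I project $W_k^{(r)}$ with a shared random Gaussian matrix $U$ of dimension $m$. I then decode it and add small independent noise, so that all inner products with the $\le \rho dNT$ relevant test vectors are preserved up to error $t\,C\sqrt\rho$ with high probability once $m\gtrsim t^{-2}\log(\rho dNT/\epsilon)$. After the contraction factor $q_{e,b}^{R-r}/K$, the required precision is $t\approx K\theta/(C\rho\, q^{R-r})$ up to constants. The information cost is bounded by $m\log(\cdot)$ via a Gaussian-channel mutual-information bound in dimension $m$. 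The $\log\max(K\theta/(Ct),2)$ factor and the infimum over $t\ge q_{e,b}^{R-r}$ in $L_r$ come from optimizing this precision/rate tradeoff. Collecting terms gives $R_{\mathcal D}\lesssim (C^2\rho^2/\theta^2)\,L_r\log(dNT)/K^2$. The delicate points are two. First, the perturbed surrogate must still satisfy Assumption~\ref{Bound_size_model} and keep the contraction valid. Second, the union bound over test contexts involves fresh population samples, which requires conditioning on the ghost sequences.

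\emph{Step 4: Assembly.} Substituting $R_{\mathcal D}$ into the margin lossy bound, the prefactor is $(\rho+1)\cdot C\rho/\theta\cdot\sqrt{\cdot}$. Absorbing the additional $\sqrt\rho$ norm factor from Step 1 gives $(C\rho^{3/2}/\theta)^2$ in the stated squared form. Choosing $\epsilon$ of order $1/(NT)$ makes the distortion term negligible and yields the claimed $\mathcal O(\cdot)$ expression with $\sqrt{\tau_{k,\min}RL_r\log(dNT)/(K^2NT)}$. Assumption~\ref{a:KValue} is what makes $\log\max(K\theta/(Ct),2)$ well defined and nonnegative.
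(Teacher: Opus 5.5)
\paragraph{Main gap: Steps 2 and 3 do not fit together.} The argument breaks where your Steps~2 and~3 are combined.

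Assumption~\ref{a:contract} is only a Euclidean Lipschitz statement: moving the round-$r$ local model by $\Delta$ moves $\overline W^{(R)}$ by at most $q_{e,b}^{R-r}\|\Delta\|/K$.

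A Johnson--Lindenstrauss encode/decode does not give a surrogate that is close to $W_k^{(r)}$ in norm. The error $\|\mathsf A^\top\mathsf A w-w\|$ is of order $\|w\|\sqrt{D/m}$, where $D$ is the ambient dimension, and this is huge for logarithmic $m$. What JL preserves is inner products with a fixed finite family of test vectors. That property is not carried through the later rounds of local SGD, which act nonlinearly on their initialization. Controlling $\langle \hat W_k^{(r)}-W_k^{(r)},v\rangle$ says nothing about the corresponding inner products of the final model after $R-r$ further rounds.

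If you instead quantize in norm so that the contraction bound applies, the rate scales with the number of parameters $\rho d^2$ rather than $\log d$. The stated bound is then lost.

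\paragraph{What the paper does instead.} The missing ingredient is the first-order expansion of Assumption~\ref{a:first_order}, which your proposal never uses.
\begin{itemize}
\item The paper fixes $\mathfrak U_{k,r}$, i.e.\ the other clients' round-$r$ models and all later data and mini-batches.
\item It runs the protocol from the leave-$k$-out average $\overline w_k^{(r),\star}=\frac1K\sum_{j\neq k}w_j^{(r)}$. This yields $\overline w_k^{(R),\star}$ and the Jacobian product $\mathsf D^\star=\prod_{r'>r}\mathsf D^\star_{r'}$, both independent of $S_{k,r}$ under this conditioning.
\item It writes $\overline W^{(R)}=\overline W_k^{(R),\star}+\frac1K\mathsf D^\star W_k^{(r)}+\xi$.
\item The surrogate is defined \emph{linearly}: $V(\hat{\overline W}_k^{(R,r)},\ell)=V(\overline W_k^{(R),\star},\ell)+\frac1K\mathsf A^\top\hat W_{k,\ell}^{(r)}$, with $\hat W_{k,\ell}^{(r)}$ a noisy JL image of $V(\mathsf D^\star W_k^{(r)},\ell)$.
\item The surrogate is never fed back into SGD, so the JL inner-product guarantee transfers directly to the final margins. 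This also removes your worry about the surrogate staying in the domain of the contraction.
\end{itemize}
The remainder $\xi$ collects the second-order terms accumulated over rounds $r+1,\dots,R$. Assumption~\ref{a:KValue} is exactly what makes the induction give $\|\xi\|<\epsilon_1/(6\sqrt{\rho})$ deterministically. It has nothing to do with $\log\max(K\theta/(Ct),2)$, which is always at least $\log 2$.

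\paragraph{Secondary issue: the Step~4 bookkeeping does not close.} The quantity $(\rho+1)\cdot(C\rho/\theta)\cdot\sqrt{\rho}$ is not $(C\rho^{3/2}/\theta)^2$.

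In the paper, the square is the product of two factors. The first is the JL dimension $m\propto (Ct\rho/(K\theta))^2\log(dNKT)$. The second is the number of encoded token columns. Columns with $\|V(W_k^{(r)},\ell)\|<\theta/(4\rho)$ are dropped, and at most $(4C\rho/\theta)^2$ survive because $\sum_\ell\|V(w,\ell)\|^2=\|w\|^2\le C^2$. The support pattern costs an extra $H(\mathbf T)$, which is only logarithmic in $d$. If you project each of the $d$ columns without this thresholding, your rate picks up a factor $d$.

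If you really project the flattened matrix once against the test vectors $e_j\otimes\vec x$, as your Step~1 suggests, that is a legitimate alternative to the column-wise encoding. It pays for a single $m$-dimensional code. You must then carry out the rate computation explicitly, rather than asserting the match. You also need to show that the resulting single factor is dominated by the stated squared one. One way is to use $|\eta|\le 2C\sqrt{\rho}$, which makes the margin gap nonpositive unless $\theta=O(C\sqrt{\rho})$.
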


The proof follows from a lossy extension of
Theorem~\ref{Extension_Lemma_1_Margin_generalization}.
In particular, we use a Johnson--Lindenstrauss
transformation to construct an auxiliary lossy
algorithm with reduced complexity while controlling
the distortion in the generalization gap.
The complete proof is provided in
Appendix~\ref{Proof_Theorem_Multi_round_LNTP}, Here, we briefly present a proof sketch.

\emph{Proof sketch.}
We proceed by constructing a lossy approximation of the final
global model for each client--round pair $(k,r)$ and applying
the lossy margin-generalization bound. The main distinction
from the centralized setting is that the contribution of
client $k$ at round $r$ propagates through the subsequent
communication rounds. Using the first-order expansion of
the local SGD updates, we express this contribution through
the product of the corresponding Jacobians, whose norm is
controlled by $q_{e,b}^{R-r}$.

We then exploit the linear structure of the predictor,
which allows the log-margin to be written as a difference
of inner products between token-specific parameter vectors
and input contexts. To compress these vectors, we use a
Gaussian projection $\mathsf A\in\mathbb R^{m\times\rho d}$,
retain those exceeding a prescribed norm threshold, and
perturb their projections with independent uniform noise
before reconstruction. The resulting approximation
$\hat{\overline W}_k^{(R,r)}$ is designed to preserve the
margin up to the prescribed distortion. In particular,
the margin perturbation is controlled by
\[
D_{i,k,r,t}\leq
2\max_{\ell\in[d]}
\left|
\left\langle
V(\overline W^{(R)}-\hat{\overline W}_k^{(R,r)},\ell),
V(X_{i,k,r,t-1:t-\rho})
\right\rangle
\right|.
\]
The distortion analysis separates the error of the
first-order approximation from those introduced by
Gaussian projection and additive noise. Concentration
bounds ensure that their combined contribution satisfies
the required distortion constraint.

Finally, the information carried by the compressed
representation is bounded using data processing, the
number of retained parameter vectors, and the entropy
of the added uniform noise. Substituting this rate
estimate into the lossy generalization bound and
optimizing over the admissible projection parameters
yields the factor $L_r$ and completes the proof.

\subsection{Distributed Self-Attention Next-Token Prediction}

We next consider a distributed next-token prediction model based on a self-attention architecture. For each client $k\in[K]$, let $\mathcal{M}_{k,1}$, $\mathcal{M}_{k,2}$, and $\mathcal{M}_{k,3}$ denote sets of $d\times d$ matrices with bounded Frobenius norms, representing the query, key, and value transformations, respectively. Given an input context $\vec{x}_k=(x_{k,t-1},\ldots,x_{k,t-\rho})\in\mathbb{R}^{\rho d}$, we define the attention-weight mapping $g_{M_{k,[2]}}:\mathbb{R}^{\rho d}\to\mathbb{R}^{\rho}$ and the prediction mapping $f_{M_{k,[3]}}:\mathbb{R}^{\rho d}\to\mathbb{R}^{d}$ as
\begin{align}
g_{M_{k,[2]}}(\vec{x}_k)
&\coloneqq \sigma\left(
\frac{1}{\sqrt{\rho}}
\left\langle M_{k,1}\mathcal{E}(x_{k,t-1}),
M_{k,2}\mathcal{E}(x_{k,t-j})\right\rangle
\right)_{j=1}^{\rho},\nonumber\\
f_{M_{k,[3]}}(\vec{x}_k)
&\coloneqq M_{k,3}\left(
\sum_{j=1}^{\rho}
\left(g_{M_{k,[2]}}(\vec{x}_k)\right)_j
\mathcal{E}(x_{k,t-j})\right).
\nonumber
\end{align}

At communication round $r$, client $k$ selects its model from the constrained parameter space

\[
\mathcal{W}^{(r)}_k
\coloneqq
\left\{
M^{(r)}_{k,[3]}=
\left(M^{(r)}_{k,1},M^{(r)}_{k,2},M^{(r)}_{k,3}\right)
\;\middle|\;
\begin{gathered}
M^{(r)}_{k,i}\in\mathcal{M}^{d\times d}_{k,i},
\quad \|M^{(r)}_{k,i}\|\leq\sqrt{C},\quad i\in[2],\\
\|M^{(r)}_{k,3}\|\leq C
\end{gathered}
\right\}.
\]

The corresponding hypothesis class is defined by
\[
\mathcal{F}_{k,r}
\coloneqq
\left\{f_{M^{(r)}_{k,[3]}}:
M^{(r)}_{k,[3]}\in\mathcal{W}^{(r)}_k\right\}.
\]

The following theorem characterizes the expected generalization gap of distributed self-attention next-token prediction under streaming federated learning.

\begin{theorem}\label{Theorem_Multi_round_SANTP}
Under Assumptions~\ref{a:contract}--\ref{a:KValue}, the expected generalization gap of SANTP-SFL is upper-bounded by
\begin{align}
\mathcal{O}\left(
\frac{1}{KR}\sum_{k=1}^{K}\sum_{r=1}^{R}
\left(\frac{C^2\rho}{\theta}\right)^2
\sqrt{\frac{\tau_{k,\min}R L_r\log(\rho dNKT)}{K^2NT}}
\right),
\nonumber
\end{align}
where $L_r$ is defined in~\eqref{Auxiliary_parameters_multi_round_proof}.
\end{theorem}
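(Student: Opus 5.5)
The plan is to mirror the LNTP argument (Theorem~\ref{Theorem_Multi_round_LNTP}): bound the margin generalization gap with the lossy extension of Theorem~\ref{Extension_Lemma_1_Margin_generalization}, for which we must build, for each client--round pair $(k,r)$, a compressed surrogate $\hat{\overline W}_k^{(R,r)}$ of the final global model. This surrogate must satisfy the margin distortion constraint and have a small rate $R_{\mathcal D}(k,r,\epsilon_k)$. As for LNTP, we use the first-order expansion of the local SGD dynamics under Assumptions~\ref{a:contract}--\ref{a:KValue}. It isolates the contribution of $W_k^{(r)}$ to $\overline W^{(R)}$ as $\frac{1}{K}$ times a product of Jacobians with norm at most $q_{e,b}^{R-r}$. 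Hence only a perturbation of size $\mathcal O(Cq_{e,b}^{R-r}/K)$ needs to be encoded, conditioned on $\overline W^{(r-1)}$ and $W^{(r)}_{[K]\setminus k}$. The remaining model components are available to the decoder for free. This is the source of the $K^{-1}$ inside the square root and of the factor $L_r$.

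The new ingredient is a Lipschitz analysis of the log-margin $\eta$ for the attention predictor. Write the logit as $M_{k,3}\sum_j \alpha_j(\vec x)\,\mathcal E(x_{t-j})$, where $\alpha=\sigma(\rho^{-1/2}\langle M_1 e_{x_{t-1}},M_2 e_{x_{t-j}}\rangle)$. Since inputs are one-hot, each score is a single bilinear entry $e_a^\top M_1^\top M_2 e_b$. The margin perturbation therefore splits into two parts. The first is a value part, $\max_\ell|\langle (M_3-\hat M_3)^\top e_\ell, \bar x_\alpha\rangle|$ with $\|\bar x_\alpha\|\le1$. The second is an attention part, bounded by $\|M_3\|\cdot\|\alpha-\hat\alpha\|_1 \le 2C\rho^{-1/2}\max_{j}|\langle M_1^\top M_2-\hat M_1^\top\hat M_2, e_{x_{t-1}}e_{x_{t-j}}^\top\rangle|$, using the $1$-Lipschitzness of softmax in $\ell_\infty\to\ell_1$ (up to constants). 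We then compress the columns of $M_3^\top$ and the products $M_1^\top M_2$ (norm $\le C$) with a Gaussian projection $\mathsf A\in\mathbb R^{m\times d}$. As in LNTP, we keep only the vectors above a norm threshold and add independent uniform dither. Johnson--Lindenstrauss concentration, with a union bound over the $\mathcal O(\rho dNT)$ (token, context position) pairs appearing in the sample and over $K$ clients, keeps every relevant inner product within $\theta/(\text{const}\cdot C^2\rho)$ except with small probability. The scale $C^2\rho/\theta$ collects the factor $C$ from $M_3$, the factor $C$ from the product $M_1^\top M_2$, and a $\rho$ from aggregating over attention positions against the $\rho^{-1/2}$ temperature. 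Failure events are charged to the distortion, since the margin loss lies in $[0,1]$, and the slack between $\theta/2$ and $-\theta/2$ in \eqref{margin_theta_generalization_error} absorbs the approximation. This gives $m\asymp (C^2\rho/\theta)^2 (q_{e,b}^{R-r}/K)^2\log(\rho dNKT)$.

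The rate is bounded by data processing, $R_{\mathcal D}\le H(\text{quantized projections})$, which is of order $m\cdot d\cdot\log\max(K\theta/(Cq^{R-r}_{e,b}),2)$ after counting retained vectors and dither entropy. Optimizing the threshold parameter over $t\ge q_{e,b}^{R-r}$ gives exactly $L_r$ from \eqref{Auxiliary_parameters_multi_round_proof}. We substitute into the lossy margin bound with Theorem~\ref{Extension_Lemma_1_Margin_generalization}'s coefficient $(\rho+1)$ folded into the $\rho$ power, and take $\epsilon$ of the same order as the main term. The main obstacle is the attention part of the distortion. The map $(M_1,M_2)\mapsto\alpha$ is bilinear and then softmaxed, and the first-order propagation through rounds must be carried out for the product $M_1^\top M_2$ rather than for each factor. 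First we would try to work directly with $P=M_1^\top M_2$ as the compressed object, treating its error through the contraction assumption. Failing that, we would expand $M_1^\top M_2-\hat M_1^\top\hat M_2$ into two cross terms, each bounded by $\sqrt C$ times a projected error, which costs only constants.
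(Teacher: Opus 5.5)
Your architecture is the paper's. The paper also reparametrizes the attention by $M_{k,4}=M_{k,2}^{\top}M_{k,1}$ (your $P$) and propagates that product. It encodes only the $\frac{1}{K}\mathsf{D}^{\ast}$-propagated contribution of $W_k^{(r)}$, separates the attention and value errors through an intermediate predictor, and compresses columns by Gaussian projection, norm thresholding and uniform dither.

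The gap is in the rate. You bound it by $m\cdot d\cdot\log\max\bigl(K\theta/(Cq_{e,b}^{R-r}),2\bigr)$, i.e.\ you pay $m$ dithered dimensions for every one of the $d$ columns. Substituted into the lossy margin bound, this gives a prefactor of order $(\rho+1)\frac{C^2\rho}{\theta}\sqrt{d}$ instead of $\bigl(\frac{C^2\rho}{\theta}\bigr)^2$. Your bound is therefore polynomial in $d$, whereas the theorem has $d$ only inside $\log(\rho dNKT)$. The second factor $C^2/\theta$ is never generated, and folding $(\rho+1)$ into the power of $\rho$ does not repair this. So the stated bound does not follow from your estimates.

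The missing idea is to use the threshold to bound how many columns are encoded. The threshold must be tied to the margin slack. A column below it is paired only with one-hot vectors or convex combinations of them, of norm at most one, so it moves the log-margin by less than the allowed amount deterministically. It can then be replaced by the decoder-known part; for $M_{k,4}$ the paper uses the threshold $\theta/(8\sqrt{\rho}C)$. Since the Frobenius norm is at most $C$, at most $n=\min\{d,64\rho C^4/\theta^2\}$ columns exceed it.

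Conditioning on the support indicator $\mathbf{T}\in\{0,1\}^d$, the rate is then at most $m\,n\log\bigl((c_1C+\nu)/\nu\bigr)$ plus $H(\mathbf{T})=\mathcal{O}\bigl(n(1+\log(1+d/n))\bigr)$. The first term is a log-ratio of ball volumes, i.e.\ $h(\hat W)-h(\hat W\mid W)$; it is not a discrete entropy $H$, because the dither is continuous. The factor $\sqrt{n}\asymp C^2\sqrt{\rho}/\theta$, together with $\sqrt{m}\asymp C^2\sqrt{\rho}\,t/(K\theta)$ (up to logarithms) and the $(\rho+1)$ of the margin bound, is what yields $(C^2\rho/\theta)^2$ with $d$ appearing only in logarithms.

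Set $m$ and the threshold from the same Lipschitz estimate. Your $\ell_\infty\to\ell_1$ softmax bound is sharper than the paper's $\ell_1\to\ell_1$ one and permits a smaller $m$ and a larger threshold for $P$. So the extra $\rho$ you put into $m$ is unnecessary; paired with the paper's threshold it would cost an extra $\sqrt{\rho}$.

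Finally, if you share one $\mathsf{A}$ between $M_3$ and $P$, the value-part vector $\bar x_{\hat\alpha}$ depends on $\mathsf{A}$ through $\hat P$, so Johnson--Lindenstrauss cannot be applied to it directly. Either apply it to the fixed one-hot vectors and pass to the convex combination, or use independent projections as the paper does.
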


The proof is provided in Appendix~\ref{Proof_Theorem_Multi_round_SANTP}. We outline its main ideas below.

\emph{Proof sketch.}
We follow the lossy approach developed for LNTP-SFL,
with an additional step to account for the nonlinear
attention mechanism. We first reparameterize the
self-attention model by introducing
$M_{k,4}=M_{k,2}^{\top}M_{k,1}$, so that its
prediction rule is determined by the pair
$(M_{k,3},M_{k,4})$. For each client--round pair
$(k,r)$, we then construct a lossy approximation
of the final global model by propagating the local
updates through the subsequent communication rounds.
As in the linear case, the first-order expansion
and contraction properties of SGD control the
resulting approximation error.

The two matrices are compressed separately using
independent Gaussian projections $\mathsf{A}_3$
and $\mathsf{A}_4$, with dimensions $m_3$ and $m_4$,
respectively. The construction retains parameter
vectors above a prescribed norm threshold and adds
independent uniform noise to their projections
before reconstruction. To verify the distortion
constraint, we introduce an intermediate predictor
in which only the attention matrix is approximated.
This separates the change in the prediction margin
into the error induced by the attention weights
and that arising from the value transformation.
The Lipschitz continuity of softmax, together with
Gaussian concentration and the bounds on the
SGD remainder, controls both contributions.

Finally, we bound the conditional mutual information
of the compressed model through data processing.
Conditioning on the support indicators of the retained
vectors, we bound the information carried by each
compressed matrix using the entropy of its noisy
representation and the volume of the corresponding
noise ball. Combining these estimates with the
support-entropy bounds and substituting the resulting
rate into the lossy margin-generalization inequality
yields the claimed bound after optimization over
$t\geq q_{e,b}^{R-r}$.

\begin{remark}
The generalization bounds for LNTP-SFL and SANTP-SFL can be expressed in a unified form by choosing $(s,\bar{s})=(4,2)$ for SANTP and $(s,\bar{s})=(2,3)$ for LNTP. In particular, the corresponding order-wise bounds take the form
\begin{align}
\mathcal{O}\left(
\frac{1}{KR}\sum_{k=1}^{K}\sum_{r=1}^{R}
\frac{C^s\rho^{\bar{s}}}{\theta^2}
\sqrt{\frac{\tau_{k,\min}R L_r\log(\rho dNT)}{K^2NT}}
\right).
\label{Summary_bound_brief_1}
\end{align}
\end{remark}

\begin{remark}
The bounds established in Theorems~\ref{Theorem_Multi_round_LNTP} and~\ref{Theorem_Multi_round_SANTP} can be further refined using the mixing-time parameter $\tilde{\tau}_{k,\min}$, yielding
\begin{align}
\mathcal{O}\left(
\frac{1}{KR}\sum_{k=1}^{K}\sum_{r=1}^{R}
\frac{C^s\rho^{\bar{s}}}{\theta^2}
\sqrt{\frac{\tilde{\tau}_{k,\min}R L_r\log(\rho dNT)}{K^2NT}}
\right),
\label{Summary_bound_brief_2}
\end{align}
where $(s,\bar{s})=(4,1)$ for SANTP and $(s,\bar{s})=(2,2)$ for LNTP, and $\tilde{\tau}_{\min}$ is defined in~\eqref{new_tau-min}. These refined bounds are strictly tighter than their scalar-mixing counterparts whenever
\begin{align}
\tilde{\tau}_{k,\min}<(\rho+1)^2\tau_{k,\min},
\nonumber
\end{align}
for any $k\in[K]$.
\end{remark}

\section{Application}

\vspace{-0.2cm}

\subsection{Advantage of distributed processing}

Substituting $K :=1$ and $N := NK$ in Theorem~\ref{Theorem_Multi_round_LNTP}, one readily obtains  the following bound on the generalizations gap of centralized LNTP-SFL operating on $NK$ data trajectories of $T$ samples each, 
\begin{align}
    \frac{\rho}{KR}\sum_{k\in[K]}\sum_{r\in[R]}\sqrt{\frac{C^4\:\rho^6\:\tau_{k,\min}\: R\:L_r\log\left(dNKT\right)\:}{\theta^4KNT}}\leq\mathcal{O}\left(\sqrt{\frac{\log\left(NK\right)}{NK}}\right).
\end{align}
Similarly, one readily obtains a similar bound on  the generalizations gap of centralized SANTP-SFL by substituting $K :=1$ and $N := NK$ in Theorem~\ref{Theorem_Multi_round_SANTP}. In both cases, comparison of these bounds with their distributed versions reveal an advantage of $\mathcal{O}\big(\sqrt{\log K/K}\big)$ of the distributed streaming over centralized streaming under the condition of identical total data budgets. 

The above has important implications. For instance, there is an increasing shift toward substituting centralized training on massive datasets with collaborative training on distributed, smaller data streams, for it offers several advantages, including reduced memory and storage requirements, improved data privacy and ability to adapt models in real-time to non-stationary, site-specific environments. However, this paradigm shift necessitates rigorous safeguards to ensure that the "distributed advantage" does not come at the cost of severe performance degradation. Our theoretical results demonstrate that, \textit{from a generalization gap perspective}, this can be done safely: collaborative streaming is even more efficient than centralized streaming under the same total data budget from a generalization-gap viewpoint. However, we hasten to mention that caution should be exercised in the interpretation of this result. In particular, advantage in terms of generalization gap does not necessarily translate into one in terms of population or test risk (see the reported gap, train and test errors in the Experiments section).


\vspace{-0.2cm}

\subsection{Effect of communication}

\vspace{-0.1cm}

Investigating the results of Thorem~\ref{Theorem_Multi_round_LNTP} and Theorem~\ref{Theorem_Multi_round_SANTP} it can be seen that the (bounds on the) generalization gap increases with the number of communication rounds ($R$), scaling at $\mathcal{O}(\sqrt{R/N})$. This reveals that while frequent communication may accelerate optimization, it may deteriorate the aggregated model's power to generalize. While a similar observation was already reported in~\cite{chor2023more} for SFL on i.i.d. data,  extending this behavior to data with memory validates that the "communication-generalization" tradeoff is an intrinsic property of SFL, rather than an artifact of independent sampling. Also, its proof necessitates in our case a more rigorous analytical framework that accounts for the coupling between model updates and temporally dependent error events, thereby showing that even with the "noise" of Markovian dependencies, frequent server interactions act as a complexity multiplier that can lead to overfitting.

\begin{wrapfigure}[12]{r}{0.35\textwidth}
    \vspace{-20pt}
    \centering
    \includegraphics[width=0.38\textwidth]{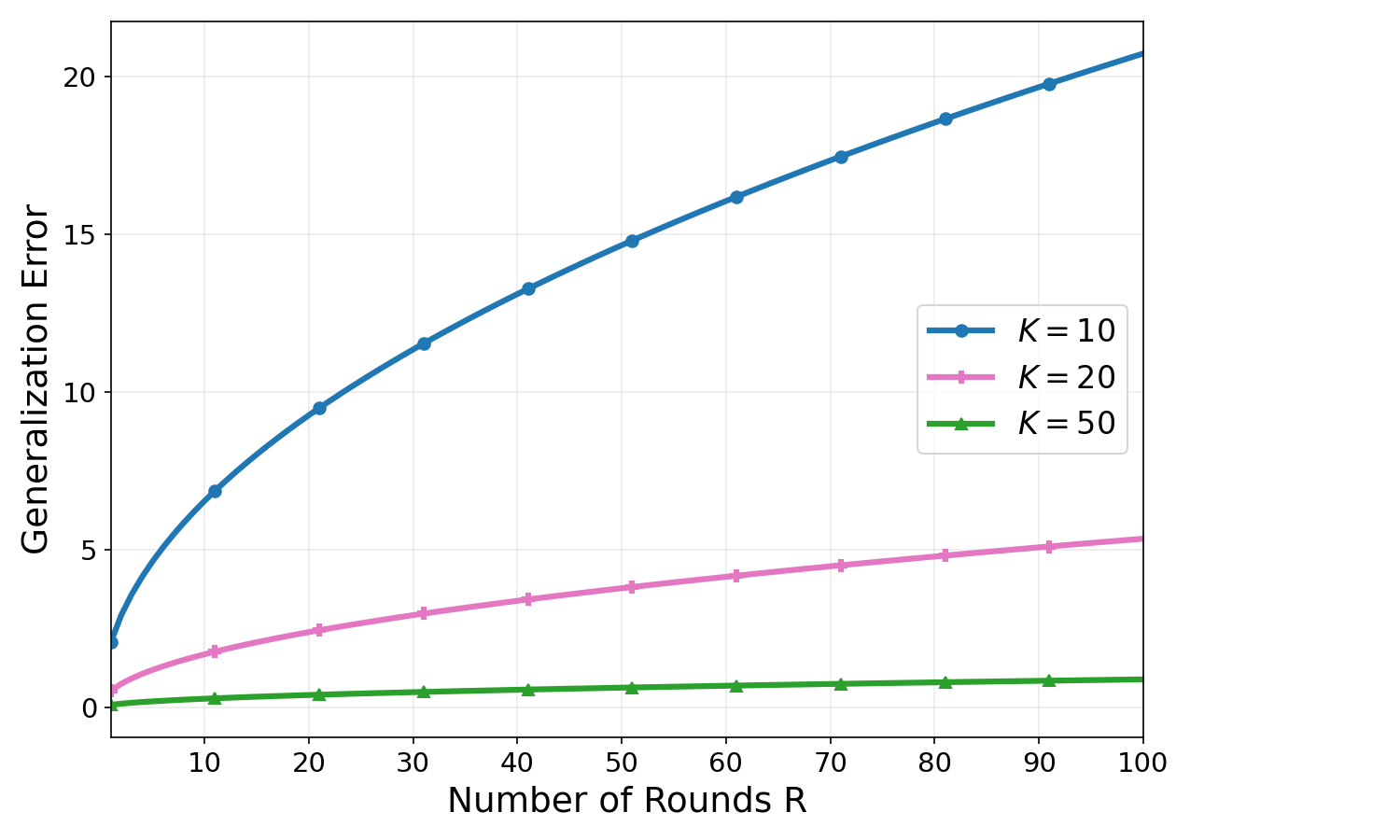}
    \caption{Evolution of the bound of Theorem~\ref{Theorem_Multi_round_LNTP} versus $R$, for $N=100, T=10, \rho=4, \tau_{k,\min}=2, 
    q_{e,b}=1, C=1, d=8, \theta=5$.}
    \label{fig:comparison_bounds}
    \vspace{-10pt}
\end{wrapfigure}

\textbf{Discussion:} A natural baseline is to assume that each round operates on a fixed number of data samples. Under such a setting, increasing the number of rounds $R$ leads to a proportional increase in the total amount of processed data samples. However, this formulation does not allow for a meaningful comparison across different values of $R$, as the total data budget is not controlled. To enable a fair and principled comparison, we instead consider a fixed total data budget. Consequently, the number of samples processed at each round becomes a function of $R$. This setting ensures that the final model is learned from the same total amount of data, regardless of the number of rounds.




\vspace{-0.2cm}

\subsection{Effect of memory (mixing-time)}

\vspace{-0.2cm}

 An interesting question in SFL is how the nature of the data stream -- i.i.d. versus Markovian -- impacts the generalization performance for a fixed data budget. Our analysis demonstrates that Markovian dependencies impose a "memory penalty" that is absent in the i.i.d. setting. Specifically, while i.i.d. samples provide maximal independent information per time step, Markovian samples are statistically coupled, effectively reducing the information rate of the stream. This cost is mathematically quantified in our bounds by the square root of the mixing time ($\sqrt{\tau_{\text{min}}}$), which acts as an inflation factor on the generalization gap. Consequently, to achieve a generalization power equivalent to that of an i.i.d. system, a Markovian SFL framework must collect a larger volume of data to compensate for the redundancy inherent in its temporal dependencies. This is consistent with prior results on hidden Markov models, where it has been shown that prediction performance converges more slowly under weak mixing conditions, as long-range dependencies obscure the latent structure of the process.

\vspace{-0.2cm}

\subsection{Effect of data heterogeneity across clients}

\vspace{-0.2cm}

The bounds established in Theorems~\ref{Theorem_Multi_round_LNTP} and \ref{Theorem_Multi_round_SANTP} reveal a nuanced dependency on the heterogeneous temporal dynamics of the Markov processes seen by the clients, as the generalization gap scales with the summation of local mixing times, $\frac{1}{K}\sum_{k=1}^K \sqrt{\tau_{k,\text{min}}}$. In particular, this means that the aggregated model's generalization performance is determined by the aggregate "memory" across all clients rather than being strictly bottlenecked by the slowest-mixing Markov process. Consequently, replacing individual mixing times with a global maximum ($\max_k \tau_{k,min}$) yields a pessimistic upper bound that fails to capture the mitigation effect provided by faster-mixing clients. This demonstrates that in practical SFL deployments, the presence of clients with rapidly decorrelating data streams can partially offset the "noise" introduced by slower, high-memory processes, leading to a more favorable generalization gap than a worst-case analysis would suggest.




\vspace{-0.4cm}

\section{Experiments}\label{Experimental_Result_main_part}

\vspace{-0.2cm}

\subsection{Language Model Pretraining}
\label{sec:lm-pretraining}

\vspace{-0.2cm}

To probe the SANTP regime predicted by Theorem~\ref{Theorem_Multi_round_SANTP}, we pretrain a GPT-style decoder on the TinyStories \citep{eldan2023tinystoriessmalllanguagemodels} corpus. The training set is sharded contiguously across $K \in \{2, 4, 8\}$ clients, so that each client observes its own dependent token stream, a faithful instance of the Markovian SFL setup of Section~\ref{Streaming_FL_Sec}. We follow the FedAvg protocol and, to isolate the effect of communication from that of total compute, hold the per-client gradient-step budget fixed while sweeping $R$ and allocating the budget evenly across rounds. Every configuration thus consumes the same amount of data and compute; only the communication frequency varies.

\begin{figure*}[h]
    \centering
    \includegraphics[width=\textwidth]{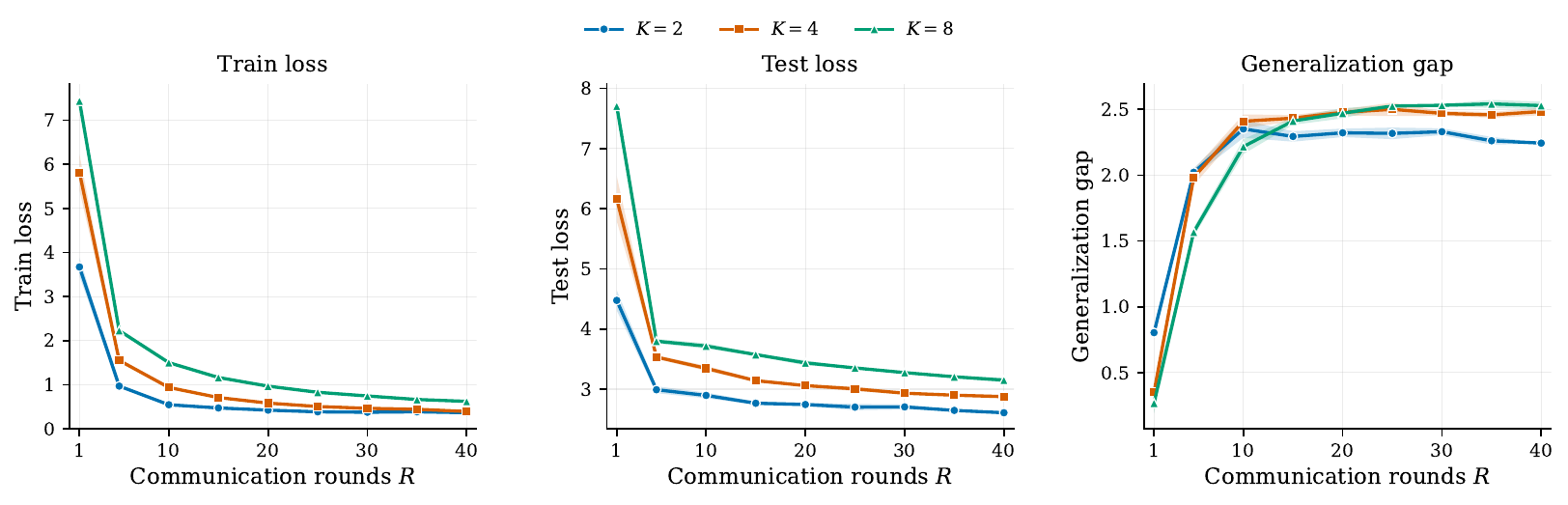}
    \caption{\textbf{Communication--generalization tradeoff in TinyStories pretraining.} Train loss (left), test loss (middle), and generalization gap (right) versus rounds $R$ for $K \in \{2,4,8\}$ clients. Per-client gradient-step budget is fixed across settings and evenly allocated across rounds: only communication frequency varies. Curves show seed averages, with shaded bands denoting standard deviation across three seeds.
    }
    \label{fig:lm-pretraining}
\end{figure*}

Figure~\ref{fig:lm-pretraining} depicts the evolution of the final train loss, test loss, and generalization gap as functions of $R$. As $R$ grows, both the training and the test losses decrease. Yet the gap between the two widens with $R$, in close agreement with Theorem~\ref{Theorem_Multi_round_SANTP} and~(\ref{Summary_bound_brief_1}-\ref{Summary_bound_brief_2}): more frequent aggregation inflated the generalization gap. The effect is consistent across all values of $K$, and is not attributable to overfitting from extra compute, since the total step count is held fixed by construction. This confirms, in a realistic transformer pretraining setting, the communication--generalization tradeoff.

\vspace{-0.3cm}

\subsection{Time Series Forecasting}
\label{sec:ts-forecasting}

\vspace{-0.2cm}

To test whether the communication--generalization tradeoff extends beyond text, we replicate the protocol of Section~\ref{sec:lm-pretraining} on the ETTh2 electricity transformer dataset~\citep{haoyietal-informer-2021}. The univariate target series is uniformly binned into a vocabulary of $B$ symbols, yielding a discrete next-token-prediction problem on temporally dependent data, in line with the Markovian SFL setting of Section~\ref{Streaming_FL_Sec}. Train and test windows are drawn from the same marginal via a block-level random split with a $\tau$-step buffer at each boundary to prevent receptive-field leakage; the training stream is then sharded contiguously across $K \in \{2, 4, 8\}$ clients. We replace the transformer with a GRU-based next-token predictor.

\begin{figure*}[h]
    \centering
    \includegraphics[width=\textwidth]{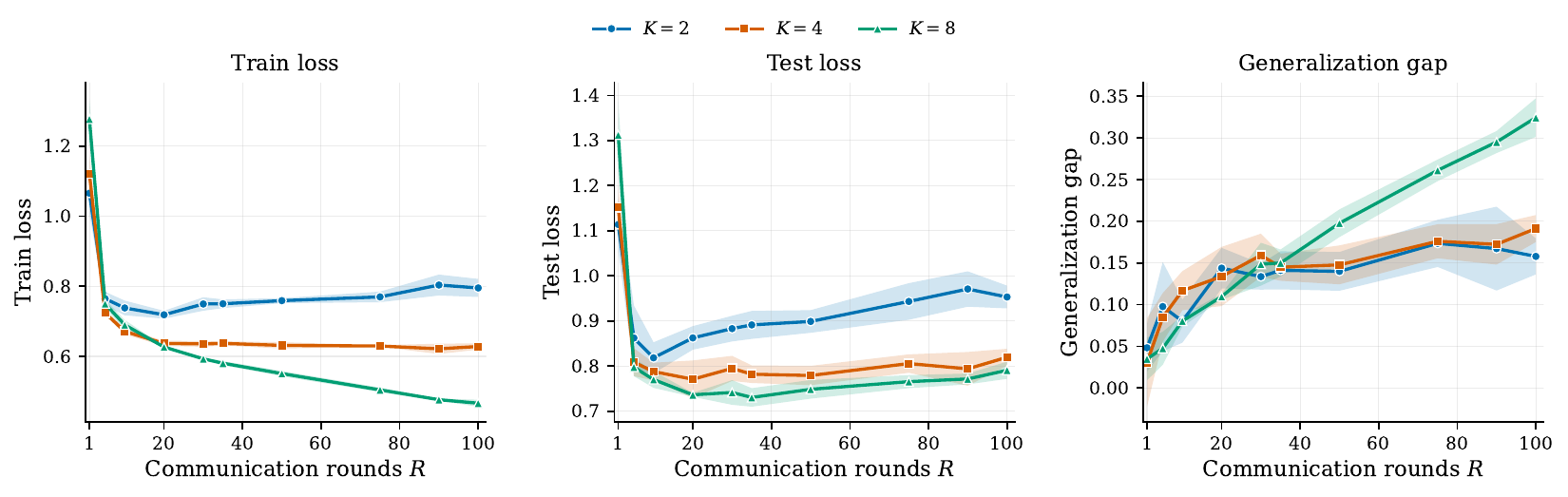}
    \caption{\textbf{Communication--generalization tradeoff in ETTh2 forecasting.}
    Train loss (left), test loss (middle), and generalization gap (right) as functions of rounds count $R$, for $K \in \{2, 4, 8\}$ clients. As in Figure~\ref{fig:lm-pretraining}. Curves show the mean across seeds and shaded bands indicate standard deviation across three seeds.}
    \label{fig:ts-forecasting}
\end{figure*}

Figure~\ref{fig:ts-forecasting} reports the same three quantities as a function of $R$. The qualitative picture mirrors that of the language modeling experiment: train and test losses decrease with $R$, while the generalization gap widens. The effect persists across all values of $K$. This confirms that the communication--generalization tradeoff is not a peculiarity of text pretraining but a robust feature of SFL on dependent data, including the time series regime that motivates much of FL applications in practice (e.g., sensors, energy systems, healthcare monitoring). More details and additional experiments on other time series (ETTh1~\cite{haoyietal-informer-2021} and Beijing Multi-Site Air-Quality dataset~\cite{chen2017beijing}) can be found in Appendix~\ref{app:ts-forecasting-details}.


\bibliography{bib}

\medskip

\newpage

\appendix







\newpage
\onecolumn
\begin{center}
\Large \bf Appendices
\end{center}

The appendices are organized as follows:
\begin{itemize}
    \item We present the assumptions needed for the established generalization bound for Federated Next Token Prediction in Appendix~\ref{assumption_sgd_problem_ntp}.

    \item In Appendix~\ref{Markov_property_random_variable}, we review some of the main properties of the Markovian random variables
    
    \item In Appendices~\ref{Proof_multi_round}-\ref{Proof_max_probability_conditional_markov}, we present all the proofs of the results of the papers, in the order of their appearance, that is:
    \begin{itemize}
        \item Proof of Theorem~\ref{multi_round_markovian_general_bound} presented in Appendix~\ref{Proof_multi_round},
        
        \item Proof of Theorem~\ref{lossy_inexpectation_bound_federated} presented in Appendix~\ref{proof_lossy_case_federated},
        
        \item Proof of Theorem~\ref{Theorem_Multi_round_LNTP} presented in Appendix~\ref{Proof_Theorem_Multi_round_LNTP},

        \item Proof of Theorem~\ref{Theorem_Multi_round_SANTP} presented in Appendix~\ref{Proof_Theorem_Multi_round_SANTP},

        \item Proof of Lemma \ref{Hoeffdinf_Lemma} Presented in Appendix \ref{Proof_Hoeffding_employing}

         \item Proof of Lemma~\ref{lem:window-loss-concentration} presented in Appendix~\ref{Proof:lem:window-loss-concentration},

        \item Proof of Lemma~\ref{max_probability_conditional_markov} presented in Appendix~\ref{Proof_max_probability_conditional_markov},
    \end{itemize}
    \item Finally, Appendices~\ref{app:lm-pretraining-details} and \ref{app:ts-forecasting-details} provide experimental details for language model pretraining and time series forecasting, respectively, along with additional experiments.    
\end{itemize}



\section{Assumptions for Federated Next-Token Prediction}
\label{assumption_sgd_problem_ntp}

Let $m\coloneqq eN_R/b$ denote the number of
local SGD iterations performed by each client
during a communication round, where $e$ is
the number of local epochs, $N_R$ is the
number of local training samples, and $b$
is the mini-batch size.

\begin{assumption}
\label{a:contract}
Consider a $(K,R,n,e,b)$-FL-SGD model. For every
$r\in[2:R]$, every pair of initializations
$(\overbar{w}^{(r-1)},\overbar{w}^{\prime(r-1)})$,
and every client $k\in[K]$, suppose that the
corresponding local models $w_k^{(r)}$ and
$w_k^{\prime(r)}$ are obtained from the same
dataset $s_k^{(r)}$ using identical mini-batches
$\mathcal{B}_{k,r,t}\subseteq s_k^{(r)}$,
$t\in[m]$. We assume that there exists
$q_{e,b}\in\mathbb{R}^+$\footnote{In general,
$q_{e,b}$ may also depend on $m$; this
dependence is omitted for brevity.} such that,
for every such pair of local models,
\begin{align}
\left\|w_k^{(r)}-w_k^{\prime(r)}\right\|
\leq q_{e,b}
\left\|
\overbar{w}^{(r-1)}
-\overbar{w}^{\prime(r-1)}
\right\|.
\end{align}
\end{assumption}

When $q_{e,b}<1$, we say that SGD is
\emph{contractive}. This property has been
established under suitable regularity conditions,
including settings in which the surrogate loss
$\tilde{\ell}$ is smooth and strongly convex.
In general, the contraction factor may depend
on the communication round $r$ and the learning
rates $\eta_{r,t}$. Here, as specified in the
learning setup, we take the learning rates
to be identical across clients and rounds
for simplicity. This restriction is not
essential, and the result can be extended
to round-dependent contraction factors.

The generalization result applies to any
value of $q_{e,b}$, although the case
$q_{e,b}\leq1$ is particularly relevant.
For convex, Lipschitz, and smooth surrogate
losses, the local SGD updates can be
non-expansive, yielding $q_{e,b}\leq1$
under the corresponding step-size conditions.
Strict contractivity, $q_{e,b}<1$, can
further be established for strongly convex
losses under suitable assumptions. In
federated support vector machines (FSVM),
for example, the hinge loss is convex
and becomes strongly convex upon adding
an $L_2$ regularizer. Even without
regularization, where strong convexity
is absent and the theoretical guarantee
only yields $q_{e,b}\leq1$, numerical
results suggest that the contraction
factor can be strictly smaller than one.

\begin{assumption}
\label{a:first_order}
Consider the $(K,R,n,e,b)$-FL-SGD model with
$\mathcal{W}=\mathfrak{B}_d(C)$. Fix a client
$k\in[K]$, a round $r\in[2:R]$, an initial
model $\overbar{w}^{(r-1)}$, and a local
dataset $s_k^{(r)}$ together with its
mini-batches
$\mathcal{B}_{k,r,t}\subseteq s_k^{(r)}$,
$t\in[m]$.We assume that there exists a constant $\alpha\in\mathbb{R}^+$
such that, for every $w_\varepsilon\in\mathbb{R}^d$ satisfying
$\overbar{w}^{(r-1)}+w_\varepsilon/K\in\mathcal{W}$,
\begin{align}
\left\|
w_k^{\prime(r)}
-
\left(
w_k^{(r)}
+
\frac{1}{K}
\mathsf{D}_{(\overbar{w}^{(r-1)},
\{\mathcal{B}_{k,r,t}\}_t)}
w_\varepsilon
\right)
\right\|
\leq
\frac{\alpha}{K^2}\|w_\varepsilon\|^2.
\label{eq:expansion}
\end{align}
Here, $w_k^{(r)}$ and $w_k^{\prime(r)}$
are the local models obtained using
the same mini-batches
$\{\mathcal{B}_{k,r,t}\}_t$, but initialized
at $\overbar{w}^{(r-1)}$ and
$\overbar{w}^{(r-1)}+w_\varepsilon/K$,
respectively. The matrix
$\mathsf{D}_{(\overbar{w}^{(r-1)},
\{\mathcal{B}_{k,r,t}\}_t)}
\in\mathbb{R}^{d\times d}$
represents the overall derivative of the
local SGD procedure with respect to its
initialization, for the prescribed
mini-batches. We further assume that
its spectral norm is bounded by $q_{e,b}$.
\end{assumption}

The expansion in~\eqref{eq:expansion}
describes the response of the local SGD
procedure to a perturbation of its
initialization. Its remainder is of
second order in the perturbation magnitude
and can be controlled under suitable
smoothness assumptions, such as bounded
higher-order derivatives. Since the
initial perturbation is scaled by $1/K$,
the approximation is particularly
relevant when the number of clients
is moderate or large.

More precisely, the derivative matrix
can be expressed as
\begin{align}
\mathsf{D}_{(\overbar{w}^{(r-1)},
\{\mathcal{B}_{k,r,t}\}_t)}
=
\prod_{t\in[m]}
\left(
\mathrm{I}_d
-
\mathrm{\vc{H}}^t_{(\overbar{w}^{(r-1)},
\{\mathcal{B}_{k,r,t'}\}_{t'\in[t]})}
\right),
\end{align}
where $\mathrm{I}_d$ is the $d\times d$
identity matrix, and
\begin{align*}
\mathrm{\vc{H}}^t_{(\overbar{w}^{(r-1)},
\{\mathcal{B}_{k,r,t'}\}_{t'\in[t]})}
\coloneqq
\hess\nolimits_{g_{\mathcal{B}_{k,r,t}}}
\left(W_{k,t}^{(r)}\right).
\end{align*}
Here, $W_{k,t}^{(r)}$ denotes the model
obtained after $t$ local SGD iterations,
starting from $\overbar{w}^{(r-1)}$
and using the mini-batches
$\{\mathcal{B}_{k,r,t'}\}_{t'\in[t]}$.
The notation $\hess_g(\cdot)$ denotes
the Hessian of $g$, with
\begin{align*}
g_{\mathcal{B}}(w)
\coloneqq
\frac{\eta}{b}
\sum_{z\in\mathcal{B}}
\tilde{\ell}(z,w).
\end{align*}

Lastly, we state an assumption on $K$ being sufficiently large.

\begin{assumption}\label{a:KValue}
Suppose Assumptions~3 and~4 hold for some constants $q_{e,b}$ and $\alpha$. Then, for $R\geq 2$,\footnote{No condition is needed for $R=1$.} the number of clients $K$ satisfies at least one of the following conditions.
\begin{itemize}[leftmargin=*]

    \item \textbf{Condition 1:}
    \begin{align}
    K^2 >
    \min_{\beta\in(0,1]}
    \max\Bigg\{
    &\max_{r\in[R-1]}
    \frac{
    \alpha^2 C^2
    \big((1+\beta)q_{e,b}^2\big)^{r-1}
    }{
    q_{e,b}^2\big(\sqrt{1+\beta}-1\big)^2
    },
    \max_{r\in[R-1]}
    \frac{
    48\alpha\sqrt{\rho}\,C^3
    \left(
    q_{e,b}^{r}
    -
    \big((1+\beta)q_{e,b}^2\big)^{r}
    \right)
    }{
    \theta
    \left(
    q_{e,b}
    -
    (1+\beta)q_{e,b}^2
    \right)
    }
    \Bigg\}.
    \label{eq:svmKcond1}
    \end{align}

    \item \textbf{Condition 2:}
    Suppose $q_{e,b}<1$. There exists some
    $\nu\in[0,(1-q_{e,b}^2)/\alpha)$ such that
    \begin{align}
    K^2 >
    \max\Bigg\{
    &\frac{
    \alpha^2 C^2
    }{
    \big(\sqrt{1-\alpha\nu}-q_{e,b}\big)^2
    },
    \max_{r\in[R-1]}
    \frac{
    48\alpha\sqrt{\rho}\,C^3
    \left(
    q_{e,b}^{r}
    -
    (1-\alpha\nu)^{r}
    \right)
    }{
    \theta
    \left(
    q_{e,b}
    -
    (1-\alpha\nu)
    \right)
    }
    \Bigg\}.
    \label{eq:svmKcond2}
    \end{align}

\end{itemize}
\end{assumption}

Any removable singularities in \eqref{eq:svmKcond1} and \eqref{eq:svmKcond2} are understood by continuity.

The purpose of Assumption~\ref{a:KValue} is purely technical. It is used to control the recursive propagation of the perturbation introduced at round $r$ through the subsequent communication rounds. More precisely, the two conditions provide alternative sufficient guarantees under which the accumulated approximation error satisfies
\[
\|\widetilde{\Delta}\|_2<\frac{\epsilon_1}{6\sqrt{\rho}}.
\]
These conditions are introduced only to keep the recursive argument explicit and tractable, and we expect the result to remain valid for more moderate values of $K$ as well.

Condition~1 controls the recursion through the factor $(1+\beta)q_{e,b}^2$, whereas Condition~2 uses the alternative factor $1-\alpha\nu$. In both cases, the second term in the corresponding condition accounts for the cumulative propagation of the second-order approximation errors over the remaining communication rounds.

It is useful to record a simpler sufficient condition for Condition~1. Suppose that $q_{e,b}<1$ and choose
\[
\beta\in
\left(
0,
\min\left\{
1,\frac{1}{q_{e,b}^2}-1
\right\}
\right).
\]
Then $(1+\beta)q_{e,b}^2<1$, so that
\[
\max_{r\in[R-1]}
\big((1+\beta)q_{e,b}^2\big)^{r-1}=1.
\]
Moreover, for any $x,y\geq0$,
\[
\frac{x^r-y^r}{x-y}
=
\sum_{j=0}^{r-1}x^{r-1-j}y^j
\leq
r\max\{x,y\}^{r-1}.
\]
Consequently, a sufficient condition for \eqref{eq:svmKcond1} is
\begin{align*}
K^2 >
\inf_{\beta\in
\left(
0,
\min\left\{
1,\frac{1}{q_{e,b}^2}-1
\right\}
\right)}
\max\Bigg\{
&
\frac{
\alpha^2 C^2
}{
q_{e,b}^2
\big(\sqrt{1+\beta}-1\big)^2
},
\max_{r\in[R-1]}
\frac{
24\alpha\sqrt{\rho}\,C^3\,r
\max\left\{
q_{e,b},
(1+\beta)q_{e,b}^2
\right\}^{r-1}
}{
\epsilon_1
}
\Bigg\}.
\end{align*}

Since both $q_{e,b}$ and $(1+\beta)q_{e,b}^2$ are strictly smaller than one over the above range of $\beta$, the second term inside the maximum decays to zero as $r\to\infty$. Finally, using
\[
r\max\left\{
q_{e,b},
(1+\beta)q_{e,b}^2
\right\}^{r-1}
\leq R,
\qquad r\in[R-1],
\]
the above condition can be further relaxed to
\[
K^2 >
\inf_{\beta\in
\left(
0,
\min\{
1,\frac{1}{q_{e,b}^2}-1\}
\right)}
\max\left\{
\frac{
\alpha^2 C^2
}{
q_{e,b}^2
\big(\sqrt{1+\beta}-1\big)^2
},
\frac{
24\alpha\sqrt{\rho}\,C^3R
}{
\epsilon_1
}
\right\}.
\]




\section{On the Markovian random variables}\label{Markov_property_random_variable}

\begin{definition}[Mixing time of a Markov chain]
Let $(X_1,\ldots,X_T)$ be a Markov chain taking values in the Polish state space
$\mathcal{X}_1 \times \cdots \times \mathcal{X}_T$, where $X_i \in \mathcal{X}_i$ for all $i$.
For $i \geq 1$ and $t \geq 0$, let $\mathbb{P}(X_{i+t}|X_i = x)$ denote the conditional distribution
of $X_{i+t}$ given $X_i = x$.

For any $0 < \epsilon < 1$, the \emph{mixing time} $\tau(\epsilon)$ is defined as the smallest integer $t$
such that, uniformly over all admissible time indices, the conditional distributions of $X_{i+t}$
initialized from any two states are within $\epsilon$ in total variation distance.
Formally, define
\begin{align}
    \overline{d}(t)
    &\coloneqq
    \max_{1 \leq i \leq T - t}
    \sup_{(x,y) \in \mathcal{X}\times\mathcal{X}}
    d_{TV}\!\left(
    \mathbb{P}(X_{i+t} \mid X_i = x),
    \mathbb{P}(X_{i+t} \mid X_i = y)
    \right),
    \\
    \tau(\epsilon)
    &\coloneqq
    \min\bigl\{ t \in \mathbb{N} : \overline{d}(t) \leq \epsilon \bigr\}.
\end{align}
\end{definition}

\begin{definition}[Marton coupling]\label{Marton_coupling}
Let $X \coloneqq (X_1,\ldots,X_T)$ be a random vector taking values in
$\mathcal{X}_1 \times \cdots \times \mathcal{X}_T$.
A \emph{Marton coupling} for $X$ is defined as a family of coupled random vectors
\begin{align}
    \left(
    X^{(x_1,\ldots,x_i,x'_i)},
    X'^{(x_1,\ldots,x_i,x'_i)}
    \right)
    \in
    \mathcal{X} \times \mathcal{X},
\end{align}
constructed for every $i \in [T]$ and every choice of
$x_1 \in \Omega_1, \ldots, x_i \in \Omega_i, x'_i \in \Omega_i$,
and satisfying the following conditions:
\begin{itemize}
    \item
    $
    X_1^{(x_1,\ldots,x_i,x'_i)} = x_1,\ \ldots,\ 
    X_i^{(x_1,\ldots,x_i,x'_i)} = x_i;
    $
    
    \item
    $
    {X'_1}^{(x_1,\ldots,x_i,x'_i)} = x_1,\ \ldots,\ 
    {X'_{i-1}}^{(x_1,\ldots,x_i,x'_i)} = x_{i-1},\ 
    {X'_i}^{(x_1,\ldots,x_i,x'_i)} = x'_i;
    $
    
    \item
    $
    \left(
    X^{(x_1,\ldots,x_i,x'_i)}_{i+1}, \ldots,
    X^{(x_1,\ldots,x_i,x'_i)}_T
    \right)
    \sim
    \mathbb{P}\!\left(
    X_{i+1}, \ldots, X_T
    \mid
    X_1 = x_1, \ldots, X_i = x_i
    \right);
    $
    
    \item
    $
    \left(
    {X'_{i+1}}^{(x_1,\ldots,x_i,x'_i)}, \ldots,
    {X'_T}^{(x_1,\ldots,x_i,x'_i)}
    \right)
    \sim
    \mathbb{P}\!\left(
    X_{i+1}, \ldots, X_T
    \mid
    X_1 = x_1, \ldots, X_{i-1} = x_{i-1}, X_i = x'_i
    \right);
    $
    
    \item
    If $x_i = x'_i$, then
    $
    X^{(x_1,\ldots,x_i,x'_i)} =
    X'^{(x_1,\ldots,x_i,x'_i)}.
    $
\end{itemize}

Associated with a Marton coupling, the \emph{mixing matrix}
$\Gamma \coloneqq (\Gamma_{i,j})_{i,j \leq T}$
is defined as an upper triangular matrix with diagonal entries
$\Gamma_{i,i} \coloneqq 1$ for all $i \leq T$, and off-diagonal entries given by
\begin{align}
    \Gamma_{j,i} &\coloneqq 0, \qquad 1 \leq j < i \leq T, \\
    \Gamma_{i,j}
    &\coloneqq
    \sup_{x_1,\ldots,x_i,x'_i}
    \mathbb{P}\!\left[
    X^{(x_1,\ldots,x_i,x'_i)}_j
    \neq
    X'^{(x_1,\ldots,x_i,x'_i)}_j
    \right],
    \qquad 1 \leq i < j \leq T.
\end{align}
\end{definition}

\begin{definition}[Partition]
A \emph{partition} of a set $S$ is a collection of disjoint, non-empty subsets whose union equals $S$.
Analogously, let $X \coloneqq (X_1,\ldots,X_T)$ be a vector of random variables. We say that
$\hat{X} \coloneqq (\hat{X}_1,\ldots,\hat{X}_{\hat{T}})$ is a partition of $X$ if
$\{\hat{X}_i\}_{i=1}^{\hat{T}}$ forms a partition of the set $\{X_1,\ldots,X_T\}$.

For a given partition $\hat{X}$ of $X$, we denote by $s(\hat{X}_i)$ the number of elements in $\hat{X}_i$
(i.e., the size of $\hat{X}_i$), and define the \emph{size of the partition} as
\[
s(\hat{X}) \coloneqq \max_{1 \leq i \leq \hat{T}} s(\hat{X}_i).
\]

Moreover, we denote by $\mathcal{I}(\hat{X}_i)$ the set of indices corresponding to the elements of
$\hat{X}_i$, that is,
\[
X_j \in \hat{X}_i \quad \text{if and only if} \quad j \in \mathcal{I}(\hat{X}_i).
\]
For any index set $S \subseteq [T]$, we write
\[
X_S \coloneqq \{X_j : j \in S\}.
\]
In particular, $\hat{X}_i = X_{\mathcal{I}(\hat{X}_i)}$.

Finally, if $X$ takes values in the product space
$\mathcal{X} \coloneqq \mathcal{X}_1 \times \cdots \times \mathcal{X}_T$,
then the partitioned random vector $\hat{X}$ takes values in
\[
\hat{\mathcal{X}} \coloneqq \hat{\mathcal{X}}_1 \times \cdots \times \hat{\mathcal{X}}_{\hat{T}},
\]
where each $\hat{\mathcal{X}}_i$ corresponds to the product space indexed by
$\mathcal{I}(\hat{X}_i)$.
\end{definition}

\begin{lemma}[McDiarmid-type inequality for dependent random variables]
\label{Mdiarmid_type_lemma}
Let $X \coloneqq (X_1,\ldots,X_T)$ be a sequence of random variables taking values in
$\mathcal{X}$, with distribution $P$. Let
$\hat{X} \coloneqq (\hat{X}_1,\ldots,\hat{X}_{\hat{T}})$
be a partition of $X$, taking values in $\hat{\mathcal{X}}$ with distribution $\hat{P}$.
Suppose that a Marton coupling exists for $\hat{X}$ with associated mixing matrix $\Gamma$.
Let $c \in \mathbb{R}_+^{T}$ and define $C(c) \in \mathbb{R}_+^{\hat{T}}$ by
\begin{align}
    C_i(c)
    \coloneqq
    \sum_{j \in \mathcal{I}(\hat{X}_i)} c_j,
    \qquad i \leq \hat{T}.
\end{align}
Assume that the function $f:\mathcal{X}\to\mathbb{R}$ satisfies
\begin{align}
    f(x)-f(y)
    \leq
    \sum_{i=1}^{T} c_i \mathbf{1}\{x_i \neq y_i\},
    \label{Mdiarmid_type_lemma_1}
\end{align}
for all $x,y \in \mathcal{X}$. Then, for any $\lambda \in \mathbb{R}$,
\begin{align}
    \log \mathbb{E}\!\left[
    e^{\lambda (f(X)-\mathbb{E}f(X))}
    \right]
    \leq
    \frac{\lambda^2 \|\Gamma C(c)\|^2}{8}.
    \label{mgf_ineq_1}
\end{align}
Consequently, for any $t \geq 0$,
\begin{align}
    \mathbb{P}\left(
    |f(X)-\mathbb{E}f(X)| > t
    \right)
    \leq
    2 \exp\!\left(
    -\frac{2t^2}{\|\Gamma C(c)\|^2}
    \right).
\end{align}
\end{lemma}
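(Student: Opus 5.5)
The plan is the classical martingale (Doob decomposition) argument of Paulin, carried out at the level of the blocks $\hat X_1,\ldots,\hat X_{\hat T}$ of the partition. The Marton coupling is used to bound the range of each martingale increment.

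First, lift the bounded-difference condition \eqref{Mdiarmid_type_lemma_1} to blocks. For $x,y\in\mathcal{X}$, write $\hat x_l,\hat y_l$ for the corresponding blocks. For each block,
\[
\sum_{j\in\mathcal{I}(\hat X_l)} c_j\mathbf{1}\{x_j\neq y_j\}\le C_l(c)\,\mathbf{1}\{\hat x_l\neq\hat y_l\}.
\]
Hence $f(x)-f(y)\le\sum_{l=1}^{\hat T}C_l(c)\mathbf{1}\{\hat x_l\neq\hat y_l\}$. Next, let $\mathcal{F}_i\coloneqq\sigma(\hat X_1,\ldots,\hat X_i)$ and $M_i\coloneqq\mathbb{E}[f(X)\mid\mathcal{F}_i]$, with $M_0=\mathbb{E}f(X)$ and $M_{\hat T}=f(X)$. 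Then
\[
f(X)-\mathbb{E}f(X)=\sum_{i=1}^{\hat T}D_i,\qquad D_i\coloneqq M_i-M_{i-1}.
\]

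The key step is to show that, conditionally on $\mathcal{F}_{i-1}$, the increment $D_i$ takes values in an interval of length at most $(\Gamma C(c))_i$. Fix the past $\hat x_1,\ldots,\hat x_{i-1}$ and two candidate values $a,b$ for block $i$. By the definition of the Marton coupling with $(\hat x_1,\ldots,\hat x_{i-1},a,b)$,
\[
\mathbb{E}[f\mid \hat x_{1:i-1},a]-\mathbb{E}[f\mid \hat x_{1:i-1},b]
=\mathbb{E}\big[f(X^{(\cdot)})-f(X'^{(\cdot)})\big]
\le\sum_{l\ge i}C_l(c)\,\mathbb{P}\big[\hat X^{(\cdot)}_l\neq\hat X'^{(\cdot)}_l\big].
\]
Here the blocks $l<i$ coincide, block $i$ contributes at most $C_i=\Gamma_{i,i}C_i$, and each block $l>i$ contributes at most $\Gamma_{i,l}C_l$. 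The right-hand side is therefore at most $(\Gamma C(c))_i$. Taking the supremum over $a$ and the infimum over $b$ shows that, given $\mathcal{F}_{i-1}$, $M_i$ lies in an interval of width at most $(\Gamma C(c))_i$. Since $M_{i-1}$ is $\mathcal{F}_{i-1}$-measurable, the same holds for $D_i$. I expect this to be the main obstacle: the coupling must be invoked with arbitrary pairs $(a,b)$, and the upper-triangular structure of $\Gamma$ must be matched to the fact that the future blocks depend on block $i$. Measurability issues are handled by the Polish-space assumption through regular conditional distributions.

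Finally, since $\mathbb{E}[D_i\mid\mathcal{F}_{i-1}]=0$ and $D_i$ has conditional range at most $(\Gamma C(c))_i$, Hoeffding's lemma gives
\[
\mathbb{E}\big[e^{\lambda D_i}\mid\mathcal{F}_{i-1}\big]\le \exp\big(\lambda^2(\Gamma C(c))_i^2/8\big).
\]
Iterating the tower property from $i=\hat T$ down to $1$ yields
\[
\log\mathbb{E}\big[e^{\lambda(f(X)-\mathbb{E}f(X))}\big]\le\frac{\lambda^2}{8}\sum_i(\Gamma C(c))_i^2=\frac{\lambda^2\|\Gamma C(c)\|^2}{8},
\]
which is \eqref{mgf_ineq_1}. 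For the tail bound, apply Chernoff's inequality with the choice $\lambda=4t/\|\Gamma C(c)\|^2$ to each sign of $f(X)-\mathbb{E}f(X)$, and take a union bound to obtain the factor $2$.
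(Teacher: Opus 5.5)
Your proposal is correct. The paper states this lemma without proof, as a black box taken from Theorem~2.1 of \cite{paulin2015concentration}, and your argument is Paulin's own: lift the bounded-difference condition to the blocks, use the Marton coupling to show each Doob increment has conditional range at most $(\Gamma C(c))_i$, then finish with Hoeffding's lemma, the tower property, and a Chernoff bound at $\lambda=4t/\|\Gamma C(c)\|^2$.
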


\begin{corollary}[McDiarmid's inequality for Markov chains]
\label{mcdiarmid_inequality_markov_chain}
Let $X \coloneqq (X_1,\ldots,X_T)$ be a Markov chain taking values in
$\mathcal{X}_1 \times \cdots \times \mathcal{X}_T$, with mixing time $\tau(\delta)$
for $0 \leq \delta < 1$. Define
\begin{align}
    \tau_{\min}
    \coloneqq
    \inf_{0 \leq \delta < 1}
    \tau(\delta)
    \left(\frac{2-\delta}{1-\delta}\right)^2.
    \label{eq:mixing_time_mcdiarmid}
\end{align}
Suppose that the function $f:\mathcal{X}^T \to \mathbb{R}$ satisfies
\eqref{Mdiarmid_type_lemma_1} for some $c \in \mathbb{R}_+^{T}$.
Then, for any $t \geq 0$,
\begin{align}
    \mathbb{P}\!\left(
    |f(X)-\mathbb{E}f(X)| \geq t
    \right)
    \leq
    2 \exp\!\left(
    -\frac{2t^2}{\|c\|^2 \tau_{\min}}
    \right).
\end{align}
Equivalently, for any $\lambda \in \mathbb{R}$,
\begin{align}
    \log \mathbb{E}\!\left[
    e^{\lambda (f(X)-\mathbb{E}f(X))}
    \right]
    \leq
    \frac{\lambda^2 \|c\|^2 \tau_{\min}}{8}.
    \label{mixing_time_moment_mcdiarmid}
\end{align}
\end{corollary}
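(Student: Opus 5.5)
We will derive the corollary from Lemma~\ref{Mdiarmid_type_lemma} using the trivial partition $\hat{X}_i=\{X_i\}$. Then $C(c)=c$, and the lemma gives $\log\mathbb{E}[e^{\lambda(f(X)-\mathbb{E}f(X))}]\le \lambda^2\|\Gamma c\|^2/8$. The tail bound follows by the Chernoff argument: optimize $\lambda = 4t/\|\Gamma c\|^2$ and apply a union bound over the two tails. The task therefore reduces to the matrix bound
\[
\|\Gamma c\|^2\le \tau_{\min}\,\|c\|^2 .
\]
Since $\tau_{\min}$ is an infimum over $\delta\in[0,1)$, it suffices to prove $\|\Gamma\|_{\mathrm{op}}\le \tau(\delta)\,\frac{2-\delta}{1-\delta}$ for each fixed $\delta$ and then take the infimum.

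\textbf{Step 1: a Marton coupling with geometrically decaying mixing matrix.} For fixed $i$, a prefix $x_1,\ldots,x_i$ and an alternative value $x_i'$, the Markov property means the two conditional futures depend only on $X_i=x_i$ versus $X_i=x_i'$. We build the coupling blockwise, in blocks of length $\tau=\tau(\delta)$. At times $i+\tau,\,i+2\tau,\ldots$ we use a maximal coupling of the $\tau$-step transition laws from the current pair of states. Once the chains meet, they are run identically. Between block endpoints, the intermediate states are sampled from the bridge (conditional) laws of the pinned chains, so that both marginals are correct and the chains stay glued after meeting.

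By definition of $\overline d(\tau)\le\delta$, each block fails to merge with probability at most $\delta$. Hence
\[
\Gamma_{i,j}\le \delta^{\lfloor (j-i)/\tau\rfloor}\quad\text{for } j>i,
\]
with the bound $1$ inside the first block. The delicate point is exactly this: merging at block endpoints must imply agreement of all later coordinates. This holds because the bridge sampling is done jointly once the states coincide. The existence of such couplings on Polish spaces is standard.

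\textbf{Step 2: bounding the operator norm.} Since $\Gamma$ is upper triangular with entries depending only on $j-i$, we use the Schur test, $\|\Gamma\|_{\mathrm{op}}\le\sqrt{\|\Gamma\|_1\|\Gamma\|_\infty}$. Each row sum and each column sum is at most
\[
\tau\sum_{m\ge0}\delta^m=\frac{\tau}{1-\delta}.
\]
A plain Schur bound would give $\|\Gamma c\|^2\le \tau^2\|c\|^2/(1-\delta)^2$, which is weaker than the claimed $\tau\left(\frac{2-\delta}{1-\delta}\right)^2$ because it is quadratic in $\tau$.

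\textbf{Step 3: recovering a linear dependence on $\tau$ (the main obstacle).} To get a bound linear in $\tau$ we follow Paulin's approach instead. Apply Lemma~\ref{Mdiarmid_type_lemma} with the partition into consecutive blocks $\hat{X}_l=(X_{(l-1)\tau+1},\ldots,X_{l\tau})$. The block sequence is again Markov, and its one-step mixing is at most $\delta$. Consequently, the block-level mixing matrix satisfies $\hat\Gamma_{l,l'}\le\delta^{\,l'-l-1}$ for $l'>l$, with the extra factor coming from the partial first block. Its row and column sums are then at most
\[
1+\sum_{m\ge0}\delta^m=\frac{2-\delta}{1-\delta},
\]
so $\|\hat\Gamma\|_{\mathrm{op}}\le\frac{2-\delta}{1-\delta}$.

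Finally, $C_l(c)=\sum_{j\in\text{block }l}c_j$, and by Cauchy--Schwarz $C_l(c)^2\le\tau\sum_{j\in\text{block }l}c_j^2$, which gives $\|C(c)\|^2\le\tau\|c\|^2$. Combining,
\[
\|\hat\Gamma C(c)\|^2\le\tau(\delta)\left(\frac{2-\delta}{1-\delta}\right)^2\|c\|^2 .
\]
Taking the infimum over $\delta$ yields \eqref{mixing_time_moment_mcdiarmid}, and the tail bound follows from the Chernoff step above. The step needing the most care is the block coupling estimate $\hat\Gamma_{l,l'}\le\delta^{l'-l-1}$. It requires coupling the entire next block given only a differing state inside the current block, which uses the Markov property at the last coordinate of that block.
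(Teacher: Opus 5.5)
Your proof is correct. Step 3 is exactly the argument the paper relies on: the paper states this corollary without proof, citing \cite{paulin2015concentration}, and carries out the same construction in its proof of Lemma~\ref{lem:window-loss-concentration} (blocks of length $\tau(\delta)$, maximal couplings at block endpoints with bridges in between, $\hat\Gamma_{l,l+1}\le 1$ and $\hat\Gamma_{l,l+k}\le\delta^{k-1}$, the Schur test giving $\frac{2-\delta}{1-\delta}$, and $\|C(c)\|^2\le\tau(\delta)\|c\|^2$). Steps 1--2 are a detour you can drop, and your opening reduction is misstated: you need $\|\Gamma\|_{\mathrm{op}}\le\sqrt{\tau(\delta)}\,\frac{2-\delta}{1-\delta}$, not $\tau(\delta)\,\frac{2-\delta}{1-\delta}$, since the latter already follows from your Step 2 and, as you note there, is too weak.
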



\begin{lemma}
\label{lem:window-loss-concentration}
Let $\{X_{t}\}_{t\in[T]}$ be a Markov process of order $\rho$, and let
$Z_{t}:=(X_{t},X_{t-1},\ldots,X_{t-\rho+1})$. For a fixed
$w\in\mathcal W$, define
\[
Y_{t}:=\frac{1}{T}
\log\frac{1}{
p_w(X_{t}|X_{t-1},\ldots,X_{t-\rho})}.
\]
Let $\tilde{\tau}(\delta)$ denote the mixing time of the
history-state process $\{Z_{t}\}_{t\in[T]}$, defined by
\[
\tilde{\tau}(\delta)
:=
\min\left\{
m\geq1:
\max_i\sup_{Z',Z''}
d_{\mathrm{TV}}\!\left(
\mathbb{P}(Z_{i+m}\in\cdot\mid Z_i=z'),
\mathbb{P}(Z_{i+m}\in\cdot\mid Z_i=z'')
\right)
\leq\delta
\right\}.
\]
Then, for any function $f:\mathbb R^T\to\mathbb R$ satisfying
the bounded-difference condition in
\eqref{Mdiarmid_type_lemma_1} with coefficients
$c=(c_1,\ldots,c_T)\in\mathbb R_+^T$, and for every
$\lambda\in\mathbb R$,
\begin{equation}
\log\mathbb E
\left[e^{\lambda\big(f(Y)-\mathbb Ef(Y)\big)}\right]
\le
\frac{\lambda^2\tilde{\tau}_{\min}\|c\|_2^2}{8},
\label{Concentration_under_condition_A}
\end{equation}
where
\begin{equation}
\tilde{\tau}_{\min}
:=
\inf_{0<\delta<1}
\tilde{\tau}(\delta)
\left(\frac{2-\delta}{1-\delta}\right)^2.
\end{equation}
\end{lemma}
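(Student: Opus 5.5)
The plan is to reduce the statement to Corollary~\ref{mcdiarmid_inequality_markov_chain}, applied to the history-state process $\{Z_t\}_{t\in[T]}$ rather than to the token process $\{X_t\}$.

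\emph{Step 1: the history-state process is a first-order Markov chain.} Since $\{X_t\}$ is Markov of order $\rho$, the vector $Z_t=(X_t,\ldots,X_{t-\rho+1})$ satisfies
\[
\mathbb{P}(Z_{t+1}\mid Z_t,\ldots,Z_1)=\mathbb{P}(Z_{t+1}\mid Z_t).
\]
Indeed, $Z_{t+1}$ is determined by the new token $X_{t+1}$, whose law depends only on $(X_t,\ldots,X_{t-\rho+1})=Z_t$, together with the shifted coordinates of $Z_t$. Hence $(Z_1,\ldots,Z_T)$ is an ordinary Markov chain on the finite, hence Polish, space $\mathcal{D}^{\rho}$. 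Its mixing time is exactly $\tilde{\tau}(\delta)$ as defined in the lemma. Boundary indices with $t\le\rho$ can be handled by padding with $\varnothing$, which fits the Polish product-space framework $\mathcal{X}_1\times\cdots\times\mathcal{X}_T$ with time-dependent state spaces.

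\emph{Step 2: express $f(Y)$ as a function of the $Z$-chain.} Each $Y_t$ is a deterministic function of the $\rho+1$ tokens $(X_t,X_{t-1},\ldots,X_{t-\rho})$, and these are contained in $(Z_t,Z_{t-1})$. To keep one coordinate per term, I would either take $Z_t$ with window length $\rho+1$, or note that $(X_{t-1},\ldots,X_{t-\rho})=Z_{t-1}$ and $X_t$ is the first coordinate of $Z_t$. So $Y_t=h_t(Z_{t-1},Z_t)$, and it is cleanest to regard $Y_t$ as a function $\phi_t(Z_t)$ of an augmented chain; the augmented chain is still first-order Markov with the same mixing behaviour. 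Define $F(z_1,\ldots,z_T):=f(\phi_1(z_1),\ldots,\phi_T(z_T))$. If $z_t=z'_t$ then $\phi_t(z_t)=\phi_t(z'_t)$, so $\mathbf{1}\{y_t\neq y'_t\}\le\mathbf{1}\{z_t\neq z'_t\}$. The bounded-difference hypothesis~\eqref{Mdiarmid_type_lemma_1} on $f$ therefore transfers to $F$ with the same coefficients:
\[
F(z)-F(z')\le\sum_{t}c_t\mathbf{1}\{z_t\neq z'_t\}.
\]

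\emph{Step 3: apply the corollary and optimize.} Applying Corollary~\ref{mcdiarmid_inequality_markov_chain}, or more precisely Lemma~\ref{Mdiarmid_type_lemma} with the trivial partition and the Marton coupling of the $Z$-chain, gives $\|\Gamma c\|^2\le\tilde{\tau}(\delta)\big(\tfrac{2-\delta}{1-\delta}\big)^2\|c\|_2^2$ for every admissible $\delta$. Taking the infimum over $\delta$ yields~\eqref{Concentration_under_condition_A}, since $F(Z)=f(Y)$.

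\emph{Main obstacle.} The hard step is Step 2: making sure that each $Y_t$ is measurable with respect to a single coordinate of a first-order Markov chain whose mixing time is genuinely $\tilde{\tau}$. The window $Z_t$ as literally defined has length $\rho$, while $Y_t$ uses $\rho+1$ tokens. I would first try to resolve this by checking that the mixing matrix bound $\Gamma_{i,j}\le\overline{d}(j-i)$ holds for the pair chain $(Z_{t-1},Z_t)$ with the same $\tilde{\tau}$; this should follow because $Z_{t-1}$ is determined by $Z_t$ up to one coordinate. Failing that, I would absorb the extra token into the definition of the history state.
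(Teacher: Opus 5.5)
There is a genuine gap, and it sits in Step~2: the claim that the augmented chain is first-order Markov ``with the same mixing behaviour''. It is Markov, but its mixing time is not $\tilde\tau(\delta)$ in general. Take either the pair chain $(Z_{t-1},Z_t)$ or the window $\bar Z_t=(X_t,\ldots,X_{t-\rho})$. Its state at time $i+m$ contains $X_{i+m-\rho}$, which is one step older than anything in $Z_{i+m}$. So its mixing time is $\tilde\tau(\delta)$ or $\tilde\tau(\delta)+1$, and the larger value does occur. For i.i.d.\ tokens with a nondegenerate law (a legitimate order-$\rho$ chain), $\tilde\tau(\delta)=\rho$ for every $\delta\in(0,1)$. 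But $\bar Z_{i+\rho}$ still contains $X_i$, so the augmented chain has mixing time $\rho+1$.

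Applying Corollary~\ref{mcdiarmid_inequality_markov_chain} to the augmented chain as a black box therefore gives $4(\rho+1)$ in place of $\tilde\tau_{\min}=4\rho$ in this example. That is strictly weaker than \eqref{Concentration_under_condition_A}. Your fallback of absorbing the extra token into the history state proves the lemma for a different mixing time, not the one in the statement. Working with $Z$ itself does not help either: each $Y_t$ then depends on the two coordinates $(Z_{t-1},Z_t)$, which inflates the coefficients to $c_t+c_{t+1}$.

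The aside in Step~3 about the trivial partition is also wrong. For the $Z$-chain with singleton blocks and i.i.d.\ tokens, $\Gamma_{i,i+m}=1$ for all $m<\rho$. So for constant $c$ you get $\|\Gamma c\|_2^2\approx\rho^2\|c\|_2^2$, which is quadratic rather than linear in $\tilde\tau$. The linear factor $\tilde\tau(\delta)$ comes only from blocks of length $L=\tilde\tau(\delta)$, via $\|C(c)\|_2^2\le L\|c\|_2^2$.

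The missing idea is to open up the corollary rather than cite it, which is what the paper does.
\begin{itemize}
\item Partition into consecutive blocks of length $L=\tilde\tau(\delta)$, the mixing time of the length-$\rho$ chain $Z$.
\item Build the Marton coupling by maximally coupling the $L$-step transitions of $Z$ at the boundaries $s+L,s+2L,\ldots$, where $s=iL$ ends the block at which the two histories differ. Run the two copies identically once they meet. This gives $\mathbb{P}(Z_{s+rL}\neq Z'_{s+rL})\le\delta^r$.
\item The one-token offset is then harmless. Block $i+k$ starts at $a=s+(k-1)L+1$, and all its losses are functions of $Z_{a-1}=Z_{s+(k-1)L}$ and later tokens (equivalently, $\bar Z_a=(X_a,Z_{a-1})$). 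Hence $\Gamma_{i,i+k}\le\delta^{k-1}$ for $k\ge2$.
\item Block $i+1$ needs only the trivial bound $\Gamma_{i,i+1}\le1$, which the factor $\frac{2-\delta}{1-\delta}=1+1+\sum_{r\ge1}\delta^r$ already pays for.
\item Then $\|\Gamma\|_2\le\sqrt{\|\Gamma\|_1\|\Gamma\|_\infty}\le\frac{2-\delta}{1-\delta}$. Combined with $\|C(c)\|_2^2\le L\|c\|_2^2$, Lemma~\ref{Mdiarmid_type_lemma} gives the claim after taking the infimum over $\delta$.
\end{itemize}
Your reformulation $f(Y)=F(\bar Z)$ is actually a convenient setting for this construction. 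The coupling then conditions on chain states, whereas the paper has to condition on loss values. It works provided the block length is taken from $Z$, not from the augmented chain's own mixing time.
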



\section{Proof of Theorem \ref{multi_round_markovian_general_bound}}\label{Proof_multi_round}

Starting from the definition of the generalization gap in~\eqref{generalization_croos_entropy_def}, we decompose the expected gap into its client- and round-specific contributions. For each $(k,r)$, let $S'_{k,r}$ be an independent copy of $S_{k,r}$, and fix $\lambda_{k,r}>0$. Expressing the population risk through this independent copy and applying the Donsker--Varadhan variational inequality gives
\begin{align}
    &\mathbb{E}_{S_{[K],[R]},\overline{W}^{(R)}}\left[\gen(S_{[K],[R]}, \overline{W}^{(R)})\right]\label{multi_round_generalization_bound_1}\\
    &\qquad=\frac{1}{KR}\sum_{r=1}^{R}\sum_{k=1}^{K}\mathbb{E}\left[\gen(S_{k,r}, \overline{W}^{(R)})\right]\label{multi_round_generalization_bound_2}\\
    &\qquad=\frac{1}{KR}\sum_{r=1}^{R}\sum_{k=1}^{K}\mathbb{E}\left[\mathcal{L}(\overline{W}^{(R)})-\hat{\mathcal{L}}(S_{k,r},\overline{W}^{(R)})\right]\label{multi_round_generalization_bound_3}\\
    &\qquad=\frac{1}{KR}\sum_{k=1}^{K}\sum_{r=1}^{R}\frac{1}{\lambda_{k,r}}\mathbb{E}\Bigg[-\frac{\lambda_{k,r}}{T}\sum_{t=1}^{T}\mathbb{E}_{S'_{k,r}}\left[\log p_{\overline{W}^{(R)}}\left(X'_{k,t}|X'_{k,t-1},\cdots,X'_{t-\rho}\right)\right]\label{multi_round_generalization_bound_4}\\
    &\qquad\qquad\qquad\qquad\qquad\qquad+\frac{\lambda_{k,r}}{N_R T}\sum_{n=1}^{N_R}\sum_{t=1}^{T}\mathbb{E}\left[\log p_{\overline{W}^{(R)}}\left(X^{(n)}_{k,r,t}|X^{(n)}_{k,r,t-1},\cdots,X^{(n)}_{k,r,t-\rho}\right)\right]\Bigg]\label{multi_round_generalization_bound_5}\\
    &\qquad=\frac{1}{KR}\sum_{k=1}^{K}\sum_{r=1}^{R}\frac{1}{\lambda_{k,r}}\mathbb{E}\Bigg[\mathbb{E}\Bigg[-\frac{\lambda_{k,r}}{N_R T}\sum_{t=1}^{T}\sum_{n=1}^{N_R}\mathbb{E}_{S'_{k,r}}\left[\log p_{\overline{W}^{(R)}}(X'^{(n)}_{k,r,t}|X'^{(n)}_{k,r,t-1},\cdots,X'^{(n)}_{k,r,t-\rho})\right]\label{multi_round_generalization_bound_6}\\
    &\qquad\qquad\qquad\qquad\qquad\qquad+\frac{\lambda_{k,r}}{N_R T}\sum_{t=1}^{T}\sum_{n=1}^{N_R}\log p_{\overline{W}^{(R)}}(X^{(n)}_{k,r,t}|X^{(n)}_{k,r,t-1},\cdots,X^{(n)}_{k,r,t-\rho})\Bigg]\Bigg]\label{multi_round_generalization_bound_8}\\
    &\qquad\leq\frac{1}{KR}\sum_{k=1}^{K}\sum_{r=1}^{R}\frac{1}{\lambda_{k,r}}\mathbb{E}\left[D\left(P_{S_{k,r}|\overline{W}^{(R)},W^{(r)}_k,\overline{W}^{(r-1)}}\|P_{S_{k,r}}\right)+\log\mathbb{E}_{S_{k,r}}\left[e^{\phi(S_{k,r},S'_{k,r},\overline{W}^{(R)})}\right]\right]\label{multi_round_generalization_bound_9}\\
    &\qquad=\frac{1}{KR}\sum_{k=1}^{K}\sum_{r=1}^{R}\frac{1}{\lambda_{k,r}}\left[I(S_{k,r};\overline{W}^{(R)},W^{(r)}_k,\overline{W}^{(r-1)})+\mathbb{E}_{\overline{W}^{(R)}}\left[\log\mathbb{E}_{S_{k,r}}\left[e^{\phi(S_{k,r},S'_{k,r},\overline{W}^{(R)})}\right]\right]\right]\label{multi_round_generalization_bound_10}\\
    &\qquad=\frac{1}{KR}\sum_{k=1}^{K}\sum_{r=1}^{R}\frac{1}{\lambda_{k,r}}\left[I(S_{k,r};W^{(r)}_k,\overline{W}^{(r-1)})+\mathbb{E}_{\overline{W}^{(R)}}\left[\log\mathbb{E}_{S_{k,r}}\left[e^{\phi(S_{k,r},S'_{k,r},\overline{W}^{(R)})}\right]\right]\right]\label{multi_round_generalization_bound_11}\\
    &\qquad=\frac{1}{KR}\sum_{k=1}^{K}\sum_{r=1}^{R}\frac{1}{\lambda_{k,r}}\left[I(S_{k,r};W^{(r)}_k|\overline{W}^{(r-1)})+\mathbb{E}_{\overline{W}^{(R)}}\left[\log\mathbb{E}_{S_{k,r}}\left[e^{\phi(S_{k,r},S'_{k,r},\overline{W}^{(R)})}\right]\right]\right]\label{multi_round_generalization_bound_12}
\end{align}
Here, the expectations in~\eqref{multi_round_generalization_bound_2}--\eqref{multi_round_generalization_bound_3} are taken over $(S_{k,r},\overline{W}^{(r-1)},W_k^{(r)},\overline{W}^{(R)})$. In~\eqref{multi_round_generalization_bound_4}--\eqref{multi_round_generalization_bound_8}, we condition on the learned models when evaluating the local loss, and then average over their joint distribution. To obtain~\eqref{multi_round_generalization_bound_9}, we apply the Donsker--Varadhan inequality with
\begin{align}
        &\phi\left(S_{k,r},S'_{k,r},\overline{W}^{(R)}\right)=\frac{\lambda_{k,r}}{N_R}
         \sum_{n=1}^{N_R}
        \left(
            \mathbb{E}_{S'_{k,r}}\!\left[
                \ell\left(S^{(n)}_{k,r},\overline{W}^{(R)}\right)
            \right]
            -\ell\left(S^{(n)}_{k,r},\overline{W}^{(R)}\right)
        \right),\label{Donsker_varadhan_general_theorem_1}
    \end{align}
and choose $P=P_{S_{k,r}\mid\overline{W}^{(R)},W_k^{(r)},\overline{W}^{(r-1)}}$ and $Q=P_{S_{k,r}}$ in
\begin{align}
        \mathbb{E}_{P}[\phi]
        \;\leq\;
        D(P\|Q) + \log \mathbb{E}_{Q}\!\left[e^{\phi}\right].\label{Donsker_varadhan_general_theorem_2}
    \end{align}
The Kullback--Leibler divergence in~\eqref{multi_round_generalization_bound_9} becomes the mutual-information term in~\eqref{multi_round_generalization_bound_10} after averaging over $(\overline{W}^{(R)},W_k^{(r)},\overline{W}^{(r-1)})$, since
\begin{align}
    I\!\left(S_{k,r}; \overline{W}^{(R)}, W^{(r)}_k, \overline{W}^{(r-1)}\right)
    = \mathbb{E}\left[
        D\!\left(
            P_{S_{k,r}\,|\,\overline{W}^{(R)}, W^{(r)}_k, \overline{W}^{(r-1)}}
            \,\big\|\, 
            P_{S_{k,r}}
        \right)
    \right],
\end{align}
We next use the conditional independence induced by the learning protocol. Because the data collected in different rounds and by different clients are independent, and subsequent model updates depend on $S_{k,r}$ only through the local update $W_k^{(r)}$ and the preceding global model $\overline{W}^{(r-1)}$, we have the Markov chain
\begin{align}
    S_{k,r} \;-\; (W^{(r)}_k, \overline{W}^{(r-1)}) \;-\; \overline{W}^{(R)},
\end{align}
Consequently, the chain rule for mutual information eliminates the dependence on the final aggregated model, yielding
\begin{align}
    I\!\left(S_{k,r}; \overline{W}^{(R)}, W^{(r)}_k, \overline{W}^{(r-1)}\right)
    &= I\!\left(S_{k,r}; W^{(r)}_k, \overline{W}^{(r-1)}\right)
    + \underbrace{I\!\left(S_{k,r}; \overline{W}^{(R)}\big|W^{(r)}_k, \overline{W}^{(r-1)}\right)}_{=\,0},
\end{align}
Moreover, $S_{k,r}$ is independent of $\overline{W}^{(r-1)}$, which is determined by earlier rounds. A second application of the chain rule therefore gives
\begin{align}
    I(S_{k,r}; W^{(r)}_k, \overline{W}^{(r-1)})
    = \underbrace{I(S_{k,r}; \overline{W}^{(r-1)})}_{=\,0}
      + I(S_{k,r}; W^{(r)}_k\big|\overline{W}^{(r-1)}).
\end{align}
Substituting these identities into~\eqref{multi_round_generalization_bound_1} leaves a conditional mutual-information term and a logarithmic moment-generating function:

\begin{align}
&\mathbb{E}_{S_{[K],[R]},\overline{W}^{(R)}}
\left[
\gen(S_{[K],[R]},\overline{W}^{(R)})
\right]
\nonumber\\
&\leq
\frac{1}{KR}
\sum_{k=1}^{K}\sum_{r=1}^{R}
\frac{1}{\lambda_{k,r}}
\Bigg[
I\left(S_{k,r};W_k^{(r)}
\mid\overline{W}^{(r-1)}\right)
\nonumber\\
&\qquad\qquad\qquad\qquad+
\mathbb{E}_{\overline{W}^{(R)}}
\log\mathbb{E}_{S_{k,r}}
\exp\left(
\phi(S_{k,r},S'_{k,r},\overline{W}^{(R)})
\right)
\Bigg]
\label{bounding_moment_7}\\
&=
\frac{1}{KR}
\sum_{k=1}^{K}\sum_{r=1}^{R}
\frac{1}{\lambda_{k,r}}
\Bigg[
I\left(S_{k,r};W_k^{(r)}
\mid\overline{W}^{(r-1)}\right)
\nonumber\\
&\qquad\qquad\qquad\qquad+
\mathbb{E}_{\overline{W}^{(R)}}
\log\mathbb{E}_{S_{k,r}}
\exp\left(
\sum_{n=1}^{N_R}
\frac{\lambda_{k,r}}{N_R}
\Delta_{k,r}^{(n)}
\right)
\Bigg]
\nonumber\\
&=
\frac{1}{KR}
\sum_{k=1}^{K}\sum_{r=1}^{R}
\frac{1}{\lambda_{k,r}}
\Bigg[
I\left(S_{k,r};W_k^{(r)}
\mid\overline{W}^{(r-1)}\right)
\nonumber\\
&\qquad\qquad\qquad\qquad+
\mathbb{E}_{\overline{W}^{(R)}}
\log
\prod_{n=1}^{N_R}
\mathbb{E}_{S_{k,r}^{(n)}}
\exp\left(
\frac{\lambda_{k,r}}{N_R}
\Delta_{k,r}^{(n)}
\right)
\Bigg]
\nonumber\\
&=
\frac{1}{KR}
\sum_{k=1}^{K}\sum_{r=1}^{R}
\frac{1}{\lambda_{k,r}}
\Bigg[
I\left(S_{k,r};W_k^{(r)}
\mid\overline{W}^{(r-1)}\right)
\nonumber\\
&\qquad\qquad\qquad\qquad+
\sum_{n=1}^{N_R}
\mathbb{E}_{\overline{W}^{(R)}}
\log\mathbb{E}_{S_{k,r}^{(n)}}
\exp\left(
\frac{\lambda_{k,r}}{N_R}
\Delta_{k,r}^{(n)}
\right)
\Bigg].
\label{bounding_moment_10}
\end{align}

The last step follows by factoring the exponential moment over the $N_R$ independent trajectories in $S_{k,r}$, with the aggregated model held fixed. It remains to control this moment-generating function under temporal dependence within each trajectory. For a fixed $\overline{w}$ and $n\in[N_R]$, consider the trajectory-level cross-entropy loss
\begin{align}
    \ell\left( s^{(n)}_{k,r},\overline{w} \right) = -\frac{1}{T} \sum_{t \in [T]} \log p_{\overline{w}}(x^{(n)}_{k,r,t}|x^{(n)}_{k,r,t-1},\cdots,x^{(n)}_{k,r,t-\rho})\qquad \forall n\in [\alpha],\label{cross_entropy_function_definition}
\end{align}
To verify the bounded-difference condition in~\eqref{Mdiarmid_type_lemma_1}, take two trajectories $s_{k,r}^{(n)}$ and $s_{k,r}'^{(n)}$. We compare their losses by replacing the predicted token and the tokens in its context one at a time. This gives the following telescoping decomposition:
\begin{align}
    &\ell\left( s'^{(n)}_{k,r},\overline{w} \right)-\ell\left( s^{(n)}_{k,r},\overline{w} \right)\label{croos_entropy_loss_func_7}\\
        &=\frac{1}{T}\sum_{t\in[T]}\left[\log p_{\overline{w}}\left(x^{(n)}_{k,r,t}\big|x_{k,r,t-1},\dots, x^{(n)}_{k,r,t-\rho}\right)-\log p_{\overline{w}}\left(x'^{(n)}_{k,r,t}\big|x'^{(n)}_{k,r,t-1}\dots, x'^{(n)}_{k,r,t-\rho}\right)\right]\label{croos_entropy_loss_func_5}\\
        &=\frac{1}{T}\sum_{t\in[T]}\left[\log p_{\overline{w}}\left(x^{(n)}_{k,r,t}|x_{k,r,t-1}\cdots,x^{(n)}_{k,r,t-\rho}\right)-\log p_{\overline{w}}\left(x^{(n)}_{k,r,t}|x^{(n)}_{k,r,t-1},\cdots,x'^{(n)}_{k,r,t-\rho}\right)\right]\\
        &\qquad+\frac{1}{T}\sum_{t\in[T]}\left[\log p_{\overline{w}}\left(x^{(n)}_{k,r,t}|x^{(n)}_{k,r,t-1},\cdots,x'^{(n)}_{k,r,t-\rho}\right)-\log p_{\overline{w}}\left(x^{(n)}_{k,r,t}|x^{(n)}_{k,r,t-1},\cdots,x'^{(n)}_{k,r,t-\rho+1},x'^{(n)}_{k,t-\rho}\right)\right]\\
        &\qquad\qquad\qquad\qquad\qquad\qquad\vdots\\
        &\qquad+\frac{1}{T}\sum_{t\in[T]}\left[\log p_{\overline{w}}\left(x^{(n)}_{k,r,t}|x'^{(n)}_{k,r,t-1}\dots,x'^{(n)}_{k,r,t-\rho}\right)-\log p_{\overline{w}}\left(x'^{(n)}_{k,r,t}| x'^{(n)}_{k,r,t-1}\dots, x'^{(n)}_{k,r,t-\rho}\right)\right]\\
        &=\frac{1}{T}\sum_{t\in[T]}\Bigg[\log p_{\overline{w}}\left(x^{(n)}_{k,r,t}|x'^{(n)}_{k,r,t-1}\dots,x'^{(n)}_{k,r,t-\rho}\right)-\log p_{\overline{w}}\left(x'^{(n)}_{k,r,t}| x'^{(n)}_{k,r,t-1}\dots, x'^{(n)}_{k,r,t-\rho}\right)\\
        &\qquad\qquad\quad+\sum_{i\in[2:T-1]}\Bigg[\log p_{\overline{w}}\left(x^{(n)}_{k,r,t}|x^{(n)}_{k,r,t-1},\cdots,x^{(n)}_{k,r,t-i+1},x'^{(n)}_{k,r,t-i},\cdots x'^{(n)}_{k,r,t-\rho}\right)\label{Difference_loss_telescope_sum_3}\\
        &\qquad\qquad\qquad\qquad\qquad\quad-\log p_{\overline{w}}\left(x^{(n)}_{k,r,t}|x^{(n)}_{k,r,t-1},\cdots,x'^{(n)}_{k,r,t-i+1},x'^{(n)}_{k,r,t-i},\cdots x'^{(n)}_{k,r,t-\rho}\right)\Bigg]\Bigg]\label{Difference_loss_telescope_sum_1}.
        \end{align}
The key observation is that the next-token distribution depends only on the preceding $\rho$ tokens. Therefore, every term involving a replacement outside this context vanishes, and the telescoping expansion reduces to
\begin{align}
&\sum_{t\in[T]}\Bigg[\log p_{\overline{w}}\left(x^{(n)}_{k,r,t}|x'^{(n)}_{k,r,t-1}\dots,x'^{(n)}_{k,r,t-\rho}\right)-\log p_{\overline{w}}\left(x'^{(n)}_{k,r,t}| x'^{(n)}_{k,r,t-1}\dots, x'^{(n)}_{k,r,t-\rho}\right)\nonumber\\
        &\qquad\quad+\sum_{i\in[2:T-1]}\Bigg[\log p_{\overline{w}}\left(x^{(n)}_{k,r,t}|x^{(n)}_{k,r,t-1},\cdots,x^{(n)}_{k,r,t-i+1},x'^{(n)}_{k,r,t-i},\cdots x'^{(n)}_{k,r,t-\rho}\right)\label{Difference_loss_telescope_sum_2}\\
        &\qquad\qquad\qquad\qquad-\log p_{\overline{w}}\left(x^{(n)}_{k,r,t}|x^{(n)}_{k,r,t-1},\cdots,x'^{(n)}_{k,r,t-i+1},x'^{(n)}_{k,r,t-i},\cdots x'^{(n)}_{k,r,t-\rho}\right)\Bigg]\Bigg]\label{telescope_serie_conv_1}\\
    &=\sum_{t=1}^{T}\left[\log p_{\overline{w}}\left(x^{(n)}_{k,r,t}|x'^{(n)}_{k,r,t-1}\dots,x'^{(n)}_{k,r,t-\rho}\right)-\log p_{\overline{w}}\left(x'^{(n)}_{k,r,t}| x'^{(n)}_{k,r,t-1}\dots, x'^{(n)}_{k,r,t-\rho}\right)\right]\nonumber\\
    &
    \qquad+\sum_{t=1}^{T}\sum_{i=2}^{T-1}\Bigg(\log p_{\overline{w}}\left(x^{(n)}_{k,r,t}|x^{(n)}_{k,r,t-1},\cdots,x^{(n)}_{k,r,t-i+1},x'^{(n)}_{k,r,t-i},\cdots x'^{(n)}_{k,r,t-\rho}\right)\label{telescope_serie_conv_3}\\
    &\qquad\qquad\qquad\qquad-\log p_{\overline{w}}\left(x^{(n)}_{k,r,t}|x^{(n)}_{k,r,t-1},\cdots,x'^{(n)}_{k,r,t-i+1},x'^{(n)}_{k,r,t-i},\cdots x'^{(n)}_{k,r,t-\rho}\right)\Bigg)\mathbf{1}_{\{1\leq i\leq\rho\}}\label{telescope_serie_conv_4}\\
    &=\sum_{t=1}^{T}\left[\log p_{\overline{w}}\left(x^{(n)}_{k,r,t}|x'^{(n)}_{k,r,t-1}\dots,x'^{(n)}_{k,r,t-\rho}\right)-\log p_{\overline{w}}\left(x'^{(n)}_{k,r,t}| x'^{(n)}_{k,r,t-1}\dots, x'^{(n)}_{k,r,t-\rho}\right)\right]\mathbf{1}_{\{x^{(n)}_{k,r,t}\neq x'^{(n)}_{k,r,t}\}}\nonumber\\
    &\qquad+\sum_{t=1}^{T}\sum_{i=2}^{\rho}\Bigg(\log p_{\overline{w}}\left(x^{(n)}_{k,r,t}|x^{(n)}_{k,r,t-1},\cdots,x^{(n)}_{k,r,t-i+1},x'^{(n)}_{k,r,t-i},\cdots x'^{(n)}_{k,r,t-\rho}\right)\nonumber\\
    &\qquad\qquad\qquad\qquad-\log p_{\overline{w}}\left(x^{(n)}_{k,r,t}|x^{(n)}_{k,r,t-1},\cdots,x'^{(n)}_{k,r,t-i+1},x'^{(n)}_{k,r,t-i},\cdots x'^{(n)}_{k,r,t-\rho}\right)\Bigg)\mathbf{1}_{\{x^{(n)}_{k,r,i}\neq x'^{(n)}_{k,r,i}\}},\label{telescope_serie_conv_5}
    \end{align}
In particular, the restriction $1\leq i\leq\rho$ in~\eqref{telescope_serie_conv_4}--\eqref{telescope_serie_conv_5} follows directly from the finite context length. We now bound separately the change in the predicted token and the change in an individual context token. Taking the maximum over the two trajectories gives
\begin{align}
        &\log p_{\overline{w}}\left(x^{(n)}_{k,r,t}|x'^{(n)}_{k,r,t-1}\dots,x'^{(n)}_{k,r,t-\rho}\right)-\log p_{\overline{w}}\left(x'^{(n)}_{k,r,t}| x'^{(n)}_{k,r,t-1}\dots, x'^{(n)}_{k,r,t-\rho}\right)\nonumber\\
        &\quad\leq\max\Bigg|\log p_{\overline{w}}\left(x^{(n)}_{k,r,t}|x'^{(n)}_{k,r,t-1}\dots,x'^{(n)}_{k,r,t-\rho}\right)-\log p_{\overline{w}}\left(x'^{(n)}_{k,r,t}| x'^{(n)}_{k,r,t-1}\dots, x'^{(n)}_{k,r,t-\rho}\right)\Bigg|\label{Defnition_max_difference_telescope_2}\\
        &\quad=\left|\log\frac{p_{\overline{w}}\left(x^{(n)}_{k,r,t}|x'^{(n)}_{k,r,t-1}\dots,x'^{(n)}_{k,r,t-\rho}\right)}{p_{\overline{w}}\left(x'^{(n)}_{k,r,t}| x'^{(n)}_{k,r,t-1}\dots, x'^{(n)}_{k,r,t-\rho}\right)}\right|\nonumber\\
        &\quad\coloneqq C_{k,r}\label{Defnition_max_difference_telescope_1}
    \end{align}
For a change in the context, the corresponding maximal difference is bounded in the same way. For each $t\in[T]$ and $i\in[2:\rho]$, we obtain
\begin{align}
        &\log p_{\overline{w}}\left(x^{(n)}_{k,r,t}|x^{(n)}_{k,r,t-1},\cdots,x^{(n)}_{k,r,t-i+1},x'^{(n)}_{k,r,t-i},\cdots x'^{(n)}_{k,r,t-\rho}\right)\label{telescope_serie_conv_10}\\
    &\qquad\qquad-\log p_{\overline{w}}\left(x^{(n)}_{k,r,t}|x^{(n)}_{k,r,t-1},\cdots,x'^{(n)}_{k,r,t-i+1},x'^{(n)}_{k,r,t-i},\cdots x'^{(n)}_{k,r,t-\rho}\right)\nonumber\\
    &\leq \max\Bigg|\log p_{\overline{w}}\left(x^{(n)}_{k,r,t}|x^{(n)}_{k,r,t-1},\cdots,x^{(n)}_{k,r,t-i+1},x'^{(n)}_{k,r,t-i},\cdots x'^{(n)}_{k,r,t-\rho}\right)\label{telescope_serie_conv_11}\\
    &\qquad\qquad-\log p_{\overline{w}}\left(x^{(n)}_{k,r,t}|x^{(n)}_{k,r,t-1},\cdots,x'^{(n)}_{k,r,t-i+1},x'^{(n)}_{k,r,t-i},\cdots x'^{(n)}_{k,r,t-\rho}\right)\Bigg|\label{telescope_serie_conv_12}\\
    &\coloneqq \tilde{C}_{k,r}\nonumber
    \end{align}
Here the maximum is over the same pairs of trajectories, uniformly in the admissible values of $t$ and $i$. Combining these two estimates in~\eqref{telescope_serie_conv_5} yields
\begin{align}
    &\eqref{telescope_serie_conv_5}\leq\left[\max\Bigg|\log p_{\overline{w}}\left(x^{(n)}_{k,r,t}|x'^{(n)}_{k,r,t-1}\dots,x'^{(n)}_{k,r,t-\rho}\right)-\log p_{\overline{w}}\left(x'^{(n)}_{k,r,t}| x'^{(n)}_{k,r,t-1}\dots, x'^{(n)}_{k,r,t-\rho}\right)\Bigg|\right]\sum_{t=1}^{T}\mathbf{1}_{\{x^{(n)}_{k,r,t}\neq x'^{(n)}_{k,r,t}\}}\label{telescope_serie_conv_16}\\
    &\qquad\qquad+\Bigg[\max\Bigg|\log p_{\overline{w}}\left(x^{(n)}_{k,r,t}|x^{(n)}_{k,r,t-1},\cdots,x^{(n)}_{k,r,t-i+1},x'^{(n)}_{k,r,t-i},\cdots x'^{(n)}_{k,r,t-\rho}\right)\label{telescope_serie_conv_6}\\
    &\qquad\qquad\qquad-\log p_{\overline{w}}\left(x^{(n)}_{k,r,t}|x^{(n)}_{k,r,t-1},\cdots,x'^{(n)}_{k,r,t-i+1},x'^{(n)}_{k,r,t-i},\cdots x'^{(n)}_{k,r,t-\rho}\right)\Bigg|\Bigg]\sum_{t=1}^{T}\sum_{i=1}^{\rho}\mathbf{1}_{\{x^{(n)}_{k,r,t-i}\neq x'^{(n)}_{k,r,t-i}\}}\label{telescope_serie_conv_7}\\
&\qquad\quad\leq(C_{k,r}+\rho\tilde{C}_{k,r})\times\sum_{t=1}^{T}\mathbf{1}_{\{x^{(n)}_{k,r,t}\neq x'^{(n)}_{k,r,t}\}},\label{telescope_serie_conv_9}
\end{align}
The last inequality follows from the fact that a discrepancy at a given token can appear in at most $\rho$ subsequent prediction contexts. More precisely,
\begin{align}
    \sum_{t=1}^{T}\sum_{i=1}^{\rho}\mathbf{1}\{x^{(n)}_{k,r,t-i}\neq x'^{(n)}_{k,r,t-i}\}\leq\rho\sum_{t=1}^{T}\mathbf{1}\{x^{(n)}_{k,r,t}\neq x'^{(n)}_{k,r,t}\},
\end{align}
Thus, changing one observation affects its own prediction term and at most $\rho$ context terms. Returning to the difference of trajectory losses, we obtain
\begin{align}
    \ell(s^{(n)}_{k,r},\overline{w})-\ell(s'^{(n)}_{k,r},\overline{w})&\leq\frac{\left(C_{k,r}+\rho \tilde{C}_{k,r}\right)}{T}\times\sum_{t=1}^{T}\mathbf{1}\{x^{(n)}_{k,r,t}\neq x'^{(n)}_{k,r,t}\},\label{difference_worse_bound_2}
\end{align}
This is the bounded-difference property required by~\eqref{Mdiarmid_type_lemma_1}, with a coefficient $c_{k,r,t}$ bounded by
\begin{align}
    &\frac{\left(C_{k,r}+\rho \tilde{C}_{k,r}\right)}{T},\label{difference_moment_max}
\end{align}
for every client $k\in[K]$, communication round $r\in[R]$, and time index $t\in[T]$. We next control the constants appearing in this coefficient. The following elementary bound on the cross-entropy loss will be useful.

\begin{lemma}\label{max_probability_conditional_markov}
For any $w\in\mathcal{W}$, we have
\begin{align}
        \log\frac{1}{p_{{{w}}}(x_{k,t}|x_{k,t-1},\cdots,x_{k,t-\tau})}\leq 2B+\log d.
    \end{align}
\end{lemma}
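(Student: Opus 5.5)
The plan is a direct estimate on the softmax. By~\eqref{conditional_distribution_def}, for a context $\vec{x}=(x_{k,t-1},\ldots,x_{k,t-\tau})$ and logit vector $\nu\coloneqq g_w(\mathcal{E}^{\odot\rho}(\vec{x}))\in\mathbb{R}^d$, the predicted probability of a non-null token $x_{k,t}=t_i$ is
\[
p_w(x_{k,t}\mid\vec{x})=\sigma(\nu)_i=\frac{e^{\nu_i}}{\sum_{j\in[d]}e^{\nu_j}} .
\]
So $\log(1/p_w)$ equals $\log\sum_{j\in[d]}e^{\nu_j}-\nu_i$, the log-sum-exp minus the selected coordinate. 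Assumption~\ref{subsubsec:bounded_output} gives $\|\nu\|_\infty\le B$ uniformly in $w\in\mathcal{W}$ and in the context, and the proof reduces to bounding these two terms.

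First I will bound the log-sum-exp term. Since each $\nu_j\le B$, we have $\sum_{j\in[d]}e^{\nu_j}\le d\,e^{B}$, hence $\log\sum_j e^{\nu_j}\le B+\log d$. Second, since $\nu_i\ge -B$, we have $-\nu_i\le B$. Adding the two gives
\[
\log\frac{1}{p_w(x_{k,t}\mid\vec{x})}\le 2B+\log d,
\]
which is the claim. Equivalently, $p_w\ge e^{-B}/(d\,e^{B})=e^{-2B}/d$. The bound uses only Assumption~\ref{subsubsec:bounded_output}, and since it holds pointwise it is uniform in $k,t$ and in the context length used.

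The only delicate point is the null token $\varnothing$. For this token $\mathcal{E}(\varnothing)=0$, so the inner product in~\eqref{conditional_distribution_def} vanishes and $\log(1/p_w)$ would be infinite. I will handle this by noting that in the loss~\eqref{loss_function_one_sample} the predicted tokens $x_{k,t}$ range over actual dictionary symbols $t_1,\ldots,t_d$. The symbol $\varnothing$ serves only as padding inside contexts for $t\le\rho$, where it enters through $\mathcal{E}^{\odot\rho}$ and is already covered by the bounded-output assumption. With this convention made explicit, the lemma follows. The consequence needed downstream is that the constants $C_{k,r}$ and $\tilde{C}_{k,r}$ are each differences of two such log-probabilities. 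Bounding them via the log-ratio $\nu_i-\nu_{i'}$ and the change in log-sum-exp gives $C_{k,r}\le 2B+\log d$ (or even $2B$) and $\tilde C_{k,r}\le 4B$, matching the coefficients used in Theorem~\ref{multi_round_markovian_general_bound}.
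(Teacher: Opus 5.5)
Your proof is correct and matches the paper's argument: the paper also bounds the softmax normalizer above by $d e^{B}$ and the target coordinate's exponential below by $e^{-B}$ using Assumption~\ref{subsubsec:bounded_output}, which gives $2B+\log d$. Your restriction of the predicted token to $t_1,\ldots,t_d$ is worth keeping, since for $\varnothing$ we have $\mathcal{E}(\varnothing)=0$ and hence $p_w=0$; the paper's proof relies on this restriction implicitly but never states it.
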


The proof is given in Appendix~\ref{Proof_max_probability_conditional_markov}. Applying this lemma to the change in the predicted token gives
\begin{align}
    \eqref{Defnition_max_difference_telescope_1}&=\max\left|\log\frac{p_{\overline{w}}\left(x^{(n)}_{k,r,t}|x'^{(n)}_{k,r,t-1}\dots,x'^{(n)}_{k,r,t-\rho}\right)}{p_{\overline{w}}\left(x'^{(n)}_{k,r,t}| x'^{(n)}_{k,r,t-1}\dots, x'^{(n)}_{k,r,t-\rho}\right)}\right|\\
    &\leq\max\left[\log\frac{1}{p_{\overline{w}}\left(x'^{(n)}_{k,r,t}| x'^{(n)}_{k,r,t-1}\dots, x'^{(n)}_{k,r,t-\rho}\right)}\right]\\
    &\leq 2B+\log d,\label{upperbound_negative_loss}
\end{align}
where the maximum is taken over the same admissible token and context pairs as in~\eqref{Defnition_max_difference_telescope_2}. We now turn to the change caused by replacing a token in the context. Writing the ratio of the corresponding conditional probabilities, we obtain
\begin{align}
        \eqref{telescope_serie_conv_12}&=\max\Bigg|\log p_{\overline{w}}\left(x^{(n)}_{k,r,t}|x^{(n)}_{k,r,t-1},\cdots,x^{(n)}_{k,r,t-i+1},x'^{(n)}_{k,r,t-i},\cdots x'^{(n)}_{k,r,t-\rho}\right)\nonumber\\
    &\qquad\qquad\qquad-\log p_{\overline{w}}\left(x^{(n)}_{k,t}|x^{(n)}_{k,t-1},\cdots,x'^{(n)}_{k,r,t-i+1},x'^{(n)}_{k,r,t-i},\cdots x'^{(n)}_{k,r,t-\rho}\right)\Bigg|\\
        &=\max\Bigg|\log\left[\frac{p_{\overline{w}}\left(x^{(n)}_{k,r,t}|x^{(n)}_{k,r,t-1},\cdots,x^{(n)}_{k,r,t-i+1},x'^{(n)}_{k,r,t-i},\cdots x'^{(n)}_{k,r,t-\rho}\right)}{p_{\overline{w}}\left(x^{(n)}_{k,t}|x^{(n)}_{k,r,t-1},\cdots,x'^{(n)}_{k,r,t-i+1},x'^{(n)}_{k,r,t-i},\cdots x'^{(n)}_{k,r,t-\rho}\right)}\right]\Bigg|\label{croos_entropy_loss_func_4}\\
&=\max\Bigg|\log\left[\frac{\big<\sigma\left(g_{\overline{w}}\left(\mathcal{E}(x^{(n)}_{k,r,t-1}),\cdots,\mathcal{E}(x'^{(n)}_{k,r,t-i}),\cdots,\mathcal{E}(x'^{(n)}_{k,r,t-\rho})\right)\right),\mathcal{E}(x^{(n)}_{k,r,t})\big>}{\big<\sigma\left(g_{\overline{w}}\left(\mathcal{E}(x^{(n)}_{k,r,t-1}),\cdots,\mathcal{E}(x'^{(n)}_{k,r,t-i+1}),\cdots,\mathcal{E}(x'^{(n)}_{k,r,t-\rho})\right)\right),\mathcal{E}(x^{(n)}_{k,r,t})\big>}\right]\Bigg|\label{croos_entropy_loss_func_1}
    \end{align}
In~\eqref{croos_entropy_loss_func_1}, we have substituted the predictive distribution from~\eqref{conditional_distribution_def}; its two terms correspond to the softmax outputs evaluated at the two contexts, paired with the same one-hot encoded target token,
\begin{align}
\sigma\left(g_{w}\left(\mathcal{E}(x^{(n)}_{k,r,t-1}), \cdots, \mathcal{E}(x^{(n)}_{k,r,t-\rho})\right)\right),
\end{align}
The two contexts differ only in the token being replaced. Using the softmax representation in~\eqref{soft_max_def}, we can express the resulting log-probability difference in terms of the corresponding logits. For convenience, define
\begin{align}
&g_{w,j} = g_w\left(\mathcal{E}(x^{(n)}_{k,r,t-1}), \cdots,\mathcal{E}(x'^{(n)}_{k,r,t-i}),\cdots,\mathcal{E}(x'^{(n)}_{k,r,t-\rho})\right)_j,\label{element_wise_g_funct_1} \\
&g'_{w,j} = g_w\left(\mathcal{E}(x^{(n)}_{k,r,t-1}), \cdots,\mathcal{E}(x'^{(n)}_{k,r,t-i+1}),\cdots,\mathcal{E}(x'^{(n)}_{k,r,t-\rho})\right)_j,\label{element_wise_g_funct_2}
\end{align}
and the associated softmax coordinates
\begin{align}
&g^{(j)}_{w} = \frac{\exp({g_{w,j}})}{\sum_{j \in [d]} \exp({g_{w,j}})}, \qquad g'^{(j)}_{w} = \frac{\exp({g'_{w,j}})}{\sum_{j \in [d]} \exp({g'_{w,j}})},\label{change_measure_g_func_1}
\end{align}
Here $g_{w,j}$ and $g'_{w,j}$ denote the $j$-th logits computed from the two contexts, while $\mathcal{E}_j(x^{(n)}_{k,r,t})$ is the $j$-th coordinate of the one-hot encoded target token. With this notation, the ratio in~\eqref{croos_entropy_loss_func_1} can be decomposed into a term involving the target-token logits and a term involving the softmax normalizers:
\begin{align}
        \eqref{croos_entropy_loss_func_1} &= \Bigg| \left[\log\left[\frac{\sum_{j\in[d]}g^{(j)}_{\overline{w}}\cdot\mathcal{E}_j(x^{(n)}_{k,r,t})}{\sum_{j\in[d]}g'^{(j)}_{\overline{w}}\cdot\mathcal{E}_j(x^{(n)}_{k,r,t})}\right]\right]\nonumber\\         &=\Bigg|\log\left[\frac{\sum_{j\in[d]}\exp({g_{\overline{w},j}})\cdot\mathcal{E}_j(x^{(n)}_{k,r,t})}{\sum_{j\in[d]}\exp({g'_{\overline{w},j}})\cdot\mathcal{E}_j(x^{(n)}_{k,r,t})}\right] 
         +\log\left[\frac{\sum_{j\in[d]}\exp({g'_{\overline{w},j}})}{\sum_{j\in[d]}\exp({g_{\overline{w},j}})}\right]\Bigg|\nonumber \\
         &\leq\Bigg|\log\left[\frac{\sum_{j\in[d]}\exp({g_{\overline{w},j}})\cdot\mathcal{E}_j(x^{(n)}_{k,r,t})}{\sum_{j\in[d]}\exp({g'_{\overline{w},j}})\cdot\mathcal{E}_j(x^{(n)}_{k,r,t})}\right]\Bigg|
          +\Bigg|\left[\log\left(\frac{\sum_{j\in[d]}\exp({g'_{\overline{w},j}})}{\sum_{j\in[d]}\exp({g_{\overline{w},j}}})\right)\right]\Bigg|\label{simplification_difference_cross_entropy_11}\\
        &\leq\max\Bigg\{\max_{j\in[d]}\log\left[\frac{\exp({g_{\overline{w},j}})\cdot\mathcal{E}_j(x^{(n)}_{k,r,t})}{\exp({g'_{\overline{w},j}})\cdot\mathcal{E}_j(x^{(n)}_{k,r,t})}\right],\max_{j\in[d]}\log\left[\frac{\exp({g'_{\overline{w},j}})\cdot\mathcal{E}_j(x^{(n)}_{k,r,t})}{\exp({g_{\overline{w},j}})\cdot\mathcal{E}_j(x^{(n)}_{k,r,t})}\right]\Bigg\}\label{simplification_difference_croos_entropy_12}\\
        &\quad +\max\Bigg\{\max_{j\in[d]}\log\left[\frac{\exp({g_{\overline{w},j}})}{\exp({g'_{\overline{w},j}})}\right],\max_{j\in[d]}\log\left[\frac{\exp({g'_{\overline{w},j}})}{\exp({g'_{\overline{w},j}})}\right]\Bigg\}\label{simplification_difference_croos_entropy_15}\\
        &=\max_{j\in[d]}\Big|g_{\overline{w},j}-g'_{\overline{w},j}\Big|+\max_{j\in[d]}\left|g_{\overline{w},j}-g'_{\overline{w},j}\right|\nonumber\\
        &\leq 4\max_{j\in[d]}\left|g_{\overline{w},j}\right|\label{simplification_difference_croos_entropy_17}\\
        &=4\|g_{\overline{w}}\|_{\infty}\nonumber\\
        &=4B\label{simplification_difference_croos_entropy_19}
    \end{align}
The first inequality follows from the triangle inequality. To bound the two logarithmic ratios, we use the fact that the ratio of two sums of positive terms is no larger than the largest ratio of corresponding terms. For the softmax normalizers, this gives
\begin{align}
       |x+y|\leq |x|+|y|,\nonumber
   \end{align}
Indeed, for two collections of positive numbers $(a_1,\ldots,a_d)$ and $(b_1,\ldots,b_d)$, we have
\begin{align}
        \Bigg|\log\left(\frac{\sum_{j\in[d]} \exp({g_{\overline{w},j}})}{\sum_{j\in[d]} \exp({g'_{\overline{w},j}})}\right)\Bigg|&=\max\Bigg\{\log\left(\frac{\sum_{j\in[d]} \exp({g_{\overline{w},j}})}{\sum_{j\in[d]} \exp({g'_{\overline{w},j}})}\right),\log\left(\frac{\sum_{j\in[d]} \exp({g'_{\overline{w},i}})}{\sum_{j\in[d]} \exp({g_{\overline{w},j}})}\right)\Bigg\}\nonumber\\
        &\leq \max\Bigg\{\log\left(\max_{j\in[d]} \frac{\exp({g_{\overline{w},j}})}{\exp({g'_{\overline{w},j}})}\right),\log\left(\max_{j\in[d]} \frac{\exp({g'_{\overline{w},j}})}{\exp({g_{\overline{w},j}})}\right)\Bigg\}\label{max_bound_change_g_func_1}\\
        &= \max\Bigg\{\max_{j\in[d]}\log\left( \frac{\exp({g_{\overline{w},j}})}{\exp({g'_{\overline{w},j}})}\right),\max_{j\in[d]}\log\left( \frac{\exp({g'_{\overline{w},j}})}{\exp({g_{\overline{w},j}})}\right)\Bigg\},\label{max_bound_change_g_func_2}\\
        &=\max_{j\in [d]}\Big|g_{\overline{w},j}-g'_{\overline{w},j}\Big|,
    \end{align}
This comparison uses the elementary inequality
\begin{align}
        \frac{a_1+\cdots+a_d}{b_1+\cdots+b_d}\leq\max_{j\in[d]}\frac{a_j}{b_j}
    \end{align}
Together with the bounded-output assumption~\ref{subsubsec:bounded_output}, the preceding estimates bound the context-dependent contribution. Substituting~\eqref{upperbound_negative_loss} and~\eqref{simplification_difference_croos_entropy_19} into~\eqref{difference_worse_bound_2} gives, for every $k\in[K]$, $r\in[R]$, and $t\in[T]$,
\begin{align}
c_{k,r,t}\leq\frac{\left(2B(1+2\rho)+\log d\right)}{T}.\label{bounding_constant_mc_diarmid}
\end{align}

We can therefore apply the Markov-chain McDiarmid inequality in Corollary~\ref{mcdiarmid_inequality_markov_chain} to the centered trajectory loss. Its exponential moment is controlled by the client-specific mixing time as follows:
\begin{align*}
\sum_{n=1}^{N}\mathbb{E}_{\overline{W},W_k}\left[\log\mathbb{E}_{S_{k}}\left[e^{\frac{\lambda_{k,r}}{N}\left(\mathbb{E}_{S_k}\left[\ell(S^{(n)}_{k,r},\overline{W})\right]-\left[\ell(S^{(n)}_{k,r},\overline{W})\right]\right)}\right]\right]\leq\frac{{{\lambda^2_{k,r}}}\|c_k\|^2\tau_{k,\min}}{8N},
\end{align*}
Substituting this estimate into~\eqref{multi_round_generalization_bound_12} and optimizing over $\lambda_{k,r}>0$, we obtain
\begin{align}
    &\mathbb{E}_{S_{[K],[R]},\overline{W}^{(R)}}\left[\gen(S_{[K],[R]}, \overline{W}^{(R)})\right]\\
    &\qquad\leq\frac{1}{KR}\sum_{k=1}^{K}\sum_{r=1}^{R}\frac{1}{\lambda_{k,r}}\left[I(S_{k,r};W^{(r)}_k|\overline{W}^{(r-1)})+\mathbb{E}_{\overline{W}^{(R)}}\left[\log\mathbb{E}_{S_{k}}\left[e^{\phi(S_{k,r},S'_{k,r},\overline{W}^{(R)})}\right]\right]\right]\\
    &\qquad=\frac{1}{KR}\sum_{k=1}^{K}\sum_{r=1}^{R}\frac{1}{\lambda_{k,r}}\left[I(S_{k,r};W^{(r)}_k|\overline{W}^{(r-1)})+\mathbb{E}_{\overline{W}^{(R)}}\left[\log\mathbb{E}_{S_{k,r}}\left[e^{\phi(S_{k,r},S'_{k,r},\overline{W}^{(R)})}\right]\right]\right]\\
    &\qquad\leq\frac{1}{KR}\sum_{k=1}^{K}\sum_{r=1}^{R}\frac{1}{\lambda_{k,r}}\left[I(S_{k,r};W^{(r)}_k|\overline{W}^{(r-1)})+\frac{{\lambda_{k,r}}^{2}\|c_{k,r}\|^{2}\tau_{k,\min}}{8 N_R}\right]\\
    &\qquad=\frac{(2B(1+2\rho)+\log d)}{KR}\sum_{k=1}^{K}\sum_{r=1}^{R}\sqrt{\frac{\tau_{k,\min}I(S_{k,r};W^{(r)}_k|\overline{W}^{(r-1)})}{2N_R T}},\label{substtuting_lambda_k_1}
\end{align}
The optimization is attained by choosing
\begin{align}
    \lambda_{k,r}=\sqrt{\frac{8N_R I(S_{k,r};W^{(r)}_k|\overline{W}^{(r-1)})}{\|c_{k,r}\|^2\tau_{k,\min}}},
\end{align}
Finally, using $N_R=N/R$ gives the bound in Theorem~\ref{multi_round_markovian_general_bound}, completing the proof.

\qed




\section{Proof of Theorem \ref{lossy_inexpectation_bound_federated}}\label{proof_lossy_case_federated}

We follow the argument used for the lossless bound, replacing the final aggregated model by its lossy counterpart. Fix a collection of approximation kernels satisfying the distortion constraints in~\eqref{def:Rdist}, and let $U$ be the auxiliary random variable, independent of the data and of the learning process. For each client $k$ and round $r$, let $S'_{k,r}$ be an independent copy of $S_{k,r}$. Throughout the proof, $\lambda_{k,r}>0$ is arbitrary.

By the definition of the generalization gap and the distortion constraint, we obtain
\begin{align}
\mathbb{E}\big[\gen(S_{[K]},\overline{W}^{(R)})\big]&=\frac{1}{KR}\sum_{k=1}^{K}\sum_{r=1}^{R}
\mathbb{E}\big[\gen(S_{k,r},\overline{W}^{(R)})\big]\nonumber\\
&\leq\frac{1}{KR}\sum_{k=1}^{K}\sum_{r=1}^{R}
\mathbb{E}\big[\gen(S_{k,r},\hat{\overline{W}}_k^{(R,r)})\big]+\epsilon\nonumber\\
&=\frac{1}{KR}\sum_{k=1}^{K}\sum_{r=1}^{R}
\mathbb{E}\Big[\mathcal{L}(\hat{\overline{W}}_k^{(R,r)})
-\hat{\mathcal{L}}(S_{k,r},\hat{\overline{W}}_k^{(R,r)})\Big]+\epsilon\nonumber\\
&=\frac{1}{KR}\sum_{k=1}^{K}\sum_{r=1}^{R}\frac{1}{N_R}
\sum_{n=1}^{N_R}\mathbb{E}\Big[
\mathbb{E}_{S_{k,r}'^{(n)}\sim\mu_k}
\ell(S_{k,r}'^{(n)},\hat{\overline{W}}_k^{(R,r)})
-\ell(S_{k,r}^{(n)},\hat{\overline{W}}_k^{(R,r)})\Big]+\epsilon.
\label{lossy_multi_round_generalization_1}
\end{align}

For notational convenience, introduce the centered trajectory loss
\begin{align}
\phi(S_{k,r}^{(n)},\hat{\overline{W}}_k^{(R,r)})
\coloneqq\lambda_{k,r}\Bigg(
&\mathbb{E}_{S_{k,r}'^{(n)}\sim\mu_k}
\left[\ell(S_{k,r}'^{(n)},\hat{\overline{W}}_k^{(R,r)})\right]-\ell(S_{k,r}^{(n)},\hat{\overline{W}}_k^{(R,r)})\Bigg).
\end{align}
The expectation in~\eqref{lossy_multi_round_generalization_1} may therefore be taken with respect to the joint law of $S_{k,r}$, $\hat{\overline W}_k^{(R,r)}$, $\overline W^{(r-1)}$, $W_{[K]\setminus k}^{(r)}$, and $U$. Applying the Donsker--Varadhan inequality, as in~\eqref{Donsker_varadhan_general_theorem_1}--\eqref{Donsker_varadhan_general_theorem_2}, conditionally on $\big(\overline W^{(r-1)},W_{[K]\setminus k}^{(r)},U\big)$ gives
\begin{align}
&\mathbb{E}\big[\gen(S_{[K]},\overline{W}^{(R)})\big]\leq\frac{1}{KR}\sum_{k=1}^{K}\sum_{r=1}^{R}\frac{1}{\lambda_{k,r}}
\Bigg[
I\Big(S_{k,r};\hat{\overline{W}}_k^{(R,r)}\,\Big|\,
\overline W^{(r-1)},W_{[K]\setminus k}^{(r)},U\Big)\nonumber\\
&\qquad+\mathbb{E}_{\overline W^{(r-1)},W_{[K]\setminus k}^{(r)},U,\hat{\overline W}_k^{(R,r)}}
\log\mathbb{E}_{S_{k,r}}
\exp\!\left\{\frac{1}{N_R}\sum_{n=1}^{N_R}
\phi(S_{k,r}^{(n)},\hat{\overline W}_k^{(R,r)})\right\}\Bigg]+\epsilon.
\label{lossy_multi_round_generalization_3}
\end{align}
Here the reference law $P_{S_{k,r}}$ is justified by the independence of the current dataset from $\big(\overline W^{(r-1)},W_{[K]\setminus k}^{(r)},U\big)$. Equivalently, the divergence term in the conditional Donsker--Varadhan step is
\begin{align}
&\mathbb{E}\left[D\Big(P_{S_{k,r}\mid\hat{\overline W}_k^{(R,r)},\overline W^{(r-1)},W_{[K]\setminus k}^{(r)},U}\Big\|P_{S_{k,r}}\Big)\right]\nonumber\\
&\quad=I\Big(S_{k,r};\hat{\overline W}_k^{(R,r)},\overline W^{(r-1)},W_{[K]\setminus k}^{(r)},U\Big).\nonumber
\end{align}
Since $S_{k,r}$ is independent of $\big(\overline W^{(r-1)},W_{[K]\setminus k}^{(r)},U\big)$, the chain rule reduces this information term to
\begin{align}
&I\Big(S_{k,r};\hat{\overline W}_k^{(R,r)},\overline W^{(r-1)},W_{[K]\setminus k}^{(r)},U\Big)\nonumber\\
&\quad=I\Big(S_{k,r};\hat{\overline W}_k^{(R,r)}\,\Big|\,
\overline W^{(r-1)},W_{[K]\setminus k}^{(r)},U\Big).
\end{align}

Moreover, the $N_R$ trajectories collected within a client and round are independent under the reference law. Consequently, the logarithmic moment-generating function in~\eqref{lossy_multi_round_generalization_3} factorizes across trajectories, yielding
\begin{align}
\mathbb{E}\big[\gen(S_{[K]},\overline{W}^{(R)})\big]&\leq\frac{1}{KR}\sum_{k=1}^{K}\sum_{r=1}^{R}\frac{1}{\lambda_{k,r}}
\Bigg[I\Big(S_{k,r};\hat{\overline W}_k^{(R,r)}\,\Big|\,
\overline W^{(r-1)},W_{[K]\setminus k}^{(r)},U\Big)\nonumber\\
&\qquad+\mathbb{E}_{\overline W^{(r-1)},W_{[K]\setminus k}^{(r)},U,\hat{\overline W}_k^{(R,r)}}
\log\prod_{n=1}^{N_R}\mathbb{E}_{S_{k,r}^{(n)}}
\exp\!\left\{\frac{\phi(S_{k,r}^{(n)},\hat{\overline W}_k^{(R,r)})}{N_R}\right\}\Bigg]+\epsilon.
\label{lossy_multi_round_generalization_4}
\end{align}
Using $\log\prod_n a_n=\sum_n\log a_n$ gives the equivalent form
\begin{align}
&\mathbb{E}\big[\gen(S_{[K]},\overline{W}^{(R)})\big]\leq\frac{1}{KR}\sum_{k=1}^{K}\sum_{r=1}^{R}\frac{1}{\lambda_{k,r}}
\Bigg[I\Big(S_{k,r};\hat{\overline W}_k^{(R,r)}\,\Big|\,
\overline W^{(r-1)},W_{[K]\setminus k}^{(r)},U\Big)\nonumber\\
&\qquad+\sum_{n=1}^{N_R}\mathbb{E}_{\overline W^{(r-1)},W_{[K]\setminus k}^{(r)},U,\hat{\overline W}_k^{(R,r)}}
\log\mathbb{E}_{S_{k,r}^{(n)}}
\exp\!\left\{\frac{\phi(S_{k,r}^{(n)},\hat{\overline W}_k^{(R,r)})}{N_R}\right\}\Bigg]+\epsilon.
\label{lossy_multi_round_generalization_6}
\end{align}

It remains to control the moment-generating function. For each fixed realization of $\hat{\overline W}_k^{(R,r)}$, the same bounded-difference argument used in the proof of Theorem~\ref{multi_round_markovian_general_bound} applies to the trajectory loss. Provided the approximate models satisfy the same bounded-output assumption, the Markov-chain McDiarmid inequality gives
\begin{align}
&\sum_{n=1}^{N_R}\mathbb{E}\log\mathbb{E}_{S_{k,r}^{(n)}}
\left[\exp\!\left\{\frac{\phi(S_{k,r}^{(n)},\hat{\overline W}_k^{(R,r)})}{N_R}\right\}\right]\leq\frac{\lambda_{k,r}^2\|c_{k,r}\|^2\tau_{k,\min}}{8N_R},
\end{align}
where $\|c_{k,r}\|^2\leq\big(2B(1+2\rho)+\log d\big)^2/T$. Inserting this estimate into~\eqref{lossy_multi_round_generalization_6} and minimizing each summand over $\lambda_{k,r}>0$, with
\begin{align}
\lambda_{k,r}=
\sqrt{\frac{8N_R\,I\big(S_{k,r};\hat{\overline W}_k^{(R,r)}\mid\overline W^{(r-1)},W_{[K]\setminus k}^{(r)},U\big)}{\|c_{k,r}\|^2\tau_{k,\min}}},
\end{align}
we obtain
\begin{align}
&\mathbb{E}\big[\gen(S_{[K]},\overline{W}^{(R)})\big]\nonumber\\
&\leq\frac{2B(1+2\rho)+\log d}{KR}
\sum_{k=1}^{K}\sum_{r=1}^{R}
\sqrt{\frac{\tau_{k,\min}}{2N_RT}
I\Big(S_{k,r};\hat{\overline W}_k^{(R,r)}\,\Big|\,
\overline W^{(r-1)},W_{[K]\setminus k}^{(r)},U\Big)}+\epsilon.
\label{lossy_multi_round_generalization_7}
\end{align}
Finally, using $N_R=N/R$ and taking the infimum over the admissible approximation kernels gives the asserted rate-distortion bound, provided the rate-distortion function is defined with the same conditioning on $U$ and the admissible kernels satisfy the stated distortion constraints. This completes the proof.
\qed




\section{Proof of Theorem \ref{Theorem_Multi_round_LNTP}}\label{Proof_Theorem_Multi_round_LNTP}

Let assume that assumptions in appendix \ref{assumption_sgd_problem_ntp} holds then
For ease of notation, let denote the variables  

\begin{align}
m&=\left\lceil 63\left(\frac{C\cdot t\cdot{\rho}}{K\epsilon_1}\right)^2\log\left(dNKT\right)\right\rceil,\label{Paramaters_dimension_multi_round_1}\\
c_1&=q^{(R-r)}_{e,b}\sqrt{\frac{K^2\epsilon^2_1}{C^2t^2}+1},\:\nu=\frac{t}{2c_1}\\
c_2&=\sqrt{\frac{K^2\epsilon_1^2}{C^2t^2}+1},\:\epsilon_1=\frac{\theta}{2}\label{Paramaters_dimension_multi_round_3}
\end{align}
where $t\geq q^{(R-r)}_{e,b}$.

We use an extension of Theorem \ref{lossy_inexpectation_bound_federated}, that it can be derive easily with the same manner. To prove this Theorem, let $\epsilon_{k,r}$ for all $k\in[K]$ and $r\in[R]$ such that satisfies the equation \eqref{def:Rdist}. Fix a $k\in[K]$ and $r\in[R]$. We upper bound the rate-distortion term $ R_{\mathcal{D}}(k,r,\epsilon)$, defined in \eqref{general_bound_lossy_mutual_1}. To this end, let $V^{(r)}_k$ denotes all randomness in $r$-th round for client $k$, i.e., given $s^{(r)}_k$, all mini-batches $\{\mathcal{B}_{k,r,t}\}_t$ will be fixed.
We define a proper 
\begin{align}
    P_{\hat{\overline{W}}^{(R,r)}_k|S_{k,r},\mathfrak{B}_{k,r}},\label{auxiliary_algorithm_condition_distribution}
\end{align}

that satisfy condition \ref{def:Rdist}, where

\begin{align}
    \mathfrak{B}_{k,r}=\left(V^{(r)}_{k},\overline{W}^{(r-1)},W^{(r)}_{[K]/k},S^{[r+1:R]}_{[K]},V^{[r+1:R]}_{[K]}\right).\label{Auxiliary_mini_batch_multi_round}
\end{align}

For simplicity, let

\begin{align}
    \mathfrak{U}_{k,r}=\left(W^{(r)}_{[K]/k},S^{[r+1:R]}_{[K]},V^{[r+1:R]}_{[K]}\right).\label{Auxiliary_simplicity_mini_batch_multi_round}
\end{align}

To define $P_{\hat{\overline{W}}^{(R,r)}_k|S_{k,r},\mathfrak{B}_{k,r}}$, first we define $P_{\hat{\overline{W}}^{(R,r)}_k|W^{(r)}_k,\mathfrak{U}_{k,r}}$. Then, we let
\begin{align} P_{\hat{\overline{W}}^{(R,r)}_k|W^{(r)}_k,\mathfrak{B}_{k,r},S_{k,r}}=P_{\hat{\overline{W}}^{(R,r)}_k|W^{(r)}_k,\mathfrak{U}_{k,r}}.
\end{align}

Now, we have

\begin{align}
    P_{\hat{\overline{W}}^{(R,r)}_k|S_{k,r},\mathfrak{B}_{k,r}}=\mathbb{E}_{W^{(r)}_k\sim P_{W^{(r)}_k|S_{k,r},V^{(r)}_k,\overline{W}^{(r-1)}}}\left[P_{\hat{\overline{W}}^{(R,r)}|W^{(r)}_k,\mathfrak{U}_{k,r}}\right].
\end{align}

Note that $P_{\hat{\overline{W}}^{(r)}_k|S_{k,r},\mathfrak{B}_{k,r}}=P_{W^{(r)}_k|S_{k,r},V^{(r)}_k,\overline{W}^{(r-1)}}$. We proceed to define $P_{\hat{\overline{W}}^{(R,r)}_k|W^{(r)}_k,\mathfrak{U}_{k,r}}$. Consider an integer $m\in\mathbb{N}$. Fix an $\mathfrak{U}_{k,r}$, $i.e.$, fix $w^{(r)}_{[K]/k},\left(s^{[r+1:R]}_{[K]},v^{[r+1:R]}_{[K]}\right)$  and consequently all mini-batches

\begin{align}
    \{\mathfrak{B}_{k,r',t}\}_{k\in[K],r'\in[r+1:R],t\in[\rho]}.
\end{align}

Assume the aggregated model at round $(r)$ to be
\begin{align}
    \overbar{w}_k^{(r),\star} = \frac{1}{K}\sum_{j\neq k} w_{j}^{(r)}.\label{aggrigated_model_in_r-th_round}
\end{align}
Then, with this aggregated model at step $(r)$, and the considered fixed $\mathfrak{U}_{k,r}$, if $r<R$, consider all the induced $\mathsf{D}^{\star}_{(\overbar{w}^{(r'-1)},\{\mathcal{B}_{j,r',t}\}_t)}$, for $j\in[K]$, $r'\in[r+1:R]$. Denote  
\begin{align}
\mathsf{D}^{\star}_{r'}\coloneqq & \frac{1}{K}\sum_{k\in[K]}\mathsf{D}^{\star}_{(\overbar{w}^{(r'-1)},\{\mathcal{B}_{k,r',t}\}_t)},\quad r'\in[r+1:R],\nonumber\\
\mathsf{D}^{\star}\coloneqq & \prod_{r'\in[r+1:R]} \mathsf{D}^{\star}_{r'}.
\end{align}
Denote also the resulting global model as $\overbar{w}_k^{(R),\star}$.  If $r=R$, let $\mathsf{D}^{\star}=\mathrm{I}_d$, where $\mathrm{I}_d$ is the identity matrix.

It is easy to verify that 
\begin{align}
    &\mathbb{E}_{\vc{S},\vc{V},\vc{W},\hat{\overbar{W}}_k^{(R,r)}}\left[\operatorname{gen}_{k,\theta}(S_{k,r},\overbar{W}^{(R)}) - \operatorname{gen}_{k,0}(S_{k,r},\hat{\overbar{W}}_k^{(R,r)})\right]\nonumber\\
    &= \mathbb{E}_{P_{S_{k,r},W_k^{(r)},\mathfrak{U}_{k,r},\overbar{W}^{(R)}}} \mathbb{E}_{p_{\hat{\overbar{W}}_k^{(R,r)}|W_k^{(r)},\mathfrak{U}_{k,r}}}\left[\operatorname{gen}_{k,\theta}(S_{k,r},\overbar{W}^{(R)}) - \operatorname{gen}_{k,0}(S_{k,r},\hat{\overbar{W}}_k^{(R,r)})\right] \nonumber \\
    &= \mathbb{E}_{P_{W_k^{(r)},\mathfrak{U}_{k,r},\overbar{W}^{(R)}}} \mathbb{E}_{p_{\hat{\overbar{W}}_k^{(R,r)}|W_k^{(r)},\mathfrak{U}_{k,r}}}
    \mathbb{E}_{Z_k\sim \mu_k}\left[\mathcal{L}_{k,-\theta/2}(\overbar{W}^{(R)}) - {\mathcal{L}}_{k,0}(\hat{\overbar{W}}_k^{(R,r)})\right] \nonumber \\
    &\quad- \mathbb{E}_{P_{S_{k,r},W_k^{(r)},\mathfrak{U}_{k,r},\overbar{W}^{(R)}}} \mathbb{E}_{p_{\hat{\overbar{W}}_k^{(R,r)}|W_k^{(r)},\mathfrak{U}_{k,r}}}\left[\hat{\mathcal{L}}_{k,{\theta/2}}(S_{k,r},{\overline{W}}^{(R)})-\hat{\mathcal{L}}_{k,0}(S_{k,r},\hat{\overline{W}}^{(R,r)}_k)\right] \nonumber \\
    &= 
    \mathbb{E}_{S_{k,r},\overline{W}^{(R)}}\left[\frac{1}{T}\sum_{t \in [T]}
        \mathbf{1}\left\{ \log\frac{C_1(\overline{W}^{(R)},t,\vec{X}_{k,r})}{C_2(\overline{W}^{(R)},t,\vec{X}_{k,r})} < {-\theta/2} \right\}\right]\\
    &\quad-
    \mathbb{E}_{S_{k,r},\hat{\overline{W}}^{(R,r)}_k}\left[\frac{1}{T}\sum_{t \in [T]}
        \mathbf{1}\left\{\log\frac{C_1(\hat{\overline{W}}^{(R,r)}_k,t,\vec{X}_{k,r})}{C_2(\hat{\overline{W}}^{(R,r)}_k,t,\vec{X}_{k,r})} < {0} \right\}\right]\\
        &+\mathbb{E}_{S_{k,r},\hat{\overline{W}}^{(R,r)}_k}\Bigg[\frac{1}{N_R}\sum_{n\in[N_R]}\Bigg[\frac{1}{T}\sum_{t \in [T]}
    \mathbf{1}\left\{ \log\frac{C_{1,n}(\hat{\overline{W}}^{(R,r)}_k,t,\vec{X}^{(n)}_{k,r})}{C_{2,n}(\hat{\overline{W}}^{(R,r)}_k,t,\vec{X}^{(n)}_{k,r})}<0 \right\}\\
    &\quad-\mathbb{E}_{S_{k,r},{\overline{W}}^{(R)}}\Bigg[\frac{1}{N_R}\sum_{n\in[N_R]}\Bigg[\frac{1}{T}\sum_{t \in [T]}
    \mathbf{1}\left\{ \log\frac{C_{1,n}({\overline{W}}^{(R)},t,\vec{X}^{(n)}_{k,r})}{C_{2,n}({\overline{W}}^{(R)},t,\vec{X}^{(n)}_{k,r})}<\theta/2 \right\}\Bigg]\Bigg]\\
    &\leq
    \mathbb{E}_{S_{k,r},\overline{W}^{(R)}}\Bigg[\frac{1}{T}\sum_{t \in [T]}
        \mathbf{1}\left\{\left|\log\frac{C_1(\overline{W}^{(R)},t,\vec{X}_{k,r})}{C_2({\overline{W}}^{(R)},t,\vec{X}_{k,r})}-\log\frac{C_1(\hat{\overline{W}}^{(R,r)}_k,t,\vec{X}_{k,r})}{C_2(\hat{\overline{W}}_k^{(R,r)},t,\vec{X}_{k,r})}\right|>\frac{\theta}{2}\right\}\\
    &\quad+\mathbb{E}_{S_{k,r},\hat{\overline{W}}^{(R,r)}_k}\Bigg[\frac{1}{N_R}\sum_{n\in[N_R]}\Bigg[\frac{1}{T}\sum_{t \in [T]}
    \mathbf{1}\left\{\left|\log\frac{C_{1,n}(\hat{\overline{W}}^{(R,r)}_k,t,\vec{X}^{(n)}_{k,r})}{C_{2,n}(\hat{\overline{W}}^{(R,r)}_k,t,\vec{X}^{(n)}_{k,r})}-\log\frac{C_{1,n}({\overline{W}}^{(R)},t,\vec{X}^{(n)}_{k,r})}{C_{2,n}({\overline{W}}^{(R)},t,\vec{X}^{(n)}_{k,r})}\right|>\frac{\theta}{2}\right\}\\
    &\quad\coloneqq\epsilon_A\label{lossy_pop_emp_difference_1}
\end{align}

Let 
\begin{align}
C_{1}(w,t,\vec{\mathbf{x}}) &= p_{w}\left(x_{t}|x_{t-1}, \dots, x_{t-\rho}\right),\label{defition_maximal_ratio_1} \\
C_{2}(w,t,\vec{\mathbf{x}}) &= \max_{x'_{t} \neq x_{t}}
p_{w}\left({x}'_{t}|x_{t-1}, \dots, x_{t-\rho}\right).\label{defition_maximal_ratio_2}
\end{align}

Given the model $w$, we done the corresponding matrices by $M_j^{w}$, where $j\in[\rho]$. Moreover, denote  
\begin{align}
    u_{i,t} \coloneqq & \mathcal{E}(x_{i,t}), \nonumber \\
    \ell_{i,t} \coloneqq& \ell, \text{ if } u_{i,t} = e_{\ell}.
\end{align}
In other words, $\ell_{i,t}$ is the order of $x_{i,t}$ in the dictionary. Define similar notation of $\ell'_{i,t}$. Denote also the element $(a_1,a_2)\in [d] \times [d]$ of $M_j^{w}$ by $M_j^{w}[a_1,a_2]$. 

Then, we can write
\begin{align}
    C_{i,1}(w,t,\vec{\mathbf{x}}) &= \frac{e^{\sum_{j\in[\rho]} M_j^{w}[\ell_{i,t-j},\ell_{i,t}]}}{\sum_{\ell\in[d]}e^{\sum_{j\in[\rho]} M_j^{w}[\ell_{i,t-j},\ell]}}, \qquad
    C_{i,2}(w,t,\vec{\mathbf{x}}) = \frac{\max_{\ell'_{i,t} \neq \ell_{i,t} }e^{\sum_{j\in[\rho]} M_j^{w}[\ell_{i,t-j},\ell'_{i,t}]}}{\sum_{\ell\in[d]}e^{\sum_{j\in[\rho]} M_j^{w}[\ell_{i,t-j},\ell]}}. \nonumber
\end{align}
Hence,
\begin{align}
    \log \frac{ C_{i,1}(w,t,\vec{\mathbf{x}})}{ C_{i,2}(w,t,\vec{\mathbf{x}})} \coloneqq \sum\limits_{j\in[\rho]} M_j^{w}[\ell_{i,t-j},\ell_{i,t}]- \max_{\ell'_{i,t} \neq \ell_{i,t} } \sum\limits_{j\in[\rho]} M_j^{w}[\ell_{i,t-j},\ell'_{i,t}].
\end{align}
To express this term in the form of an inner product, given a model $w$, define
\begin{align}
    V(w,r) \triangleq & \Big(M_1^{w}[1,r],M_1^{w}[2,r],\ldots,M_1^{w}[d,r],\ldots,M_{\rho}^{w}[1,r],M_{\rho}^{w}[2,r],\ldots,M_{\rho}^{w}[d,r] \Big) \in \mathbb{R}^{\rho d},\nonumber \\
    V(x_{i,t-1:t-\rho}) \triangleq & \Big(u_{i,t-1},\ldots,u_{i,t-\rho}\Big) \in \mathbb{R}^{\rho d}.\label{Auxiliary_vector_definition_LNTP}
\end{align}
With this definition, we have
\begin{align}
    \log \frac{ C_{i,1}(w,t,\vec{\mathbf{x}})}{ C_{i,2}(w,t,\vec{\mathbf{x}})} = \left\langle   V(w,\ell_{i,t}) , V(x_{i,t-1:t-\rho}) \right\rangle- \max_{\ell'_{i,t} \neq \ell_{i,t}} \left\langle   V(w,\ell'_{i,t}) , V(x_{i,t-1:t-\rho}) \right\rangle.
\end{align}
 
 Thus,

\begin{align}
D_{i,k,r,t}&\coloneqq\left|\log\frac{C_{i,1}(\overline{W}^{(R)},t,\vec{X}_{k,r})}{C_{i,2}({\overline{W}}^{(R)},t,\vec{X}_{k,r})}-\log\frac{C_1(\hat{\overline{W}}^{(R,r)}_k,t,\vec{X}_{k,r})}{C_2(\hat{\overline{W}}_k^{(R,r)},t,\vec{X}_{k,r})}\right|\\
&\leq \left| \left\langle   V(\overline{W}^{(R)},\ell_{i,t}) - V(\hat{\overline{W}}^{(R,r)}_k,\ell_{i,t})  , V(X_{i,k,r,t-1:t-\rho}) \right\rangle \right|\nonumber\\
&\quad +\left| \max_{\ell'_{i,t} \neq \ell_{i,t}} \left\langle   V({\overline{W}}^{(R)},\ell'_{i,t}) , V(X_{i,k,r,t-1:t-\rho}) \right\rangle - \max_{\ell'_{i,t} \neq \ell_{i,t}} \left\langle   V(\hat{\overline{W}}_k^{(R,r)},\ell'_{i,t}) , V(X_{i,k,r,t-1:t-\rho}) \right\rangle \right| \nonumber\\
&\leq\left| \left\langle   V(\overline{W}^{(R)},\ell_{i,t}) - V(\hat{\overline{W}}^{(R,r)}_k,\ell_{i,t})  , V(X_{i,k,r,t-1:t-\rho}) \right\rangle \right|\nonumber\\
&\quad +\max_{\ell'_{i,t} \neq \ell_{i,t}}\left|  \left\langle   V({\overline{W}}^{(R)},\ell'_{i,t}) , V(X_{i,k,r,t-1:t-\rho}) \right\rangle - \left\langle   V(\hat{\overline{W}}_k^{(R,r)},\ell'_{i,t}) , V(X_{i,k,r,t-1:t-\rho}) \right\rangle \right| \nonumber\\
&=\left| \left\langle   V(\overline{W}^{(R)},\ell_{i,t}) - V(\hat{\overline{W}}^{(R,r)}_k,\ell_{i,t})  , V(X_{i,k,r,t-1:t-\rho}) \right\rangle \right|\nonumber\\
&\quad +\max_{\ell'_{i,t} \neq \ell_{i,t}}\left|  \left\langle   V({\overline{W}}^{(R)},\ell'_{i,t})-V(\hat{\overline{W}}_k^{(R,r)},\ell'_{i,t}) , V(X_{i,k,r,t-1:t-\rho}) \right\rangle \right| \nonumber\\
&=\left|\left\langle V(\overline{W}^{(R)}-\hat{\overline{W}}^{(R,r)}_k,\ell_{i,t}), V(X_{i,k,r,t-1:t-\rho}) \right\rangle \right|\nonumber\\
&\quad +\max_{\ell'_{i,t} \neq \ell_{i,t}}\left|  \left\langle   V({\overline{W}}^{(R)}-\hat{\overline{W}}_k^{(R,r)},\ell'_{i,t}), V(X_{i,k,r,t-1:t-\rho}) \right\rangle \right| \nonumber\\
&\leq 2\max_{\ell_{i,t}}\left|  \left\langle   V({\overline{W}}^{(R)}-\hat{\overline{W}}_k^{(R,r)},\ell_{i,t}), V(X_{i,k,r,t-1:t-\rho}) \right\rangle \right|.
    \end{align}
 Note that $V(\mathsf{A}\mathsf{D}^\ast w^{(r)}_k,\ell)=\mathsf{A}V(\mathsf{D}^\ast w^{(r)}_k,\ell)$ and $\sum_{\ell\in[d]}\left\|V\left( w^{(r)}_k,\ell\right)\right\|^2=\|w^{(r)}_k\|^2$ where $A \in \mathbb{R}^{m \times \rho d}$ whose elements are distributed in an i.i.d. manner according to $\mathcal{N}(0,1/m)$. For any $\ell\in[d]$ and $k\in[K]$ denote the event $\mathcal{J}^{(r)}_{\ell,k}$, that $\left\|  \mathsf{A} \, V\left(\mathsf{D}^\ast w^{(r)}_k,\ell\right)\right\| > c_1  \left\|V\left( w^{(r)}_k,\ell\right)\right\|$, where $c_1 \in \mathbb{R}_+$, i.e.

\begin{align}
    \mathcal{J}^{(r)}_{\ell,k} \coloneqq \mathbbm{1}\left\{\left\| \, \mathsf{A}V\left(\mathsf{D}^\ast w^{(r)}_k,\ell\right)\right\| > c_1 \left\|V\left(w^{(r)}_k,\ell\right)\right\|\right\}.
\end{align}

To define $\hat{W}^{(r)}_k$, we define $V(\hat{W}^{(r)}_k,\ell)$ for every $\ell\in[d]$. Fix some $\epsilon_1 \geq 0$ and denote
\begin{align}
    [d]_{w^{(r)}_k} = \left\{\ell\in[d]\colon \|V(w^{(r)}_k,\ell)\| \geq \epsilon_1 /2\rho\right\}.
\end{align} 

First, let
\begin{align}
    U^{(r)}_{k,\ell} = \begin{cases}
        V\left(\mathsf{A}\mathsf{D}^\ast w^{(r)}_k,\ell\right), & \text{ if }  \left\|  A \, V\left(\mathsf{D}^\ast w^{(r)}_k,\ell\right)\right\| \leq c_1 \left\|V\left(w^{(r)}_k,\ell\right)\right\|, \\
        \mathbf{0}_m, & \text{ otherwise.} 
    \end{cases}
\end{align}
Then, define
\begin{align}
    \hat{W}^{(r)}_{k,\ell} = \begin{cases}
         \mathbf{0}_{m}&, \text{ if } \ell \notin [d]_{w^{(r)}_k}, \\
         U^{(r)}_{k,\ell} + N^{(r)}_{k,\ell},& \text{ otherwise.}
     \end{cases},
\end{align}
where $N^{(r)}_{k,\ell} \sim \text{Uniform}\left(\mathcal{B}_m(0,\nu)\right)$. Finally, let
\begin{align}
     V(\hat{W}^{(r)}_k,\ell) &= A^\top \hat{W}^{(r)}_{k,\ell},\\
     V(\hat{\overline{W}}^{(R,r)}_k,\ell) &= V\left(\overline{W}_k^{(R),\ast},\ell\right)+\frac{1}{K}A^\top \hat{W}^{(r)}_{k,\ell}
\end{align}
Note that for all $\ell$ such that $ \ell \notin [d]_{w^{(R)}_k}$ we have,
\begin{align}
    &\mathbb{P}\left(\left|  \left\langle   V(\overline{W}^{(R)}-\hat{\overline{W}}^{(R,r)}_k,\ell), V(X_{i,k,r,t-1:t-\rho}) \right\rangle  \right| \geq \frac{\epsilon_1}{2} \right)\nonumber\\
    &\quad=\mathbb{P}\left(\frac{1}{K}\left|  \left\langle   V({w}_k^{(R)},\ell), V(X_{i,k,r,t-1:t-\rho}) \right\rangle  \right| \geq \frac{\epsilon_1}{2}\right)\nonumber\\
    &\quad=0.
\end{align}

Hence,

\begin{align}
   &\mathbb{P}\left(D_{i,k,r,t}\geq\epsilon_1\right)\\
   &\quad\leq\mathbb{P}\left(2\max_{\ell_{i,t}}\left|  \left\langle   V({\overline{W}}^{(R)}-\hat{\overline{W}}_k^{(R,r)},\ell_{i,t}), V(X_{i,k,r,t-1:t-\rho}) \right\rangle \right|\geq\epsilon_1\right)\nonumber\\
   &\quad=\mathbb{P}\left(\max_{\ell_{i,t}}\left|  \left\langle   V({\overline{W}}^{(R)}-\hat{\overline{W}}_k^{(R,r)},\ell_{i,t}), V(X_{i,k,r,t-1:t-\rho}) \right\rangle \right|\geq\frac{\epsilon_1}{2}\right)\nonumber\\
   &\quad\leq\sum_{\ell\in[d]}\mathbb{P}\left(\left|  \left\langle   V({\overline{W}}^{(R)}-\hat{\overline{W}}_k^{(R,r)},\ell), V(X_{i,k,r,t-1:t-\rho}) \right\rangle \right|\geq\frac{\epsilon_1}{2}\right)\nonumber\\
   &\quad=\sum_{\ell\in[d]_{{w}_k^{(R)}}}\mathbb{P}\left(\left|  \left\langle   V({\overline{W}}^{(R)}-\hat{\overline{W}}_k^{(R,r)},\ell), V(X_{i,k,r,t-1:t-\rho}) \right\rangle \right|\geq\frac{\epsilon_1}{2}\right)\nonumber\\
   &\quad\leq\sum_{\ell\in[d]_{{w}_k^{(R)}}}\mathbb{P}\left(\left|\left\langle V\left(\overline{W}^{(R)}-\big(\overline{W}_k^{(R),\ast}+\frac{1}{K}\mathsf{D}^\ast w^{(r)}_k\big),\ell\right),V\left(X_{i,k,r,t-1:t-\rho}\right)\right\rangle\right|\geq\frac{\epsilon_1}{6}\right)\label{splitting_distortion_epsilon_1}\\
   &\qquad+\sum_{\ell\in[d]_{{w}_k^{(R)}}}\mathbb{P}\left(\frac{1}{K}\left|\left\langle V\left(\mathsf{D}^\ast w^{(r)}_k,\ell\right),V\left(X_{i,k,r,t-1:t-\rho}\right)\right\rangle-\left\langle\mathsf{A}V\left(\mathsf{D}^\ast w^{(r)}_k,\ell\right),\mathsf{A}V\left(X_{i,k,r,t-1:t-\rho}\right)\right\rangle\right|\geq\frac{\epsilon_1}{6}\right)\nonumber\\
   &\qquad+\sum_{\ell\in[d]_{{w}_k^{(R)}}}\mathbb{P}\left(\frac{1}{K}\left|\left\langle \mathsf{A}V\left(\mathsf{D}^\ast w^{(r)}_k,\ell\right)-\hat{W}^{(r)}_{k,\ell},\mathsf{A}V\left(X_{i,k,r,t-1:t-\rho}\right)\right\rangle\right|\geq\frac{\epsilon_1}{6}\right)\nonumber\\
   &\quad=\circled{I}+\circled{II}+\circled{III}\label{splitting_distortion_epsilon_2}
\end{align}

where

\begin{itemize}
    \item Equations~(\ref{splitting_distortion_epsilon_1})--(\ref{splitting_distortion_epsilon_2}) follow from the triangle inequality, i.e.,
    
\begin{align}
    &\left\langle V\left(\overline{W}^{(R)}-\hat{\overline{W}}^{(R,r)}_k,\ell\right), V\left(X_{i,k,r,t-1:t-\rho}\right) \right\rangle\\
    &=\left\langle V\left(\overline{W}^{(R)}-\big(\overline{W}_k^{(R),\ast}+\frac{1}{K}\mathsf{D}^\ast w^{(r)}_k\big),\ell\right),V\left(X_{i,k,r,t-1:t-\rho}\right)\right\rangle\\
    &\qquad+\frac{1}{K}\left\langle V\left(\mathsf{D}^\ast w^{(r)}_k,\ell\right),V\left(X_{i,k,r,t-1:t-\rho}\right)\right\rangle-\left\langle\mathsf{A}V\left(\mathsf{D}^\ast w^{(r)}_k,\ell\right),\mathsf{A}V\left(X_{i,k,r,t-1:t-\rho}\right)\right\rangle\\
    &\qquad+\frac{1}{K}\left\langle \mathsf{A}V\left(\mathsf{D}^\ast w^{(r)}_k,\ell\right)-\hat{W}^{(r)}_{k,\ell},\mathsf{A}V\left(X_{i,k,r,t-1:t-\rho}\right)\right\rangle\\
    &\leq\left|\left\langle V\left(\overline{W}^{(R)}-\big(\overline{W}_k^{(R),\ast}+\frac{1}{K}\mathsf{D}^\ast w^{(r)}_k\big),\ell\right),V\left(X_{i,k,r,t-1:t-\rho}\right)\right\rangle\right|\\
    &\qquad+\left|\frac{1}{K}\left\langle V\left(\mathsf{D}^\ast w^{(r)}_k,\ell\right),V\left(X_{i,k,r,t-1:t-\rho}\right)\right\rangle-\left\langle\mathsf{A}V\left(\mathsf{D}^\ast w^{(r)}_k,\ell\right),\mathsf{A}V\left(X_{i,k,r,t-1:t-\rho}\right)\right\rangle\right|\\
    &\qquad+\left|\frac{1}{K}\left\langle \mathsf{A}V\left(\mathsf{D}^\ast w^{(r)}_k,\ell\right)-\hat{W}^{(r)}_{k,\ell},\mathsf{A}V\left(X_{i,k,r,t-1:t-\rho}\right)\right\rangle\right|.
\end{align}

Moreover, using the fact that for every pair of random variables $(X_i)\in\mathbb{R}$ and constants $(c_i)\in\mathbb{R}$, it holds that
\begin{align}
\mathbb{P}\left(\sum_{i}X_i \geq\sum_{i} c_i\right)\leq\sum_{i}\mathbb{P}\left(X_i \geq c_i\right),
\end{align}
the result follows.

\end{itemize}

\begin{itemize}
    \item \textbf{Bounding \circled{I}}: For ease of notation let denote variables 
    \begin{align}
        \Delta_\ell&\coloneqq\left\langle V\left(\overline{W}^{(R)}-\big(\overline{W}_k^{(R),\ast}+\frac{1}{K}\mathsf{D}^\ast w^{(r)}_k\big),\ell\right),V\left(X_{i,k,r,t-1:t-\rho}\right)\right\rangle\\
        \tilde{\Delta}&\coloneqq\overline{W}^{(R)}-\big(\overline{W}_k^{(R),\ast}+\frac{1}{K}\mathsf{D}^\ast w^{(r)}_k\big).
    \end{align}

The Cauchy–Schwarz inequality implies that 
\begin{align}
    \left|\Delta_\ell\right|&\leq\|V(X_{i,k,r,t-1:t-\rho})\|_2\times\left\| V\left(\overline{W}^{(R)}-\big(\overline{W}_k^{(R),\ast}+\frac{1}{K}\mathsf{D}^\ast w^{(r)}_k\big),\ell\right)\right\|_2\\
    &\leq \sqrt{\rho}\times\left\|V\left(\overline{W}^{(R)}-\big(\overline{W}_k^{(R),\ast}+\frac{1}{K}\mathsf{D}^\ast w^{(r)}_k\big),\ell\right)\right\|_2\\
    &\leq\sqrt{\rho}\times\|\tilde{\Delta}\|.
\end{align}

   We show that, with probability one,
\[
\|\widetilde{\Delta}\|_2<\frac{\epsilon_1}{6\sqrt{\rho}},
\]
where
\[
\|\widetilde{\Delta}\|_2
=
\left\|
V\left(
\overbar{W}^{(R)}
-
\left(
\overbar{W}_k^{(R),\ast}
+\frac{1}{K}\mathsf{D}^\ast w_k^{(r)}
\right),
\ell
\right)
\right\|_2.
\]
For $r=R$, we have $\mathsf{D}^{\ast}=\mathrm{I}_d$ and the corresponding term is zero. Hence, for $R>1$, using $\|V(u,\ell)\|_2\leq\|u\|$, it suffices to show that, for every $r<R$,
\[
\left\|
\overbar{w}^{(R)}
-
\left(
\overbar{w}_k^{(R),\ast}
+\frac{1}{K}\mathsf{D}^{\ast}w_k^{(r)}
\right)
\right\|
<
\frac{\epsilon_1}{6\sqrt{\rho}}.
\]
We establish this whenever either of the two conditions in Assumption~\ref{a:KValue} holds.

For $r'\in[r+1:R]$, define
\begin{align}
\xi_{r'}
\coloneqq
\overbar{w}^{(r')}
-
\left(
\overbar{w}_k^{(r'),\ast}
+
\mathsf{D}_{r'}^{\ast}
\left(
\overbar{w}^{(r'-1)}
-
\overbar{w}_k^{(r'-1),\ast}
\right)
\right).
\label{def:xi}
\end{align}

\begin{itemize}[leftmargin=1cm]

\item[1.] \textbf{If Condition~1 holds:}
Assume \eqref{eq:svmKcond1} holds and fix a corresponding $\beta\in(0,1]$. We first show simultaneously by induction that, for $r'=r+1,\ldots,R$,
\begin{align}
\|\xi_{r'}\|
&\leq
\frac{\alpha C^2\big((1+\beta)q_{e,b}^2\big)^{r'-r-1}}{K^2},
\label{eq:inductionBase1}\\
\left\|
\overbar{w}^{(r'-1)}
-
\overbar{w}_k^{(r'-1),\ast}
\right\|^2
&\leq
\frac{C^2\big((1+\beta)q_{e,b}^2\big)^{r'-r-1}}{K^2}.
\label{eq:inductionBase2}
\end{align}

For $r'=r+1$,
\[
\overbar{w}^{(r)}-\overbar{w}_k^{(r),\ast}
=
\frac{1}{K}w_k^{(r)},
\]
and thus
\[
\left\|
\overbar{w}^{(r)}
-
\overbar{w}_k^{(r),\ast}
\right\|^2
\leq
\frac{C^2}{K^2}.
\]
Moreover, by Assumption~2,
\begin{align}
\|\xi_{r+1}\|
\leq
\alpha
\left\|
\overbar{w}^{(r)}
-
\overbar{w}_k^{(r),\ast}
\right\|^2
=
\frac{\alpha}{K^2}\|w_k^{(r)}\|^2
\leq
\frac{\alpha C^2}{K^2}.
\label{eq:r+1}
\end{align}
Hence, \eqref{eq:inductionBase1} and \eqref{eq:inductionBase2} hold for the induction base.

Now assume that \eqref{eq:inductionBase1} and \eqref{eq:inductionBase2} hold for $r'=r+1,\ldots,j$, where $j<R$. From the definition of $\xi_j$, the triangle inequality, Assumption~2, and $\|\mathsf{D}_j^\ast\|_{\mathrm{op}}\leq q_{e,b}$, we have
\begin{align}
\left\|
\overbar{w}^{(j)}
-
\overbar{w}_k^{(j),\ast}
\right\|
&\leq
\left(
q_{e,b}
+
\alpha
\left\|
\overbar{w}^{(j-1)}
-
\overbar{w}_k^{(j-1),\ast}
\right\|
\right)
\left\|
\overbar{w}^{(j-1)}
-
\overbar{w}_k^{(j-1),\ast}
\right\|.
\nonumber
\end{align}
By \eqref{eq:inductionBase2},
\[
\left\|
\overbar{w}^{(j-1)}
-
\overbar{w}_k^{(j-1),\ast}
\right\|
\leq
\frac{
C\big((1+\beta)q_{e,b}^2\big)^{(j-r-1)/2}
}{K}.
\]
The first requirement in \eqref{eq:svmKcond1} therefore gives
\[
\alpha
\left\|
\overbar{w}^{(j-1)}
-
\overbar{w}_k^{(j-1),\ast}
\right\|
<
q_{e,b}\big(\sqrt{1+\beta}-1\big).
\]
Consequently,
\begin{align}
\left\|
\overbar{w}^{(j)}
-
\overbar{w}_k^{(j),\ast}
\right\|^2
&<
(1+\beta)q_{e,b}^2
\left\|
\overbar{w}^{(j-1)}
-
\overbar{w}_k^{(j-1),\ast}
\right\|^2
\nonumber\\
&\leq
\frac{
C^2\big((1+\beta)q_{e,b}^2\big)^{j-r}
}{K^2}.
\label{eq:inductionStep2}
\end{align}
Hence, \eqref{eq:inductionBase2} holds for $r'=j+1$. Furthermore, by Assumption~2 and \eqref{eq:inductionStep2},
\begin{align}
\|\xi_{j+1}\|
&\leq
\alpha
\left\|
\overbar{w}^{(j)}
-
\overbar{w}_k^{(j),\ast}
\right\|^2
\leq
\frac{
\alpha C^2
\big((1+\beta)q_{e,b}^2\big)^{j-r}
}{K^2}.
\label{eq:inductionStep1}
\end{align}
This completes the induction.

Now, for any $r'\in[r+1:R-1]$, denote
\[
\overbar{\mathsf{D}}_{r'}^{\ast}
\coloneqq
\prod_{j=r'+1}^{R}\mathsf{D}^{\ast}_{j},
\]
and let $\overbar{\mathsf{D}}_R^{\ast}=\mathrm{I}_d$. Then,
\begin{align}
&\left\|
\overbar{w}^{(R)}
-
\left(
\overbar{w}_k^{(R),\ast}
+\frac{1}{K}\mathsf{D}^{\ast}w_k^{(r)}
\right)
\right\|
\nonumber\\
&\quad\leq
\sum_{r'=r+1}^{R}
q_{e,b}^{R-r'}\|\xi_{r'}\|
\nonumber\\
&\quad\leq
\frac{\alpha C^2}{K^2}
\sum_{r'=r+1}^{R}
q_{e,b}^{R-r'}
\big((1+\beta)q_{e,b}^2\big)^{r'-r-1}
\nonumber\\
&\quad=
\frac{
\alpha C^2
\left(
q_{e,b}^{R-r}
-
\big((1+\beta)q_{e,b}^2\big)^{R-r}
\right)
}{
K^2
\left(
q_{e,b}
-
(1+\beta)q_{e,b}^2
\right)
}.
\label{eq:Cond1FinalStep}
\end{align}
By the second requirement in \eqref{eq:svmKcond1}, the right-hand side is strictly smaller than $\epsilon_1/(6\sqrt{\rho})$. Hence,
\[
\|\widetilde{\Delta}\|_2<\frac{\epsilon_1}{6\sqrt{\rho}}.
\]

\item[2.] \textbf{If Condition~2 in Appendix~\ref{assumption_sgd_problem_ntp} holds:}
Assume \eqref{eq:svmKcond2} holds. We show simultaneously by induction that, for $r'=r+1,\ldots,R$,
\begin{align}
\|\xi_{r'}\|
&\leq
\frac{
\alpha C^2(1-\alpha\nu)^{r'-r-1}
}{K^2},
\label{eq:C2eq1}\\
\left\|
\overbar{w}^{(r'-1)}
-
\overbar{w}_k^{(r'-1),\ast}
\right\|^2
&\leq
\frac{
C^2(1-\alpha\nu)^{r'-r-1}
}{K^2}.
\label{eq:C2eq2}
\end{align}
The induction base follows exactly as in \eqref{eq:r+1}.

Assume now that \eqref{eq:C2eq1} and \eqref{eq:C2eq2} hold up to some $j<R$. As before,
\[
\left\|
\overbar{w}^{(j)}
-
\overbar{w}_k^{(j),\ast}
\right\|
\leq
\left(
q_{e,b}
+
\alpha
\left\|
\overbar{w}^{(j-1)}
-
\overbar{w}_k^{(j-1),\ast}
\right\|
\right)
\left\|
\overbar{w}^{(j-1)}
-
\overbar{w}_k^{(j-1),\ast}
\right\|.
\]
Since $0\leq\nu<(1-q_{e,b}^2)/\alpha$, we have $1-\alpha\nu>q_{e,b}^2$ and $1-\alpha\nu\leq1$. Hence, by \eqref{eq:C2eq2},
\[
\left\|
\overbar{w}^{(j-1)}
-
\overbar{w}_k^{(j-1),\ast}
\right\|
\leq
\frac{C}{K}.
\]
The first requirement in \eqref{eq:svmKcond2} yields
\[
\frac{\alpha C}{K}
<
\sqrt{1-\alpha\nu}-q_{e,b}.
\]
It follows that
\begin{align}
\left\|
\overbar{w}^{(j)}
-
\overbar{w}_k^{(j),\ast}
\right\|^2
&<
(1-\alpha\nu)
\left\|
\overbar{w}^{(j-1)}
-
\overbar{w}_k^{(j-1),\ast}
\right\|^2
\nonumber\\
&\leq
\frac{
C^2(1-\alpha\nu)^{j-r}
}{K^2}.
\end{align}
Thus, \eqref{eq:C2eq2} holds for $r'=j+1$. Moreover,
\begin{align}
\|\xi_{j+1}\|
&\leq
\alpha
\left\|
\overbar{w}^{(j)}
-
\overbar{w}_k^{(j),\ast}
\right\|^2
\leq
\frac{
\alpha C^2(1-\alpha\nu)^{j-r}
}{K^2}.
\label{eq:C3}
\end{align}
This proves \eqref{eq:C2eq1} and completes the induction.

Finally,
\begin{align}
&\left\|
\overbar{w}^{(R)}
-
\left(
\overbar{w}_k^{(R),\ast}
+\frac{1}{K}\mathsf{D}^{\ast}w_k^{(r)}
\right)
\right\|
\label{eq:Cond2FinalStep_1}\\
&\quad\leq
\sum_{r'=r+1}^{R}
q_{e,b}^{R-r'}\|\xi_{r'}\|
\nonumber\\
&\quad\leq
\frac{\alpha C^2}{K^2}
\sum_{r'=r+1}^{R}
q_{e,b}^{R-r'}
(1-\alpha\nu)^{r'-r-1}
\nonumber\\
&\quad=
\frac{
\alpha C^2
\left(
q_{e,b}^{R-r}
-
(1-\alpha\nu)^{R-r}
\right)
}{
K^2
\left(
q_{e,b}
-
(1-\alpha\nu)
\right)
}.
\label{eq:Cond2FinalStep}
\end{align}
By the second requirement in \eqref{eq:svmKcond2}, the right-hand side is strictly smaller than $\epsilon_1/(6\sqrt{\rho})$. Therefore,
\[
\|\widetilde{\Delta}\|_2<\frac{\epsilon_1}{6\sqrt{\rho}}.
\]
\end{itemize}

\item\textbf{Bounding \circled{II}}: Note that $\left\|\mathsf{D}^\ast w^{(r)}_k\right\|\leq C\times q^{R-r}_{e,b}$  due to the fact that spectral norm of all $\mathsf{D}^\ast_r$ are bounded by $q_{e,b}$ due to
Assumption 2. Then, for every $w^{(r)}_k$ and $\ell\in[d]$,

\begin{align}
    &\mathbb{P}_{\mathsf{A}}\left(\frac{1}{K}\left|\left\langle V\left(\mathsf{D}^\ast w^{(r)}_k,\ell\right),V\left(X_{i,k,r,t-1:t-\rho}\right)\right\rangle-\left\langle\mathsf{A}V\left(\mathsf{D}^\ast w^{(r)}_k,\ell\right),\mathsf{A}V\left(X_{i,k,r,t-1:t-\rho}\right)\right\rangle\right|\geq\frac{\epsilon_1}{6}\right)\nonumber\\
    &\quad\leq 4e^{-\frac{m}{7}\left(\frac{K\epsilon_1}{6\sqrt{\rho}Cq^{R-r}_{e,b}}\right)^2}\nonumber\\
    &\quad\leq\mathcal{O}\left(\frac{1}{dNKT}\right)
\end{align}
where the last step is due to \cite[Lemma 8, Part 2]{gronlund2020near}

\item\textbf{Bounding \circled{III}}:
Fix some $c_1,c_2>0.$ Now, to bound this, we have

\begin{align}
    &\mathbb{P}_{\mathsf{A}}\left(\frac{1}{K}\left|\left\langle\mathsf{A}V\left(\mathsf{D}^\ast w^{(r)}_k,\ell\right)-\hat{W}^{(r)}_{k,\ell},\mathsf{A}V\left(X_{i,k,r,t-1:t-\rho}\right)\right\rangle\right|\geq\frac{\epsilon_1}{6}\right)\\
    &\quad\leq\mathbb{P}_{\mathsf{A}}\left(\left\|\mathsf{A}V\left(X_{i,k,r,t-1:t-\rho}\right)\right\|\geq c_2\sqrt{\rho}\right)+\mathbb{P}_{\mathsf{A}}\left(\left\|  A \, V\left(\mathsf{D}^\ast w^{(r)}_k,\ell\right)\right\| \geq c_1 \left\|V\left(w^{(r)}_k,\ell\right)\right\| \right)\\
    &\qquad+\mathbb{P}_{\hat{W}^{(r)}_{k,\ell}\big|w^{(r)}_{k}}\mathbb{P}_{\mathsf{A}}\left(\frac{1}{K}\left|\left\langle\mathsf{A}V\left(\mathsf{D}^\ast w^{(r)}_k,\ell\right)-\hat{W}^{(r)}_{k,\ell},\mathsf{A}V\left(X_{i,k,r,t-1:t-\rho}\right)\right\rangle\right|\geq\frac{\epsilon_1}{6}\Big|\text{Cond}_1,\text{Cond}_2\right)\nonumber\\
    &\quad=\mathbb{P}_{\mathsf{A}}\left(\left\|\mathsf{A}V\left(X_{i,k,r,t-1:t-\rho}\right)\right\|\geq c_2\sqrt{\rho}\right)+\mathbb{P}_{\mathsf{A}}\left(\left\|  A\mathsf{D}^\ast \, V\left( w^{(r)}_k,\ell\right)\right\| \geq c_1 \left\|V\left(w^{(r)}_k,\ell\right)\right\| \right)\\
    &\qquad+\mathbb{P}_{\mathsf{A},\hat{W}^{(r)}_{k,\ell}}\left(\frac{1}{K}\left|\left\langle\mathsf{A}\mathsf{D}^\ast V\left( w^{(r)}_k,\ell\right)-\hat{W}^{(r)}_{k,\ell},\mathsf{A}V\left(X_{i,k,r,t-1:t-\rho}\right)\right\rangle\right|\geq\frac{\epsilon_1}{6}\Big|\text{Cond}_1,\text{Cond}_2\right)\nonumber\\
    &\quad=\mathbb{P}_{\mathsf{A}}\left(\left\|\mathsf{A}V\left(X_{i,k,r,t-1:t-\rho}\right)\right\|\geq c_2\sqrt{\rho}\right)+\mathbb{P}_{\mathsf{A}}\left(\left\|  A \,\mathsf{D}^\ast V\left( w^{(r)}_k,\ell\right)\right\| \geq c_1 \left\|V\left(w^{(r)}_k,\ell\right)\right\| \right)\nonumber\\
    &\qquad+\mathbb{P}_{\mathsf{A},U\sim\unif(\mathcal{B}_m(0,\nu))}\left(\frac{1}{K}\left|\left\langle U,\mathsf{A}V\left(X_{i,k,r,t-1:t-\rho}\right)\right\rangle\right|\geq\frac{\epsilon_1}{6}\Big|\text{Cond}_1\right)\nonumber\\
    &\quad\leq e^{-{m}\left( c_2^2-1-2\log( c_2)\right)}+e^{-m\left(q^{2(r-R)}_{e,b}c^2_1-1-2\log\left(q^{(r-R)}_{e,b}c_1\right)\right)}+\frac{\sqrt{2(m+1)}}{\sqrt{\pi}}e^{-\left(\frac{m+1}{2}\right)\left(\frac{K\epsilon_1}{6\sqrt{\rho} c_2\nu C}\right)^2}\\
    &\quad\leq\mathcal{O}\left(\frac{1}{dNKT}\right)
\end{align}

where 

 \begin{itemize}
     \item $\text{Cond}_1$ and $\text{Cond}_2$ are $\left\|\mathsf{A}V\left(X_{i,k,r,t-1:t-\rho}\right)\right\|< c_2\sqrt{\rho}$ and $\left\|  A \, V\left(\mathsf{D}^\ast w^{(r)}_k,\ell\right)\right\| < c_1 \left\|V\left(w^{(r)}_k,\ell\right)\right\|$ respectively,
     \item and the last step since
     \begin{align}
        &\mathbb{P}_{\mathsf{A}}\left(\left\|  A \,\mathsf{D}^\ast V\left( w^{(r)}_k,\ell\right)\right\| \geq c_1 \left\|V\left(w^{(r)}_k,\ell\right)\right\| \right)=\mathbb{P}_{\mathsf{A}}\left(\left\|  A \,\mathsf{D}^\ast{\tilde{V}\left( w^{(r)}_k,\ell\right)}\right\| \geq c_1  \right),
     \end{align}
     where $\tilde{V}\left( w^{(r)}_k,\ell\right)=\frac{{V}\left( w^{(r)}_k,\ell\right)}{\left\|{V}\left( w^{(r)}_k,\ell\right)\right\|}$, combining with this fact that $\left\|\mathsf{D}^\ast \tilde{V}\left( w^{(r)}_k,\ell\right)\right\|\leq q^{R-r}_{e,b}$.
 \end{itemize}

\end{itemize}

In the last step, we compute the upper bound on  the mutual information term, for this purpose let $\mathbf{T}^{(r)}_{k,[d]}=\left({T}^{(r)}_{k,1},\cdots,{T}^{(r)}_{k,d}\right)\in\{0,1\}^{d}$, where ${T}^{(r)}_{k,\ell}=\mathrm{1}\left\{\ell\in[d]_{w^{(r)}_k}\right\}$, $\ell\in[d]$. Hence, if $T^{(r)}_{k,\ell}=0$, then 
\begin{align}
V(\hat{\overline{W}}^{(R,r)}_k,\ell)=V\left(\overline{W}^{(R),\ast},\ell\right).
\end{align}
For simplicity and with the abuse of notation, denote
\begin{align}
    [d]_{\mathbf{T}^{(r)}_{k,[d]}} =\left\{\ell\in[d]\colon T^{(r)}_{k,\ell} = 1\right\}.
\end{align}
With this definition, we have

\begin{align}
&\mathbb{I}^A\left(S_{k,r};\hat{\overline{W}}^{(R,r)}_k|\mathfrak{V}_{k,r}\right)\nonumber\\
&\quad\leq \mathbb{I}^A\left({W}_k^{(r)};\hat{\overline{W}}^{(R,r)}_k\Big|\mathfrak{V}_{k,r}\right)\label{Bound_MI_Binary_1}\\
    &\quad= \mathbb{I}^A\left({W}_k^{(r)};\hat{\overbar{W}}_k^{(R,r)}\Big|\mathfrak{U}_{k,r}\right)\nonumber\\
    &\quad= \mathbb{I}^A\left(\left\{V(W^{(r)}_k,\ell)\right\}_{\ell\in[d]};\left\{V(\hat{\overline{W}}^{(R,r)}_k,\ell)\right\}_{\ell\in[d]}\Big|\mathfrak{U}_{k,r}\right)\nonumber\\
    &\quad\leq \mathbb{I}^A\left(\left\{V(W^{(r)}_k,\ell)\right\}_{\ell\in[d]};\left\{\hat{W}^{(r)}_{k,\ell}\right\}_{\ell\in[d]}\Big|\mathfrak{U}_{k,r}\right)\label{Bound_MI_Binary_2}\\
    &\quad\leq \mathbb{I}^A\left(\left\{V(W^{(r)}_k,\ell)\right\}_{\ell\in[d]};\left\{\hat{W}^{(r)}_{k,\ell}\right\}_{\ell\in[d]},\mathbf{T}^{(r)}_{k,[d]}\Big|\mathfrak{U}_{k,r}\right)\nonumber\\
    &\quad=\mathbb{I}^A\left(\left\{V(W^{(r)}_k,\ell)\right\}_{\ell\in[d]};\left\{\hat{W}^{(r)}_{k,\ell}\right\}_{\ell\in[d]}\Big|\mathbf{T}^{(r)}_{k,[d]},\mathfrak{U}_{k,r}\right)+\mathbb{I}^A\left(\left\{V(W^{(r)}_k,\ell)\right\}_{\ell\in[d]};\mathbf{T}^{(r)}_{k,[d]}\Big|\mathfrak{U}_{k,r}\right)\nonumber\\
    &\quad\leq \mathbb{I}^A\left(\left\{V(W^{(r)}_k,\ell)\right\}_{\ell\in[d]};\left\{\hat{W}^{(r)}_{k,\ell}\right\}_{\ell\in[d]}\Big|\mathbf{T}^{(r)}_{k,[d]},\mathfrak{U}_{k,r}\right)+H\left(\mathbf{T}^{(r)}_{k,[d]}\right)\nonumber\\
    &\quad=\mathbb{E}_{\mathbf{T}^{(r)}_{k,[d]}}\left[I^{A,\mathbf{T}^{(r)}_{k,[d]}}\left(\left\{V(W^{(r)}_k,\ell)\right\}_{\ell\in[d]};\left\{\hat{W}^{(r)}_{k,\ell}\right\}_{\ell\in[d]}\Big|\mathfrak{U}_{k,r}\right)\right]+ H\left(\mathbf{T}^{(r)}_{k,[d]}\right)\nonumber\\
    &\quad=\mathbb{E}_{\mathbf{T}^{(r)}_{k,[d]}}\left[h^{A,\mathbf{T}^{(r)}_{k,[d]}}\left(\left\{\hat{W}^{(r)}_{k,\ell}\right\}_{\ell\in[d]_{\mathbf{T}^{(r)}_{k,[d]}}}\Big|\mathfrak{U}_{k,r}\right)-h^{A,\mathbf{T}^{(r)}_{k,[d]}}\left(\left\{\hat{W}^{(r)}_{k,\ell}\right\}_{\ell\in[d]_{\mathbf{T}^{(r)}_{k,[d]}}}\Big|\left\{V(W^{(r)}_k,\ell)\right\}_{\ell\in[d]},\mathfrak{U}_{k,r}\right)\right]\nonumber\\
    &\qquad\qquad\qquad+H\left(\mathbf{T}^{(r)}_{k,[d]}\right)\nonumber\\
    &\quad\leq\mathbb{E}_{\mathbf{T}^{(r)}_{k,[d]}}\left[\sum_{\ell\in[d]_{\mathbf{T}^{(r)}_{k,[d]}}} h^{A,\mathbf{T}^{(r)}_{k,[d]}}\left(\left\{\hat{W}^{(r)}_{k,\ell}\right\}\Big|\mathfrak{U}_{k,r}\right)-h^{A,\mathbf{T}^{(r)}_{k,[d]}}\left(\left\{\hat{W}^{(r)}_{k,\ell}\right\}_{\ell\in[d]_{\mathbf{T}^{(r)}_{k,[d]}}}\Big|\left\{V(W^{(r)}_k,\ell)\right\}_{\ell\in[d]},\mathfrak{U}_{k,r}\right)\right]\nonumber\\
    &\qquad\qquad\qquad+H\left(\mathbf{T}^{(r)}_{k,[d]}\right)\nonumber\\
    &\quad=\mathbb{E}_{\mathbf{T}^{(r)}_{k,[d]}}\left[\sum_{\ell\in[d]_{\mathbf{T}^{(r)}_{k,[d]}}} \left(h^{A,\mathbf{T}^{(r)}_{k,[d]}}\left(\hat{W}^{(r)}_{k,\ell}\Big|\mathfrak{U}_{k,r}\right)-h^{A,\mathbf{T}^{(r)}_{k,[d]}}\left(\hat{W}^{(r)}_{k,\ell}\Big|V(W^{(r)}_k,\ell),\mathfrak{U}_{k,r}\right)\right)\right]+H\left(\mathbf{T}^{(r)}_{k,[d]}\right)\nonumber\\
    &\quad\leq \mathbb{E}_{\mathbf{T}^{(r)}_{k,[d]}}\left[\sum_{\ell\in[d]} T^{(r)}_{k,\ell}\right]\log\left(\text{Volume}\left(\mathcal{B}_m(c_1C+\nu)\right)\right)-\mathbb{E}_{\mathbf{T}^{(r)}_{k,[d]}}\left[\sum_{\ell\in\mathbf{T}^{(r)}_{k,[d]}}h^{A,\mathbf{T}^{(r)}_{k,[d]}}\left(\hat{W}^{(r)}_{k,\ell}\Big|V(W^{(r)}_k,\ell),\mathfrak{U}_{k,r}\right)\right]\nonumber\\
    &\qquad\qquad\qquad+H\left(\mathbf{T}^{(r)}_{k,[d]}\right)\nonumber\\
    &\quad=\mathbb{E}_{\mathbf{T}^{(r)}_{k,[d]}}\left[\sum_{\ell\in[d]} T^{(r)}_{k,\ell}\right]\log\left(\text{Volume}\left(\mathcal{B}_m(c_1C+\nu)\right)\right)-\mathbb{E}_{\mathbf{T}^{(r)}_{k,[d]}}\left[\sum_{\ell\in[d]} T^{(r)}_{k,\ell}\right]\log\left(\text{Volume}\left(\mathcal{B}_m(\nu)\right)\right)+H\left(\mathbf{T}^{(r)}_{k,[d]}\right)\nonumber\\
    &\quad=m\mathbb{E}_{\mathbf{T}^{(r)}_{k,[d]}}\left[\sum_{\ell\in[d]} T^{(r)}_{k,\ell}\right]\log\left(\frac{c_1C+\nu}{\nu}\right)+H\left(\mathbf{T}^{(r)}_{k,[d]}\right)\label{Bound_MI_Binary_3}\\
\end{align}

Here, the information quantities are evaluated
conditionally on the realization of the independent
Gaussian projection matrix $\mathsf{A}$, as indicated
by the superscript $A$. The inequalities
\eqref{Bound_MI_Binary_1} and
\eqref{Bound_MI_Binary_2} follow from the
data processing inequality.

It remains to upper bound ‌$H\left(\mathbf{T}^{(r)}_{k,[d]}\right)$, Denote $N =\min\left(d,\left(\frac{4C \rho}{\theta}\right)^2\right)$. The vector $\mathbf{T}^{(r)}_{k,[d]}$  can have at most $N$ entries equal to one. Hence,

\begin{itemize}
    \item If $N<d/2$, the number of possible choices for $\mathbf{T}^{(r)}_{k,[d]}$ is at most
\begin{align}
     \left|\mathbf{T}^{(r)}_{k,[d]}\right|\triangleq &\binom{d}{0}+\binom{d}{1}+\cdots+\binom{d}{N}\leq  e^{d\, h_b(N/d)}.\nonumber
\end{align}

Since 

\begin{align}
N
=
\min\left\{
d,
\left(\frac{4C\rho}{\theta}\right)^2
\right\},
\nonumber
\end{align}

and $N<d/2$, we have
\begin{align}
N
=
\left(\frac{4C\rho}{\theta}\right)^2.
\nonumber
\end{align}

Hence,
\begin{align}
H\left(\mathbf{T}^{(r)}_{k,[d]}\right)
&\leq
\log\left(\left|\mathbf{T}^{(r)}_{k,[d]}\right|\right)
\nonumber\\
&\leq
d\,
h_b\left(
\frac{1}{d}
\left(\frac{4C\rho}{\theta}\right)^2
\right)
\nonumber\\
&=
\left(\frac{4C\rho}{\theta}\right)^2
\left[
\frac{d}{
\left(\frac{4C\rho}{\theta}\right)^2}
h_b\left(
\frac{
\left(\frac{4C\rho}{\theta}\right)^2
}{d}
\right)
\right]
\nonumber\\
&\leq
\left(\frac{4C\rho}{\theta}\right)^2
\left[
1+
\log\left(1+
\frac{d}{
\left(\frac{4C\rho}{\theta}\right)^2}
\right)
\right].
\nonumber
\end{align}

where we used
\begin{align}
x h_b(1/x)
&=
\log(x)
-
x\left(1-\frac{1}{x}\right)
\log\left(1-\frac{1}{x}\right)
\nonumber\\
&\leq
\log(x)
-
x\left(1-\frac{1}{x}\right)
\left(
1-\frac{1}{1-\frac{1}{x}}
\right)
\nonumber\\
&\leq
\log(x)+1,
\nonumber
\end{align}
by exploiting the inequality
$\log(u)\geq 1-1/u$.

\item If $N\geq d/2$, then
\begin{align}
H\left(\mathbf{T}^{(r)}_{k,[d]}\right)
\leq d
\leq
2N
\leq
2\left(\frac{4C\rho}{\theta}\right)^2.
\nonumber
\end{align}

Therefore, by recalling equation~\eqref{Bound_MI_Binary_3}, we have
\begin{align}
\ref{Bound_MI_Binary_3}
&\leq
m
\min\left\{
d,
\left(\frac{4C\rho}{\theta}\right)^2
\right\}
\log\left(
\frac{c_1C+\nu}{\nu}
\right)
\nonumber\\
&\quad+
\mathcal{O}\left(
\left(\frac{4C\rho}{\theta}\right)^2
\left[
1+
\log\left(1+
\frac{d}{
\left(\frac{4C\rho}{\theta}\right)^2}
\right)
\right]
\right).
\nonumber
\end{align}

This completes the proof.

\end{itemize}






\section{Proof of Theorem \ref{Theorem_Multi_round_SANTP} }\label{Proof_Theorem_Multi_round_SANTP}

The proof of Theorem \ref{Theorem_Multi_round_SANTP} follows the same lossy approach as the proof of Theorem \ref{Theorem_Multi_round_LNTP}. 

At the beginning, let us consider the parameters

\begin{align}
m_3&=\left\lceil 448\left(\frac{C\cdot t}{K\theta}\right)^2\log\left( dNKT\right)\right\rceil,\label{Paramaters_dimension_multi_SANTP_round_1}\\
m_4&=\left\lceil 448\left(\frac{C^2\cdot t\cdot\sqrt{\rho}}{K\theta}\right)^2\log\left(\rho dNKT\right)\right\rceil,\nonumber\\
c_{4,1}=c_{3,1}&=q^{(R-r)}_{e,b}\sqrt{\frac{K^2\theta^2}{C^2t^2}+1},\quad\nu_{4}=\nu_{3}=\frac{t}{2c_1},\nonumber\\
c_{4,2}=c_{3,2}&=\sqrt{\frac{K^2\theta^2}{C^2t^2}+1},\label{Paramaters_dimension_multi_SANTP_round_3}
\end{align}
where $t\geq q^{(R-r)}_{e,b}$, these parameters will be used in the remainder of the proof.

We use the extension of Theorem \ref{lossy_inexpectation_bound_federated} to derive an upper bound on the rate--distortion term $R_{\mathcal{D}}(\cdot)$. Since the attention mechanism depends on the product of the transpose of the Key matrix and the Query matrix, i.e.,
$M^{\top}_{k,2}\!\!\cdot M_{k,1}$, as well as the matrix $M_{k,3}$, we introduce the matrix $M_{k,4}=M^{\top}_{k,2} M_{k,1}$.
Accordingly, we consider the equivalent parameterization $W_k=\left(M_{k,3},M_{k,4}\right)$
for defining the attention problem.  In this setting, the local model for client $k$ at round $r$ is defined as
$W^{(r)}_k=\left(M_{k,3}^{(r)}, M_{k,4}^{(r)}\right)$, while the aggregated model at the $r$-th round is denoted by ${\overline{W}}^{(r)}=\left(\overline{M}^{(r)}_{3},\overline{M}^{(r)}_{4}\right)$. 

We define an auxiliary conditional probability distribution, analogous to \eqref{auxiliary_algorithm_condition_distribution}, that satisfies Condition \ref{def:Rdist} by considering the mini-batches ${\mathcal{B}\{k,r,t}\}$ and the auxiliary variable $\mathfrak{U}_{k,r}$, defined similarly to \eqref{Auxiliary_mini_batch_multi_round} and \eqref{Auxiliary_simplicity_mini_batch_multi_round}, respectively.

Therefore, we can rewrite $g_{M_{k,[2]}}(\vec{x}_k)$ in Theorem~\ref{Theorem_Multi_round_SANTP} as follows:
\begin{align}
    g_{M_{k,4}}(\vec{x}_k)\coloneqq g_{M_{k,[2]}}(\vec{x}_k)
    &\coloneqq\sigma\left(\frac{1}{\sqrt{\rho}}\big\langle M_{k,1}
    \mathcal{E}(x_{k,t-1}),M_{k,2}\mathcal{E}(x_{k,t-j})\big\rangle\right)_{j=1}^{\rho}\nonumber\\
    &=\sigma\left(\frac{1}{\sqrt{\rho}}\big\langle M^{\top}_{k,2}M_{k,1}
    \mathcal{E}(x_{k,t-1}),\mathcal{E}(x_{k,t-j})\big\rangle\right)_{j=1}^{\rho}\nonumber\\
    &=\sigma\left(\frac{1}{\sqrt{\rho}}\big\langle M_{k,4}
    \mathcal{E}(x_{k,t-1}),\mathcal{E}(x_{k,t-j})\big\rangle\right)_{j=1}^{\rho}\nonumber
\end{align}

Let $C_1\left({w_k}^{(r)},t,\vec{\mathbf{x}}_{k,r}\right)$ and $C_2\left({w_k}^{(r)},t,\vec{\mathbf{x}}_{k,r}\right)$ be defined as in \eqref{defition_maximal_ratio_1} and \eqref{defition_maximal_ratio_2}, respectively. This definition implies that, given the model $W^{(r)}_k$, we have

\begin{align}
    p_{W^{(r)}_k}\left(x_{k,r,t}\Big|x_{k,r,t-1},\cdots,x_{k,r,t-\tau}\right)=\frac{\exp\left[\left\langle \mathcal{E}\left(x_{k,r,t}\right),M^{(r)}_{k,3}\sum_{j=1}^{\rho}\left(g_{M^{(r)}_{k,4}}\left(\vec{x}_{k,r}\right)\right)_j\cdot\mathcal{E}\left(x_{k,r,t-j}\right)\right\rangle\right]}{\sum_{\overline{x}_{k,r,t}}\exp\left[\left\langle \mathcal{E}\left(\overline{x}_{k,r,t}\right),M^{(r)}_{k,3}\sum_{j=1}^{\rho}\left(g_{M^{(r)}_{k,4}}\left(\vec{x}_{k,r}\right)\right)_j\cdot\mathcal{E}\left(x_{k,r,t-j}\right)\right\rangle\right]},\nonumber
\end{align}

and

\begin{align}
    C_{1}(w^{(r)}_k,t,\vec{\mathbf{x}}_{k,r})
    &=\frac{\exp\left[\left\langle \mathcal{E}\left(x_{k,r,t}\right),M^{(r)}_{k,3}\sum_{j=1}^{\rho}\left(g_{M^{(r)}_{k,4}}\left(\vec{x}_{k,r}\right)\right)_j\cdot\mathcal{E}\left(x_{k,r,t-j}\right)\right\rangle\right]}{\sum_{\overline{x}_{k,r,t}}\exp\left[\left\langle \mathcal{E}\left(\overline{x}_{k,r,t}\right),M^{(r)}_{k,3}\sum_{j=1}^{\rho}\left(g_{M^{(r)}_{k,4}}\left(\vec{x}_{k,r}\right)\right)_j\cdot\mathcal{E}\left(x_{k,r,t-j}\right)\right\rangle\right]}\nonumber\\
    C_2(w^{(r)}_k,t,\vec{\mathbf{x}}_k)
    &=\frac{\max_{x'_{k,r,t}\neq x_{k,r,t}}\exp\left[\left\langle \mathcal{E}\left(x'_{k,r,t}\right),M^{(r)}_{k,3}\sum_{j=1}^{\rho}\left(g_{M^{(r)}_{k,4}}\left(\vec{x}_{k,r}\right)\right)_j\cdot\mathcal{E}\left(x_{k,r,t-j}\right)\right\rangle\right]}{\sum_{\overline{x}_{k,r,t}}\exp\left[\left\langle \mathcal{E}\left(\overline{x}_{k,r,t}\right),M^{(r)}_{k,3}\sum_{j=1}^{\rho}\left(g_{M^{(r)}_{k,4}}\left(\vec{x}_{k,r}\right)\right)_j\cdot\mathcal{E}\left(x_{k,r,t-j}\right)\right\rangle\right]}\nonumber
\end{align}

 which implies that
 
\begin{align}
    \log\frac{C_1\left(w^{(r)}_k,t,\vec{\mathbf{x}}_{k,r}\right)}{C_2\left(w^{(r)}_k,t,\vec{\mathbf{x}}_{k,r}\right)}&=\left[\left\langle \mathcal{E}\left(x_{k,r,t}\right),M^{(r)}_{k,3}\sum_{j=1}^{\rho}\left(g_{M^{(r)}_{k,4}}\left(\vec{x}_{k,r}\right)\right)_j\cdot\mathcal{E}\left(x_{k,r,t-j}\right)\right\rangle\right]\nonumber\\
    &\quad-\max_{x'_{k,r,t}\neq x_{k,r,t}}\left[\left\langle \mathcal{E}\left(x'_{k,r,t}\right),M_{k,3}\sum_{j=1}^{\rho}\left(g_{M^{(r)}_{k,4}}\left(\vec{x}_{k,r}\right)\right)_j\cdot\mathcal{E}\left(x_{k,r,t-j}\right)\right\rangle\right]\nonumber
\end{align}

For each client $k \in [K]$ and model $i\in\{3,4\}$ at round $r$, we additionally define the variables $\overline{M}_{k,i}^{(r),\ast}$ and $\hat{\overline{M}}_{k,i}^{(R,r)}$ such that $\mathsf{A}_i$ be independent for any $i\in\{3,4\}$. The variable $\overline{M}_{k,i}^{(r),\ast}$ is defined analogously to \eqref{aggrigated_model_in_r-th_round}, while

\begin{align}
    &\hat{\overline{M}}_{k,i}^{(R,r)}
=\overline{M}_{k,i}^{(R),\ast} +\frac{1}{K}\mathsf{A}_i^{\top} \hat{M}_{k,i}^{(r)},\nonumber\\
&\hat{\overline{W}}^{(R,r)}_{k,1}=\left(\hat{\overline{M}}_{k,3}^{(R,r)},\overline{M}_{4}^{(R)}\right),\nonumber\\
&\hat{\overline{W}}^{(R,r)}_k=\left(\hat{\overline{M}}_{k,3}^{(R,r)},\hat{\overline{M}}_{k,4}^{(R,r)}\right).\nonumber
\end{align}

Using the same approach as equation \eqref{lossy_pop_emp_difference_1} we have
\begin{align}
&\mathbb{E}_{\vc{S},\vc{V},\vc{W},\hat{\overbar{W}}_k^{(R,r)}}\left[\operatorname{gen}_{k,\theta}(S_{k,r},{\overbar{W}}^{(R)}) - \operatorname{gen}_{k,0}(S_{k,r},\hat{\overbar{W}}_k^{(R,r)})\right]\nonumber\\
&\leq
    \mathbb{E}_{S_{k,r},\overline{W}^{(R)}}\Bigg[\frac{1}{T}\sum_{t \in [T]}
        \mathbf{1}\left\{\left|\log\frac{C_1(\overline{W}^{(R)},t,\vec{X}_{k,r})}{C_2({\overline{W}}^{(R)},t,\vec{X}_{k,r})}-\log\frac{C_1(\hat{\overline{W}}^{(R,r)}_{k,1},t,\vec{X}_{k,r})}{C_2(\hat{\overline{W}}_{k,1}^{(R,r)},t,\vec{X}_{k,r})}\right|>\frac{\theta}{2}\right\}\nonumber\\
    &\quad+\mathbb{E}_{S_{k,r},\hat{\overline{W}}^{(R,r)}_k}\Bigg[\frac{1}{N_R}\sum_{n\in[N_R]}\Bigg[\frac{1}{T}\sum_{t \in [T]}
    \mathbf{1}\left\{\left|\log\frac{C_{1,n}(\hat{\overline{W}}^{(R,r)}_k,t,\vec{X}^{(n)}_{k,r})}{C_{2,n}(\hat{\overline{W}}^{(R,r)}_k,t,\vec{X}^{(n)}_{k,r})}-\log\frac{C_{1,n}({\overline{W}}^{(R)},t,\vec{X}^{(n)}_{k,r})}{C_{2,n}({\overline{W}}^{(R)},t,\vec{X}^{(n)}_{k,r})}\right|>\frac{\theta}{2}\right\}\nonumber\\
    &\leq\epsilon_{\mathsf{A}_{[3]},k,1}+\epsilon_{\mathsf{A}_{[3]},k,2}\nonumber\\
    &=\epsilon_{\mathsf{A}_{[3]},k}\nonumber,
\end{align}

applying the triangle inequality implies that

\begin{align}
    \epsilon_{\mathsf{A}_{[3]},k,1}&=\mathbb{E}_{S_{k,r},\overline{W}^{(R)}}\Bigg[\frac{1}{T}\sum_{t \in [T]}
        \mathbf{1}\left\{\left|\log\frac{C_1(\overline{W}^{(R)},t,\vec{X}_{k,r})}{C_2({\overline{W}}^{(R)},t,\vec{X}_{k,r})}-\log\frac{C_1(\hat{\overline{W}}^{(R,r)}_{k,1},t,\vec{X}_{k,r})}{C_2(\hat{\overline{W}}_{k,1}^{(R,r)},t,\vec{X}_{k,r})}\right|>\frac{\theta}{4}\right\}\nonumber\\
        &\quad+\mathbb{E}_{S_{k,r},\overline{W}^{(R)}}\Bigg[\frac{1}{T}\sum_{t \in [T]}
        \mathbf{1}\left\{\left|\log\frac{C_1(\hat{\overline{W}}_{k,1}^{(R,r)},t,\vec{X}_{k,r})}{C_2(\hat{\overline{W}}_{k,1}^{(R,r)},t,\vec{X}_{k,r})}-\log\frac{C_1(\hat{\overline{W}}^{(R,r)}_{k},t,\vec{X}_{k,r})}{C_2(\hat{\overline{W}}_{k}^{(R,r)},t,\vec{X}_{k,r})}\right|>\frac{\theta}{4}\right\}\nonumber\\
        &=\circled{I}^1+\circled{II}^1\label{Definition_summarizing_losses_1}
\end{align}

and

\begin{align}
 \epsilon_{\mathsf{A}_{[3]},k,2}&=\mathbb{E}_{S_{k,r},\hat{\overline{W}}^{(R,r)}_k}\Bigg[\frac{1}{N_R}\sum_{n\in[N_R]}\Bigg[\frac{1}{T}\sum_{t \in [T]}
    \mathbf{1}\left\{\left|\log\frac{C_{1,n}({\overline{W}}^{(R)},t,\vec{X}^{(n)}_{k,r})}{C_{2,n}({\overline{W}}^{(R)},t,\vec{X}^{(n)}_{k,r})}-\log\frac{C_{1,n}(\hat{\overline{W}}^{(R,r)}_{k,1},t,\vec{X}^{(n)}_{k,r})}{C_{2,n}(\hat{\overline{W}}^{(R,r)}_{k,1},t,\vec{X}^{(n)}_{k,r})}\right|>\frac{\theta}{4}\right\}\\
    &\quad+\mathbb{E}_{S_{k,r},\hat{\overline{W}}^{(R,r)}_k}\Bigg[\frac{1}{N_R}\sum_{n\in[N_R]}\Bigg[\frac{1}{T}\sum_{t \in [T]}
    \mathbf{1}\left\{\left|\log\frac{C_{1,n}(\hat{\overline{W}}^{(R,r)}_{k,1},t,\vec{X}^{(n)}_{k,r})}{C_{2,n}(\hat{\overline{W}}^{(R,r)}_{k,1},t,\vec{X}^{(n)}_{k,r})}-\log\frac{C_{1,n}(\hat{\overline{W}}^{(R,r)}_{k},t,\vec{X}^{(n)}_{k,r})}{C_{2,n}(\hat{\overline{W}}^{(R,r)}_{k},t,\vec{X}^{(n)}_{k,r})}\right|>\frac{\theta}{4}\right\}\\
    &=\circled{I}^2+\circled{II}^2\label{Definition_summarizing_losses_2}
\end{align}

then we show that 

\begin{align}
    \mathbb{E}_{\mathsf{A}_{[3]}}\left[\epsilon_{\mathsf{A}_{[3]},k}\right]\leq\mathcal{O}\left(\frac{1}{NKT}\right).\nonumber
\end{align}

Similar to Equation~\eqref{Auxiliary_vector_definition_LNTP}, for any
$w^{(r)}_k=\left(M^{(r)}_{k,3},M^{(r)}_{k,4}\right)$ and $i\in\{3,4\}$, we define

\begin{align}
    V\left[\ell,M^{(r)}_{k,i}\right]&\coloneqq\left(M^{(r)}_{k,i}[\ell,1],\cdots,M^{(r)}_{k,i}[\ell,d]\right)\in\mathbb{R}^{d}\nonumber\\
    V\left[M^{(r)}_{k,i},\ell\right]&\coloneqq\left(M^{(r)}_{k,i}[1,\ell],\cdots,M^{(r)}_{k,i}[d,\ell]\right)\in\mathbb{R}^{d}\label{Denote_row_column_def}
\end{align}

\begin{itemize}

\item\textbf{Bounding $\circled{I}^1$ and $\circled{I}^2$}: We have 

\begin{align}
    \circled{I}^1&=\mathbb{E}_{S_{k,r},\overline{W}^{(R)},\hat{\overline{W}}^{(R,r)}_{k,1}}\Bigg[\frac{1}{T}\sum_{t \in [T]}
        \mathbf{1}\left\{\left|\log\frac{C_1(\overline{W}^{(R)},t,\vec{X}_{k,r})}{C_2({\overline{W}}^{(R)},t,\vec{X}_{k,r})}-\log\frac{C_1(\hat{\overline{W}}^{(R,r)}_{k,1},t,\vec{X}_{k,r})}{C_2(\hat{\overline{W}}_{k,1}^{(R,r)},t,\vec{X}_{k,r})}\right|>\frac{\theta}{4}\right\}\nonumber
\end{align}

this implies that

\begin{align}
&D^{(1)}_{k,r,t-1:t-\rho}\coloneqq\left|\log\frac{C_1(\overline{W}^{(R)},t,\vec{X}_{k,r})}{C_2({\overline{W}}^{(R)},t,\vec{X}_{k,r})}-\log\frac{C_1(\hat{\overline{W}}^{(R,r)}_{k,1},t,\vec{X}_{k,r})}{C_2(\hat{\overline{W}}_{k,1}^{(R,r)},t,\vec{X}_{k,r})}\right|\nonumber\\
        &\leq\Bigg|\left\langle \mathcal{E}\left(x_{k,r,t}\right),\sum_{j=1}^{\rho}\left(g_{\overline{M}^{(R)}_{4}}\left(\vec{x}_{k,r}\right)\right)_j\!\!\!\cdot\overline{M}^{(R)}_{3}\cdot\mathcal{E}\left(x_{k,r,t-j}\right)\right\rangle\nonumber\\
        &\qquad -\max_{x'_{k,r,t}\neq x_{k,r,t}}\left[\left\langle \mathcal{E}\left(x'_{k,r,t}\right),\sum_{j=1}^{\rho}\left(g_{\overline{M}^{(R)}_{4}}\left(\vec{x}_{k,r}\right)\right)_j\!\!\!\cdot\overline{M}^{(R)}_{k,3}\cdot\mathcal{E}\left(x_{k,r,t-j}\right)\right\rangle\right]\nonumber\\
        &\qquad-\left\langle \mathcal{E}\left(x_{k,r,t}\right),\sum_{j=1}^{\rho}\left(g_{\hat{\overline{M}}^{(R,r)}_{k,4}}\left(\vec{x}_{k,r}\right)\right)_j\!\!\!\cdot\overline{M}^{(R)}_{3}\cdot\mathcal{E}\left(x_{k,r,t-j}\right)\right\rangle\nonumber\\
        &\qquad +\max_{x'_{k,r,t}\neq x_{k,r,t}}\left[\left\langle \mathcal{E}\left(x'_{k,r,t}\right),\sum_{j=1}^{\rho}\left(g_{\hat{\overline{M}}^{(R,r)}_{k,4}}\left(\vec{x}_{k,r}\right)\right)_j\!\!\!\cdot\overline{M}^{(R)}_{3}\cdot\mathcal{E}\left(x_{k,r,t-j}\right)\right\rangle\right]\Bigg|\nonumber\\
        &\leq\Bigg|\left\langle \mathcal{E}\left(x_{k,r,t}\right),\sum_{j=1}^{\rho}\left(g_{\overline{M}^{(R)}_{4}}\left(\vec{x}_{k,r}\right)-g_{\hat{\overline{M}}^{(R,r)}_{k,4}}\left(\vec{x}_{k,r}\right)\right)_j\!\!\!\cdot\overline{M}^{(R)}_{3}\cdot\mathcal{E}\left(x_{k,r,t-j}\right)\right\rangle\Bigg|\nonumber\\
        &\qquad+\max_{x'_{k,r,t}\neq x_{k,r,t}}\Bigg|\left[\left\langle \mathcal{E}\left(x'_{k,r,t}\right),\sum_{j=1}^{\rho}\left(g_{\overline{M}^{(R)}_{4}}\left(\vec{x}_{k,r}\right)-g_{\hat{\overline{M}}^{(R,r)}_{k,4}}\left(\vec{x}_{k,r}\right)\right)_j\!\!\!\cdot\overline{M}^{(R)}_{3}\cdot\mathcal{E}\left(x_{k,r,t-j}\right)\right\rangle\right]\Bigg|\nonumber\\
        &\leq 2\max_{\overline{x}_{k,r,t}}\left|\left\langle \mathcal{E}\left(\overline{x}_{k,r,t}\right),\sum_{j=1}^{\rho}\left(g_{\overline{M}^{(R)}_{4}}\left(\vec{x}_{k,r}\right)-g_{\hat{\overline{M}}^{(R,r)}_{k,4}}\left(\vec{x}_{k,r}\right)\right)_j\!\!\!\cdot\overline{M}^{(R)}_{3}\cdot\mathcal{E}\left(x_{k,r,t-j}\right)\right\rangle\right|\label{Difference_softmaxes_loss}
\end{align}

Let define

\begin{align}
\vec{\Delta} g^{(R)}(\vec{x}_{k,r})&=g_{\overline{M}^{(R)}_{4}}\left(\vec{x}_{k,r}\right)-g_{\hat{\overline{M}}^{(R,r)}_{k,4}}\left(\vec{x}_{k,r}\right),\nonumber
\end{align}

thus we have

\begin{align}
    \eqref{Difference_softmaxes_loss}&=2\max_{\overline{x}_{k,r,t}}\left|\left\langle \mathcal{E}\left(\overline{x}_{k,r,t}\right),\sum_{j=1}^{\rho}\left(\vec{\Delta} g^{(R)}(\vec{x}_{k,r})\right)_j\!\!\!\cdot\overline{M}^{(R)}_{3}\cdot\mathcal{E}\left(x_{k,r,t-j}\right)\right\rangle\right|\nonumber
    \end{align}

therefore by recalling $\circled{I}^1$ and $\circled{I}^2$ in equations \eqref{Definition_summarizing_losses_1} and \eqref{Definition_summarizing_losses_2} respectively, we obtain
    
    \begin{align}
    \circled{I}^1&=\frac{1}{T}\sum_{t=1}^{T}\mathbb{P}\left(D^{(1)}_{k,r,t-1:t-\rho}>\frac{\theta}{4}\right)\nonumber\\
    &\leq \frac{1}{T}\sum_{t=1}^{T}\mathbb{P}\left(2\max_{\overline{x}_{k,r,t}}\left|\left\langle \mathcal{E}\left(\overline{x}_{k,r,t}\right),\sum_{j=1}^{\rho}\left(\vec{\Delta} g^{(R)}(\vec{x}_{k,r})\right)_j\!\!\!\cdot\overline{M}^{(R)}_{3}\cdot\mathcal{E}\left(x_{k,r,t-j}\right)\right\rangle\right|\geq\frac{\theta}{4}\right)\label{boudning_probability_loss_1}\\
    &=\frac{1}{T}\sum_{t=1}^{T}\mathbb{P}\left(\max_{\mathcal{E}\left(\overline{x}_{k,r,t}\right)}\left|\left\langle \mathcal{E}\left(\overline{x}_{k,r,t}\right),\sum_{j=1}^{\rho}\left(\vec{\Delta} g^{(R)}(\vec{x}_{k,r})\right)_j\!\!\!\cdot\overline{M}^{(R)}_{3}\cdot\mathcal{E}\left(x_{k,r,t-j}\right)\right\rangle\right|\geq\frac{\theta}{8}\right)\nonumber\\
    &\leq\frac{1}{T}\sum_{t=1}^{T}\sum_{\mathcal{E}\left(\overline{x}_{k,r,t}\right)}\mathbb{P}\left(\left|\left\langle \mathcal{E}\left(\overline{x}_{k,r,t}\right),\sum_{j=1}^{\rho}\left(\vec{\Delta} g^{(R)}(\vec{x}_{k,r})\right)_j\!\!\!\cdot\overline{M}^{(R)}_{3}\cdot\mathcal{E}\left(x_{k,r,t-j}\right)\right\rangle\right|\geq\frac{\theta}{8}\right)\label{boudning_probability_loss_2}\\
    &=\frac{1}{T}\sum_{t=1}^{T}\sum_{\mathcal{E}\left(\overline{x}_{k,r,t}\right)}\mathbb{P}\left(\left|\left\langle{\overline{M}_{3}^{(R)}}^{\top}\cdot \mathcal{E}\left(\overline{x}_{k,r,t}\right),\sum_{j=1}^{\rho}\left(\vec{\Delta} g^{(R)}(\vec{x}_{k,r})\right)_j\cdot\mathcal{E}\left(x_{k,r,t-j}\right)\right\rangle\right|\geq\frac{\theta}{8}\right)\\
    &=\frac{1}{T}\sum_{t=1}^{T}\sum_{\ell\in[d]}\mathbb{P}\left(\left|\left\langle V\left[{\overline{M}_{3}^{(R)}}^{\top},\ell\right],\sum_{j=1}^{\rho}\left(\vec{\Delta} g^{(R)}_1(\vec{x}_{k,r})\right)_j\cdot\mathcal{E}\left(x_{k,r,t-j}\right)\right\rangle\right|\geq\frac{\theta}{8}\right)\label{boudning_probability_loss_3}
    \end{align}

where 
\begin{itemize}
    \item Equation \eqref{boudning_probability_loss_1} follows from derived upper bound in \eqref{Difference_softmaxes_loss} on $D^{(1)}_{k,r,t-1:t-\rho}$.

    \item Equation \eqref{boudning_probability_loss_2} is due to union bound.
\end{itemize}

For $i\in\{3,4\}$ we have $V\left[\mathsf{A_{1}}\mathsf{D}^\ast M^{(r)}_{k,i},\ell\right]=\mathsf{A}_i V\left[\mathsf{D}^\ast M^{(r)}_{k,i},\ell\right]$ and $\sum_{\ell\in[d]}\left\|V\left[M^{(r)}_{k,i},\ell\right]\right\|^2= \left\|M^{(r)}_{k,i}\right\|^2$ where $\mathsf{A_{i}}\in\mathbb{R}^{m_i\times d}$ whose elements are distributed in an i.i.d. manner according to $\mathcal{N}\left(0,1/{m_i}\right)$. For any $\ell\in[d], r\in[d], i\in\{3,4\}$ and $k\in[K]$ define the event $\mathcal{J}^{(r)}_{\ell,k}$ as follwoing where $c_{i,1}\in\mathbb{R}_+$, i.e.

\begin{align}
    \mathcal{J}^{(r)}_{k,\ell,i} \coloneqq \mathbbm{1}\left\{\left\|\mathsf{A}_iV\left[\left[\mathsf{D}^\ast M^{(r)}_{k,i}\right]^\top,\ell\right]\right\|\geq c_{i,1}\left\|V\left[\left[M^{(r)}_{k,i}\right]^\top,\ell\right]\right\|\right\}.
\end{align}

For $i\in\{3,4\}$, define
\begin{align}
Q^{(r)}_{k,i} &\coloneqq \left[M^{(r)}_{k,i}\right]^\top,
\nonumber
\end{align}

To define $\hat{M}^{(r)}_{k,i}$, we define $V\left[\hat{M}^{(r)}_{k,i},\ell\right]$ for every $\ell\in[d]$, for $i\in\{3,4\}$ let consider
\begin{align}
    &[d]_{{M}^{(r)}_{k,i}} = \left\{\ell\in[d]:\left\|V\left[{Q}^{(r)}_{k,i},\ell\right]\right\| \geq\frac{\theta}{8\sqrt{\rho}C}\right\}\nonumber
\end{align} 

First, let
\begin{align}
    U^{(r)}_{k,\ell,i} = \begin{cases}
        V\left[\mathsf{A}_i\left[\mathsf{D}^\ast M^{(r)}_{k,i}\right]^\top,\ell\right], & \text{ if }  \left\| \mathsf{A}_i \, V\left[\left[\mathsf{D}^\ast M^{(r)}_{k,i}\right]^\top,\ell\right]\right\| \leq c_{i,1} \left\|V\left[Q^{(r)}_{k,i},\ell\right]\right\|, \\
        \mathbf{0}_{m_i}, & \text{ otherwise.} 
    \end{cases}
\end{align}
Then, define
\begin{align}
    \hat{M}^{(r)}_{k,\ell,i} = \begin{cases}
         \mathbf{0}_{m_i}&, \text{ if } \ell \notin [d]_{M^{(r)}_{k,i}}, \\
         U^{(r)}_{k,\ell,i} + N^{(r)}_{k,\ell,i},& \text{ otherwise.}
     \end{cases},
\end{align}
where $N^{(r)}_{k,\ell,i} \sim \text{Uniform}\left(\mathcal{B}_{m_i}(0,\nu_i)\right)$. Finally, let
\begin{align}
     V\left[\hat{M}^{(r)}_{k,i},\ell\right] &= \mathsf{A}_1^\top \hat{M}^{(r)}_{k,\ell,i},\\
     V\left[\hat{\overline{M}}^{(R,r)}_{k,i},\ell\right] &= V\left[\overline{M}_{k,i}^{(R),\ast},\ell\right]+\frac{1}{K}\mathsf{A}_i^\top \hat{M}^{(r)}_{k,\ell,i}
\end{align}

Define the reconstructed matrices $\hat{Q}^{(r)}_{k,i}$ by
\begin{align}
V[\hat{Q}^{(r)}_{k,i},\ell]
=
\mathsf{A}_i^\top
\hat{M}^{(r)}_{k,\ell,i},
\qquad
i\in\{3,4\}.
\nonumber
\end{align}
Finally, set
\begin{align}
\hat{M}^{(r)}_{k,i}
&\coloneqq
\left[\hat{Q}^{(r)}_{k,i}\right]^{\top}.
\end{align}

Note that for all $\ell\in[d]$ and $t\in[T]$ we have 
    \begin{align}
    &\left|\left\langle V\left[{\overline{M}_{3}^{(R)}}^{\top},\ell\right],\sum_{j=1}^{\rho}\left(\vec{\Delta} g^{(R)}(\vec{x}_{k,r})\right)_j\cdot\mathcal{E}\left(x_{k,r,t-j}\right)\right\rangle\right|\nonumber\\
    &\quad\leq  \left\|V\left[{\overline{M}_{3}^{(R)}}^{\top},\ell\right]\right\|_{\infty} \times \sum_{j=1}^{\rho}\left|\left(\vec{\Delta} g^{(R)}(\vec{x}_{k,r})\right)_j\right|\label{Diffference_lipchittc_querry_loss_3}\\
    &\quad =\left\|V\left[{\overline{M}_{3}^{(R)}}^{\top},\ell\right]\right\|_{\infty} \times \left\|\vec{\Delta} g^{(R)}(\vec{x}_{k,r})\right\|_{1}\nonumber\\
    &\quad\leq\left\|V\left[{\overline{M}_{3}^{(R)}}^{\top},\ell\right]\right\|_{\infty}\times\sum_{j=1}^{\rho}\left| g_{\overline{M}^{(R)}_{4}}\left(\vec{x}_{k,r}\right)-g_{\hat{\overline{M}}^{(R,r)}_{k,4}}\left(\vec{x}_{k,r}\right)\right|\nonumber\\    &\quad\leq\left\|V\left[{\overline{M}_{3}^{(R)}}^{\top},\ell\right]\right\|_{\infty} \times \frac{1}{\sqrt{\rho}}\sum_{j=1}^{\rho}\left|\left\langle \left(\overline{M}^{(R)}_{4}-\hat{\overline{M}}^{(R,r)}_{k,4}\right)\cdot\mathcal{E}(x_{k,r,t-1}),\mathcal{E}({x}_{k,r,t-j})\right\rangle\right|\label{Diffference_lipchittc_querry_loss_6}\\
    &\quad\leq C\sqrt{\rho} \times\max_{\substack{j\in[\rho]\\x_{k,r,t-1} \\ x_{k,r,t-j}}}\left|\left\langle \left(\overline{M}^{(R)}_{4}-\hat{\overline{M}}^{(R,r)}_{k,4}\right)\cdot\mathcal{E}(x_{k,r,t-1}),\mathcal{E}({x}_{k,r,t-j})\right\rangle\right|\nonumber
    \end{align}
    where 

\begin{itemize}
     \item Equation~\eqref{Diffference_lipchittc_querry_loss_3} follows from Hölder's inequality:
\begin{align}
&\left|\left\langle V\left[\left[{\overline{M}_{3}^{(R)}}\right]^\top,\ell\right],\sum_{j=1}^{\rho}\left(\vec{\Delta} g^{(R)}(\vec{x}_{k,r})\right)_j\cdot\mathcal{E}\left(x_{k,r,t-j}\right)\right\rangle\right|\nonumber\\
&\qquad\leq  \left\|V\left[\left[{\overline{M}_{3}^{(R)}}\right]^\top,\ell\right]\right\|_{\infty} \times \sum_{j=1}^{\rho}\left|\left(\vec{\Delta} g^{(R)}(\vec{x}_{k,r})\right)_j\right|.\nonumber
\end{align}

 \item Equation~\eqref{Diffference_lipchittc_querry_loss_6} follows from the 1-Lipschitz continuity of the softmax function with respect to the $\ell_1$ norm, i.e.,
\begin{align}
    \|\sigma(\vec{x})-\sigma(\vec{y})\|_1
    \le
    \|\vec{x}-\vec{y}\|_1,
    \qquad
    \forall\,\vec{x},\vec{y}\in\mathbb{R}^\rho.\nonumber
\end{align}

\item The last inequality follows from the assumption that $\left\|\overline{M}^{(R)}_3\right\|\leq C$.
\end{itemize}

This implies that
\begin{align}
    \eqref{boudning_probability_loss_3}
    &\leq\sum_{\ell\in[d]}\mathbb{P}\!\left(\max_{\substack{j\in[\rho]\\x_{k,r,t-1}\\x_{k,r,t-j}}}
    \left|\left\langle
    \left(\overline{M}^{(R)}_{4}-\hat{\overline{M}}^{(R,r)}_{k,4}\right)\cdot\mathcal{E}(x_{k,r,t-1}), \mathcal{E}(x_{k,r,t-j})
    \right\rangle\right|\geq\frac{\theta}{8C\sqrt{\rho}}\right)\label{Union_bound_type_bound_1}\\
    &\leq d\times\sum_{j=1}^{\rho}\sum_{x',x''}\mathbb{P}\!\left(\left|\left\langle\left(\overline{M}^{(R)}_{4}-\hat{\overline{M}}^{(R,r)}_{k,4}\right)\cdot\mathcal{E}(x'), \mathcal{E}(x'')
    \right\rangle\right|\geq\frac{\theta}{8C\sqrt{\rho}}\right)\nonumber\\
    &= {\rho}d\times\!\!\sum_{\ell''=1}^{d}\sum_{\ell'=1}^{d}\mathbb{P}\left(\left|\left\langle V\left[\left(\overline{M}^{(R)}_{4}-\hat{\overline{M}}^{(R,r)}_{k,4}\right),\ell'\right],V\left[\mathbf{I},\ell''\right]\right|\geq\frac{\theta}{8C\sqrt{\rho}}\right\rangle\right)\label{Union_bound_type_bound_2}\\
    &={\rho}d\times\sum_{\ell''=1}^{d}\sum_{\ell'\in[d]_{{M}^{(r)}_{k,4}}}\mathbb{P}\left(\left|\left\langle V\left[\left(\overline{M}^{(R)}_{4}-\hat{\overline{M}}^{(R,r)}_{k,4}\right),\ell'\right],V\left[\mathbf{I},\ell''\right]\right\rangle\right|\geq\frac{\theta}{8C\sqrt{\rho}}\right)  \label{Union_bound_type_bound_3}
\end{align}

where

\begin{itemize}
    \item Equation \eqref{Union_bound_type_bound_1} last inequality follows from the union bound.

 \item Equation~\eqref{Union_bound_type_bound_2} follows by rewriting the notation according to the definitions in~\eqref{Denote_row_column_def}.

\item Equation \eqref{Union_bound_type_bound_3} holds because for any $(\ell',\ell'')$ with $\ell''\in[d],\ell'\notin [d]_{{M}^{(r)}_{k,4}}$, we have 

\begin{align}
    \mathbb{P}\left(\left|V\left[\left(\overline{M}^{(R)}_{4}-\hat{\overline{M}}^{(R,r)}_{k,4}\right),\ell'\right],V\left[\mathbf{I},\ell''\right]\right|\geq\frac{\theta}{8C\sqrt{\rho}}\right)=0.\nonumber
\end{align}

\end{itemize}

Hence by rewriting the equation \eqref{Union_bound_type_bound_3}, we have

\begin{align}
\eqref{Union_bound_type_bound_3}&={\rho}d\times\sum_{\ell''=1}^{d}\sum_{\ell'\in[d]_{{M}^{(r)}_{k,4}}}\mathbb{P}\left(\left|\left\langle V\left[\left(\overline{M}^{(R)}_{4}-\hat{\overline{M}}^{(R,r)}_{k,4}\right),\ell'\right],V\left[\mathbf{I},\ell''\right]\right\rangle\right|\geq\frac{\theta}{8C\sqrt{\rho}}\right)\nonumber\\  
&\leq\rho d\times\sum_{\ell''=1}^{d}\sum_{\ell'\in[d]_{{M}^{(r)}_{k,4}}}\mathbb{P}\left(\left|\left\langle V\left[\overline{M}^{(R)}_{4}-\left({\overline{M}}_{k,4}^{(R,\ast)}+\frac{1}{K}\mathsf{D}^\ast M^{(r)}_{k,4}\right),\ell'\right],V\left[\mathbf{I},\ell''\right]\right\rangle\right|\geq\frac{\theta}{24C\sqrt{\rho}}\right)\nonumber\\
&\quad+\rho d\times\sum_{\ell''=1}^{d}\sum_{\ell'\in[d]_{{M}^{(r)}_{k,1}}}\mathbb{P}\left(\frac{1}{K}\left|\left\langle V\left[\left(\mathsf{D}^\ast M^{(r)}_{k,4}-\mathsf{A}_4^\top \mathsf{A}_4\mathsf{D}^\ast M^{(r)}_{k,4}\right),\ell'\right],V\left[\mathbf{I},\ell''\right]\right\rangle\right|\geq\frac{\theta}{24C\sqrt{\rho}}\right)\nonumber\\
&\quad+\rho d\times\sum_{\ell''=1}^{d}\sum_{\ell'\in[d]_{{M}^{(r)}_{k,1}}}\mathbb{P}\left(\frac{1}{K}\left|\left\langle V\left[\left(\mathsf{A}_4^\top \mathsf{A}_4\mathsf{D}^\ast M^{(r)}_{k,4}-\hat{M}^{(r)}_{k,4}\right),\ell'\right],V\left[\mathbf{I},\ell''\right]\right\rangle\right|\geq\frac{\theta}{24C\sqrt{\rho}}\right)\label{limited_union}\\
 &=\circled{T}_{1,1}+\circled{T}_{1,2}+\circled{T}_{1,3},\nonumber
\end{align}

where Equation~\eqref{limited_union} follows from the elementary inequality
\begin{align}
    \mathbb{P}\!\left(\sum_{i=1}^{3}X_i \geq a\right)
    \leq
    \sum_{i=1}^{3}\mathbb{P}\!\left(X_i \geq \frac{a}{3}\right).\label{union_prob_randomvariable}
\end{align}

\begin{itemize}
    \item \textbf{Bounding $\circled{T}_{1,1}$:}We show that, under Condition~\ref{eq:svmKcond1}, for every pair $(\ell',\ell'')$ satisfying $\ell'\in[d]$ and $\ell''\in[d]_{M^{(r)}_{k,3}}$, we have

    \begin{align}
        \mathbb{P}\left(\left|\left\langle V\left[\overline{M}^{(R)}_{4}-\left({\overline{M}}_{k,4}^{(R,\ast)}+\frac{1}{K}\mathsf{D}^\ast M^{(r)}_{k,4}\right),\ell'\right],V\left[\mathbf{I},\ell''\right]\right\rangle\right|<\frac{\theta}{24C\sqrt{\rho}}\right)=1\nonumber.
    \end{align}

To this end, we show that for any pair $(\ell',\ell'')$,
\begin{align}
    &\left|\left\langle V\left[\overline{M}^{(R)}_{4}-\left({\overline{M}}_{k,4}^{(R,\ast)}+\frac{1}{K}\mathsf{D}^\ast M^{(r)}_{k,4}\right),\ell'\right],V\left[\mathbf{I},\ell''\right]\right\rangle\right|\nonumber\\
    &\leq \left\|V\left[\overline{M}^{(R)}_{4}-\left({\overline{M}}_{k,4}^{(R,\ast)}+\frac{1}{K}\mathsf{D}^\ast M^{(r)}_{k,4}\right),\ell'\right]\right\|\times\left\|V\left[\mathbf{I},\ell''\right]\right\|\label{Inequality_deterministic_bound_1}\\
    &\leq \left\|V\left[\overline{M}^{(R)}_{1}-\left({\overline{M}}_{k,4}^{(R,\ast)}+\frac{1}{K}\mathsf{D}^\ast M^{(r)}_{k,4}\right),\ell'\right]\right\|\label{Inequality_deterministic_bound_2}\\
    &\leq \left\|\overline{M}^{(R)}_{4}-\left({\overline{M}}_{k,4}^{(R,\ast)}+\frac{1}{K}\mathsf{D}^\ast M^{(r)}_{k,4}\right)\right\|\nonumber
    \end{align}
where 

\begin{itemize}
\item Equation~\eqref{Inequality_deterministic_bound_1} follows from the Cauchy--Schwarz inequality.

\item Equation~\eqref{Inequality_deterministic_bound_2} follows from the fact that, for any $\ell\in[d]$,
\[
\left\|V\left[\mathbf{I},\ell\right]\right\|
\leq 1.
\]
    \end{itemize}

\item[1.] \textbf{If Condition~1 holds:} By repeating the same steps that led to Equation~\eqref{eq:Cond1FinalStep}, we obtain

\begin{align}
    &\left\|\overline{M}^{(R)}_{4}-\left({\overline{M}}_{k,4}^{(R,\ast)}+\frac{1}{K}\mathsf{D}^\ast M^{(r)}_{k,4}\right)\right\|\leq
\frac{
\alpha C^2
\left(
q_{e,b}^{R-r}
-
\big((1+\beta)q_{e,b}^2\big)^{R-r}
\right)
}{
K^2
\left(
q_{e,b}
-
(1+\beta)q_{e,b}^2
\right)
}.\nonumber
\end{align}

This bound holds for any pair $(\ell',\ell'')$, which implies

    \begin{align}
        \left|\left\langle V\left[\overline{M}^{(R)}_{4}-\left({\overline{M}}_{k,4}^{(R,\ast)}+\frac{1}{K}\mathsf{D}^\ast M^{(r)}_{k,4}\right),\ell'\right],V\left[\mathbf{I},\ell''\right]\right\rangle\right|
        &\leq\left\|\overline{M}^{(R)}_{4}-\left({\overline{M}}_{k,4}^{(R,\ast)}+\frac{1}{K}\mathsf{D}^\ast M^{(r)}_{k,4}\right)\right\|\nonumber\\
        &\leq 
\frac{
\alpha C^2
\left(
q_{e,b}^{R-r}
-
\big((1+\beta)q_{e,b}^2\big)^{R-r}
\right)
}{
K^2
\left(
q_{e,b}
-
(1+\beta)q_{e,b}^2
\right)
},\nonumber
    \end{align}
    
where, under Condition~1, namely,
\begin{align}
\max_{r\in[R-1]}
    \left[\frac{
    48\alpha\sqrt{\rho}\,C^3
    \left(
    q_{e,b}^{r}
    -
    \big((1+\beta)q_{e,b}^2\big)^{r}
    \right)
    }{
    \theta
    \left(
    q_{e,b}
    -
    (1+\beta)q_{e,b}^2
    \right)
    }\right]
< K^2,
\nonumber
\end{align}
this term is bounded above by $\frac{\theta}{48C\sqrt{\rho}}$.

\item[2.] \textbf{If Condition~2 holds:}By repeating the same steps as in
Equations~\eqref{eq:Cond2FinalStep_1}--\eqref{eq:Cond2FinalStep}, we obtain
\begin{align}
    &\left\|\overline{M}^{(R)}_{k,4}-\left({\overline{M}}_{k,4}^{(R,\ast)}+\frac{1}{K}\mathsf{D}^\ast M^{(r)}_{k,4}\right)\right\|
    \leq
    \frac{
\alpha C^2
\left(
q_{e,b}^{R-r}
-
(1-\alpha\nu)^{R-r}
\right)
}{
K^2
\left(
q_{e,b}
-
(1-\alpha\nu)
\right)
},
    \nonumber
\end{align}

consequently,

\begin{align}
    &\left|\left\langle V\left[\overline{M}^{(R)}_{4}-\left({\overline{M}}_{k,4}^{(R,\ast)}+\frac{1}{K}\mathsf{D}^\ast M^{(r)}_{k,4}\right),\ell'\right],V\left[\mathbf{I},\ell''\right]\right\rangle\right|\leq
    \frac{
\alpha C^2
\left(
q_{e,b}^{R-r}
-
(1-\alpha\nu)^{R-r}
\right)
}{
K^2
\left(
q_{e,b}
-
(1-\alpha\nu)
\right)
},
    \nonumber
\end{align}

Under Condition~2, namely,

\begin{align}
    \max_{r\in[R-1]}\left[\frac{48
\alpha\sqrt{\rho} C^3
\left(
q_{e,b}^{R-r}
-
(1-\alpha\nu)^{R-r}
\right)
}{
\theta\left(q_{e,b}
-
\left(1-\alpha\nu\right)\right)
}\right]
    < K^2,
    \nonumber
\end{align}

the above term is bounded above by $\frac{\theta}{48C\sqrt{\rho}}$, which completes this part of the proof.

    \item\textbf{Bounding $\circled{T}_{1,2}$:}

By rewriting the term $\circled{T}_{1,2}$ we have,

\begin{align}
    &\!\!\!\!\!\!\!\!\!\!\!\!\!\!\!\!\!\!\!\!\!\rho d\times\sum_{\ell''=1}^{d}\sum_{\ell'\in[d]_{{M}^{(r)}_{k,1}}}\!\!\!\!\!\mathbb{P}\left(\frac{1}{K}\left|\left\langle V\left[\left(\mathsf{D}^\ast M^{(r)}_{k,4}-\mathsf{A}_4^\top \mathsf{A}_4\mathsf{D}^\ast M^{(r)}_{k,4}\right),\ell'\right],V\left[\mathbf{I},\ell''\right]\right\rangle\right|\geq\frac{\theta}{24C\sqrt{\rho}}\right)\nonumber\\
    &\!\!\!\!\!\!\!\!\!\!\!\!\!\!\!\!\!\!\!\!\!=\rho d\times\sum_{\ell''=1}^{d}\sum_{\ell'\in[d]_{{M}^{(r)}_{k,1}}}\!\!\!\!\!\mathbb{P}\left(\left|\left\langle V\left[\left(\mathsf{D}^\ast M^{(r)}_{k,4}-\mathsf{A}_4^\top \mathsf{A}_4\mathsf{D}^\ast M^{(r)}_{k,4}\right),\ell'\right],V\left[\mathbf{I},\ell''\right]\right\rangle\right|\geq\frac{K\theta}{24C\sqrt{\rho}}\right)\nonumber\\
    &\!\!\!\!\!\!\!\!\!\!\!\!\!\!\!\!\!\!\!\!\!=\rho d\times\sum_{\ell''=1}^{d}\sum_{\ell'\in[d]_{{M}^{(r)}_{k,4}}}\!\!\!\!\!\mathbb{P}\left(\left|\left\langle V\left[\mathsf{D}^\ast M^{(r)}_{k,4},\ell'\right],V\left[\mathbf{I},\ell''\right]\right\rangle-\left\langle AV\left[\mathsf{D}^\ast M^{(r)}_{k,4},\ell'\right],AV\left[\mathbf{I},\ell''\right]\right\rangle\right|\geq\frac{K\theta}{24C\sqrt{\rho}}\right)\label{semifina_term}
    \end{align}

where last inequality follows ineaulity 

\begin{align}
    \mathbb{P}\left(\sum_{i}X_i\geq\sum_{i} a_i\right)\leq \sum_{i=1}\mathbb{P}\left(X_i\geq a_i\right)\nonumber.
\end{align}

 Note that $\left\|\mathsf{D}^\ast M^{(r)}_{k,i}\right\|\leq C q^{R-r}_{e,b}, i\in[3]$ due to the fact that the spectral norms of all $\mathsf{D}^\ast_r$ are bounded by $q_{e,b}$ under Assumption~2. Then, by applying \cite[Lemma~8, Part~2]{gronlund2020near}, for any pair $(\ell',\ell'')$ and any $j\in[\rho]$, we have
\begin{align}
&\mathbb{P}\Bigg(
\Big|
\left\langle V\left[\mathsf{D}^\ast M^{(r)}_{k,4},\ell'\right],V\left[\mathbf{I},\ell''\right]\right\rangle-
\left\langle \mathsf{A}_4V\left[\mathsf{D}^\ast M^{(r)}_{k,4},\ell'\right],\mathsf{A}_4V\left[\mathbf{I},\ell''\right]\right\rangle
\geq \frac{K\theta}{24C\sqrt{\rho}}
\Bigg)\nonumber \\
&\qquad\leq 4\exp\!\left(
-\frac{m_1}{7}
\left(\frac{K\theta}{24 C^2q^{R-r}_{e,b}\sqrt{\rho}\,}\right)^2
\right)
\nonumber \\
&\qquad\leq \mathcal{O}\!\left(\frac{1}{\rho d^3NKT}\right).
\nonumber
\end{align}

which implies that
\begin{align}
\eqref{semifina_term} \leq \mathcal{O}\!\left(\frac{1}{NKT}\right),
\nonumber
\end{align}

and consequently,
\begin{align}
\circled{T}_{1,2} \leq \mathcal{O}\!\left(\frac{1}{NKT}\right).
\end{align}

    \item \textbf{Bounding $\circled{T}_{1,3}$:}
For any pair $(\ell',\ell'')$ such that $\ell'\in[d]_{{M}^{(r)}_{k,4}}$ and $\ell''\in[d]$, we have

\begin{align}
    &\!\!\!\!\!\!\!\!\!\!\!\!\!\!\!\!\!\!\!\!\mathbb{P}\left(\frac{1}{K}\left|\left\langle V\left[\left(\mathsf{A}_1^\top \mathsf{A}_1\mathsf{D}^\ast M^{(r)}_{k,4}-\hat{M}^{(r)}_{k,4}\right),\ell'\right],V\left[\mathbf{I},\ell''\right]\right\rangle\right|\geq\frac{\theta}{24C\sqrt{\rho}}\right)\nonumber\\
    &\!\!\!\!\!\!\!\!\!\!\!\!\!\!\!\!\!\!\!\!=\mathbb{P}\left(\left|\left\langle V\left[\mathsf{A}_4\mathsf{D}^\ast M^{(r)}_{k,4},\ell'\right]-\hat{M}^{(r)}_{k,\ell',4},\mathsf{A}_4V\left[\mathbf{I},\ell''\right]\right\rangle\right|\geq\frac{K\theta}{24C\sqrt{\rho}}\right)\nonumber\\
    &\!\!\!\!\!\!\!\!\!\!\!\!\!\!\!\!\!\!\!\!\leq \mathbb{P}_{\mathsf{A}_4}\left(\left\|\mathsf{A}_4V\left[\mathbf{I},\ell''\right]\right\|\geq c_{4,2}\right) +\mathbb{P}\left(\left\| \mathsf{A}_4 \, V\left[\mathsf{D}^\ast M^{(r)}_{k,4},\ell'\right]\right\| \geq c_{4,1} \left\|V\left[M^{(r)}_{k,4},\ell'\right]\right\|\right) \nonumber\\
    &\!\!\!\!\!\!\!\!\!\!\!\!\!\!\!\!\!\!\!\!\quad+\mathbb{E}_{\hat{M}^{(r)}_{k,\ell',4}\sim\unif\left(\mathcal{B}_{m_4}(\cdot,\nu_1)\right)}\mathbb{E}_{\mathsf{A}_4}\left[\mathbb{P}\left(\left|\left\langle V\left[\mathsf{A}_4\mathsf{D}^\ast M^{(r)}_{k,4},\ell'\right]-\hat{M}^{(r)}_{k,\ell',4},\mathsf{A}_4V\left[\mathbf{I},\ell''\right]\right\rangle\right|\geq\frac{K\theta}{24C\sqrt{\rho}}\Big| \text{Cond}_1,\text{Cond}_2\right)\right]\nonumber\\
    &\!\!\!\!\!\!\!\!\!\!\!\!\!\!\!\!\!\!\!\!=\mathbb{P}_{\mathsf{A}_1}\left(\left\|\mathsf{A}_4V\left[\mathbf{I},\ell''\right]\right\|\geq c_{4,2}\right) +\mathbb{P}\left(\left\| \mathsf{A}_4 \, V\left[\mathsf{D}^\ast M^{(r)}_{k,4},\ell'\right]\right\| \geq c_{4,1} \left\|V\left[M^{(r)}_{k,4},\ell'\right]\right\|\right)\nonumber\\
    &\!\!\!\!\!\!\!\!\!\!\!\!\!\!\!\!\!\!\!\!\quad+\mathbb{E}_{U\sim\unif\left(\mathcal{B}_{m_4}(0,\nu_4)\right)}\mathbb{E}_{\mathsf{A}_4}\left[\mathbb{P}\left(\left|\left\langle U,\mathsf{A}_4V\left[\mathbf{I},\ell''\right]\right\rangle\right|\geq\frac{K\theta}{24C\sqrt{\rho}}\Big| \text{Cond}_1\right)\right]\nonumber\\
    &\!\!\!\!\!\!\!\!\!\!\!\!\!\!\!\!\!\!\!\!\leq e^{-0.21m_4\left(c^2_{4,2}-1-2\log(c_{4,2})\right)}+e^{-m_4\left(q^{2(r-R)}_{e,b}c^2_{4,1}-1-2\log\left(q^{(R-r)}_{e,b}c_{4,1}\right)\right)}+\frac{\sqrt{2(m_4+1)}}{\sqrt{\pi}}e^{-\left(\frac{m_4+1}{2}\right)\left(\frac{K\theta}{24\sqrt{\rho} c_{4,2}\nu_4 C}\right)^2}\label{Simlification_thitrm_term_3}\\
    &\!\!\!\!\!\!\!\!\!\!\!\!\!\!\!\!\!\!\!\!\leq\mathcal{O}\left(\frac{1}{\rho NKTd^3}\right)\nonumber
\end{align}

where
\begin{itemize}
    \item $\text{Cond}_1$ and $\text{Cond}_2$ denote the events
    \begin{align}
        \left\|\mathsf{A}_4V\left[\mathbf{I},\ell''\right]\right\|\leq c_{4,2}
    \quad\text{and}\quad
    \left\| \mathsf{A}_4 \, V\left[\mathsf{D}^\ast M^{(r)}_{k,4},\ell'\right]\right\| \leq c_{4,1} \left\|V\left[M^{(r)}_{k,4},\ell'\right]\right\|,\nonumber
    \end{align}
        respectively.

    \item Equation~\eqref{Simlification_thitrm_term_3} follows from
    \cite[Lemma~8, Part~2]{gronlund2020near}.
\end{itemize}

Therefore, this implies that
\begin{align}
    \circled{I}^1,\circled{I}^2
    &\leq
    \circled{T}_{1,1}
    +\circled{T}_{1,2}
    +\circled{T}_{1,3}
    \nonumber\\
    &\leq
    \mathcal{O}\!\left(\frac{1}{NKT}\right).\nonumber
\end{align}

This completes the proof of the bounds on $\circled{I}^1$ and $\circled{I}^2$ in \eqref{Definition_summarizing_losses_1} and \eqref{Definition_summarizing_losses_2}.



\item\textbf{Bounding $\circled{II}^1$ and $\circled{II}^2$:}To bound the terms $\circled{II}^1$ and $\circled{II}^2$, we have

\begin{align}
    \circled{II}^1=\mathbb{E}_{S_{k,r},\overline{W}^{(R)}}\Bigg[\frac{1}{T}\sum_{t \in [T]}
        \mathbf{1}\left\{\left|\log\frac{C_1(\hat{\overline{W}}_{k,1}^{(R,r)},t,\vec{X}_{k,r})}{C_2(\hat{\overline{W}}_{k,1}^{(R,r)},t,\vec{X}_{k,r})}-\log\frac{C_1(\hat{\overline{W}}^{(R,r)}_{k},t,\vec{X}_{k,r})}{C_2(\hat{\overline{W}}_{k}^{(R,r)},t,\vec{X}_{k,r})}\right|>\frac{\theta}{4}\right\},\nonumber
\end{align}

By considering
\begin{align}
    &D^{(2)}_{k,r,t-1:t-\rho}
    \coloneqq
    \left|
    \log\frac{C_1(\hat{\overline{W}}_{k,1}^{(R,r)},t,\vec{X}_{k,r})}
    {C_2(\hat{\overline{W}}_{k,1}^{(R,r)},t,\vec{X}_{k,r})}
    -
    \log\frac{C_1(\hat{\overline{W}}^{(R,r)}_{k},t,\vec{X}_{k,r})}
    {C_2(\hat{\overline{W}}_{k}^{(R,r)},t,\vec{X}_{k,r})}
    \right|,
    \nonumber
\end{align}
and following the same argument as in Equation~\eqref{Difference_softmaxes_loss}, we have
\begin{align}
    &D^{(2)}_{k,r,t-1:t-\rho}
    \leq
    2\max_{\overline{x}_{k,r,t}}
    \left|
    \left\langle
    \mathcal{E}\left(\overline{x}_{k,r,t}\right),
    \sum_{j=1}^{\rho}
    \left(g_{\hat{\overline{M}}^{(R)}_{k,4}}
    \left(\vec{x}_{k,r}\right)\right)_j
    \left(\overline{M}^{(R)}_{3}
    -
    \hat{\overline{M}}^{(R,r)}_{k,3}\right)
    \mathcal{E}\left(x_{k,r,t-j}\right)
    \right\rangle
    \right|,
    \label{Difference_softmaxes_loss_3}
\end{align}
which implies that
\begin{align}
    \!\!\!\!\!\!\!\!\!\!\!\!\!\!\!\!\!\!\!\!\!\!\!\!\!\!\!\!\circled{II}^1
    &=
    \frac{1}{T}\sum_{t=1}^{T}
    \mathbb{P}\left(D^{(1)}_{k,r,t-1:t-\rho}>\frac{\theta}{4}\right)
    \nonumber\\
    &\!\!\!\!\!\!\!\!\!\!\!\!\!\!\!\!\!\!\!\!\!\!\!\!\!\!\!\!\leq
    \frac{1}{T}\sum_{t=1}^{T}\sum_{\ell\in[d]}
    \mathbb{P}\left(
    \left|
    \left\langle
    V\left[
    \left({\overline{M}_{3}^{(R)}}-{\hat{\overline{M}}}_{k,3}^{(R,r)}\right)^{\top},\ell
    \right],
    \sum_{j=1}^{\rho}
    \left(g_{\hat{\overline{M}}^{(R)}_{k,4}}(\vec{x}_{k,r})\right)_j
    \mathcal{E}(x_{k,r,t-j})
    \right\rangle
    \right|
    \geq \frac{\theta}{8}
    \right)
    \nonumber\\
    &\!\!\!\!\!\!\!\!\!\!\!\!\!\!\!\!\!\!\!\!\!\!\!\!\!\!\!\!=
    \frac{1}{T}\sum_{t=1}^{T}
    \sum_{\ell\in[d]_{{M}^{(r)}_{k,3}}}
    \mathbb{P}\left(
    \left|
    \left\langle
    V\left[
    \left({\overline{M}_{3}^{(R)}}-{\hat{\overline{M}}}_{k,3}^{(R,r)}\right)^{\top},\ell
    \right],
    \sum_{j=1}^{\rho}
    \left(g_{\hat{\overline{M}}^{(R)}_{k,4}}(\vec{x}_{k,r})\right)_j
    \mathcal{E}(x_{k,r,t-j})
    \right\rangle
    \right|
    \geq \frac{\theta}{8}
    \right).\nonumber
\end{align}

where the last equality follows from the fact that, for any
$\ell\notin[d]_{{M}^{(r)}_{k,3}}$,
\begin{align}
    \mathbb{P}\left(
    \left|
    \left\langle
    V\left[
    \left({\overline{M}_{3}^{(R)}}-{\hat{\overline{M}}}_{k,3}^{(R,r)}\right)^{\top},\ell
    \right],
    \sum_{j=1}^{\rho}
    \left(g_{\hat{\overline{M}}^{(R)}_{k,4}}(\vec{x}_{k,r})\right)_j
    \mathcal{E}(x_{k,r,t-j})
    \right\rangle
    \right|
    \geq\frac{\theta}{8}
    \right)
    =0.
\end{align}

This follows because
\begin{align}
&\left|
\left\langle
V\left[
\left({\overline{M}_{3}^{(R)}}-{\hat{\overline{M}}}_{k,3}^{(R,r)}\right)^{\top},\ell
\right],
\sum_{j=1}^{\rho}
\left(g_{\hat{\overline{M}}^{(R)}_{k,4}}(\vec{x}_{k,r})\right)_j
\mathcal{E}(x_{k,r,t-j})
\right\rangle
\right|
\nonumber\\
&\leq
\left\|
V\left[
\left({\overline{M}_{3}^{(R)}}-{\hat{\overline{M}}}_{k,3}^{(R,r)}\right)^{\top},\ell
\right]
\right\|_{\infty}
\left\|
\sum_{j=1}^{\rho}
\left(g_{\hat{\overline{M}}^{(R)}_{k,4}}(\vec{x}_{k,r})\right)_j
\mathcal{E}(x_{k,r,t-j})
\right\|_1
\label{Holder_upperbound_1}\\
&\leq
\left\|
V\left[
\left({\overline{M}_{3}^{(R)}}-{\hat{\overline{M}}}_{k,3}^{(R,r)}\right)^{\top},\ell
\right]
\right\|_{\infty}
\label{Holder_upperbound_2}\\
&\leq\left\|V\left[
\left({\overline{M}_{3}^{(R)}}-{\hat{\overline{M}}}_{k,3}^{(R,r)}\right)^{\top},\ell
\right]\right\|\nonumber\\
&\leq \frac{\theta}{8}\nonumber
\end{align}

where
\begin{itemize}
    \item The inequality in Equation~\eqref{Holder_upperbound_1} follows from Hölder's inequality.

    \item Equation~\eqref{Holder_upperbound_2} follows from the fact that
    $\left(g_{\hat{\overline{M}}^{(R)}_{k,4}}(\vec{x}_{k,r})\right)_j$
    are elements of probability distributions, which implies
    \begin{align}
    \left\|
    \sum_{j=1}^{\rho}
    \left(g_{\hat{\overline{M}}^{(R)}_{k,4}}(\vec{x}_{k,r})\right)_j
    \mathcal{E}(x_{k,r,t-j})
    \right\|_1
    \leq 1.\nonumber
    \end{align}
\end{itemize}

Therefore, it suffices to show that, uniformly over any $t\in[T]$ and 
$\ell\in[d]_{{M}^{(r)}_{k,3}}$,
\begin{align}
&\mathbb{P}\left(\left|\left\langle V\left[
\left({\overline{M}_{3}^{(R)}}-{\hat{\overline{M}}}_{k,3}^{(R,r)}\right)^{\top},\ell
\right],
\sum_{j=1}^{\rho}\left(g_{\hat{\overline{M}}^{(R)}_{k,4}}(\vec{x}_{k,r})\right)_j
\mathcal{E}(x_{k,r,t-j})\right\rangle\right|\geq\frac{\theta}{8}\right)\\
&\leq\mathcal{O}\left(\frac{1}{dNKT}\right).\nonumber
\end{align}
For any $t\in[T]$ and $\ell\in[d]$, sames as Equation \eqref{union_prob_randomvariable} we have

\begin{align}
&\!\!\!\!\!\!\!\!\!\!\!\!\!\!\!\!\!\!\!\!\!\!\!\!\!\!\!\!\mathbb{P}\left(\left|\left\langle V\left[
\left({\overline{M}_{3}^{(R)}}-{\hat{\overline{M}}}_{k,3}^{(R,r)}\right)^{\top},\ell
\right],
\sum_{j=1}^{\rho}\left(g_{\hat{\overline{M}}^{(R)}_{k,4}}(\vec{x}_{k,r})\right)_j
\mathcal{E}(x_{k,r,t-j})\right\rangle\right|\geq \frac{\theta}{8}\right)\nonumber\\
&\!\!\!\!\!\!\!\!\!\!\!\!\!\!\!\!\!\!\!\!\!\!\!\!\!\!\!\!\leq \mathbb{P}\left(\left|\left\langle V\left[\left({\overline{M}_{3}^{(R)}}-{\hat{\overline{M}}}_{k,3}^{(R,r)}\right)^{\top},\ell\right],
\sum_{j=1}^{\rho}\left(g_{\hat{\overline{M}}^{(R)}_{k,4}}(\vec{x}_{k,r})\right)_j
\mathcal{E}(x_{k,r,t-j})\right\rangle\right|\geq \frac{\theta}{8}\right)\nonumber\\
&\!\!\!\!\!\!\!\!\!\!\!\!\!\!\!\!\!\!\!\!\!\!\!\!\!\!\!\!\leq\mathbb{P}\left(\left|\left\langle V\left[\left[{\overline{M}_{3}^{(R)}}-\left({\overline{M}}_{k,3}^{(R,\ast)}+\frac{1}{K}\mathsf{D}^\ast M^{(r)}_{k,3}\right)\right]^{\top},\ell\right],
\sum_{j=1}^{\rho}\left(g_{\hat{\overline{M}}^{(R)}_{k,4}}(\vec{x}_{k,r})\right)_j
\mathcal{E}(x_{k,r,t-j})\right\rangle\right|\geq\frac{\theta}{24}\right)\nonumber\\
&\!\!\!\!\!\!\!\!\!\!\!\!\!\!\!\!\!\!\!\!\!\!\!\!\!\!\!\!\quad+\mathbb{P}\left(\frac{1}{K}\left|\left\langle V\left[\left(\mathsf{D}^\ast M^{(r)}_{k,3}-\mathsf{A}_3^\top \mathsf{A}_3\mathsf{D}^\ast M^{(r)}_{k,3}\right)^{\top},\ell\right],
\sum_{j=1}^{\rho}\left(g_{\hat{\overline{M}}^{(R)}_{k,4}}(\vec{x}_{k,r})\right)_j
\mathcal{E}(x_{k,r,t-j})\right\rangle\right|\geq\frac{\theta}{24}\right)\nonumber\\
&\!\!\!\!\!\!\!\!\!\!\!\!\!\!\!\!\!\!\!\!\!\!\!\!\!\!\!\!\quad+\mathbb{P}\left(\frac{1}{K}\left|\left\langle V\left[\left(\mathsf{A}_3^\top \mathsf{A}_3\mathsf{D}^\ast M^{(r)}_{k,3}-\hat{M}^{(r)}_{k,3}\right)^{\top},\ell\right],
\sum_{j=1}^{\rho}\left(g_{\hat{\overline{M}}^{(R)}_{k,4}}(\vec{x}_{k,r})\right)_j
\mathcal{E}(x_{k,r,t-j})\right\rangle\right|\geq\frac{\theta}{24}\right)\nonumber\\
&\!\!\!\!\!\!\!\!\!\!\!\!\!\!\!\!\!\!\!\!\!\!\!\!\!\!\!\!=\circled{T}_{3,1}+\circled{T}_{3,2}+\circled{T}_{3,3},\nonumber
\end{align}

\begin{itemize}
 \item \textbf{Bounding $\circled{T}_{3,1}$:}
To bound the term $\circled{T}_{3,1}$, we repeat the same argument used to bound $\circled{T}_{1,1}$. In particular, for any $t\in[T]$ and any $\ell\notin[d]_{M^{(r)}_{k,3}}$, we have
\begin{align}
    \!\!\!\!\!\!\!\!\!\!\!\!\!\!\!\!\!\!\!\!\!\!\!\!\!\!\!\!\mathbb{P}\left(
    \left|
    \left\langle
    V\left[
    \left(
    \overline{M}_{3}^{(R)}-\left(\overline{M}_{k,3}^{(R,\ast)}
    +\frac{1}{K}\mathsf{D}^\ast M^{(r)}_{k,3} \right)\right)^{\top},\ell\right],\sum_{j=1}^{\rho}
    \left(g_{\hat{\overline{M}}^{(R)}_{k,4}}(\vec{x}_{k,r})\right)_j
    \mathcal{E}(x_{k,r,t-j})\right\rangle\right|\geq\frac{\theta}{16}\right)=0.
    \nonumber
\end{align}
Indeed,
\begin{align}
    &\left|
    \left\langle
    V\left[
    \left(
    \overline{M}_{3}^{(R)}
    -\left(
    \overline{M}_{k,3}^{(R,\ast)}
    +\frac{1}{K}\mathsf{D}^\ast M^{(r)}_{k,3}
    \right)
    \right)^{\top},
    \ell
    \right],
    \sum_{j=1}^{\rho}
    \left(g_{\hat{\overline{M}}^{(R)}_{k,4}}(\vec{x}_{k,r})\right)_j
    \mathcal{E}(x_{k,r,t-j})
    \right\rangle
    \right|
    \nonumber\\
    &\le
    \left\|
    V\left[
    \left(
    \overline{M}_{3}^{(R)}
    -\left(
    \overline{M}_{k,3}^{(R,\ast)}
    +\frac{1}{K}\mathsf{D}^\ast M^{(r)}_{k,3}
    \right)
    \right)^{\top},
    \ell
    \right]
    \right\|
    \left\|
    \sum_{j=1}^{\rho}
    \left(g_{\hat{\overline{M}}^{(R)}_{k,4}}(\vec{x}_{k,r})\right)_j
    \mathcal{E}(x_{k,r,t-j})
    \right\|
    \nonumber\\
    &\le
    \left\|
    V\left[
    \left(
    \overline{M}_{3}^{(R)}
    -\left(
    \overline{M}_{k,3}^{(R,\ast)}
    +\frac{1}{K}\mathsf{D}^\ast M^{(r)}_{k,3}
    \right)
    \right)^{\top},
    \ell
    \right]
    \right\|
    \nonumber\\
    &\le
    \frac{\theta}{24C\sqrt{\rho}}
    \le
    \frac{\theta}{24}.
    \nonumber
\end{align}
The penultimate inequality follows from Conditions~1 and~2. The final inequality follows from the facts that $B\ge1$ and $\rho\ge1$, and hence $B\sqrt{\rho}\ge1$.

\item \textbf{Bounding $\circled{T}_{3,2}$:}
Define
\begin{align}
    \vec{g}_{\hat{\overline{M}}^{(R)}_{k,4}}
    \coloneqq
    \sum_{j=1}^{\rho}
    \left(g_{\hat{\overline{M}}^{(R)}_{k,4}}(\vec{x}_{k,r})\right)_j
    \mathcal{E}(x_{k,r,t-j}).\label{Denoting_Vector_attention}
\end{align}
Then,
\begin{align}
    &\mathbb{P}\left(
    \frac{1}{K}
    \left|
    \left\langle
    V\!\left[
    \left(
    \mathsf{D}^\ast M^{(r)}_{k,3}
    -\mathsf{A}_3^\top\mathsf{A}_3\mathsf{D}^\ast M^{(r)}_{k,3}
    \right)^{\!\top},
    \ell
    \right],
    \vec{g}_{\hat{\overline{M}}^{(R)}_{k,4}}
    \right\rangle
    \right|
    \geq\frac{\theta}{24}
    \right)
    \nonumber\\
    &=
    \mathbb{P}\left(
    \left|
    \left\langle
    V\!\left[
    \left(\mathsf{D}^\ast M^{(r)}_{k,3}\right)^{\!\top},
    \ell
    \right],
    \vec{g}_{\hat{\overline{M}}^{(R)}_{k,4}}
    \right\rangle
    -
    \left\langle
    \mathsf{A}_3
    V\!\left[
    \left(\mathsf{D}^\ast M^{(r)}_{k,3}\right)^{\!\top},
    \ell
    \right],
    \mathsf{A}_3
    \vec{g}_{\hat{\overline{M}}^{(R)}_{k,4}}
    \right\rangle
    \right|
    \geq\frac{K\theta}{24}
    \right)
    \nonumber\\
    &\leq
    4\exp\!\left(
    -\frac{m_3}{7}
    \left(
    \frac{K\theta}
    {24 C q_{e,b}^{R-r}}
    \right)^2
    \right)
    \label{multiplile_concentration_1}\\
    &\leq
    \mathcal{O}\!\left(\frac{1}{dNKT}\right).
    \label{multiplile_concentration_2}
\end{align}

where:
\begin{itemize}
    \item Equation~\eqref{multiplile_concentration_1} follows by applying \cite[Lemma~8, Part~2]{gronlund2020near} together with the bound
    \(\|\mathsf{D}^\ast M^{(r)}_{k,3}\|
    \leq C q_{e,b}^{R-r}\).

    \item Equation~\eqref{multiplile_concentration_2} follows from the choice of the parameter \(m_3\).
\end{itemize}

 \item   \textbf{Bounding $\circled{T}_{3,3}$:}

 By recalling the denotation in \eqref{Denoting_Vector_attention} We have

\begin{align}
    &\!\!\!\!\!\!\!\!\!\!\!\!\!\!\!\!\!\!\!\!\!\!\!\!\!\!\!\!\!\!\!\!\!\!\!\!\!\!\!\!\mathbb{P}\left(\frac{1}{K}\left|\left\langle V\left[\left(\mathsf{A}_3^\top \mathsf{A}_3\mathsf{D}^\ast M^{(r)}_{k,3}-\hat{M}^{(r)}_{k,3}\right)^{\top},\ell\right],
\sum_{j=1}^{\rho}\left(g_{\hat{\overline{M}}^{(R)}_{k,4}}(\vec{x}_{k,r})\right)_j
\mathcal{E}(x_{k,r,t-j})\right\rangle\right|\geq\frac{\theta}{24}\right)\nonumber\\
&\!\!\!\!\!\!\!\!\!\!\!\!\!\!\!\!\!\!\!\!\!\!\!\!\!\!\!\!\!\!\!\!\!\!\!\!\!\!\!\!=\mathbb{P}\left(\left|\left\langle V\left[\left(\mathsf{A}_3^\top \mathsf{A}_3\mathsf{D}^\ast M^{(r)}_{k,3}-\hat{M}^{(r)}_{k,3}\right)^{\top},\ell\right],
\vec{g}_{\hat{\overline{M}}^{(R)}_{k,4}}\right\rangle\right|\geq\frac{K\theta}{24}\right)\nonumber \\
&\!\!\!\!\!\!\!\!\!\!\!\!\!\!\!\!\!\!\!\!\!\!\!\!\!\!\!\!\!\!\!\!\!\!\!\!\!\!\!\!=\mathbb{P}\left(\left|\left\langle V\left[\left(\mathsf{A}_3\mathsf{D}^\ast M^{(r)}_{k,3}\right)^{\top},\ell\right]-\hat{M}^{(r)}_{k,\ell,3},
\mathsf{A}_3\vec{g}_{\hat{\overline{M}}^{(R)}_{k,4}}\right\rangle\right|\geq\frac{K\theta}{24}\right)\nonumber\\
&\!\!\!\!\!\!\!\!\!\!\!\!\!\!\!\!\!\!\!\!\!\!\!\!\!\!\!\!\!\!\!\!\!\!\!\!\!\!\!\!\leq\mathbb{P}_{\mathsf{A}_3}\left(\left\|\mathsf{A}_3\vec{g}_{\hat{\overline{M}}^{(R)}_{k,4}}\right\|\geq c_{3,2}\right)+\mathbb{P}\left(\left\|V\left[\left(\mathsf{A}_3\mathsf{D}^\ast M^{(r)}_{k,3}\right)^{\top},\ell\right]\right\|\geq c_{3,1}\left\|V\left[\left( M^{(r)}_{k,3}\right)^{\top},\ell\right]\right\|\right)\nonumber\\
&\!\!\!\!\!\!\!\!\!\!\!\!\!\!\!\!\!\!\!\!\!\!\!\!\!\!\!\!\!\!\!\!\!\!\!\!\!\!\!\!\quad+\mathbb{E}_{U\sim\unif\left(\mathcal{B}_{m_3}(0,\nu_3)\right)}\mathbb{E}_{\mathsf{A}_1}\left[\mathbb{P}\left(\left|\left\langle U,\mathsf{A}_3\vec{g}_{\hat{\overline{M}}^{(R)}_{k,[2]}}\right\rangle\right|\geq\frac{K\theta}{24C}\Big| \text{Cond}_1\right)\right]\nonumber\\
&\!\!\!\!\!\!\!\!\!\!\!\!\!\!\!\!\!\!\!\!\!\!\!\!\!\!\!\!\!\!\!\!\!\!\!\!\!\!\!\!\leq e^{-0.21m_3\left(c^2_{3,2}-1-2\log(c_{3,2})\right)}+e^{-m_3\left(q^{2(r-R)}_{e,b}c^2_{3,1}-1-2\log\left(q^{(R-r)}_{e,b}c_{3,1}\right)\right)}+\frac{\sqrt{2(m_3+1)}}{\sqrt{\pi}}e^{-\left(\frac{m_3+1}{2}\right)\left(\frac{K\theta}{24 c_{3,2}\nu_1 C}\right)^2}\label{Simlification_third_thitrm_term_3}\\
&\!\!\!\!\!\!\!\!\!\!\!\!\!\!\!\!\!\!\!\!\!\!\!\!\!\!\!\!\!\!\!\!\!\!\!\!\!\!\!\!\leq\mathcal{O}\left(\frac{1}{dNKT}\right)\nonumber
\end{align}

where 

\begin{itemize}
    \item $\text{Cond}_1$ and $\text{Cond}_2$ denote the events
    \begin{align}
       \left\|\mathsf{A}_3\vec{g}_{\hat{\overline{M}}^{(R)}_{k,[2]}}\right\|\leq c_{3,2}
    \quad\text{and}\quad
    \left\|V\left[\left(\mathsf{A}_3\mathsf{D}^\ast M^{(r)}_{k,3}\right)^{\top},\ell\right]\right\|\leq c_{3,1}\left\|V\left[\left( M^{(r)}_{k,3}\right)^{\top},\ell\right]\right\|,
    \end{align}
        respectively.

        \item Equation~\eqref{Simlification_third_thitrm_term_3} follows from
    \cite[Lemma~8, Part~2]{gronlund2020near}.
\end{itemize}

Therefore,
\begin{align}
    \circled{III}^1,\ \circled{III}^2
    &\leq
    \circled{T}_{3,1}
    +\circled{T}_{3,2}
    +\circled{T}_{3,3}
    \nonumber\\
    &\leq
    \mathcal{O}\!\left(\frac{1}{NKT}\right).
    \nonumber
\end{align}
By combining the bounds for $\circled{I}^1,\circled{I}^2$, $\circled{II}^1,\circled{II}^2$, and $\circled{III}^1,\circled{III}^2$, we conclude that the distortion term is bounded by $\mathcal{O}\!\left(\frac{1}{NKT}\right)$.

\end{itemize}

\end{itemize}

\end{itemize}

In the final step, it remains to derive an upper bound on the mutual information term. To this end, define
\[
\mathbf{T}^{(r)}_{k,\{3,4\},[d]}
\coloneqq
\bigl(
\mathbf{T}^{(r)}_{k,3,[d]},
\mathbf{T}^{(r)}_{k,4,[d]}
\bigr).
\]
For each \(i\in\left\{3,4\right\}\), let
\[
\mathbf{T}^{(r)}_{k,i,[d]}
\coloneqq
\bigl(
T^{(r)}_{k,i,1},\ldots,T^{(r)}_{k,i,d}
\bigr)
\in\{0,1\}^d,
\]
where, for every \(\ell\in[d]\),
\[
T^{(r)}_{k,i,\ell}
\coloneqq
\mathbf{1}\{\ell\in[d]_{M^{(r)}_{k,i}}\}.
\] Hence, if $T_{k,i,\ell}=0$, then 
\begin{align}
V(\hat{\overline{M}}^{(R,r)}_{k,i},\ell)=V\left(\overline{M}^{(R),\ast}_{k,i},\ell\right).
\end{align}
For simplicity and with the abuse of notation, denote
\begin{align}
    [d]_{\mathbf{T}^{(r)}_{k,i,[d]}} =\left\{\ell\in[d]\colon T^{(r)}_{k,i,\ell} = 1\right\}.
\end{align}

Therefore we have

\begin{align}
   &\mathbb{I}^{\mathsf{A}_{\{3,4\}}}\left(S_{k,r};\hat{\overline{W}}^{(R,r)}_k|\mathfrak{V}_{k,r}\right)\nonumber\\
   &\quad\leq\mathbb{I}^{\mathsf{A}_{\{3,4\}}}\left({W}_k^{(r)};\hat{\overline{W}}^{(R,r)}_k\Big|\mathfrak{V}_{k,r}\right)\label{Bound_MI_Binary_SANTP_1}\\
    &\quad= \mathbb{I}^{\mathsf{A}_{\{3,4\}}}\left({W}_k^{(r)};\hat{\overbar{W}}_k^{(R,r)}\Big|\mathfrak{U}_{k,r}\right)\nonumber\\
  &\quad=\mathbb{I}^{\mathsf{A}_{\{3,4\}}}\left(\left\{V(W^{(r)}_k,\ell)\right\}_{\ell\in[d]};\left\{V(\hat{\overline{W}}^{(R,r)}_k,\ell)\right\}_{\ell\in[d]}\Big|\mathfrak{U}_{k,r}\right)\nonumber\\
  &\quad=\mathbb{I}^{\mathsf{A}_{\{3,4\}}}\left(\left\{V(Q^{(r)}_{k,i},\ell)\right\}_{\ell\in[d],i\in\{3,4\}};\left\{V\left(\hat{Q}^{(r)}_{k,i},\ell\right)\right\}_{\ell\in[d],i\in\{3,4\}}\Big|\mathfrak{U}_{k,r}\right)\label{Bound_MI_Binary_SANTP_2}\\
  &\quad\leq \mathbb{I}^{\mathsf{A}_{\{3,4\}}}\left(\left\{V(Q^{(r)}_{k,i},\ell)\right\}_{\ell\in[d],i\in\{3,4\}};\left\{\hat{Q}^{(r)}_{k,\ell,i}\right\}_{\ell\in[d],i\in\{3,4\}},\mathbf{T}^{(r)}_{k,\{3,4\},[d]}\Big|\mathfrak{U}_{k,r}\right)\nonumber\\
  &\quad = \mathbb{I}^{\mathsf{A}_{\{3,4\}}}\left(\left\{V(Q^{(r)}_{k,i},\ell)\right\}_{\ell\in[d],i\in\{3,4\}};\left\{\hat{Q}^{(r)}_{k,\ell,i}\right\}_{\ell\in[d],i\in\{3,4\}}\Big|\mathbf{T}^{(r)}_{k,\{3,4\},[d]},\mathfrak{U}_{k,r}\right)\nonumber\\
  &\quad\qquad+\mathbb{I}^{\mathsf{A}_{\{3,4\}}}\left(\left\{V(Q^{(r)}_{k,i},\ell)\right\}_{\ell\in[d],i\in\{3,4\}};\mathbf{T}^{(r)}_{k,\{3,4\},[d]}\Big|\mathfrak{U}_{k,r}\right)\nonumber\\
  &\quad\leq\mathbb{I}^{\mathsf{A}_{\{3,4\}}}\left(\left\{V(Q^{(r)}_{k,i},\ell)\right\}_{\ell\in[d],i\in\{3,4\}};\left\{\hat{M}^{(r)}_{k,\ell,i}\right\}_{\ell\in[d],i\in\{3,4\}}\Big|\mathbf{T}^{(r)}_{k,\{3,4\},[d]},\mathfrak{U}_{k,r}\right)+\mathbb{H}\left(\mathbf{T}^{(r)}_{k,\{3,4\},[d]}\right)\nonumber\\
  &\quad=\mathbb{I}^{\mathsf{A}_{\{3,4\}}}\left(\left\{V\left(Q^{(r)}_{k,3},\ell\right)\right\}_{\ell\in[d]};\left\{\hat{M}^{(r)}_{k,\ell,i}\right\}_{\ell\in[d],i\in\{3,4\}}\Big|\mathbf{T}^{(r)}_{k,\{3,4\},[d]},\mathfrak{U}_{k,r}\right)\nonumber\\
  &\quad\qquad +\mathbb{I}^{\mathsf{A}_{\{3,4\}}}\left(\left\{V\left(Q^{(r)}_{k,4},\ell\right)\right\}_{\ell\in[d]};\left\{\hat{M}^{(r)}_{k,\ell,i}\right\}_{\ell\in[d],i\in\{3,4\}}\Big|\left\{V\left(Q^{(r)}_{k,3},\ell\right)\right\}_{\ell\in[d]},\mathbf{T}^{(r)}_{k,\left\{3,4\right\},[d]},\mathfrak{U}_{k,r}\right)\nonumber\\
  &\qquad\quad+\mathbb{H}\left(\mathbf{T}^{(r)}_{k,\{3,4\},[d]}\right)\nonumber\\
  &\quad=\mathbb{I}^{\mathsf{A}_{\{3,4\}}}\left(\left\{V\left(Q^{(r)}_{k,3},\ell\right)\right\}_{\ell\in[d]};\left\{\hat{M}^{(r)}_{k,\ell,3}\right\}_{\ell\in[d]}\Big|\mathbf{T}^{(r)}_{k,\{3,4\},[d]},\mathfrak{U}_{k,r}\right)\nonumber\\
  &\quad\qquad+\mathbb{I}^{\mathsf{A}_{\{3,4\}}}\left(\left\{V\left(Q^{(r)}_{k,3},\ell\right)\right\}_{\ell\in[d]};\left\{\hat{M}^{(r)}_{k,\ell,4}\right\}_{\ell\in[d]}\Big|\left\{\hat{M}^{(r)}_{k,\ell,3}\right\}_{\ell\in[d]},\mathbf{T}^{(r)}_{k,\{3,4\},[d]},\mathfrak{U}_{k,r}\right)\nonumber\\
  &\qquad\quad+\mathbb{I}^{\mathsf{A}_{\{3,4\}}}\left(\left\{V\left(Q^{(r)}_{k,4},\ell\right)\right\}_{\ell\in[d]};\left\{\hat{M}^{(r)}_{k,\ell,3}\right\}_{\ell\in[d]}\Big|\left\{V\left({Q}^{(r)}_{k,3},\ell\right)\right\}_{\ell\in[d]},\mathbf{T}^{(r)}_{k,\{3,4\},[d]},\mathfrak{U}_{k,r}\right)\nonumber\\
  &\qquad\quad+\mathbb{I}^{\mathsf{A}_{\{3,4\}}}\left(\left\{V\left(Q^{(r)}_{k,4},\ell\right)\right\}_{\ell\in[d]};\left\{\hat{M}^{(r)}_{k,\ell,4}\right\}_{\ell\in[d]}\Big|\left\{V\left(Q^{(r)}_{k,3},\ell\right)\right\}_{\ell\in[d]},\left\{\hat{M}^{(r)}_{k,\ell,3}\right\}_{\ell\in[d]},\mathbf{T}^{(r)}_{k,\left\{3,4\right\},[d]},\mathfrak{U}_{k,r}\right)\nonumber\\
  &\quad\leq 2\sum_{i\in\left\{3,4\right\}}\mathbb{I}^{\mathsf{A}_{[3]}}\left(\left\{V\left(Q^{(r)}_{k,i},\ell\right)\right\}_{\ell\in[d]};\left\{\hat{M}^{(r)}_{k,\ell,i}\right\}_{\ell\in[d]}\Big|\mathbf{T}^{(r)}_{k,\left\{3,4\right\},[d]},\mathfrak{U}_{k,r}\right)+\mathbb{H}\left(\mathbf{T}^{(r)}_{k,\left\{3,4\right\},[d]}\right)\label{Bound_MI_Binary_SANTP_3}\\
  &\quad=2\sum_{i\in\left\{3,4\right\}}\mathbb{E}_{\mathbf{T}^{(r)}_{k,i,[d]}}\left[\mathbb{I}^{\mathsf{A}_{\left\{3,4\right\}},\mathbf{T}^{(r)}_{k,i,[d]}}\left(\left\{V(Q^{(r)}_{k,i},\ell)\right\}_{\ell\in[d]};\left\{\hat{M}^{(r)}_{k,\ell,i}\right\}_{\ell\in[d]}\Big|\mathfrak{U}_{k,r}\right)\right]+\mathbb{H}\left(\mathbf{T}^{(r)}_{k,\left\{3,4\right\},[d]}\right)\nonumber\\
  &\quad=2\sum_{i\in\left\{3,4\right\}}\mathbb{E}_{\mathbf{T}^{(r)}_{k,i,[d]}}\Bigg[h^{\mathsf{A}_{[3]},\mathbf{T}^{(r)}_{k,i,[d]}}\left(\left\{\hat{M}^{(r)}_{k,\ell,i}\right\}_{\ell\in[d]}\Big|\mathfrak{U}_{k,r}\right)\nonumber\\
  &\quad\qquad\qquad\qquad\qquad-h^{\mathsf{A}_{\left\{3,4\right\}},\mathbf{T}^{(r)}_{k,i,[d]}}\left(\left\{\hat{M}^{(r)}_{k,\ell,i}\right\}_{\ell\in[d]}\Big |\left\{V(Q^{(r)}_{k,i},\ell)\right\}_{\ell\in[d]},\mathfrak{U}_{k,r}\right)\Bigg]+\mathbb{H}\left(\mathbf{T}^{(r)}_{k,\left\{3,4\right\},[d]}\right)\nonumber\\
  &\quad\leq 2\sum_{i\in[3]}\mathbb{E}_{\mathbf{T}^{(r)}_{k,i,[d]}}\Bigg[\sum_{\ell\in[d]}h^{\mathsf{A}_{\left\{3,4\right\}},\mathbf{T}^{(r)}_{k,i,[d]}}\left(\hat{M}^{(r)}_{k,\ell,i}\Big|\mathfrak{U}_{k,r}\right)\nonumber\\
  &\quad\qquad\qquad\qquad\qquad-h^{\mathsf{A}_{[3]},\mathbf{T}^{(r)}_{k,i,[d]}}\left(\left\{\hat{M}^{(r)}_{k,\ell,i}\right\}_{\ell\in[d]}\Big |\left\{V(Q^{(r)}_{k,i},\ell)\right\}_{\ell\in[d]},\mathfrak{U}_{k,r}\right)\Bigg]+\mathbb{H}\left(\mathbf{T}^{(r)}_{k,\left\{3,4\right\},[d]}\right)\nonumber\\
  &\quad=2\sum_{i\in\left\{3,4\right\}}\mathbb{E}_{\mathbf{T}^{(r)}_{k,i,[d]}}\Bigg[\sum_{\ell\in[d]}h^{\mathsf{A}_{\left\{3,4\right\}},\mathbf{T}^{(r)}_{k,\left\{3,4\right\},[d]}}\left(\hat{M}^{(r)}_{k,\ell,i}\Big|\mathfrak{U}_{k,r}\right)-h^{\mathsf{A}_{\left\{3,4\right\}},\mathbf{T}^{(r)}_{k,\left\{3,4\right\},[d]}}\left(\hat{M}^{(r)}_{k,\ell,i}\Big|V(Q^{(r)}_{k,i},\ell),\mathfrak{U}_{k,r}\right)\Bigg]\nonumber\\
  &\quad\qquad\qquad\qquad\qquad+\mathbb{H}\left(\mathbf{T}^{(r)}_{k,\left\{3,4\right\},[d]}\right)\nonumber\\
  &\quad\leq 2\sum_{i\in\left\{3,4\right\}}\Bigg[\left[\mathbb{E}_{\mathbf{T}^{(r)}_{k,i,[d]}}\left[\sum_{\ell\in[d]}T^{(r)}_{k,i,\ell}\right]\log\left(\text{Volume}\left(\mathcal{B}_{m_i}\left(c_{i,1}C+\nu_i\right)\right)\right)\right]\nonumber\\
  &\qquad\qquad\qquad-\mathbb{E}_{\mathbf{T}^{(r)}_{k,i,[d]}}\left[\sum_{\ell\in\mathbf{T}^{(r)}_{k,i,[d]}}T^{(r)}_{k,i,\ell}h^{\mathsf{A}_{\left\{3,4\right\}},\mathbf{T}^{(r)}_{k,\left\{3,4\right\},[d]}}\left(\hat{M}^{(r)}_{k,\ell,i}\Big|V(M^{(r)}_{k,i},\ell),\mathfrak{U}_{k,r}\right)\right]\Bigg]+\mathbb{H}\left(\mathbf{T}^{(r)}_{k,\left\{3,4\right\},[d]}\right)\nonumber\\
  &\quad=2\sum_{i\in[3]}\Bigg[\left[\mathbb{E}_{\mathbf{T}^{(r)}_{k,i,[d]}}\left[\sum_{\ell\in[d]}T^{(r)}_{k,i,\ell}\right]\log\left(\text{Volume}\left(\mathcal{B}_{m_i}\left(c_{i,1}C+\nu_i\right)\right)\right)\right]\nonumber\\
  &\qquad\qquad\qquad-\left[\mathbb{E}_{\mathbf{T}^{(r)}_{k,i,[d]}}\left[\sum_{\ell\in[d]}T^{(r)}_{k,i,\ell}\right]\log\left(\text{Volume}\left(\mathcal{B}_{m_i}\left(\nu_i\right)\right)\right)\right]\Bigg]+\mathbb{H}\left(\mathbf{T}^{(r)}_{k,\left\{3,4\right\},[d]}\right)\nonumber\\
  &\quad\leq 2\sum_{i\in\left\{3,4\right\}} \left[m_i\mathbb{E}_{\mathbf{T}^{(r)}_{k,i,[d]}}\left[\sum_{\ell\in[d]}T^{(r)}_{k,i,\ell}\right]\log\left(\frac{c_{i,1}C+\nu_i}{\nu_i}\right)+\mathbb{H}\left(\mathbf{T}^{(r)}_{k,i,[d]}\right)\right]\\
  &\quad\leq 2\sum_{i\in\left\{3,4\right\}}\left[\left(m_i+1\right)\left(\frac{8\rho C^2}{\theta}\right)\log\left(\frac{c_{i,1}C+\nu_i}{\nu_i}\right)+\frac{1}{2}\right]\nonumber
\end{align}

where 

\begin{itemize}
    \item Equations~\eqref{Bound_MI_Binary_SANTP_1} and~\eqref{Bound_MI_Binary_SANTP_2} follow from the data processing inequality.

\item The inequality in~\eqref{Bound_MI_Binary_SANTP_3} follows from the fact that, for any $k\in[K]$, $r\in[R]$, and $i\in[3]$, conditioned on 
$V\!\left(\{M^{(r)}_{k,i,\ell}\}_{\ell\in[d]}\right)$, the random variable
$\hat{M}^{(r)}_{k,i,[d]}$
is independent of
$V\!\left(\{M^{(r)}_{k,j,\ell}\}_{\ell\in[d]}\right)$ for all $j\neq i$.
\end{itemize}

It remains to upper bound $\mathbb{H}\left(\mathbf{T}^{(r)}_{k,i,[d]}\right)$. This follows the same argument as bounding $\mathbb{H}\left(\mathbf{T}^{(r)}_{k,[d]}\right)$ in Equation~\eqref{Bound_MI_Binary_3}. Since the argument is identical for each $i\in[3]$, we omit the details.


\section{Proof of Lemma \ref{Hoeffdinf_Lemma}}\label{Proof_Hoeffding_employing}

To prove Lemma~\ref{Hoeffdinf_Lemma}, we follow the same steps as
\eqref{multi_round_generalization_bound_1}--\eqref{multi_round_generalization_bound_10}.
In particular, analogously to \eqref{bounding_moment_10}, we obtain
\begin{align}
    &\mathbb{E}_{S_{[K],[R]},\overline{W}^{(R)}}\left[\gen(S_{[K],[R]}, \overline{W}^{(R)})\right]\nonumber\\
    &\qquad\leq
    \frac{1}{KR}\sum_{k=1}^{K}\sum_{r=1}^{R}\frac{1}{\lambda_{k,r}}\left[I(S_{k,r};W^{(r)}_k|\overline{W}^{(r-1)})+\mathbb{E}_{\overline{W}^{(R)}}\left[\log\mathbb{E}_{S_{k,r}}\left[e^{\phi(S_{k,r},S'_{k,r},\overline{W}^{(R)})}\right]\right]\right]
\label{Hoeffding_MGF_Gen_error}
\end{align}

Thus, it remains to bound the MGF term on the right-hand side of
\eqref{Hoeffding_MGF_Gen_error}. To this end, define
\begin{align}
    Y_{k,r,t}
    &\coloneqq\frac{1}{T}\log
    \frac{1}{
        p_{\overline{w}^{(R)}}(X_{k,r,t}|X_{k,r,t-1,r},\ldots,X_{k,r,t-\rho})
    },
    \nonumber\\
    Y_{k,r,t}^{(n)}
    &\coloneqq\frac{1}{T}\log
    \frac{1}{
        p_{\overline{w}^{(R)}}(X_{k,r,t}^{(n)}|X_{k,r,t-1}^{(n)},\ldots,X_{k,r,t-\rho}^{(n)})
    }.\nonumber
\end{align}
Then,
\begin{align}
    \ell(\mathbf{s}_{k,r},\overline{w}^{(R)})
    &=
    \frac{1}{T}\sum_{t=1}^{T}
    \log
    \frac{1}{
        p_{\overline{w}^{(R)}}(x_{k,r,t}| x_{k,r,t-1},\ldots,x_{k,r,t-\rho})
    }\nonumber\\
    &=\sum_{t=1}^{T}Y_{k,r,t}.
    \nonumber
\end{align}
Consequently,
\begin{align}
    &\log
    \mathbb{E}_{\mathbf{S}_{k,r}}\Bigg[
        e^{{\frac{\lambda}{N}}
        \left(
            \mathbb{E}_{\mathbf{S}'}
            \left[
                \sum_{t=1}^{T}Y_{k,r,t}'
            \right]
            -
            \sum_{t=1}^{T}Y^{(n)}_{k,r,t}
        \right)}
    \Bigg]\leq
    \frac{
        \lambda^2
        (2B+\log d)^2
        \tilde{\tau}_{k,\min}
    }{
        8TN^2
    }
    \label{Hoeefding_inequality_MGF_Bounding}
\end{align}
where
\begin{itemize}

\item $\tilde{\tau}_{k,\min}$ is defined in \eqref{new_tau-min}.

    \item \eqref{Hoeefding_inequality_MGF_Bounding} follows by applying the
    McDiarmid-type inequality in Lemma~\ref{lem:window-loss-concentration}.
    Indeed, since $Y_{k,r,t}\geq0$ for every $k\in[K],r\in[R],t\in[T]$, the function
    \begin{align}
        g(Y_{k,r,1},\ldots,Y_{k,r,T})
        \coloneqq\frac{1}{N}
        \sum_{t=1}^{T}Y_{k,r,t}
        \nonumber
    \end{align}
    satisfies
    \begin{align}
        g(Y_{k,r,1},\ldots,Y_{k,r,T})
        -
        g(Y'_{k,r,1},\ldots,Y'_{k,r,T})&\leq\frac{1}{N}
        \sum_{t=1}^{T}
        Y_{k,r,t}\,
        \mathrm{1}\{Y_{k,r,t}\neq Y'_{k,r,t}\}\nonumber\\
        &\leq
        \left(\frac{2B+\log d}{NT}\right)
        \sum_{t=1}^{T}
        \mathrm{1}\{Y_{k,r,t}\neq Y'_{k,r,t}\}
        \label{max_bound_prob_y_t}
    \end{align}
    where \eqref{max_bound_prob_y_t} follows from the uniform upper bound on
    $Y_t$ established in Lemma~\ref{max_probability_conditional_markov}.
\end{itemize}

\section{Proof of Lemma \ref{lem:window-loss-concentration}}\label{Proof:lem:window-loss-concentration}

We begin with the McDiarmid-type inequality for dependent random
variables established in Lemma~\ref{Mdiarmid_type_lemma}. For a
partition $\widehat Y$ with associated mixing matrix $\Gamma$,
the lemma gives
\[
\log\mathbb E\left[
e^{\lambda(f(Y)-\mathbb Ef(Y))}\right]
\le
\frac{\lambda^2}{8}\|\Gamma C(c)\|_2^2.
\]
To obtain the desired concentration bound, it is therefore
enough to construct a Marton coupling for the partitioned loss
sequence and control the norm of its mixing matrix.

Fix $\delta\in(0,1)$ such that $\tilde{\tau}(\delta)<\infty$,
and let $L=\tilde{\tau}(\delta)$. We partition $Y$ into
consecutive blocks $\widehat Y_1,\ldots,\widehat Y_{\widehat T}$
of length $L$, allowing the final block to be shorter.
Consider any $i<\widehat T$ and two admissible histories
in the sense of Definition~\ref{Marton_coupling}, which
agree through block $i-1$ but may differ at block $i$.
Writing $s=iL$ for the endpoint of that block, the two
histories may induce different conditional distributions
of $Z_s$. Nevertheless, the Markov property ensures that,
conditional on the state $Z_s$, the subsequent evolution
does not depend on the preceding losses. We may therefore
couple the conditional distributions of the two states
and extend this coupling to their future trajectories,
while preserving the conditional laws required by
Definition~\ref{Marton_coupling}.

To construct the coupling, we work at successive block
boundaries $\{s+kL\}_{k\ge1}$. Given the pair of states
at a boundary, we use a maximal coupling of their
$L$-step transition distributions. By the definition
of $\tilde{\tau}(\delta)$, the two endpoints disagree
with probability at most $\delta$, uniformly over
their initial states. Conditional on these endpoints,
the intervening trajectories are sampled from the
corresponding Markov bridge distributions, ensuring
that each copy retains its prescribed trajectory law.
Whenever the two states meet, we couple their subsequent
evolution identically, so that they remain equal.
Consequently, disagreement at the next boundary can
occur only if the current states differ, and even
then its conditional probability is at most $\delta$.
More precisely,
\[
\mathbb P\!\left(
Z_{s+(r+1)L}\neq Z'_{s+(r+1)L}
\mid Z_{s+rL},Z'_{s+rL}
\right)
\le
\delta\,
\mathbf 1_{\{Z_{s+rL}\neq Z'_{s+rL}\}}.
\]
Taking expectations on both sides and applying the
tower property, we obtain
\[
\mathbb P(Z_{s+(r+1)L}\neq Z'_{s+(r+1)L})
\le
\delta\,
\mathbb P(Z_{s+rL}\neq Z'_{s+rL}).
\]
Since the probability of disagreement at the initial
boundary is at most one, iterating this inequality
gives
\begin{equation}
\mathbb P(Z_{s+rL}\neq Z'_{s+rL})\le\delta^r.
\label{eq:window-state-coupling}
\end{equation}

We next relate this decay of state disagreement to
the coupling of the loss blocks. Consider block
$i+k$ for some $k\ge2$, whose first time index is
$a=s+(k-1)L+1$. The state immediately preceding
this block is
\[
Z_{a-1}
=(X_{a-1},\ldots,X_{a-\rho}),
\]
which contains the full context needed to generate
its first loss. If $Z_{a-1}=Z'_{a-1}$, the two
trajectories can be continued identically, and all
losses within block $i+k$ consequently agree.
Combining this observation with
\eqref{eq:window-state-coupling} yields
\[
\mathbb{P}(\widehat Y_{i+k}\neq\widehat Y'_{i+k})
\le
\mathbb{P}(Z_{s+(k-1)L}\neq Z'_{s+(k-1)L})
\le\delta^{k-1}.
\]
Because this estimate holds uniformly over the
admissible conditional histories, the mixing
matrix in Definition~\ref{Marton_coupling}
satisfies
\[
\Gamma_{ii}=1,\qquad
\Gamma_{i,i+1}\le1,\qquad
\Gamma_{i,i+k}\le\delta^{k-1},\quad k\ge2.
\]
The geometric decay of its off-diagonal entries
then bounds both the maximum row sum and the
maximum column sum by
\[
1+1+\sum_{r=1}^{\infty}\delta^r
=\frac{2-\delta}{1-\delta}.
\]
It follows from the standard relation between
the induced matrix norms that
\[
\|\Gamma\|_2
\le\sqrt{\|\Gamma\|_1\|\Gamma\|_\infty}
\le\frac{2-\delta}{1-\delta}.
\]

It remains to control the bounded-difference
vector associated with the partition. By the
definition of $C(c)$ in
Lemma~\ref{Mdiarmid_type_lemma}, each of its
coordinates is the sum of the coefficients
corresponding to one block. Since every block
contains at most $L$ observations, the
Cauchy--Schwarz inequality gives
\[
\|C(c)\|_2^2
=
\sum_{j=1}^{\widehat T}
\left(\sum_{t\in I(\widehat Y_j)}c_t\right)^2
\le L\|c\|_2^2.
\]
Combining this estimate with the bound on
$\|\Gamma\|_2$ and applying
Lemma~\ref{Mdiarmid_type_lemma}, we obtain
\[
\log\mathbb E\left[
e^{\lambda(f(Y)-\mathbb Ef(Y))}\right]
\le
\frac{\lambda^2\tilde{\tau}(\delta)}{8}
\left(\frac{2-\delta}{1-\delta}\right)^2
\|c\|_2^2.
\]
Finally, taking the infimum over
$\delta\in(0,1)$ yields
\[
\log\mathbb E\left[
e^{\lambda(f(Y)-\mathbb Ef(Y))}\right]
\le
\frac{\lambda^2\tilde{\tau}_{\min}\|c\|_2^2}{8},
\]
which establishes~\eqref{Concentration_under_condition_A}.



\section{Proof of Lemma \ref{max_probability_conditional_markov}}\label{Proof_max_probability_conditional_markov}

    \begin{align}
        \log\frac{1}{p_{w}(x_{k,t}|x_{k,t-1},\cdots,x_{k,t-\rho})}&=\log\frac{1}{\big<\sigma\left(g_w(\mathcal{E}^{\odot\rho}(\vec{x}))\right),\mathcal{E}(x_{k,t})\big>}\label{worst_case_prob_bound_4}\\
        &=\max_{i\in[d]}\log\left(\frac{\sum_{j\in[d]}\exp({g_{w,j}})}{\exp(g^{(j)}_w)\cdot\mathcal{E}_j(x^{(n)}_{k,t})}\right)\label{worst_case_prob_bound_3}\\
        &\leq \log\left(\frac{de^{B}}{e^{-B}}\right)\label{worst_case_prob_bound_2}\\
        &=2B+\log d\label{worst_case_prob_bound_1}
    \end{align}
    where

    \begin{itemize}
        \item equation \eqref{worst_case_prob_bound_3} follows from definition in \eqref{conditional_distribution_def} and \eqref{soft_max_def},
        \item and \eqref{worst_case_prob_bound_2} is due to  Assupmtion \eqref{subsubsec:bounded_output} which implies that $|{g_{w,j}}|\leq\|g_w\|\leq B$ for any $j\in[d]$.
    \end{itemize}

\section{Language Model Pretraining: Experimental Details}
\label{app:lm-pretraining-details}

This appendix complements Section~\ref{sec:lm-pretraining} with the full experimental protocol used to probe the SANTP regime on TinyStories.

\paragraph{Dataset and tokenization.}
We use the TinyStories corpus of \citep{eldan2023tinystoriessmalllanguagemodels}, a collection of short synthetic stories generated to use only words a typical 3--4 year-old would understand. We retain $8\%$ of the training split, which we found sufficient to expose the communication--generalization tradeoff while keeping the sweep tractable. A byte-level BPE tokenizer with vocabulary size $V = 8{,}000$ is trained from scratch on the selected stories using the \texttt{tokenizers} library, with special tokens \texttt{<pad>}, \texttt{<bos>}, \texttt{<eos>}, \texttt{<unk>} and a \texttt{<bos> $\cdot$ <eos>} template applied at encoding time. After tokenization, $10\%$ of the resulting token stream is reserved as a held-out test set; the remainder is used for training.

\paragraph{Model architecture.}
We use a GPT-style decoder-only transformer with $N_{\text{layer}} = 6$ blocks, embedding dimension $d_{\text{model}} = 512$, $H = 8$ attention heads, and an MLP expansion ratio of $4$, totaling roughly $25$M parameters. Each block uses pre-normalization with RMSNorm \citep{ZhangRMSNorm}, causal self-attention, and a SwiGLU MLP \citep{shazeer2020glu}. Input and output embeddings are tied. The maximum sequence length (block size) is $T = 256$ tokens, and dropout is set to $0.1$. Residual projections are initialized with a depth-scaled standard deviation of $0.02 / \sqrt{2 N_{\text{layer}}}$ following \citep{radford2019language}.

\paragraph{FedAvg protocol.}
Each configuration runs the standard FedAvg loop \citep{mcmahan2017communication}: at each communication round $r \in \{1, \dots, R\}$, every client receives the current global parameters, performs $\tau = M / R$ local AdamW steps on its own shard, and returns its updated parameters; the server then averages them uniformly to form the next global iterate. Local optimizer state (Adam moments) is reinitialized at the start of every round, which is the canonical FedAvg specification. Mini-batches within a round are drawn sequentially from the shard with shuffling at the window level.

\paragraph{Compute budget.}
To isolate the effect of communication frequency from that of total compute, we fix the per-client total gradient-step budget at $M = 10{,}000$ across all configurations and split it evenly across rounds, so that $\tau \cdot R = M$. Concretely, $R = 1$ corresponds to $\tau = 10{,}000$ local steps with a single aggregation, $R = 50$ to $\tau = 200$ local steps per round with $50$ aggregations, and so on. Every configuration thus consumes the same number of token-batches and the same number of optimizer updates per client: only the communication frequency $R$ varies. This rules out the possibility that the observed widening of the gap is an artifact of additional optimization.

\paragraph{Optimization hyperparameters.}
We use AdamW with learning rate $3 \cdot 10^{-4}$, weight decay $0.01$, default $(\beta_1, \beta_2) = (0.9, 0.999)$, and global gradient-norm clipping at $1.0$. The mini-batch size is $64$ sequences of length $256$, i.e.\ $\sim\!16$K tokens per gradient step. Hyperparameters were not tuned per configuration; the same values are used across all $R$ and all $K$, so any sensitivity to $R$ reflects the protocol itself rather than tuning effort.

\paragraph{Evaluation.}
At the end of each run we evaluate the final aggregated global model on (i) a fixed sample of training windows, drawn deterministically from the union of all shards, and (ii) the held-out test set. Each loss estimate averages cross-entropy over $100$ batches of $64$ sequences ($\sim\!1.6$M tokens). The reported \emph{generalization gap} is the difference $\widehat{\mathcal{L}}_{\text{val}} - \widehat{\mathcal{L}}_{\text{train}}$ at the end of training.

\paragraph{Seeds and reporting.}
We run three seeds per configuration. Each seed independently controls model initialization, data-loader shuffling, and AdamW state. Curves in Figure~\ref{fig:lm-pretraining} show the seed mean, with shaded bands at $\pm 1$ standard deviation across seeds. The values of $R$ swept are $\{1, 5, 10, 15, 20, 25, 30, 35, 40\}$.

\section{Time Series Forecasting}
\label{app:ts-forecasting-details}

This section provides supplementary details regarding the experimental setup and additional results for the time series forecasting task discussed in Section~\ref{sec:ts-forecasting}.

\subsection{Data Processing and Sharding}
To maintain the temporal integrity of the data while ensuring i.i.d.-like properties between training and evaluation, we implement a \textbf{block-level splitting strategy}. 
\begin{itemize}
    \item \textbf{Tokenization:} The raw univariate series is discretized into $B=20$ bins using uniform binning.
    \item \textbf{Splitting:} The sequence is divided into contiguous blocks of size $L=200$. Each block is assigned to the test set with a probability of $0.2$.
    \item \textbf{Leakage Prevention:} To prevent the model from "peeking" into the test set via the $\tau$-step history, we apply a buffer. For any training block immediately following or preceding a test block, we discard the first $\tau=24$ windows.
    \item \textbf{Federated Sharding:} The training windows are sharded \textbf{contiguously} across $K \in \{2, 4, 8\}$ clients. This ensures that each client's local data stream retains its Markovian dependence, simulating realistic streaming federated learning scenarios where nodes collect data locally over time.
\end{itemize}

\subsection{Model and Hyperparameters}
We utilize a Recurrent Neural Network (RNN) designed for discrete sequence prediction. The architecture consists of an embedding layer ($d_{emb}=64$), a 2-layer Gated Recurrent Unit (GRU) with a hidden dimension of $256$, and a linear output head. 

We use the Adam optimizer with a learning rate of $0.01$. Crucially, we keep the \textbf{total gradient-step budget} $T$ per client fixed at $1000$. When the number of communication rounds $R$ increases, the number of local steps per round is reduced proportionally ($E = T/R$). This isolates the effect of communication frequency from the effect of total computation.

\subsection{Additional Results}
The following figures provide results for the ETTh1 ~\citep{haoyietal-informer-2021} and Beijing Air Quality ~\citep{zhang2017cautionary} datasets, complementing the ETTh2 results shown in the main text. Both datasets exhibit the same communication--generalization tradeoff: while more frequent communication (higher $R$) reduces training loss, it consistently leads to an increase in the generalization gap.

\begin{figure*}[h]
    \centering
    \includegraphics[width=\textwidth]{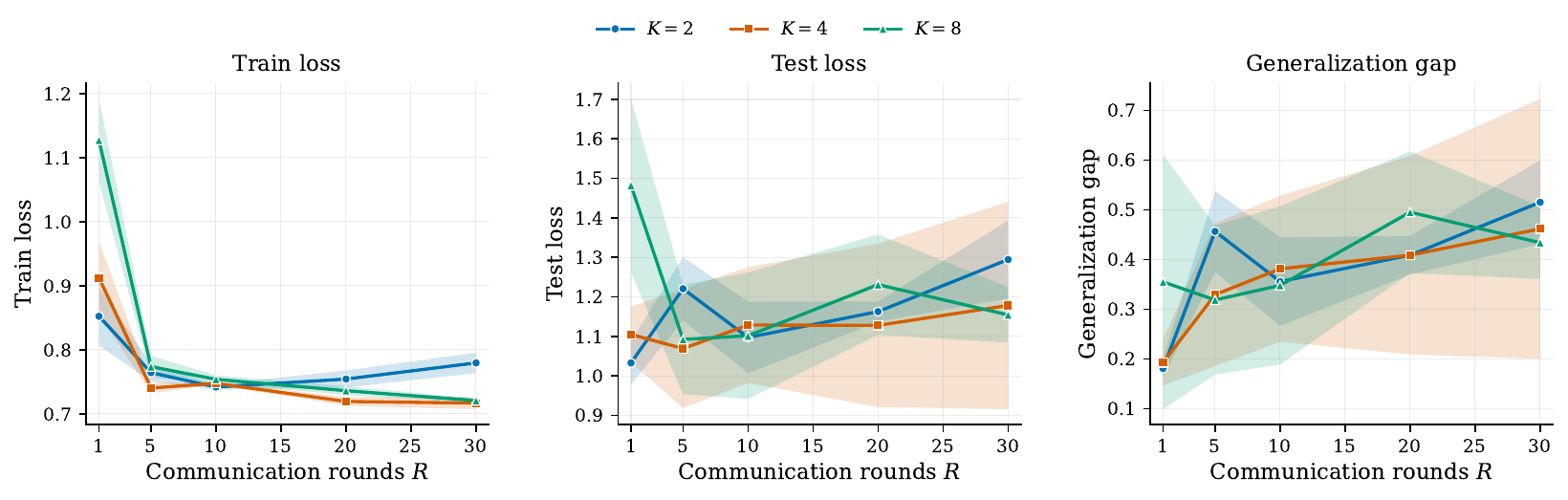}
    \caption{\textbf{Communication--generalization tradeoff: ETTh1 Dataset.} Consistent with ETTh2, increasing communication rounds $R$ leads to lower training loss but a significantly wider generalization gap across all client configurations $K$. Shaded regions indicate standard deviation across three seeds.}
    \label{fig:app-etth1}
\end{figure*}

\begin{figure*}[h]
    \centering
    \includegraphics[width=\textwidth]{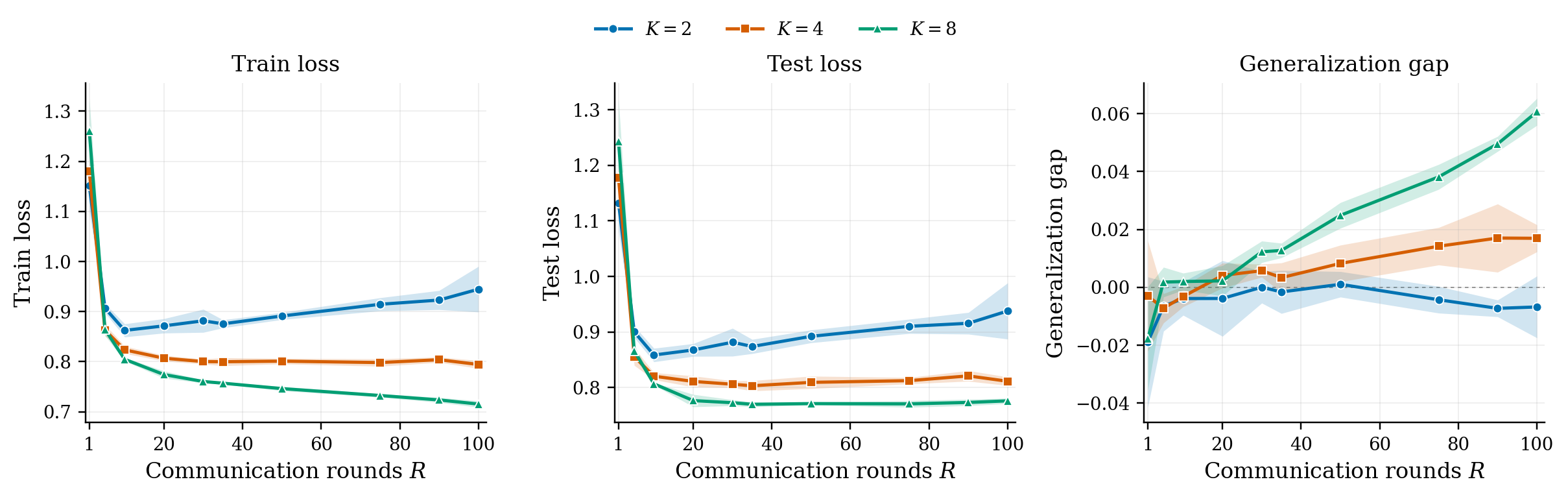}
    \caption{\textbf{Communication--generalization tradeoff: Beijing Dataset.} The same phenomenon is observed in environmental time series (PM2.5 levels), confirming that the tradeoff is a robust feature of streaming federated learning on dependent data. Shaded regions indicate standard deviation across five seeds.}
    \label{fig:app-beijing}
\end{figure*}

\end{document}

%% file: defs.tex
\usepackage{stmaryrd}
\SetSymbolFont{stmry}{bold}{U}{stmry}{m}{n}
\usepackage{euscript,steinmetz,float}
\usepackage[normalem]{ulem}
\usepackage[mode=buildnew]{standalone}
\usepackage{mathabx}
\makeatletter
\@ifundefined{theorem}{\newtheorem{theorem}{Theorem}}{}
\@ifundefined{lemma}{\newtheorem{lemma}{Lemma}}{}
\@ifundefined{proposition}{}{}
\@ifundefined{corollary}{\newtheorem{corollary}{Corollary}}{}
\@ifundefined{definition}{\newtheorem{definition}{Definition}}{}
\@ifundefined{remark}{\newtheorem{remark}{Remark}}{}
\@ifundefined{example}{}{}
\@ifundefined{conjecture}{}{}
\@ifundefined{axiom}{}{}
\@ifundefined{claim}{}{}
\@ifundefined{assumption}{\newtheorem{assumption}{Assumption}}{}
\makeatother

\usepackage{mleftright}\mleftright
\definecolor{darkgreen}{rgb}{0,0.5,0}
\definecolor{midgreen}{rgb}{0,0.7,0}
\definecolor{darkred}{RGB}{128,0,0}
\definecolor{darkpink}{RGB}{230,51,106}
\definecolor{darkestred}{RGB}{139,0,0}

\newcommand{\stkout}[1]{\ifmmode\text{\sout{\ensuremath{#1}}}\else\sout{#1}\fi}

\newcommand{\vc}[1]{\mathbf{#1}}

\DeclareMathOperator{\gen}{gen}

\DeclareMathOperator{\unif}{Unif}

\makeatletter
\DeclareFontFamily{OMX}{MnSymbolE}{}
\DeclareSymbolFont{MnLargeSymbols}{OMX}{MnSymbolE}{m}{n}
\SetSymbolFont{MnLargeSymbols}{bold}{OMX}{MnSymbolE}{b}{n}
\DeclareFontShape{OMX}{MnSymbolE}{m}{n}{ <-6> MnSymbolE5 <6-7> MnSymbolE6 <7-8> MnSymbolE7 <8-9> MnSymbolE8 <9-10> MnSymbolE9 <10-12> MnSymbolE10 <12-> MnSymbolE12 }{}
\DeclareFontShape{OMX}{MnSymbolE}{b}{n}{ <-6> MnSymbolE-Bold5 <6-7> MnSymbolE-Bold6 <7-8> MnSymbolE-Bold7 <8-9> MnSymbolE-Bold8 <9-10> MnSymbolE-Bold9 <10-12> MnSymbolE-Bold10 <12-> MnSymbolE-Bold12 }{}
\let\llangle\@undefined
\let\rrangle\@undefined
\DeclareMathDelimiter{\llangle}{\mathopen}{MnLargeSymbols}{'164}{MnLargeSymbols}{'164}
\DeclareMathDelimiter{\rrangle}{\mathclose}{MnLargeSymbols}{'171}{MnLargeSymbols}{'171}
\makeatother
\newcommand{\overbar}[1]{\mkern 1.5mu\overline{\mkern-1.5mu#1\mkern-1.5mu}\mkern 1.5mu}

\DeclareMathOperator*{\hess}{\mathrm{Hess}}

%% file: bib.bib
@inproceedings{haoyietal-informer-2021,
  author    = {Haoyi Zhou and
               Shanghang Zhang and
               Jieqi Peng and
               Shuai Zhang and
               Jianxin Li and
               Hui Xiong and
               Wancai Zhang},
  title     = {Informer: Beyond Efficient Transformer for Long Sequence Time-Series Forecasting},
  booktitle = {The Thirty-Fifth {AAAI} Conference on Artificial Intelligence, {AAAI} 2021, Virtual Conference},
  volume    = {35},
  number    = {12},
  pages     = {11106--11115},
  publisher = {{AAAI} Press},
  year      = {2021},
}

@article{zhang2017cautionary,
  title={Cautionary tales on air-quality improvement in Beijing},
  author={Zhang, Shuyi and Guo, Bin and Dong, Anlan and He, Jing and Xu, Ziping and Chen, Song Xi},
  journal={Proceedings of the Royal Society A: Mathematical, Physical and Engineering Sciences},
  volume={473},
  number={2205},
  year={2017},
  publisher={The Royal Society}
}

@inproceedings{ZhangRMSNorm,
 author = {Zhang, Biao and Sennrich, Rico},
 booktitle = {Advances in Neural Information Processing Systems},
 editor = {H. Wallach and H. Larochelle and A. Beygelzimer and F. d\textquotesingle Alch\'{e}-Buc and E. Fox and R. Garnett},
 pages = {},
 publisher = {Curran Associates, Inc.},
 title = {Root Mean Square Layer Normalization},
 url = {https://proceedings.neurips.cc/paper_files/paper/2019/file/1e8a19426224ca89e83cef47f1e7f53b-Paper.pdf},
 volume = {32},
 year = {2019}
}

@misc{shazeer2020glu,
      title={GLU Variants Improve Transformer}, 
      author={Noam Shazeer},
      year={2020},
      eprint={2002.05202},
      archivePrefix={arXiv},
      primaryClass={cs.LG},
      url={https://arxiv.org/abs/2002.05202}, 
}

@misc{eldan2023tinystoriessmalllanguagemodels,
      title={TinyStories: How Small Can Language Models Be and Still Speak Coherent English?}, 
      author={Ronen Eldan and Yuanzhi Li},
      year={2023},
      eprint={2305.07759},
      archivePrefix={arXiv},
      primaryClass={cs.CL},
      url={https://arxiv.org/abs/2305.07759}, 
}

@inproceedings{chor2023more,
  title = {More {{Communication Does Not Result}} in {{Smaller Generalization Error}} in {{Federated Learning}}},
  booktitle = {2023 {{IEEE International Symposium}} on {{Information Theory}} ({{ISIT}})},
  author = {Chor, Romain and Sefidgaran, Milad and Zaidi, Abdellatif},
  year = {2023},
  pages = {48--53}
}

@article{sefidgaran2022rate,
  title={Rate-distortion theoretic bounds on generalization error for distributed learning},
  author={Sefidgaran, Milad and Chor, Romain and Zaidi, Abdellatif},
  journal={Advances in Neural Information Processing Systems},
  volume={35},
  pages={19687--19702},
  year={2022}
}

@inproceedings{steinke2020reasoning,
  title={Reasoning about generalization via conditional mutual information},
  author={Steinke, Thomas and Zakynthinou, Lydia},
  booktitle={Conference on Learning Theory},
  pages={3437--3452},
  year={2020},
  organization={PMLR}
}

@article{xu2017information,
  title={Information-theoretic analysis of generalization capability of learning algorithms},
  author={Xu, Aolin and Raginsky, Maxim},
  journal={Advances in neural information processing systems},
  volume={30},
  year={2017}
}

@inproceedings{woodworth2020local,
  title={Is local SGD better than minibatch SGD?},
  author={Woodworth, Blake and Patel, Kumar Kshitij and Stich, Sebastian and Dai, Zhen and Bullins, Brian and Mcmahan, Brendan and Shamir, Ohad and Srebro, Nathan},
  booktitle={International Conference on Machine Learning},
  pages={10334--10343},
  year={2020},
  organization={PMLR}
}

@inproceedings{sefidgaran2024lessons,
  title={Lessons from Generalization Error Analysis of Federated Learning: You May Communicate Less Often!},
  author={Sefidgaran, Milad and Chor, Romain and Zaidi, Abdellatif and Wan, Yijun},
  booktitle={Forty-first International Conference on Machine Learning},
  year={2024}
}

@inproceedings{Sefidgaran2022,
  title={Rate-distortion theoretic generalization bounds for stochastic learning algorithms},
  author={Sefidgaran, Milad and Gohari, Amin and Richard, Gael and Simsekli, Umut},
  booktitle={Conference on Learning Theory},
  pages={4416--4463},
  year={2022},
  organization={PMLR}
}

@inproceedings{mcmahan2017communication,
  title={Communication-efficient learning of deep networks from decentralized data},
  author={McMahan, Brendan and Moore, Eider and Ramage, Daniel and Hampson, Seth and y Arcas, Blaise Aguera},
  booktitle={Artificial intelligence and statistics},
  pages={1273--1282},
  year={2017},
  organization={PMLR}
}

@InProceedings{gronlund2020,
	title = 	 {Near-Tight Margin-Based Generalization Bounds for Support Vector Machines},
	author =       {Gr{\o}nlund, Allan and Kamma, Lior and Larsen, Kasper Green},
	booktitle = 	 {Proceedings of the 37th International Conference on Machine Learning},
	pages = 	 {3779--3788},
	year = 	 {2020},
	editor = 	 {III, Hal Daumé and Singh, Aarti},
	volume = 	 {119},
	series = 	 {Proceedings of Machine Learning Research},
	month = 	 {13--18 Jul},
	publisher =    {PMLR},
}

@inproceedings{yuksel2025sample,
  title={On the sample complexity of next-token prediction},
  author={Y{\"u}ksel, O{\u{g}}uz Kaan and Flammarion, Nicolas},
  booktitle={The 28th International Conference on Artificial Intelligence and Statistics},
  year={2025}
}

@article{russo2016controlling,
  title={Controlling bias in adaptive data analysis using information theory},
  author={Russo, Daniel and Zou, James},
  journal={arXiv preprint arXiv:1511.05219},
  year={2016}
}

@inproceedings{bu2020tightening,
  title={Tightening mutual information-based bounds on generalization error},
  author={Bu, Zhiqi and Zou, Jinghui and Long, Qi and Su, Weijie J},
  booktitle={IEEE Journal on Selected Areas in Information Theory},
  volume={1},
  number={1},
  pages={121--130},
  year={2020}
}

@article{paulin2015concentration,
  title={Concentration inequalities for {M}arkov chains by
         {M}arton couplings and spectral methods},
  author={Paulin, Daniel},
  journal={Electronic Journal of Probability},
  volume={20},
  pages={1--32},
  year={2015}
}

@inproceedings{huynh2025streaming,
  title={Streaming Federated Learning with Markovian Data},
  author={Huynh, Tan-Khiem and Egan, Malcolm and Neglia, Giovanni and Gorce, Jean-Marie},
  booktitle={Advances in Neural Information Processing Systems (NeurIPS)},
  volume={39},
  year={2025},
  url={https://arxiv.org/abs/2503.18807}
}

@inproceedings{marfoq2023federated,
  title={Federated learning for data streams},
  author={Marfoq, Othmane and Neglia, Giovanni and Kameni, Laetitia and Vidal, Richard},
  booktitle={International conference on artificial intelligence and statistics},
  pages={8889--8924},
  year={2023},
  organization={PMLR}
}

@inproceedings{mcmahanetal2017communication,
  author    = {McMahan, Brendan and Moore, Ewan and Ramage, Daniel and Hampson, Seth and y Arcas, Blaise Aguera},
  title     = {Communication-Efficient Learning of Deep Networks from Decentralized Data},
  booktitle = {Proceedings of the 20th International Conference on Artificial Intelligence and Statistics (AISTATS)},
  year      = {2017}
}

@article{liu2023dynamite,
  author={Liu, Wenchao and Zhang, Xiaoxi and Duan, Jingjie and Joe-Wong, Carlee and Zhou, Zhi and Chen, Xu},
  journal={IEEE Transactions on Mobile Computing}, 
  title={DYNAMITE: Dynamic Interplay of Mini-Batch Size and Aggregation Frequency for Federated Learning With Static and Streaming Dataset}, 
  year={2023},
  volume={22},
  number={12},
  pages={7664-7679},
  doi={10.1109/TMC.2022.3217036},
  month={July}
}

@inproceedings{schwarz2018progress,
  title={Progress \& compress: A scalable framework for continual learning},
  author={Schwarz, Jonathan and Czarnecki, Wojciech and Luketina, Jelena and Grabska-Barwinska, Agnieszka and Teh, Yee Whye and Pascanu, Razvan and Hadsell, Raia},
  booktitle={International Conference on Machine Learning (ICML)},
  pages={4528--4537},
  year={2018},
  organization={PMLR}
}

@article{shalev2012online,
  title={Online learning and online convex optimization},
  author={Shalev-Shwartz, Shai},
  journal={Foundations and Trends{\textregistered} in Machine Learning},
  volume={4},
  number={2},
  pages={107--194},
  year={2012},
  publisher={Now Publishers, Inc.},
  doi={10.1561/2200000018}
}

@article{greco2019edge,
  title={An edge-stream computing infrastructure for real-time analysis of wearable sensors data},
  author={Greco, Luca and Ritrovato, Pierluigi and Xhafa, Fatos},
  journal={Future Generation Computer Systems},
  volume={93},
  pages={515--528},
  year={2019},
  issn={0167-739X},
  doi={10.1016/j.future.2018.10.058},
  publisher={Elsevier}
}

@article{hill2010anomaly,
  title={Anomaly detection in streaming environmental sensor data: A data-driven modeling approach},
  author={Hill, David J. and Minsker, Barbara S.},
  journal={Environmental Modelling \& Software},
  volume={25},
  number={9},
  pages={1014--1022},
  year={2010},
  publisher={Elsevier},
  doi={10.1016/j.envsoft.2010.02.002}
}

@inproceedings{hard2018federated,
  title={Federated Learning for Mobile Keyboard Prediction},
  author={Hard, Andrew and Rao, Kanishka and Mathews, Rajiv and Ramaswamy, Swaroop and Beaufays, Fran{\c{c}}oise and Augenstein, Sean and Eichner, Hubert and Kiddon, Chlo{\'e} and Ramage, Daniel},
  booktitle={arXiv preprint arXiv:1811.03604},
  year={2018},
  url={https://arxiv.org/abs/1811.03604}
}

@article{radford2019language,
  title   = {Language Models are Unsupervised Multitask Learners},
  author  = {Radford, Alec and Wu, Jeffrey and Child, Rewon and Luan, David and Amodei, Dario and Sutskever, Ilya},
  journal = {OpenAI Blog},
  year    = {2019},
  url     = {https://cdn.openai.com/better-language-models/language_models_are_unsupervised_multitask_learners.pdf}
}

@book{vankampen1992stochastic,
  title={Stochastic Processes in Physics and Chemistry},
  author={Van Kampen, Nicolaas Godfried},
  year={1992},
  publisher={Elsevier},
  address={Oxford},
  edition={Revised and enlarged},
  isbn={978-0-444-89349-9}
}

@inproceedings{odeyomi2021differentially,
  title={Differentially-Private Federated Learning with Long-Term Constraints Using Online Mirror Descent},
  author={Odeyomi, Olusola and Zaruba, Gergely},
  booktitle={2021 IEEE International Symposium on Information Theory (ISIT)},
  pages={1308--1313},
  year={2021},
  organization={IEEE},
  doi={10.1109/ISIT45174.2021.9518177}
}

@inproceedings{beznosikov2023first,
  title={First order methods with markovian noise: from acceleration to variational inequalities},
  author={Beznosikov, Aleksandr and Samsonov, Sergey and Sheshukova, Marina and Gasnikov, Alexander and Naumov, Alexey and Moulines, Eric},
  booktitle={Advances in Neural Information Processing Systems},
  volume={36},
  pages={44820--44835},
  year={2023},
  publisher={Curran Associates, Inc.}
}

@inproceedings{duchi2011ergodic,
  title={Ergodic mirror descent},
  author={Duchi, John C and Agarwal, Alekh and Johansson, Mikael and Jordan, Michael I},
  booktitle={2011 49th Annual Allerton Conference on Communication, Control, and Computing (Allerton)},
  pages={701--706},
  year={2011},
  organization={IEEE},
  doi={10.1109/Allerton.2011.6120244}
}

@inproceedings{sun2018markov,
  title={On {M}arkov {C}hain {G}radient {D}escent},
  author={Sun, Tao and Sun, Yuejiao and Yin, Wotao},
  booktitle={Advances in Neural Information Processing Systems (NeurIPS)},
  volume={31},
  pages={6496--6505},
  year={2018},
  publisher={Curran Associates, Inc.},
  url={https://proceedings.neurips.cc/paper/2018/file/693892e85ef15852924ec98b4700d5a3-Paper.pdf}
}

@article{malach2023auto,
  title={Auto-regressive next-token predictors are universal learners},
  author={Malach, Eran},
  journal={arXiv preprint arXiv:2309.06979},
  year={2023},
  url={https://arxiv.org/abs/2309.06979}
}

@misc{chen2017beijing,
  title={Beijing Multi-Site Air Quality},
  author={Chen, Song},
  year={2017},
  publisher={UCI Machine Learning Repository},
  doi={10.24432/C5RK5G},
  url={https://archive.ics.uci.edu/dataset/501/beijing+multi+site+air+quality+data}
}

@inproceedings{gronlund2020near,
  title={Near-tight margin-based generalization bounds for support vector machines},
  author={Gr{\o}nlund, Allan and Kamma, Lior and Larsen, Kasper Green},
  booktitle={International Conference on Machine Learning},
  pages={3779--3788},
  year={2020},
  organization={PMLR}
}

@article{kairouz2021advances,
  title={Advances and Open Problems in Federated Learning},
  author={Kairouz, Peter and McMahan, H. Brendan and others},
  journal={Foundations and Trends in Machine Learning},
  volume={14},
  number={1--2},
  pages={1--210},
  year={2021},
  doi={10.1561/2200000083}
}

@article{konecny2016federated,
  title={Federated Optimization: Distributed Machine Learning for On-Device Intelligence},
  author={Kone{\v{c}}n{\'y}, Jakub and McMahan, H. Brendan and Ramage, Daniel and Richt{\'a}rik, Peter},
  journal={arXiv preprint arXiv:1610.02527},
  year={2016}
}

@article{stich2019local,
  title={Local SGD Converges Fast and Communicates Little},
  author={Stich, Sebastian U.},
  journal={International Conference on Learning Representations},
  year={2019}
}

@inproceedings{karimireddy2020scaffold,
  title={{SCAFFOLD}: Stochastic Controlled Averaging for Federated Learning},
  author={Karimireddy, Sai Praneeth and Kale, Satyen and Mohri, Mehryar and Reddi, Sashank and Stich, Sebastian and Suresh, Ananda Theertha},
  booktitle={Proceedings of the 37th International Conference on Machine Learning},
  volume={119},
  pages={5132--5143},
  year={2020},
  series={Proceedings of Machine Learning Research}
}

@inproceedings{li2020fedprox,
  title={Federated Optimization in Heterogeneous Networks},
  author={Li, Tian and Sahu, Anit Kumar and Zaheer, Manzil and Sanjabi, Maziar and Talwalkar, Ameet and Smith, Virginia},
  booktitle={Proceedings of Machine Learning and Systems},
  volume={2},
  pages={429--450},
  year={2020}
}

@inproceedings{reddi2021adaptive,
  title={Adaptive Federated Optimization},
  author={Reddi, Sashank and Charles, Zachary and Zaheer, Manzil and Garrett, Zachary and Rush, Keith and Kone{\v{c}}n{\'y}, Jakub and Kumar, Sanjiv and McMahan, H. Brendan},
  booktitle={International Conference on Learning Representations},
  year={2021}
}

@inproceedings{li2020fedavg,
  title={On the Convergence of FedAvg on Non-IID Data},
  author={Li, Xiang and Huang, Kaixuan and Yang, Wenhao and Wang, Shusen and Zhang, Zhihua},
  booktitle={International Conference on Learning Representations},
  year={2020}
}

@article{yu1994rates,
  title={Rates of Convergence for Empirical Processes of Stationary Mixing Sequences},
  author={Yu, Bin},
  journal={The Annals of Probability},
  volume={22},
  number={1},
  pages={94--116},
  year={1994}
}

@article{marton1996bounding,
  title={Bounding $\bar{d}$-Distance by Informational Divergence: A Method to Prove Measure Concentration},
  author={Marton, Katalin},
  journal={The Annals of Probability},
  volume={24},
  number={2},
  pages={857--866},
  year={1996}
}

@article{samson2000concentration,
  title={Concentration of Measure Inequalities for Markov Chains and $\Phi$-Mixing Processes},
  author={Samson, Paul-Marie},
  journal={The Annals of Probability},
  volume={28},
  number={1},
  pages={416--461},
  year={2000}
}

@article{kontorovich2008concentration,
  title={Concentration Inequalities for Dependent Random Variables via the Martingale Method},
  author={Kontorovich, Leonid Aryeh and Ramanan, Kavita},
  journal={The Annals of Probability},
  volume={36},
  number={6},
  pages={2126--2158},
  year={2008}
}

@inproceedings{mohri2009stability,
  title={Rademacher Complexity Bounds for Non-I.I.D. Processes},
  author={Mohri, Mehryar and Rostamizadeh, Afshin},
  booktitle={Advances in Neural Information Processing Systems},
  volume={21},
  year={2009}
}

@article{ralaivola2010chromatic,
  title={Chromatic PAC-Bayes Bounds for Non-IID Data: Applications to Ranking and Stationary $\beta$-Mixing Processes},
  author={Ralaivola, Liva and Szafranski, Marie and Stempfel, Guillaume},
  journal={Journal of Machine Learning Research},
  volume={11},
  pages={1927--1956},
  year={2010}
}

@article{kuznetsov2017generalization,
  title={Generalization Bounds for Non-stationary Mixing Processes},
  author={Kuznetsov, Vitaly and Mohri, Mehryar},
  journal={Machine Learning},
  volume={106},
  pages={93--117},
  year={2017}
}

@inproceedings{NIPS2017_b22b257a,
 author = {Bartlett, Peter and Foster, Dylan J and Telgarsky, Matus J},
 booktitle = {Advances in Neural Information Processing Systems},
 editor = {I. Guyon and U. Von Luxburg and S. Bengio and H. Wallach and R. Fergus and S. Vishwanathan and R. Garnett},
 pages = {},
 publisher = {Curran Associates, Inc.},
 title = {Spectrally-normalized margin bounds for neural networks},
 url = {https://proceedings.neurips.cc/paper_files/paper/2017/file/b22b257ad0519d4500539da3c8bcf4dd-Paper.pdf},
 volume = {30},
 year = {2017}
}

@inproceedings{negrea2019information,
  title={Information-Theoretic Generalization Bounds for SGLD via Data-Dependent Estimates},
  author={Negrea, Jeffrey and Haghifam, Mahdi and Dziugaite, Gintare Karolina and Khisti, Ashish and Roy, Daniel M.},
  booktitle={Advances in Neural Information Processing Systems},
  volume={32},
  year={2019}
}

@inproceedings{arora2018stronger,
  title={Stronger Generalization Bounds for Deep Nets via a Compression Approach},
  author={Arora, Sanjeev and Ge, Rong and Neyshabur, Behnam and Zhang, Yi},
  booktitle={Proceedings of the 35th International Conference on Machine Learning},
  volume={80},
  pages={254--263},
  year={2018},
  series={Proceedings of Machine Learning Research}
}

@inproceedings{vaswani2017attention,
  title={Attention Is All You Need},
  author={Vaswani, Ashish and Shazeer, Noam and Parmar, Niki and Uszkoreit, Jakob and Jones, Llion and Gomez, Aidan N. and Kaiser, Lukasz and Polosukhin, Illia},
  booktitle={Advances in Neural Information Processing Systems},
  volume={30},
  year={2017}
}

@inproceedings{edelman2022inductive,
  title={Inductive Biases and Variable Creation in Self-Attention Mechanisms},
  author={Edelman, Benjamin L. and Goel, Surbhi and Kakade, Sham M. and Zhang, Cyril},
  booktitle={Proceedings of the 39th International Conference on Machine Learning},
  volume={162},
  pages={5793--5831},
  year={2022},
  series={Proceedings of Machine Learning Research}
}
